\newtheorem{theorem}{Theorem}[section]
\newtheorem{lemma}[theorem]{Lemma}
\theoremstyle{definition}
\newtheorem{definition}[theorem]{Definition}
\theoremstyle{remark}
\newif\ifupdate\updatefalse 
\newcommand{\modify}[2]{\ifupdate{#1}\else{\color{red}#2}\fi}
\def\ourM{CRESP}
\begin{document}
%
\title{Generalization in Visual Reinforcement Learning with the Reward Sequence Distribution}

\author{Jie~Wang$^{\ast}$,~\IEEEmembership{Senior~Member,~IEEE,}
        Rui~Yang,
        Zijie~Geng,
        Zhihao~Shi,
        Mingxuan~Ye,
        Qi~Zhou,
        Shuiwang~Ji,~\IEEEmembership{Fellow,~IEEE,}
        Bin~Li,~\IEEEmembership{Member,~IEEE,}
        Yongdong~Zhang,~\IEEEmembership{Senior~Member,~IEEE,}
        and~Feng~Wu,~\IEEEmembership{Fellow,~IEEE}
\IEEEcompsocitemizethanks{\IEEEcompsocthanksitem J.~Wang, R.~Yang, Z.~Geng, M.~Ye, Q.~Zhou, B.~Li, Y.~Zhang and F.~Wu are with: a) CAS Key
Laboratory of Technology in GIPAS, University of Science and Technology of China, Hefei 230027, China; b) Institute of Artificial Intelligence, Hefei Comprehensive National Science Center, Hefei 230091,
China. 
E-mail: jiewangx@ustc.edu.cn, yr0013@mail.ustc.edu.cn, ustcgzj@mail.ustc.edu.cn, mingxuanye@mail.ustc.edu.cn, zhouqida@mail.ustc.edu.cn, binli@ustc.edu.cn, zhyd73@ustc.edu.cn, fengwu@ustc.edu.cn.
\IEEEcompsocthanksitem S.~Ji is with the Department of Computer Science and Engineering, Texas A\&M University, College Station, TX77843 USA. E-mail: sji@tamu.edu.}
\thanks{Manuscript received April 19, 2005; revised August 26, 2015.}}

%
%

\markboth{Journal of \LaTeX\ Class Files,~Vol.~14, No.~8, August~2015}%
{Shell \MakeLowercase{\textit{et al.}}: Generalization in Visual Reinforcement Learning with the Reward Sequence Distribution}
%



\IEEEtitleabstractindextext{%
\begin{abstract}
Generalization in partially observed markov decision processes (POMDPs) is critical for successful applications of visual reinforcement learning (VRL) in real scenarios.
A widely used idea is to learn task-relevant representations that encode task-relevant information of common features in POMDPs, i.e., rewards and transition dynamics.
As transition dynamics in the latent state space---which are task-relevant and invariant to visual distractions---are unknown to the agents, existing methods alternatively use transition dynamics in the observation space to extract task-relevant information in transition dynamics.
However, such transition dynamics in the observation space involve task-irrelevant visual distractions, degrading the generalization performance of VRL methods.
To tackle this problem, we propose the {\em \textbf{r}eward \textbf{s}equence \textbf{d}istribution} conditioned on the {\em starting \textbf{o}bservation} and the {\em predefined subsequent \textbf{a}ction sequence} (RSD-OA).
The appealing features of RSD-OA include that: (1) RSD-OA is invariant to visual distractions, as it is conditioned on the predefined subsequent action sequence without task-irrelevant information from transition dynamics, and (2) the reward sequence captures long-term task-relevant information in both rewards and transition dynamics.
Experiments demonstrate that our representation learning approach based on RSD-OA significantly improves the generalization performance on unseen environments, outperforming several state-of-the-arts on DeepMind Control tasks with visual distractions.
\end{abstract}

\begin{IEEEkeywords}
Visual Reinforcement Learning, Generalization, Task-relevant representation learning, Reward Sequence Distribution.
\end{IEEEkeywords}}

\maketitle

\IEEEdisplaynontitleabstractindextext

%
\IEEEpeerreviewmaketitle

\IEEEraisesectionheading{\section{Introduction}\label{sec:introduction}}

\IEEEPARstart{V}{isual} reinforcement learning (VRL) aims to solve complex control tasks---which are formulated as partially observed markov decision process (POMDP) problems---directly from high-dimensional image observations~\cite{nips/LeeNAL20,nips/ChenLSA21}. Prior works have achieved remarkable success in VRL, such as DrQ for locomotion control~\cite{iclr/YaratsKF21}, IMPALA for multi-task learning~\cite{icml/EspeholtSMSMWDF18}, and QT-Opt for robot grasping~\cite{corr/abs-1806-10293}.

However, these methods are difficult to generalize to test environments, as the image observations usually involve unseen visual distractions, such as dynamic backgrounds and colors of the objects under control.
This can lead to the agent overfitting to visual distractions in training environments and failing to learn transferable skills based on task-relevant information~\cite{iclr/SongJTDN20,icml/RaileanuF21}.

To train an agent with transferable skills, one of the promising approaches is to learn desired task-relevant representations from rewards and transition dynamics. Such representations not only facilitate the learning of future rewards and transition dynamics but also discard task-irrelevant information from observations~\cite{nips/MazoureCDBH20,iclr/AgarwalMCB21,icml/FanL22}. Some prior works introduce prediction tasks to learn both the rewards and transition dynamics for extracting task-relevant information~\cite{icml/0001LSFKPGP20,iclr/0001MCGL21}. Other works propose self-supervised or unsupervised auxiliary tasks, which incorporate the rewards and transition dynamics into their optimization objectives to encode task-relevant information~\cite{iclr/AgarwalMCB21,iclr/MazoureAHKM22}.


In POMDPs, the transition dynamics in the latent state space (latent state transition dynamics)---which are task-relevant and invariant to visual distractions---are unknown to the agents. Thus, the aforementioned methods alternatively use the transition dynamics in the observation space (observation transition dynamics)\modify{to extract task-relevant information in transition dynamics}{.}
However, such observation transition dynamics involve visual distractions with task-irrelevant information, which significantly degrades the generalization performance of VRL methods.
We illustrate these two transition dynamics in Figure~\ref{fig-rl} and provide more discussions in Section~\ref{sec-4}.

\begin{figure}
    \centering
    \includegraphics[scale=0.5]{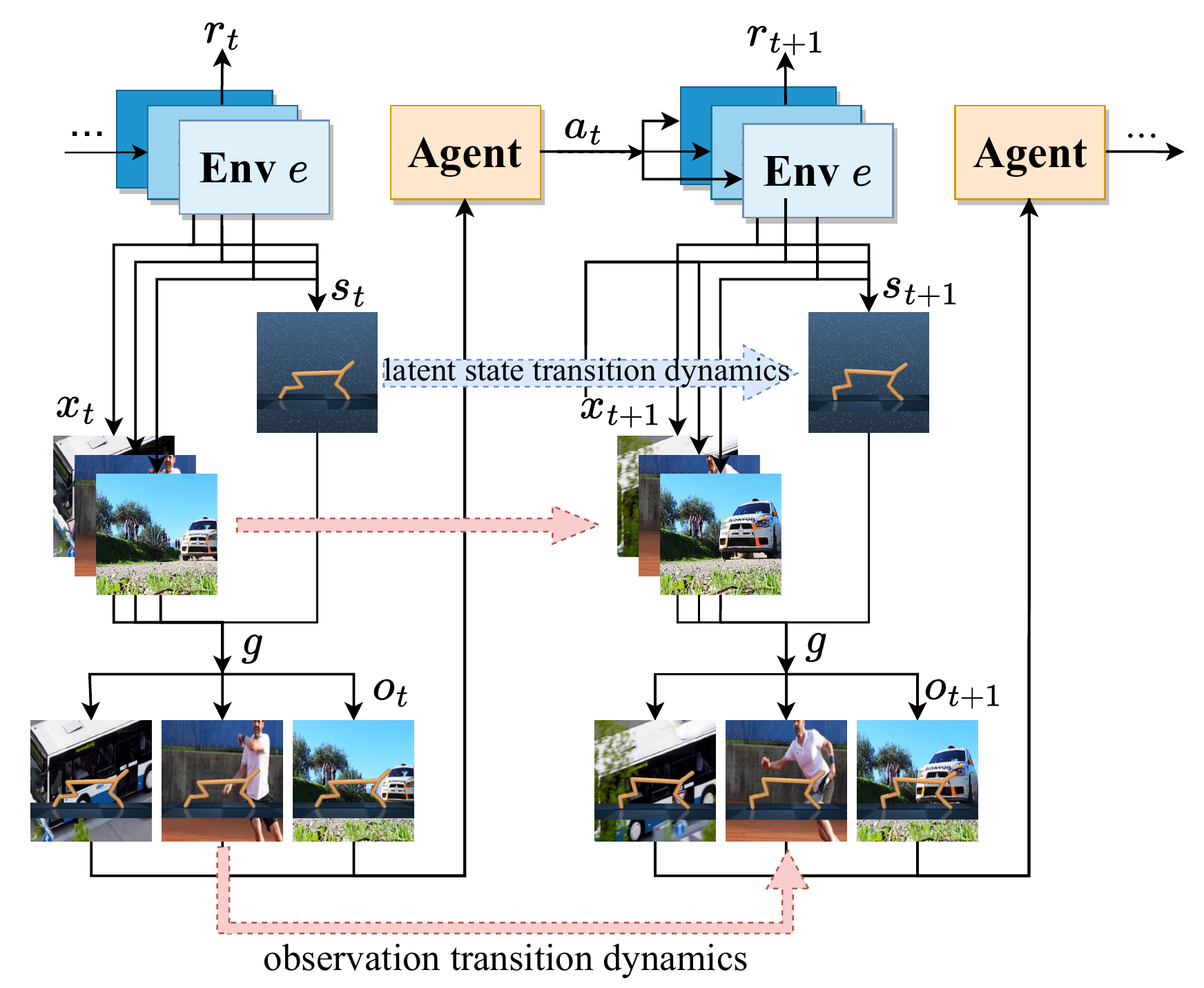}
    \caption{The agent-environment interactions with dynamic background distractions. Each environment $e$ provides a latent state $s_t$ and a background $x_t$ to generate an observation $o_t=g(s_t,x_t)$ through a nonlinear function $g$. The agent receives $o_t$ and takes an action $a_t$ in $e$, leading to the transitions of latent states (from $s_t$ to $s_{t+1}$), backgrounds (from $x_t$ to $x_{t+1}$), and observations (from $o_t$ to $o_{t+1}$). The red arrow at the bottom represents the observation transition dynamics, which are available to the agent but involve background distractions. The blue arrow represents the latent state transition dynamics, which are task-relevant but unavailable to the agent.}
    \label{fig-rl}
\end{figure}

The working horse of this paper is the {\em \textbf{r}eward \textbf{s}equence \textbf{d}istribution} conditioned on the {\em starting \textbf{o}bservation} and the {\em predefined subsequent \textbf{a}ction sequence} (RSD-OA). 
The major novelty of RSD-OA is that it captures the long-term task-relevant information in both rewards and observation transition dynamics without task-irrelevant information.
Specifically, (1) RSD-OA is invariant to visual distractions, as it is conditioned on the predefined subsequent action sequence, without task-irrelevant information from observation transition dynamics, and (2) the reward sequence captures long-term task-relevant information in both rewards and observation transition dynamics.
Furthermore, we provide theoretical analysis in Section~\ref{sec-4} to show that the representations learned by using RSD-OA can derive optimal policies in unseen test environments.

We provide examples in Figure~\ref{fig-rsds} to illustrate that RSD-OA is invariant to visual distractions and captures task-relevant information in both rewards and transition dynamics.
Figure~\ref{fig-rsds}(a) shows a counter-example: the agent performs the action sequence generated by the current policy $\pi_t$ given the observation sequence $\mathbf{o}_t$, and then it receives the reward sequence $\mathbf{r}_{t+1}$. 
The corresponding distribution $p_{\pi_t}(\mathbf{r}_{t+1}|\mathbf{o}_t)$ is dependent on visual distractions from the observation sequence.
In contrast, Figure~\ref{fig-rsds}(b) shows an example of our proposed RSD-OA: the agent performs the predefined action sequence $\mathbf{a}_t$ to obtain the reward sequence $\mathbf{r}_{t+1}$. RSD-OA $p(\mathbf{r}_{t+1} | o_t, \mathbf{a}_t)$ is independent of the current policy $\pi_t$ and avoids using the observation sequence $\mathbf{o}_t$, which effectively discards the task-irrelevant information. Notice that we use $\mathbf{o}_t$ to denote the vector of observation sequence and use $o_t$ to denote the observation at timestamp $t$.
Moreover, as the obtained rewards are related to the latent states, the reward sequence implicitly encodes the task-relevant information of transition dynamics.

\begin{figure}
    \centering
    \includegraphics[width=9.1cm]{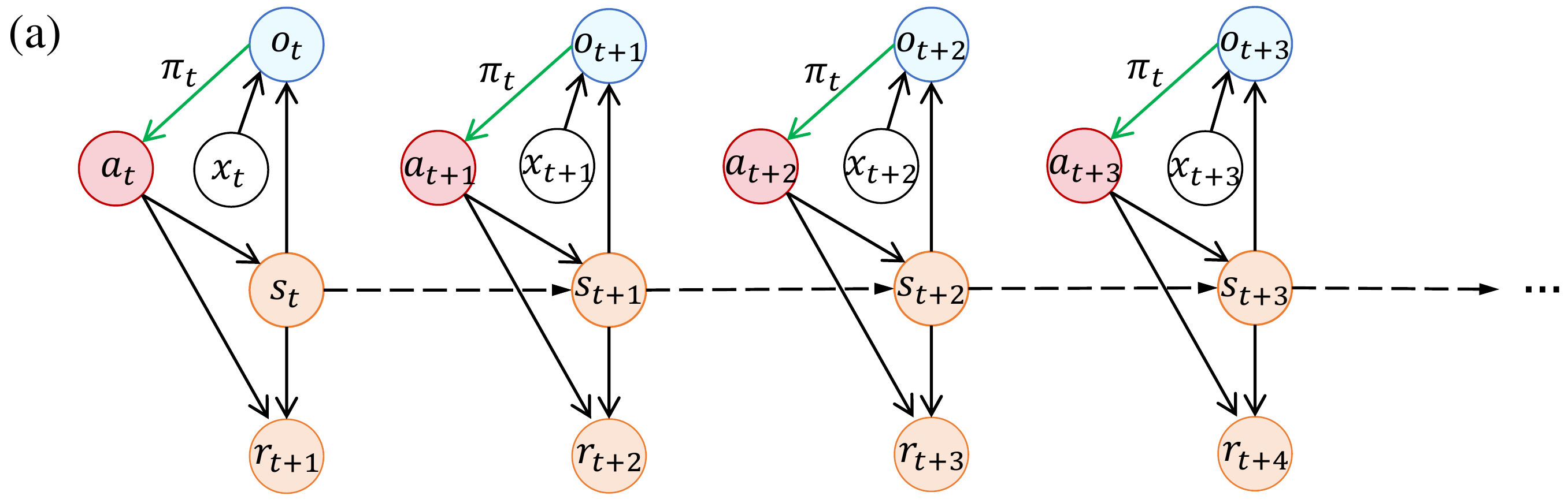}
    \includegraphics[width=9.1cm]{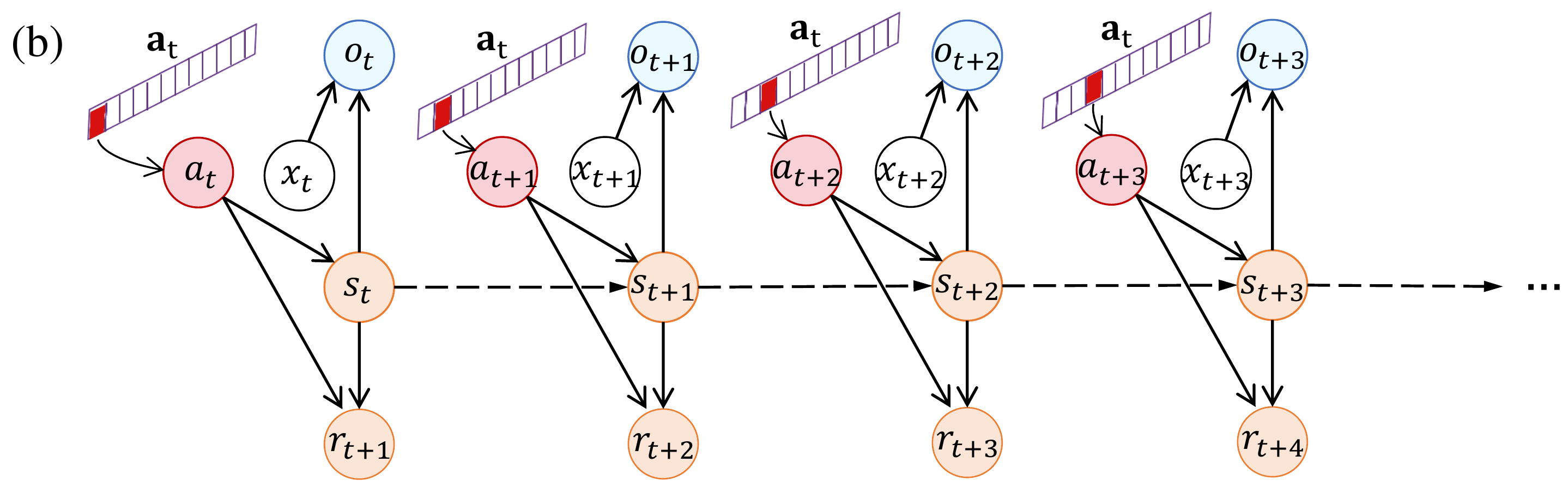}
    \caption{Illustration of reward sequences. 
    (a) The agent maps an observation $o_t$ to an action $a_t$ by following the current policy $\pi_t$ (the green arrow from $o_t$ to $a_t$), and it takes the action to receive a reward $r_{t+1}$, leading to the latent state transition (the dashed arrow from $s_t$ to $s_{t+1}$).
    In this way, the obtained reward sequence $\mathbf{r}_{t+1}=(r_{t+1},r_{t+2},\cdots)$ depends on the current policy $\pi_t$ and the observation sequence $\mathbf{o}_t=(o_t,o_{t+1},\cdots)$, thus influenced by visual distractions from observations.
    (b) Starting from an observation $o_t$, the agent takes the subsequent actions from a predefined action sequence $\mathbf{a}_t=(a_t, a_{t+1},\cdots)$, which is independent of the observations.
    In this way, the obtained reward sequence $\mathbf{r}_{t+1}$ is determined by the starting latent state $s_t$, latent state transition dynamics (dashed arrows), and the predefined action sequence $\mathbf{a}_t$, without involving visual distractions.
    }
    \label{fig-rsds}
\end{figure}

Based on RSD-OA, we propose a novel task-relevant representation learning approach---namely, \textbf{C}haracteristic \textbf{Re}ward \textbf{S}equence \textbf{P}rediction (\ourM{})---to encode the information of RSD-OA.
Specifically, we propose a supervised auxiliary task that uses the characteristic function of RSD-OA in spectral space as the supervision signal. This is because that the characteristic function can well reconstruct the corresponding high-dimensional distribution, and meanwhile, it can be easily computed via sampling~\cite{ansari2020characteristic, yu2004empirical}.
Experiments on DeepMind Control Suite~\cite{corr/abs-1801-00690} with visual distractors~\cite{corr/abs-2101-02722} demonstrate that \ourM{} significantly improves the generalization performance, outperforming several state-of-the-arts in unseen test environments.

This paper is a significant extension from a short version ``Learning Task-relevant Representations for Generalization via Characteristic Functions of Reward Sequence Distributions", which appears in SIGKDD 2022 \cite{Yang_2022}.
This journal manuscript extends the conference version by proposing a significantly enhanced version of \ourM{}, namely \ourM{-T}.
\ourM{-T} significantly improves the robustness of \ourM{} by introducing more effective optimization objective and architecture.
Specifically, first, to encode RSD-OA more accurately, \ourM{-T} proposes a novel and effective spectral cosine similarity, which introduces the phase information of RSD-OA besides its amplitude (see Section~\ref{subsec-5.3.1}).
Second, to capture task-relevant features more effectively, \ourM{-T} proposes to model the temporal dependencies of RSD-OA by introducing the attention mechanisms (see Section~\ref{subsec-5.3.2}).
Finally, we conduct extensive experiments under more challenging multiple training environment settings with dynamic color distractions to demonstrate that \ourM{-T} achieves an average performance improvement of ${\bf+23.5\%}$ over \ourM{} (see Section~\ref{subsec-6.2}).

\section{Related Work}

Our work considers solving the problem of generalization in VRL by learning representations without task-irrelevant information from observations.
Specifically, we use the characteristic functions of RSD-OA as the supervision signals.
Moreover, we introduce a transformer architecture for
effectively encoding information of RSD-OA.

\subsection{Generalization in VRL}
The study of generalization in VRL aims to produce VRL algorithms whose policies generalize well to unseen test environments.
Many works apply standard supervised learning regularization techniques, such as dropout, batch normalization, and L2 weight decay~\cite{farebrother2018generalization, nips/IglCLTZDH19}, to prevent overfitting for performance improvement of generalization.
Although they are easy to implement, these methods do not exploit any properties of sequential decision-making problems.
Other approaches focus on data augmentation~\cite{lee2020network, laskin2020reinforcement, raileanu2021automatic}. They enlarge the available data space and implicitly incorporate prior knowledge about the types of distractions in test environments.
Although these methods show promising results in well-designed experiment settings, strong assumptions about the variation between the training and test environments may limit their real applications~\cite{corr/SurveyOfGen}. 
In contrast to these methods, we consider a more realistic setting without assuming such prior knowledge.

Another line of work uses \textbf{representation learning} techniques to improve generalization performance in VRL.
Prior works~\cite{lange2010deep,lange2012autonomous} have proposed a two-step learning process, which first trains an auto-encoder by using a reconstruction loss for low-dimensional representations, and then uses these representations for policy optimization.
However, learned representations do not necessarily extract all useful features and may involve task-irrelevant information.
Some works use bisimulation metrics to learn representations that are invariant to irrelevant features~\cite{iclr/0001MCGL21}.
However, such methods use the transition dynamics that vary with the environments, which may cause learned representations to involve task-irrelevant information from visual distractions.
A recent study~\cite{LehnertLF20} allows the agent to predict future reward sequences for representation learning. However, this method only considers finite MDPs, which is difficult to extend to high-dimensional observation spaces, e.g., images.

\subsection{Characteristic Functions of Random Variables}
The characteristic function of a random variable is equal to the Fourier transforms of its probability density functions.
Based on probability theory, characteristic functions can be used to specify high-dimensional distributions.
This is because two random variables have the same distribution if and only if they have the same characteristic function.
Prior works have explored the use of characteristic functions to solve data-driven problems, including model-fitting and data generation~\cite{ansari2020characteristic,yu2004empirical}.
In our work, we leverage this tool for a simple and tractable approximation of high-dimensional distributions.
Our experiments demonstrate that the use of characteristic functions enables our method to effectively learn the distributions of reward sequences.

\subsection{Transformer Architecture}
Transformers, the neural network architectures based on attention mechanisms, have achieved great success in many tasks including natural language processing, computer vision, and time series forecasting.
Benefiting from attention mechanisms, transformers have strong abilities to model short-range and long-range temporal dependencies in sequential data, which allows several methods to model the temporal dependencies in the input or output sequences without considering their distance~\cite{VaswaniSPUJGKP17,aaai/ZhouZPZLXZ21}.
In recent years, transformer architectures have been progressively used in RL approaches to solve sequential tasks, such as the decision transformer to predict actions autoregressively~\cite{nips/ChenLRLGLASM21} and the transformer-based model to predict dynamics~\cite{corr/abs-2202-09481}.
In this paper, we also take advantage of transformers to model dependencies of inputs---an observation and given action sequences---for learning the proposed RSD-OA. Experiments demonstrate that our transformer-based method outperforms several state-of-the-arts for generalization.

\section{Preliminaries}
\label{sec:pre}
We consider a family of environments with the same high-level task but different visual distractions in VRL, and we denote the set of these environments by $\mathcal{E}$.
We model each environment $e\in\mathcal{E}$ as a partially observed markov decision process (POMDP) denoted by a tuple $\mathcal{M}^e=(\mathcal{S}, \mathcal{O}, \mathcal{A},\mathcal{R}, p,p^e,\gamma)$, where $\mathcal{S}$ is the latent state space, $\mathcal{O}$ is the observation space, $\mathcal{A}$ is the action space, $\mathcal{R}$ is the reward space, $p(s',r|s,a)$ is the latent state transition probability, $p^e(o',r|o,a)$ is the observation transition probability for the environment $e\in\mathcal{E}$, and $\gamma \in [0,1)$ is the discount factor. We assume that $\mathcal{R}$ is bounded.

At each time step $t$, the agent achieves an unseen latent state $S_t$ and obtains an observation $O_t$.\footnote{Throughout this paper, we use uppercase letters such as $S_t$ and $O_t$ to denote random variables, and use lowercase letters such as $s_t$ and $o_t$ to denote the corresponding values of random variables.}
We assume that the observation is determined by the latent state and some task-irrelevant visual factors that vary with environments, such as backgrounds or agent colors in DeepMind Control tasks.
Formally, we suppose that there exists an \textit{observation function} $g:\mathcal{S}\times\mathcal{X}\to\mathcal{O}$ \cite{iclr/SongJTDN20,du2019provably} such that $O_t=g(S_t,X_t)$, where $\mathcal{X}$ is the set of such visual factors, and $X_t \in \mathcal{X}$ is a random variable independent with $S_t$ and $A_t$.
We also suppose that visual factors have their transition probability $q^e(x'|x)$.

Figure~\ref{fig-rl} shows the agent-environment interactions in POMDPs with visual distractions.
We aim to find a policy $\pi (\cdot|o_t)$ that maximizes the expected cumulative reward $\mathbb{E}^{e} \left[ \sum_{t=0}^{\infty} \gamma^t R_t \right]$ simultaneously in all environments $e\in\mathcal{E}$ with the same task, where $\mathbb{E}^{e}[\cdot]$ is the expectation taken in the environment $e$.

We assume that the environments follow a generalized Block structure~\cite{icml/0001LSFKPGP20,du2019provably}. 
That is, an observation $o\in\mathcal{O}$ uniquely determines its generating latent state $s$ and the visual factor $x$.
This \mbox{assumption} implies that the observation function $g(s,x)$ is invertible with respect to both $s$ and $x$.
For simplicity, we denote $s=[o]_s$ and $x=[o]_x$ as the generating latent state and visual factor respectively for observation $o$.
Furthermore, we have the formulation among the transition dynamics $p^e(o',r|o,a)=p(s',r|s,a)q^e(x'|x)$, where $s=[o]_s, s'=[o']_s$, $x=[o]_x$, and $x'=[o']_x$.

\section{RSD-OA}
\label{sec-4}

In this section, we consider encoding task-relevant information in both rewards and observation transition dynamics without task-irrelevant information. Thus, we propose RSD-OA for general stochastic environments. Moreover, for deterministic environments, a special case of stochastic environments, we discuss the reward sequence function of the starting observation and the predefined subsequent action sequence (RSF-OA).




We first discuss the challenge of learning task-relevant representations from rewards and transition dynamics without task-irrelevant information.
In VRL, as latent state transition dynamics are usually unavailable to the agents, many prior works alternatively use the observation transition dynamics to extract task-relevant information~\cite{iclr/0001MCGL21,corr/abs-1807-03748,aaai/Wang00LL22}.
However, the observation transition dynamics are relevant to the visual factors because they comprise the transition dynamics of both latent states and task-irrelevant visual factors.
See Figure~\ref{fig-rl} for an illustration.
Formally, we have $$p^e(o',r|o,a)=p(s',r|s,a)q^e(x'|x).$$ 
This formula implies that the observation transition probability varies with the environment $e\in\mathcal{E}$.
Therefore, learning representations by directly using observation transition dynamics may encode task-irrelevant information from the transition probability of visual factors.

In contrast to observation transition dynamics, we find that the one-step reward distribution conditioned on the starting observation and the given action is relevant to VRL tasks and invariant to visual distractions.
Formally, if two observations $o$ and $o'$ are generated by the same latent state $s$, i.e., $s=[o]_s=[o']_s$, we have $p^e(r|o,a)=p^e(r|o',a)=p(r|s,a)$ for any $a\in\mathcal{A}$ and $e\in\mathcal{E}$.
Thus, we consider using the reward distribution instead of observation transition dynamics for representation learning.
Since our goal is to maximize the expected cumulative rewards, we need not only the current reward but also the sequences of future rewards.
Therefore, we extend the one-step reward distribution to RSD-OA.

For the ease of reference, we introduce some new notations.
We denote $\mathcal{A}^T=\{\mathbf{a}=(a_1,\cdots, a_T):a_i\in\mathcal{A}\}$ and $\mathcal{R}^T=\{\mathbf{r}=(r_2,\cdots,r_{T+1}):r_i\in\mathcal{R}\}$ as the spaces of action sequences and reward sequences with length $T$, respectively.
Let $\Delta (\mathcal{R}^T)$ be the set of probability distributions over $\mathcal{R}^T$.
The subsequent actions $\mathbf{A}=(A_1,\cdots,A_{T})$ is a $T$-dimensional random vector over $\mathcal{A}^T$. The sequence of the subsequent rewards $\mathbf{R}=(R_{2},\cdots,R_{T+1})$ is a $T$-dimensional random vector over $\mathcal{R}^T$.
\footnote{We use bold uppercase letters such as $\mathbf{A}$, $\mathbf{R}$, and $\bm{\Omega}$ to denote random vectors in high-dimensional spaces and use bold lowercase letters such as $\mathbf{a}$, $\mathbf{r}$, and $\bm{\omega}$ to denote deterministic vectors in such spaces.}

\textbf{Deterministic Environments}:
To clarify our idea, we first consider deterministic environments, which are special cases of stochastic environments. The rewards in deterministic environments are generated by the reward function.

For deterministic environments, we propose RSF-OA.
Specifically, starting from an observation $o\in\mathcal{O}$ with the corresponding latent state $s = [o]_s\in\mathcal{S}$, if we perform a predefined action sequence $\mathbf{a}=(a_1,\cdots,a_{T})\in\mathcal{A}^T$, we will receive a reward sequence $\mathbf{r}=(r_{2},\cdots,r_{T+1})\in\mathcal{R}^T$ from the deterministic environment $e\in\mathcal{E}$.
This reward sequence $\mathbf{r}$ is uniquely determined by the starting latent state $s$ and the predefined action sequence $\mathbf{a}$. 
Formally, we have $\mathbf{r}=RSF\left([o]_s, \mathbf{a}\right)$.

Notice that RSF-OA does not involve task-irrelevant information from observation transition dynamics, as its formulation does not involve the observation sequence.
See Figure~\ref{fig-rsds} for an illustration.
The main advantages of RSF-OA are include:
(1) the reward sequence is invariant to visual distractions, as it is only related to the latent state and the predefined action sequence;
(2) the reward sequence captures long-term task-relevant information.
Therefore, we can use RSF-OA to encode the task-relevant information for representation learning.



\textbf{Stochastic Environments}:
For general stochastic environments, we propose RSD-OA to capture task-relevant information. 
Different from deterministic environments, the reward sequence in stochastic environments is a random vector. 
Thus, formally, RSD-OA $p(\mathbf{r}|o,\mathbf{a})$ is the probability density function of reward sequence $\mathbf{R} = \mathbf{r} \in \mathcal{R}^T$ conditioned on the starting observation $O=o\in\mathcal{O}$ and the action sequence $\mathbf{A}=\mathbf{a}\in\mathcal{A}^T$.
For any $o\in\mathcal{O}, \mathbf{a}=(a_1,\cdots,a_T)$, and $\mathbf{r}=(r_2,\cdots,r_{T+1})$, we have
\begin{align*}
    p(\mathbf{r}|o,\mathbf{a})=\prod_{i=1}^T p(r_{i+1}|s, a_1, \cdots, a_i),
\end{align*}
where $s=[o]_s$ denotes the starting latent state, and $p(r_{i+1}|s,a_1,\cdots,a_i)$ denotes the probability density function of the reward $r_{i+1} \in \mathcal{R}$ conditioned on the starting latent state $s$ and the action sequence $(a_1,\cdots,a_i)$.
The same as RSF-OA, RSD-OA does not involve the task-irrelevant information from observation transition dynamics.


One of the appealing features of RSD-OA is that it is invariant to the visual distractions. Specifically, in any starting observation $o\in\mathcal{O}$ derived from a latent state $s\in\mathcal{S}$, if the agent performs a predefined action sequence $\mathbf{a}\in\mathcal{A}^T$, it will receive the same reward sequence random vector $\mathbf{R}\in\mathcal{R}^T$. Such random vector and its distribution RSD-OA are independent of the visual distractions from the starting observation $o$. They are indeed related to the starting latent state $s$ and the action sequence $\mathbf{a}$.
Figure~\ref{fig-rsds}(b) shows an example with $\mathbf{R}=\mathbf{r}$. 
Formally, we provide the formula that for any $o,o'\in\mathcal{O}$ such that $s=[o]_s=[o']_s$, we have
\begin{align*}
    p(\mathbf{r}|o,\mathbf{a}) = p(\mathbf{r}|o',\mathbf{a}) = p(\mathbf{r}|s,\mathbf{a}).
\end{align*}



Based on the invariance of RSD-OA, we consider learning task-relevant representations by leveraging RSD-OA.
Specifically, we propose to learn RSD-OA to improve representations, e.g., using RSD-OA to generate reward {sequences} as supervised signals.
We then define such task-relevant representations as $T$-level reward sequence representations as follows.
\begin{definition}
A representation $\Phi(o)$ is a \textit{$T$-level reward sequence representation} if it can derive the distribution of any reward sequence $\mathbf{r}\in\mathbf{R}^T$ received from any observation $o\in\mathcal{O}$ by following any action sequence $\mathbf{a}\in\mathcal{A}^T$ with length $T$, i.e., there exists $f$ such that
\begin{align*}
   f(\mathbf{r};\Phi(o),\mathbf{a}) = p(\mathbf{r}|o,\mathbf{a}),\, \forall\, \mathbf{r}\in\mathbf{R}^T,\,o\in\mathcal{O},\,\mathbf{a}\in\mathcal{A}^T,
\end{align*}
where $\Phi:\mathcal{O}\to\mathcal{Z}$ is an encoder.
\end{definition}

Another appealing feature of RSD-OA is that it can capture long-term task-relevant information in both rewards and observation transition dynamics.
Specifically, (1) the reward sequence is related to the latent state sequence (see Figure~\ref{fig-rsds}(b)), and thus it implicitly encodes the information of latent state transition dynamics---the task-relevant components of observation transition dynamics; (2) the reward sequence with length $T$ provides long-term task-relevant information in the next $T$ steps.
Therefore, the $T$-level reward sequence representations indeed extract rich task-relevant information.
Formally, $T$-level reward sequence representations are $T'$-level reward sequence representations, where $T,\,T'\in\mathbb{N}^*$ and $T>T'$.
If $T$ tends to infinity, the representations will encode all task-relevant information from rewards and observation transition dynamics.
We then provide the following definition.

\begin{definition}
A representation $\Phi(o)$ from any observation $o \in \mathcal{O}$ is a \textit{reward sequence representation} if it is a $T$-level reward sequence representation for all $T\in\mathbb{N}^*$.
\end{definition}

Such reward sequence representations are equivalent to $\infty$-level reward sequence representations.
In practice, we learn finite $T$-level reward sequence representations as approximations. 
To provide a theoretical guarantee for the approximation, the following theorem gives a value bound between the true optimal value function and the value function on top of the $T$-level reward sequence representations.

\begin{theorem}
\label{thm:bd}
Let $\Phi(o)$ be a $T$-level reward sequence representation from any observation $o \in \mathcal{O}$, $V^e_*:\mathcal{O}\to\mathbb{R}$ be the optimal value function in the environment $e\in\mathcal{E}$, $\Bar{V}^e_*:\mathcal{Z}\to\mathbb{R}$ be the optimal value function on the representation space, and $\bar{r}$ be a bound of the reward space, i.e., $|r|<\bar{r}$ for any $r\in\mathcal{R}$.
We have
\begin{align*}
    0\le V^e_*(o)-\Bar{V}^e_*\circ\Phi(o)\le\frac{2\gamma^T}{1-\gamma}\bar{r},
\end{align*}
for any $o\in\mathcal{O}$ and $e\in\mathcal{E}$.
\end{theorem}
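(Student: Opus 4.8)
\emph{Proof plan.} The plan is to prove the two inequalities separately, treating the left inequality $0\le V^e_*(o)-\bar V^e_*\circ\Phi(o)$ as the easy ``data‑processing'' direction and the right inequality through a ``the first $T$ steps agree, and the tails are geometrically small'' decomposition. Throughout I read $\bar V^e_*$ as the optimal value, in environment $e$, over policies that depend on the observation only through $\Phi$; with that reading the left inequality is immediate: any policy $\bar\pi$ on $\mathcal Z$ induces $\pi:=\bar\pi\circ\Phi$ on $\mathcal O$ with the identical trajectory law and hence the identical return, so taking the supremum over $\bar\pi$ gives $\bar V^e_*\circ\Phi(o)\le V^e_*(o)$ for every $o$ and $e$.

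\textbf{Upper bound.} Fix $o\in\mathcal O$, $e\in\mathcal E$, and let $\pi^e_*$ be an optimal policy in $e$ (or an $\epsilon$‑optimal one, with $\epsilon\to0$), so that $V^e_*(o)=\mathbb E^e_{\pi^e_*}\!\big[\sum_{t\ge0}\gamma^tR_t\mid O_0=o\big]$. Split the return at step $T$:
\begin{align*}
V^e_*(o)=\underbrace{\mathbb E^e_{\pi^e_*}\!\Big[\textstyle\sum_{t=0}^{T-1}\gamma^tR_t\,\Big|\,O_0=o\Big]}_{=:H(o)}\;+\;\underbrace{\mathbb E^e_{\pi^e_*}\!\Big[\textstyle\sum_{t\ge T}\gamma^tR_t\,\Big|\,O_0=o\Big]}_{=:\tau}.
\end{align*}
Since $|r|<\bar r$, the tail obeys $|\tau|\le\sum_{t\ge T}\gamma^t\bar r=\gamma^T\bar r/(1-\gamma)$. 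The key claim is that $H(o)$ is reproducible by a representation‑space policy: the first $T$ rewards are produced by the starting observation together with the (random) action sequence that $\pi^e_*$ emits over those steps, and by the generalized Block structure this randomness—together with the reward randomness—is governed by the family $\{p(\mathbf r\mid o,\mathbf a)\}_{\mathbf a\in\mathcal A^T}$, which $\Phi(o)$ encodes exactly because it is a $T$‑level reward sequence representation (i.e. $f(\mathbf r;\Phi(o),\mathbf a)=p(\mathbf r\mid o,\mathbf a)$). Granting that a policy acting through $\Phi$ can therefore match $\pi^e_*$ over its first $T$ steps and collect expected discounted head reward $H(o)$, extending it arbitrarily from step $T$ on contributes a tail of at least $-\gamma^T\bar r/(1-\gamma)$ pathwise, so
\begin{align*}
\bar V^e_*\circ\Phi(o)\ \ge\ H(o)-\frac{\gamma^T\bar r}{1-\gamma}\ \ge\ V^e_*(o)-|\tau|-\frac{\gamma^T\bar r}{1-\gamma}\ \ge\ V^e_*(o)-\frac{2\gamma^T\bar r}{1-\gamma},
\end{align*}
which is the right inequality. (The two copies of $\gamma^T\bar r/(1-\gamma)$—one from discarding $\pi^e_*$'s tail, one from not controlling the continuation—are the source of the factor $2$.)

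\textbf{Main obstacle.} The geometric‑tail estimate is routine; the real work is the bracketed claim that the head term $H(o)$ is achievable by a policy factoring through $\Phi$. By definition a $T$‑level reward sequence representation only supplies reward‑sequence distributions under \emph{predefined} (open‑loop) action sequences, whereas $\pi^e_*$ acts in closed loop, and an open‑loop reproduction can be strictly worse. Closing this gap will require either (i) showing that, over the first $T$ steps, $\pi^e_*$'s action choices may be taken to depend only on the reward history, so that its closed‑loop head reduces to a mixture of open‑loop reward‑sequence laws already captured by $\{p(\mathbf r\mid o,\mathbf a)\}_{\mathbf a}$ and hence by $\Phi(o)$; or (ii) exploiting the Block structure to pass to the latent MDP—where $V^e_*(o)=V^*_{\mathrm{MDP}}([o]_s)$ and $H$ depend on $o$ only through $[o]_s$—and doing the matching there, together with the invariance identity $p(\mathbf r\mid o,\mathbf a)=p(\mathbf r\mid s,\mathbf a)$. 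One also needs the precise definition of ``optimal value function on the representation space'' to be pinned down (optimum over policies factoring through $\Phi$, with rewards observed) so that both inequalities are literally meaningful; I expect this bookkeeping, rather than any inequality, to be where the proof is delicate.
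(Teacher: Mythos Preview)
Your plan matches the paper's proof: the lower bound is immediate, and the upper bound comes from splitting at step $T$, bounding each tail by $\gamma^T\bar r/(1-\gamma)$, and showing the head $H(o)$ is achievable by a representation-space policy. The paper resolves the obstacle you flag via your route~(ii). Concretely, it first chooses the optimal policy $\pi_*$ so that $\pi_*(\cdot\mid o)=\pi_*(\cdot\mid o')$ whenever $[o]_s=[o']_s$ (declared without loss of generality, using the Block structure). It then defines $\hat\pi(z):=\pi_*(\bar o)$ for a fixed representative $\bar o\in\Phi^{-1}(z)$. Since $\Phi$ is a $T$-level reward sequence representation, $\Phi(o)=\Phi(o')$ forces $p(\mathbf r\mid o,\mathbf a)=p(\mathbf r\mid o',\mathbf a)$ for every $\mathbf a\in\mathcal A^T$, and the paper uses this to conclude that the expected $T$-step head under $\hat\pi\circ\Phi$ is the same starting from $o$ as from $\bar o$, where by construction it agrees with the head under $\pi_*$. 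Your closed-loop-versus-open-loop worry is well placed and the paper is terse at exactly that step, but the representative-$\bar o$ construction is the specific device you were looking for; and yes, the paper also leaves ``optimal value function on the representation space'' implicit, to be read as the supremum over policies factoring through $\Phi$.
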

\begin{proof}
See Appendix~\ref{app-thm:bd}
\end{proof}

\begin{figure*}
    \begin{center}
    \centerline{\includegraphics[scale=0.48]{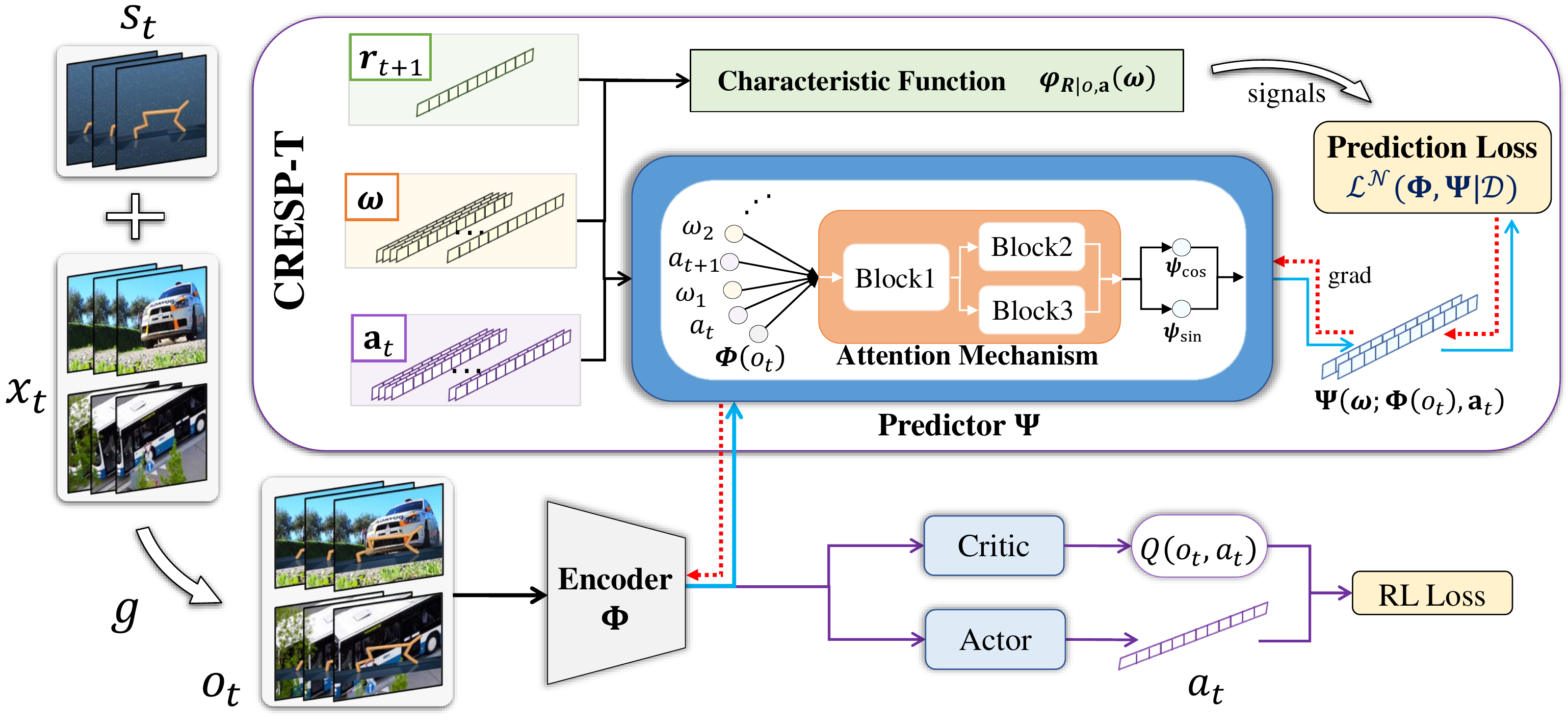}}
    \caption{The overall architecture of \ourM{-T}.
    \ourM{-T} minimizes the predictive loss to train an encoder $\Phi$ and simultaneously uses $\Phi$ to learn a policy in the actor-critic setting.
    In the prediction task, \ourM{-T} predicts the characteristic functions of RSD-OA through the encoder $\Phi$ and the predictor $\Psi$ (in the purple box).
    The predictive loss $\mathcal{L^N}(\Phi, \Psi|\mathcal{D})$ provides the gradients (red lines) to update both the predictor $\Psi$ and the encoder $\Phi$.
    Here $\mathbf{r}_{t+1}$ and $\mathbf{a}_t$ are sequences drawn from the replay buffer $\mathcal{D}$.
    $\bm{\omega}$ are sampled from the standard Gaussian distribution $\mathcal{N}$.}
    \label{fig-arc}
    \end{center}
\end{figure*}

\section{\ourM{} for Representation Learning}
\label{sec-5}

In this Section, we propose our task-relevant representation learning approach, \textbf{C}haracteristic \textbf{Re}ward \textbf{S}equence \textbf{P}rediction (\ourM{}), which uses the characteristic functions to encode the information of RSD-OA.
Specifically, we first introduce the characteristic functions of RSD-OA in Section~\ref{subsec-5.1}.
We then introduce \ourM{} in Section~\ref{subsec-5.2}.
Finally, we extend the conference version by proposing an enhanced version of \ourM{}, namely \ourM{-T}, in Section~\ref{subsec-5.3}.

\subsection{Characteristic Functions of RSD-OA}
\label{subsec-5.1}

To encode the information of RSD-OA, we propose to use the characteristic function, by noticing that the characteristic function can well reconstruct the corresponding high-dimensional distribution (see Appendix~\ref{lem:cf}), and meanwhile, it can be easily computed via sampling~\cite{ansari2020characteristic, yu2004empirical}. 
We define the characteristic function $\varphi_{\mathbf{R}|o,\mathbf{a}}$ of RSD-OA as
\begin{align*}
    \varphi_{\mathbf{R}|o,\mathbf{a}} (\bm{\omega})=\mathbb{E}_{\mathbf{R}\sim p(\cdot|o, \mathbf{a})}\left[e^{j\langle\bm{\omega},\mathbf{R}\rangle}\right]=\int_{\mathcal{R}^T} e^{j\langle\bm{\omega},\mathbf{r}\rangle}p(\mathbf{r}|o, \mathbf{a})\mathrm{d}\mathbf{r},
\end{align*}
where $\bm{\omega}\in\mathbb{R}^T$ denotes the random vector of characteristic function, and $j=\sqrt{-1}$ is the imaginary unit.
Since we consider discounted cumulative rewards for VRL tasks, we use $\langle\cdot,\cdot\rangle$ to denote the weighted inner product, i.e., 
$\langle \bm{\omega}, \mathbf{r}\rangle=\sum_{t=1}^T \gamma^t\omega_tr_t,$
where $\gamma$ is the discounted factor.

The advantages of characteristic function also include:
1) $\left|\varphi_{\mathbf{R}|o,\mathbf{a}}(\bm{\omega})\right|\le\mathbb{E}_{\mathbf{R}\sim p(\cdot|o, \mathbf{a})}\left|e^{j\langle\bm{\omega},\mathbf{R}\rangle}\right|=1,$ which indicates that the characteristic function always exists and is uniformly bounded;
2) the characteristic function $\varphi_{\mathbf{R}|o,\mathbf{a}}$ of RSD-OA is uniformly continuous on $\mathbb{R}^T$, which makes it tractable for learning.

Based on characteristic functions $\varphi_{\mathbf{R}|o,\mathbf{a}}$ of RSD-OA, we introduce an equivalent definition of $T$-level reward sequence representations in the following theorem. Such the theorem provides the theoretical basis for learning task-relevant representations by using the characteristic function $\varphi_{\mathbf{R}|o,\mathbf{a}}$ of RSD-OA. 
\begin{theorem}
\label{thm:cf}
A representation $\Phi(o)$ from any observation $o\in\mathcal{O}$ is a $T$-level reward sequence representation if and only if there exits a predictor $\Psi$ such that for all $\bm{w}\in\mathbb{R}^{T},o\in\mathcal{O}$ and $\mathbf{a}\in\mathcal{A}^T$,
\begin{align*}
    \Psi(\bm{\omega};\Phi(o),\mathbf{a})=\varphi_{\mathbf{R}|o,\mathbf{a}}(\bm{\omega})=\mathbb{E}_{\mathbf{R}\sim p(\cdot|o,\mathbf{a})}\left[e^{j\langle\bm{\omega},\mathbf{R}\rangle}\right].
\end{align*}
\end{theorem}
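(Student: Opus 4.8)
The plan is to prove Theorem~\ref{thm:cf} as a routine consequence of two facts already available in the excerpt: (i) the definition of a $T$-level reward sequence representation in terms of the existence of a function $f$ with $f(\mathbf{r};\Phi(o),\mathbf{a}) = p(\mathbf{r}|o,\mathbf{a})$, and (ii) the uniqueness property of characteristic functions (referenced as Appendix~\ref{lem:cf}, i.e. that a probability distribution on $\mathcal{R}^T$ is uniquely determined by its characteristic function, and conversely the characteristic function is a deterministic functional of the distribution). The key observation is that the map between a distribution and its characteristic function is a \emph{bijection} on the relevant class of distributions, so ``$\Phi(o)$ determines $p(\mathbf{r}|o,\mathbf{a})$ for all $\mathbf{a}$'' is logically equivalent to ``$\Phi(o)$ determines $\varphi_{\mathbf{R}|o,\mathbf{a}}$ for all $\mathbf{a}$.'' Since the weighted inner product $\langle\bm\omega,\mathbf r\rangle = \sum_{t}\gamma^t\omega_t r_t$ is just a fixed invertible linear reparametrization of the frequency variable (each $\gamma^t > 0$), using it in place of the standard inner product does not affect injectivity of the transform, so the uniqueness theorem still applies verbatim.

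I would organize the argument as two implications. For the forward direction, suppose $\Phi(o)$ is a $T$-level reward sequence representation, so there is an $f$ with $f(\,\cdot\,;\Phi(o),\mathbf{a}) = p(\,\cdot\,|o,\mathbf{a})$. Define $\Psi(\bm\omega;z,\mathbf{a}) := \int_{\mathcal{R}^T} e^{j\langle\bm\omega,\mathbf r\rangle} f(\mathbf r; z,\mathbf{a})\,\mathrm{d}\mathbf r$; this is well defined because $f(\,\cdot\,;z,\mathbf{a})$ is a density and the integrand is bounded by $1$ (using the uniform boundedness property of characteristic functions noted just before the theorem), and plugging in $z=\Phi(o)$ reproduces exactly the integral defining $\varphi_{\mathbf{R}|o,\mathbf{a}}(\bm\omega)$. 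Hence $\Psi(\bm\omega;\Phi(o),\mathbf{a}) = \varphi_{\mathbf{R}|o,\mathbf{a}}(\bm\omega)$ for all $\bm\omega,o,\mathbf{a}$, as required. For the converse, suppose such a $\Psi$ exists. Fix $o$ and $\mathbf{a}$; then $\bm\omega \mapsto \Psi(\bm\omega;\Phi(o),\mathbf{a})$ equals the characteristic function of the distribution $p(\,\cdot\,|o,\mathbf{a})$ on $\mathcal{R}^T$. By the characteristic-function uniqueness/inversion result, there is a measurable inverse operator $\mathcal{F}^{-1}$ that recovers the density from its characteristic function; set $f(\mathbf r;z,\mathbf{a}) := \big(\mathcal{F}^{-1}\Psi(\,\cdot\,;z,\mathbf{a})\big)(\mathbf r)$. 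Then $f(\mathbf r;\Phi(o),\mathbf{a}) = p(\mathbf r|o,\mathbf{a})$ for all $\mathbf r,o,\mathbf{a}$, which is precisely the definition of a $T$-level reward sequence representation.

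The only genuinely delicate point — and the step I would spend the most care on — is the measurability and well-definedness of the inversion operator $\mathcal{F}^{-1}$ as a function of the pair $(\bm\omega\text{-profile}, \mathbf{a})$, together with ensuring the characteristic functions in play actually lie in the domain where inversion is valid (e.g. integrability of $\varphi$, or working with distribution functions rather than densities if densities need not exist). In the bounded-reward POMDP setting the reward sequences live in a compact cube $\mathcal{R}^T$, so all the relevant characteristic functions are restrictions of entire functions and the classical Lévy inversion / uniqueness theorem applies cleanly; I would state this as an appeal to the lemma in Appendix~\ref{lem:cf} rather than reproving it. I would also remark once that the weighting by $\gamma^t$ is harmless: the substitution $\tilde\omega_t = \gamma^t\omega_t$ is a linear isomorphism of $\mathbb{R}^T$, so $\{\langle\bm\omega,\mathbf r\rangle : \bm\omega\in\mathbb{R}^T\}$ ranges over all linear functionals of $\mathbf r$ and the transform remains injective. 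With those two remarks in place the proof reduces to the two short bijection arguments above, so overall this is a ``definition-chasing plus cite the uniqueness lemma'' proof rather than one requiring new estimates.
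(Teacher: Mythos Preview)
Your proposal is correct and follows essentially the same approach as the paper: both arguments reduce the equivalence to the bijection between probability distributions and their characteristic functions (the paper's Appendix~\ref{lem:cf}), so that ``$\Phi(o)$ determines $p(\cdot|o,\mathbf{a})$'' is equivalent to ``$\Phi(o)$ determines $\varphi_{\mathbf{R}|o,\mathbf{a}}$.'' Your version is in fact more explicit than the paper's---you write out the forward construction as an integral and the converse via an inversion operator, and you note that the $\gamma^t$-weighted inner product is an invertible linear reparametrization of the frequency variable---whereas the paper simply invokes the bijection abstractly without these details.
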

\begin{proof}
See Appendix~\ref{app-thm:cf}.
\end{proof}

\subsection{\ourM{}}
\label{subsec-5.2}

We introduce \ourM{} that predicts the characteristic function of RSD-OA for task-relevant representation learning. 
Specifically, we first apply an encoder $\Phi:\mathcal{O}\to\mathcal{Z}$ to extract the representation $\Phi(o)$ from the observation $o\in \mathcal{O}$. We then introduce a predictor $\Psi$ that maps the random vector $\bm{\omega}$, the representation $\Phi(o)$, and the predefined action sequence $\mathbf{a}$ to the characteristic function $\varphi_{\mathbf{R}|o,\mathbf{a}}$.
Thus, we can train the encoder $\Phi$ and the predictor $\Psi$ by minimizing the distance between the predicted value $\Psi(\bm{\omega};\Phi(o),\mathbf{a})$ and the target value 
$\varphi_{\mathbf{R}|o,\mathbf{a}}$.
Based on the training, we can learn an encoder to extract the task-relevant representation.

Based on our idea, we first consider formulating the optimization objective for representation learning.
Then, we propose the implementation of \ourM{}.

\subsubsection{Optimization Objective of \ourM{}}
\label{subsec-5.2.1}

We consider using the weighted mean squared error to measure the distance between predicted values $\Psi(\bm{\omega};\Phi(o),\mathbf{a})$ and target values $\varphi_{\mathbf{R}|o,\mathbf{a}}$ of characteristic functions of RSD-OA. Thus, we introduce the predictive loss function of the given observation $o$ and predefined action sequence $\mathbf{a}$:
\begin{align}
\label{loss:WSE1}
     & L^{\mathcal{N}}_{WSE} (\Phi, \Psi|o,\mathbf{a}) \\
    =& \left\|\Psi(\cdot; \Phi(o),\mathbf{a})-\varphi_{\mathbf{R}|o,\mathbf{a}}(\cdot)\right\|^2_\mathcal{N} \nonumber\\ 
    =& \,\, \mathbb{E}_{\bm{\Omega}\sim\mathcal{N}}\left[ \left| \Psi\left(\bm{\Omega};\Phi(o), \mathbf{a}\right) - \varphi_{\mathbf{R}|o,\mathbf{a}}(\bm{\Omega}) \right|^2_2 \right]\nonumber\\
    =& \int_{\mathbb{R}^T}\left|\Psi\left(\bm{\omega};\Phi(o), \mathbf{a}\right) - \varphi_{\mathbf{R}|o,\mathbf{a}}(\bm{\omega})\right|^2\mathcal{N}(\bm{\omega})\mathrm{d}\bm{\omega}, \nonumber
\end{align}
where $\mathcal{N}$ is an arbitrary probability density function on $\mathbb{R}^T$, and the weighted squared norm of a function is defined as
\begin{align*}
    \|f(\cdot)\|_{\mathcal{N}}=\sqrt{\int_{\mathbb{R}^T}\left|f(\mathbf{w})\right|^2\mathcal{N}(\bm{\omega})\mathrm{d}\bm{\omega}}.
\end{align*}
Moreover, the $(o,\mathbf{a})$ pairs are sampled from the replay buffer $\mathcal{D}$. Thus, we derive the predictive loss function of \ourM{}:
\begin{align}
    & L^{\mathcal{N}}_{WSE}(\Phi,\Psi |\mathcal{D}) = \mathbb{E}_{(O,\mathbf{A})\sim\mathcal{D}}\left[L^{\mathcal{N}}_{WSE}(\Phi, \Psi|O,\mathbf{A})\right] \\
    = &\,\, \mathbb{E}_{(O,\mathbf{A})\sim\mathcal{D},\bm{\Omega}\sim\mathcal{N}}\left[ \left| \Psi\left(\bm{\Omega};\Phi(O), \mathbf{A}\right) - \varphi_{\mathbf{R}|O,\mathbf{A}}(\bm{\Omega}) \right|^2_2 \right]. \nonumber
\end{align}

In practice, since we have no access to the target values of characteristic functions of RSD-OA, we propose to optimize an upper bound on $L^{\mathcal{N}}_{WSE}$:
\begin{align}
\label{loss:WSE3}
    &\mathcal{L}^{\mathcal{N}}_{WSE}(\Phi, \Psi |\mathcal{D}) \\
    = &\,\, \mathbb{E}_{(O,\mathbf{A},\mathbf{R}) \sim \mathcal{D},\bm{\Omega}\sim\mathcal{N}}\left[ \left| \Psi\left(\bm{\Omega}; \Phi(O), \mathbf{A}\right) - e^{j\langle\bm{\Omega},\mathbf{R}\rangle} \right|^2_2 \right] \nonumber\\
    \geq &\,\, \mathbb{E}_{(O,\mathbf{A})\sim\mathcal{D},\bm{\Omega}\sim\mathcal{N}}\left[ \left| \Psi\left(\bm{\Omega}; \Phi(O), \mathbf{A}\right) - \varphi_{\mathbf{R}|O,\mathbf{A}}(\bm{\Omega}) \right|^2_2 \right] \nonumber\\
    = &\,\, L^{\mathcal{N}}_{WSE}(\Phi,\Psi |\mathcal{D}). \nonumber
\end{align}

Due to the complex form of characteristic functions in spectral space, we divide the predictor $\Psi$ into two parts $(\psi_{\cos}, \psi_{\sin})$, where $\psi_{\cos}$ estimates the real parts of characteristic functions, and $\psi_{\sin}$ estimates the imaginary parts.
Therefore, we rewrite the weighted squared error loss of \ourM{} in Equation~(\ref{loss:WSE3}) as:
\begin{align}
    \label{eq-wse}
    & \mathcal{L}^\mathcal{N}_{WSE}(\Phi,\Psi |\mathcal{D})\\
    =&\,\, \mathbb{E}_{(O,\mathbf{A},\mathbf{R}) \sim \mathcal{D},\bm{\Omega}\sim\mathcal{\mathcal{N}}} \left[\left\| \psi_{\cos} \left(\bm{\Omega}; \Phi(O), \mathbf{A}\right) - \cos\left(\langle\bm{\Omega},\mathbf{R}\rangle\right) \right\|^2_2 \right. \nonumber\\
    &\quad\quad\quad\quad\quad\quad\quad + \left. \left\| \psi_{\sin} \left(\bm{\Omega}; \Phi(O), \mathbf{A}\right) - \sin\left(\langle\bm{\Omega},\mathbf{R}\rangle\right) \right\|^2_2 \right] \nonumber.
\end{align}

\subsubsection{Implementation of \ourM{}}

For implementation, we apply a 3-layer MLP as the predictor $\Psi$ after the encoder $\Phi$. We use last two layers to output $\psi_{\cos}$ and $\psi_{\sin}$, respectively. The architecture of \ourM{} is in Appendix~\ref{app-imp}.
Moreover, to compute predicted and target values in Equation~(\ref{eq-wse}), we use a Gaussian distribution as $\mathcal{N}$ in characteristic functions of RSD-OA, and we conduct an experiment of hyperparameters (i.e., the mean $\bm{\mu}$ and the standard deviation $\bm{\sigma}$) of $\mathcal{N}$ in Appendix~\ref{app-hs-as}. Based on the empirical results, we select the standard Gaussian distribution $\mathcal{N}(\bm{0},I)$.
We also consider sampling $\kappa$ data points $\{ \bm{\omega}_i \}_{i=1}^{\kappa}$ from $\mathcal{N}$ and compute target values $\{ \varphi_{\mathbf{R}|o,\mathbf{a}} \left( \bm{\omega}_i \right) \}_{i=1}^{\kappa}$ for each data instance $(o, \mathbf{a}, \mathbf{r})$.
For the predicted values $\{ \Psi\left( \bm{\omega}_i; \Phi(o), \mathbf{a} \right) \}_{i=1}^{\kappa}$, we use the encoder $\Phi$, the predictor $\Psi=(\psi_{\cos}, \psi_{\sin})$, and inputs $\left(o, \mathbf{a}, \{ \bm{\omega}_i \}_{i=1}^{\kappa} \right)$ to predict the real and imaginary values of characteristic functions of RSD-OA.
In addition, we conduct an experiment to select $\kappa=256$ in Appendix~\ref{app-hs-as}.

\subsection{\ourM{-T}}
\label{subsec-5.3}

We extend \ourM{} by proposing an enhanced version, namely \ourM{-T}, which significantly improves the robustness of \ourM{} through the more effective optimization objective and architecture.

\subsubsection{Optimization Objective of \ourM{-T}}
\label{subsec-5.3.1}

We notice that the weighted mean squared error may lose the phase information of characteristic functions of RSD-OA. This is because the angle between two vectors may be large even if the corresponding Euclidean distance is small, e.g., the vectors $(0,0.1)$ and $(0,-0.1)$. 
Therefore, we propose to estimate the phase in spectral space by using the cosine similarity, and then we introduce the spectral cosine similarity as follows.
\begin{align}
\label{loss:CS1}
    L^{\mathcal{N}}_{SCS}& (\Phi, \Psi |o, \mathbf{a}) \\
    =& - CosSim^{\mathcal{N}}(\Psi(\cdot;\Phi(o),\mathbf{a}), \varphi_{\mathbf{R}|o,\mathbf{a}}(\cdot)) \nonumber\\
    =& -Re\left(\frac{\langle \Psi(\cdot;\Phi(o),\mathbf{a}), \varphi_{\mathbf{R}|o,\mathbf{a}}(\cdot)\rangle_\mathcal{N}}{\|\Psi(\cdot;\Phi(o),\mathbf{a})\|_\mathcal{N}\cdot\|\varphi_{\mathbf{R}|o,\mathbf{a}}(\cdot)\|_\mathcal{N}}\right), \nonumber
\end{align}
where $\left\langle f(\cdot), h(\cdot)\right\rangle_{\mathcal{N}}$ denotes the weighted inner product of the functions $f(\cdot)$ and $h(\cdot)$ as follows.
\begin{align*}
    \left\langle f(\cdot), h(\cdot)\right\rangle_{\mathcal{N}} = \int_{\mathbb{R}^T}f(\bm{\omega})\overline{h(\bm{\omega})}\mathcal{N}(\bm{\omega})\mathrm{d}\bm{\omega}.
\end{align*}

Notice that the term $\|\varphi_{\mathbf{R}|o,\mathbf{a}}(\cdot)\|_\mathcal{N}$ is independent of $\Phi$ and $\Psi$ when $(o, \mathbf{a})$ is given. Therefore, we can omit such term and rewrite the loss function in Equation~(\ref{loss:CS1}) as:
\begin{equation}
    L^{\mathcal{N}}_{SCS}(\Phi, \Psi |o, \mathbf{a}) = -Re\left(\frac{\langle \Psi(\cdot;\Phi(o),\mathbf{a}), \varphi_{\mathbf{R}|o,\mathbf{a}}(\cdot)\rangle_\mathcal{N}}{\|\Psi(\cdot;\Phi(o),\mathbf{a})\|_\mathcal{N}}\right).
\end{equation}
We also sample the $(o,\mathbf{a})$ pairs from the replay buffer $\mathcal{D}$. Then, we derive the predictive loss function of \ourM{-T}:
\begin{align}
\label{loss:CS3}
    & \mathcal{L}^{\mathcal{N}}_{SCS}(\Phi,\Psi |\mathcal{D}) = \mathbb{E}_{(O,\mathbf{A})\sim\mathcal{D}}\left[L^{\mathcal{N}}_{SCS}(\Phi, \Psi|O,\mathbf{A})\right]\\
    = &\,\, \mathbb{E}_{(O,\mathbf{A})\sim\mathcal{D},\bm{\Omega}\sim\mathcal{N}}\left[ -Re\left(\frac{ \Psi(\bm{\Omega};\Phi(O),\mathbf{A})\overline{ \varphi_{\mathbf{R}|O,\mathbf{A}}(\bm{\Omega})}}{\|\Psi(\cdot;\Phi(O),\mathbf{A})\|_\mathcal{N}}\right) \right] \nonumber\\
    = &\,\, \mathbb{E}_{(O,\mathbf{A},\mathbf{R})\sim\mathcal{D}, \bm{\Omega}\sim\mathcal{N}}\left[  -Re\left(\frac{ \Psi(\bm{\Omega};\Phi(O),\mathbf{A})e^{-i\langle \bm{\Omega}, \mathbf{R}\rangle}}{\|\Psi(\cdot;\Phi(O),\mathbf{A})\|_\mathcal{N}}\right)\right]. \nonumber
\end{align}

By dividing the predictor $\Psi$ into $(\psi_{\cos}, \psi_{\sin})$, we rewrite the proposed loss function in Equation~(\ref{loss:CS3}) as:
\begin{align}
    \label{eq-cos}
    & \mathcal{L}^{\mathcal{N}}_{SCS}(\Phi,\Psi |\mathcal{D}) \\
    =&\,\, \mathbb{E}_{(O,\mathbf{A},\mathbf{R}) \sim \mathcal{D}}
    \left[\frac{
        \begin{aligned}
            \mathbb{E}_{\bm{\Omega}\sim\mathcal{N}}\left[\psi_{\cos} \left(\bm{\Omega}; \Phi(O), \mathbf{A}\right)\cos\left(\langle\bm{\Omega},\mathbf{R}\rangle\right)\right.\\
            \left.+ \psi_{\sin} \left(\bm{\Omega}; \Phi(O), \mathbf{A}\right) \sin\left(\langle\bm{\Omega},\mathbf{R}\rangle\right)\right]
        \end{aligned}}{\sqrt{
        \begin{aligned}
            \mathbb{E}_{\bm{\Omega}\sim\mathcal{N}}\left[\|\psi_{\cos} \left(\bm{\Omega}; \Phi(O), \mathbf{A}\right)\|^2\right.\\
            +\left.\|\psi_{\sin} \left(\bm{\Omega}; \Phi(O), \mathbf{A}\right)\|^2\right]
        \end{aligned}
    }}
    \right]. \nonumber
\end{align}

Furthermore, considering the advantages of the weighted squared error in Equation~(\ref{eq-wse}) and the spectral cosine similarity in Equation~(\ref{eq-cos}) in capturing the amplitude and phase of RSD-OA, respectively, we propose:
\begin{equation}
\label{eq-comb}
\mathcal{L}^{\mathcal{N}}(\Phi, \Psi|\mathcal{D}) = \mathcal{L}^{\mathcal{N}}_{WSE}(\Phi,\Psi |\mathcal{D}) + \lambda\mathcal{L}^{\mathcal{N}}_{SCS}(\Phi,\Psi |\mathcal{D}),
\end{equation}
where $\lambda$ represents the trade-off between $\mathcal{L}^{\mathcal{N}}_{WSE}$ and $\mathcal{L}^{\mathcal{N}}_{SCS}$.
Based on such the combination, we can accurately estimate the characteristic functions of RSD-OA in spectral space. We provide the ablation studies in Section~\ref{subsec-ablation}.

In the training process, we update the encoder $\Phi$ and the predictor $\Psi$ through the auxiliary loss $\mathcal{L}^{\mathcal{N}}(\Phi, \Psi|\mathcal{D})$, and use the learned encoder $\Phi$ for the VRL tasks.
The training procedure of \ourM{-T} are shown in Algorithm~\ref{alg-cresp}.



\subsubsection{Architecture of \ourM{-T}}
\label{subsec-5.3.2}

In order to further improve representations, \ourM{-T} models temporal dependencies of the predefined subsequent action sequence $\mathbf{a}$ by using the transformer architecture.
Specifically, for any given data instance $(o, \mathbf{a}, \mathbf{r})$ and a sample $\bm{\omega}$ from $\mathcal{N}$, we first concatenate $(\Phi(o), \bm{\omega}, a_1,a_2,\cdots,a_T)$ as a sequence. We can view each item in the sequence as a token and use a linear layer to project each token to the embedding space. Then, we use a transformer architecture to process the token embeddings with the corresponding position embeddings.
According to the aforementioned process,  we can model the dependencies among $\Phi(o)$, $\bm{\omega}$, and $\textbf{a}$ by using attention mechanisms in \ourM{-T}. 
Moreover, \ourM{-T} uses last two separate blocks in the transformer as $\psi_{\cos}$ and $\psi_{\sin}$, instead of using last two separate layers in \ourM{}. The architecture of \ourM{-T} are in Figure~\ref{fig-arc}.

\begin{algorithm}[tb]
   \caption{\ourM{-T}}
   \label{alg-cresp}
    \begin{algorithmic}
        \STATE {Initialize a replay buffer $\mathcal{D}$, a policy $\pi$, a representation $\Phi$, and a function approximator $\Psi$}
        \FOR {each iteration}
            \FOR {$e$ in $\mathcal{E}$}
                \FOR {each environment step $t$}
                    \STATE Execute an action $a_t\sim \pi(\cdot|\Phi(o_t))$
                    \STATE Receive a transition $o_{t+1}, r_{t+1}\sim p^e(\cdot|o_t,a_t)$
                    \STATE Record trajectories $\{(o_{t}, \mathbf{a}_{t}, \mathbf{r}_{t+1})\}$ in $\mathcal{D}$
                \ENDFOR
            \ENDFOR
            \FOR {each gradient step}
                \STATE Sample partial trajectories from $\mathcal{D}$
                \STATE Update the representation: \textcolor{blue}{$\mathcal{L}^{\mathcal{N}}(\Phi, \Psi|\mathcal{D})$}
                \STATE Update the policy: $\mathcal{L}_{\text{RL}}(\pi)$
            \ENDFOR
        \ENDFOR
    \end{algorithmic}
\end{algorithm}

\section{Experiments}
\label{sec-6}
In this section, we evaluate the generalization performance on unseen test environments with visual distractions.
We first provide experiment settings in Section~\ref{subsec-6.1}.
Then, we conduct experiments to show main results in Section~\ref{subsec-6.2}.
We also show the impact of task-irrelevant information from observation transition dynamics on representation learning in Section~\ref{subsec-6.3}.
Moreover, we evaluate representations learned by different methods and visualize their task-relevant and -irrelevant information in Section~\ref{subsec-6.4}. 
Finally, we conduct experiments for hyperparameter selection in Section~\ref{subsec-hyperselect} and ablation in Section~\ref{subsec-ablation}.
In this journal manuscript, we extend the experiments of \ourM{-T} in Sections~\ref{subsec-6.2}, ~\ref{subsec-6.4}, ~\ref{subsec-hyperselect}, and~\ref{subsec-ablation} to show the robustness of the enhanced version.

\subsection{Experiment Settings}
\label{subsec-6.1}
In this section, we introduce the environment settings, baselines, experiment parameters, and network details.
See Appendix~\ref{app-nd} for more details.

\textbf{Environment Settings}:
In the conference version, we use two training environments for evaluation. These two environments have different dynamic visual distractions, but the types of distractions are the same. 
To further understand the performance improvement, we extend experiments to two challenging environment settings:
(1) \textit{One/Single training environment setting}:
We train the agents in an environment and evaluate in unseen test environments.
Notice that in a single training environment, we cannot rely on multiple training environments to disentangle domain-specific or domain-invariant information~\cite{iccv/PengBXHSW19}. Thus, the agents can hardly generalize to unseen environments.
(2) \textit{Three/Multiple training environment setting}:
We train the agent in three environments, which have different dynamic distractions but same types of these distractions. Then, we evaluate the agent in unseen test environments.
Notice that there are discrepancies not only
across the training and test environments, but also
in training environments themselves~\cite{corr/abs-2204-13091}. Moreover, it is more challenging to encode only task-relevant information from observations with multiple visual distractions.

\begin{table}
\renewcommand{\arraystretch}{1.1}
  \caption{Performance of different methods trained on one, two, and three training environments. All methods are evaluated on unseen test environments after 500K steps, and results are averaged over 6 DCS tasks. Highest mean scores and standard errors are marked in blue.}
  \label{table-result-2}
  \setlength\tabcolsep{2pt}
  \begin{tabular}{r|c|ccc}
  \toprule
  & Method & Num Envs 1 & Num Envs 2 & Num Envs 3 \\
  \midrule
  \multirow{7}{*}{\rotatebox{90}{Backgrounds}}
                & \textbf{\ourM{-T}} & $454 \pm 49$ & $\textcolor{blue}{\mathbf{680} \pm \mathbf{54}(+20\%)} $  & $\mathbf{672} \pm \mathbf{52} (+6\%)$ \\
                & \textbf{\ourM{}}  & $439\pm 62$  & $\mathbf{649} \pm \mathbf{55} (+15\%)$ & $\textcolor{blue}{\mathbf{674} \pm \mathbf{58} (+6\%)}$ \\
                & DrQ   & $\textcolor{blue}{\mathbf{499} \pm \mathbf{47}}$ & $566 \pm 53$ & $635 \pm 58$ \\
                & CURL  & $\mathbf{467} \pm \mathbf{56}$ & $533\pm 85$ & $542 \pm 65$ \\
                & DBC  & $-$ & $186 \pm 30$ & $-$ \\
                & MISA & $-$ & $269 \pm 65$ & $-$ \\
                & SAC  & $223\pm 24$ & $191\pm 20$ & $195 \pm 22$ \\
  \midrule
  \multirow{7}{*}{\rotatebox{90}{Colors}}
                & \textbf{\ourM{-T}} & $\textcolor{blue}{\mathbf{461} \pm \mathbf{58}(+15\%)}$ & $\textcolor{blue}{\mathbf{560} \pm \mathbf{53}(+55\%)}$ & $\textcolor{blue}{\mathbf{544} \pm \mathbf{51}(+44\%)}$ \\
                & \textbf{\ourM{}}  & $\mathbf{444}\pm \mathbf{55} (+10\%)$  & $\mathbf{526} \pm \mathbf{80} (+46\%)$ & $\mathbf{455} \pm \mathbf{61} (+20\%)$ \\
                & DrQ   & $402 \pm 61$ & $350 \pm 64$ & $378 \pm 68$ \\
                & CURL  & $266 \pm 31$ & $361\pm 62$ & $317 \pm 61$ \\
                & DBC  & $-$ & $137 \pm 23$ & $-$ \\
                & MISA & $-$ & $147 \pm 54$ & $-$ \\
                & SAC  & $161\pm 19$ & $122\pm 21$ & $118\pm 24$ \\
  \bottomrule
  \end{tabular}
\end{table}

\textbf{Baselines}:
The baseline methods include sample-efficient methods for VRL (\textbf{CURL}~\cite{icml/LaskinSA20} and \textbf{DrQ}~\cite{iclr/YaratsKF21}),
representation learning methods in VRL (\textbf{MISA}~\cite{icml/0001LSFKPGP20} and \textbf{DBC}~\cite{iclr/0001MCGL21}),
and traditional RL methods (\textbf{SAC}~\cite{icml/HaarnojaZAL18}).
For a fair comparison, all methods do not leverage any prior environmental knowledge, such as strong augmentations designed for visual distractions~\cite{lee2020network,iclr/ZhangCDL18,icml/FanWHY0ZA21}, fine-tuning in test environments~\cite{iclr/HansenJSAAEPW21}, or environmental labels to learn invariant representations~\cite{iclr/AgarwalMCB21,l4dc/SonarPM21}.

\textbf{Experiment Parameters}:
In Section~\ref{subsec-6.2}, all experiments report the means and standard errors of the cumulative rewards for 500K environment steps. In each task of these experiments, we train the agents with six random seeds in one, two, and three training environments, respectively. 
Other experiments for verification and ablation are under the two training environment setting with three random seeds.
Moreover, we adopt the action repeat of each task from Planet~\cite{icml/HafnerLFVHLD19}, which is the common setting in VRL.
In Tables~\ref{table-result-2} and~\ref{table-result}, we report the average performance for 100 episodes and boldface the highest results with blue markers. We then plot the average cumulative rewards per task at $500$K environment steps in Figure~\ref{fig-result-dcs}, where each checkpoint is also evaluated using 100 episodes on unseen test environments. Furthermore, we provide the detailed test curves during training process in Appendix~\ref{app-ar}, where the shaded region corresponds to the standard deviation.

\textbf{Network Details}:
\ourM{} and \ourM{-T} build upon SAC and follows the network architecture of DrQ. 
We use a 3-layer feed-forward ConvNet with no residual connection as the encoder. Then, we apply three fully connected layers with hidden size 1024 for actor and critic, respectively.
We also use the random cropping for image pre-processing proposed by DrQ as a weak augmentation without prior knowledge of test environments. To predict characteristic functions of RSD-OA, we use the GPT~\cite{radford2018improving} architecture with three transformer blocks and two attention heads in \ourM{-T}. For the real and imaginary parts of characteristic functions, we apply last two separate blocks. In contrast, we use a 3-layer MLP with hidden size 1024 to predict characteristic functions of RSD-OA in \ourM{}, and we apply last two separate layers to predict the real and imaginary parts. Details of all architectures are in Appendix~\ref{app-imp}.

\begin{figure}
  \centering

  \includegraphics[width=4.4cm]{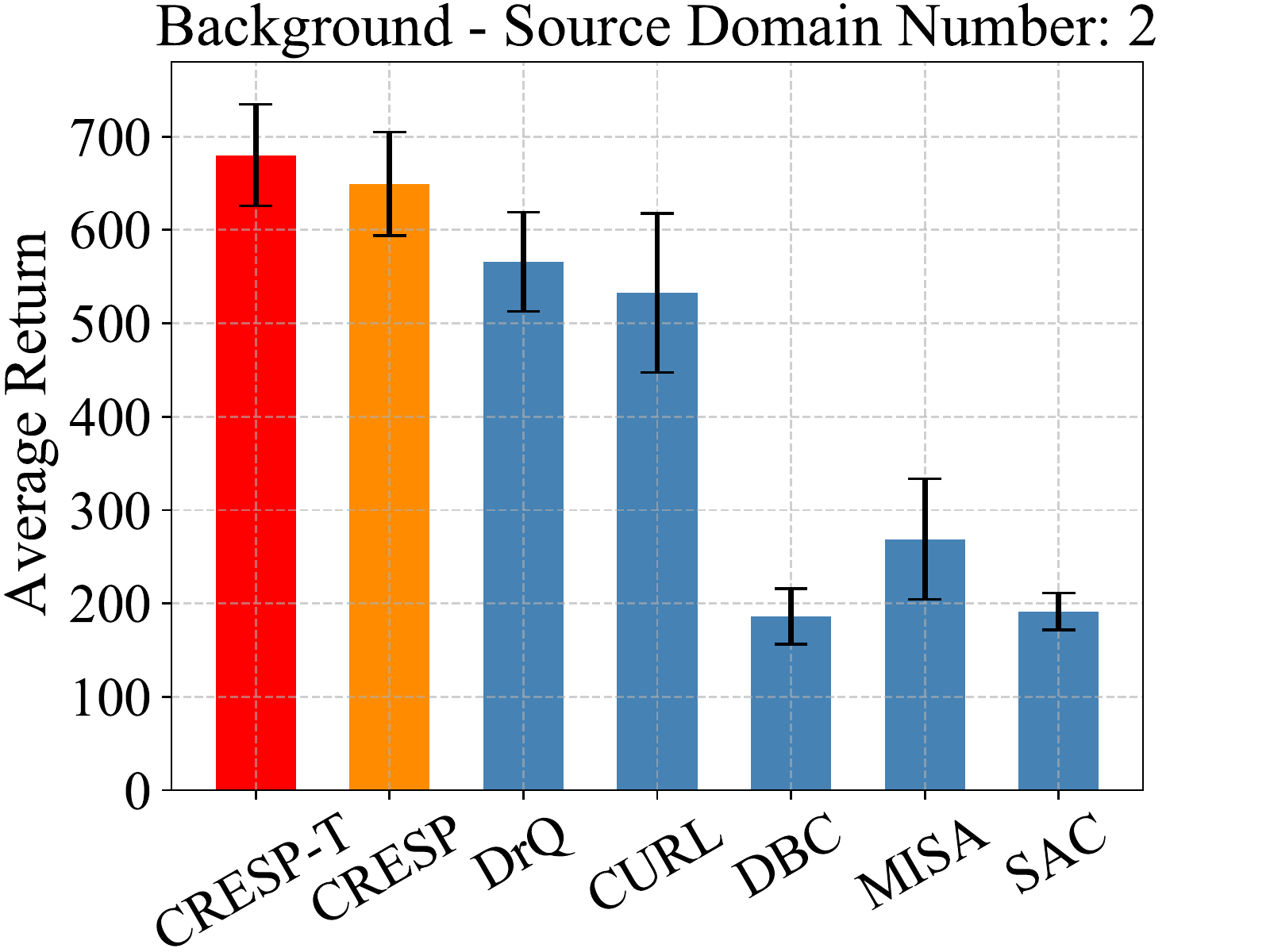}
  \includegraphics[width=4.4cm]{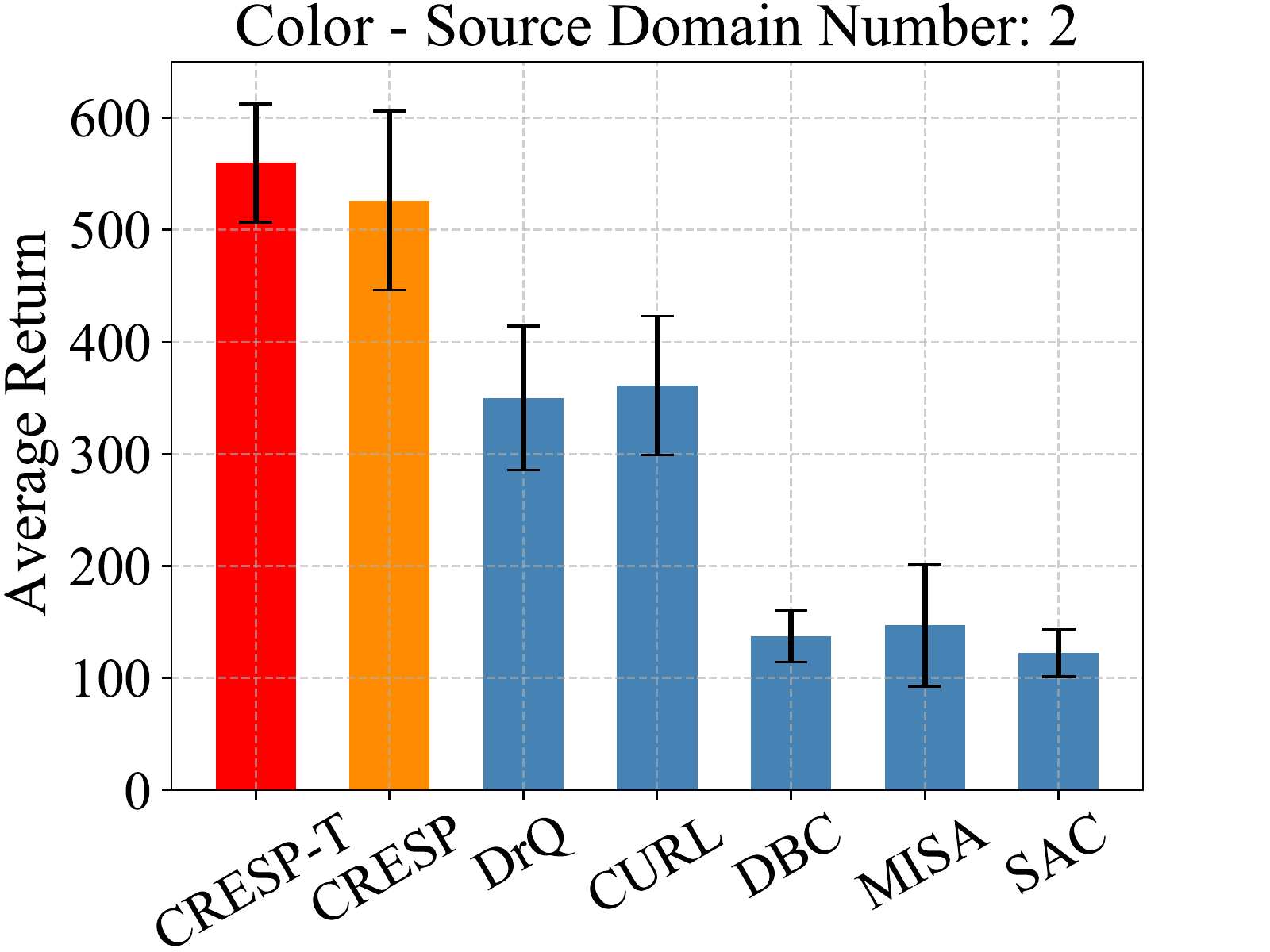}
  
  \caption{We report mean and standard error results. All results are averaged over 6 DCS tasks with 6 random seeds at 500K steps.}
  \label{fig-bar-avg}
\end{figure}

\begin{figure}
  \includegraphics[width=8.5cm]{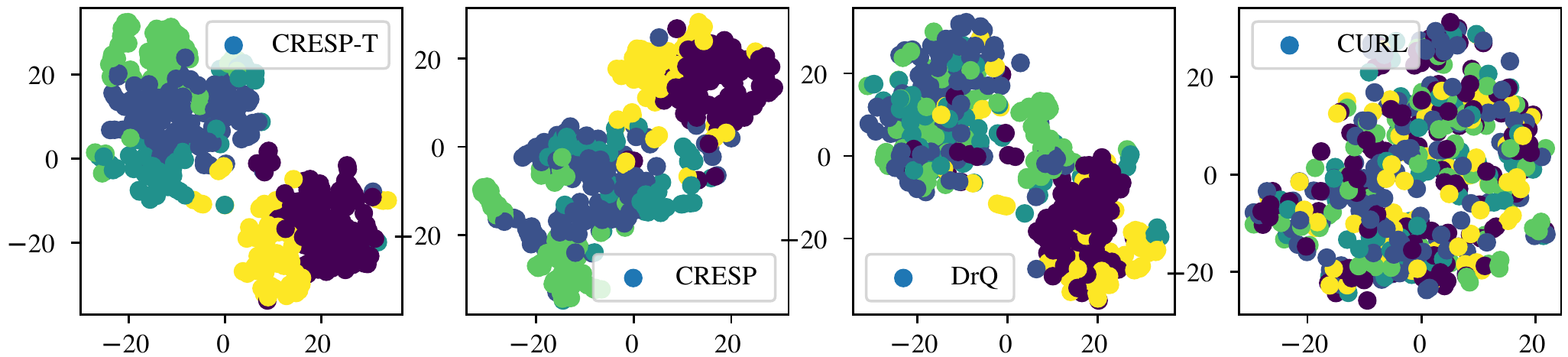}
  \vskip 0.05in
  
  \includegraphics[width=8.5cm]{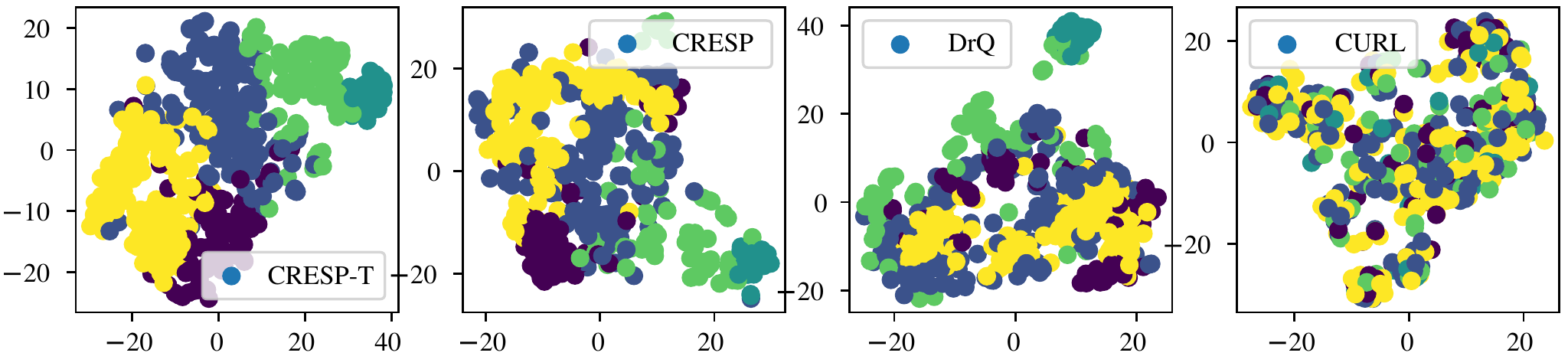}
  \caption{t-SNE visualization of learned representations. \ourM{} and \ourM{-T} correctly group different observations from similar latent states into the same color. The first row is in Cartpole-swingup task, and the second row is in Cheetah-run task.}
  \label{fig-tsne}
\end{figure}

\begin{figure*}[ht]
  \centering
  \includegraphics[width=5.8cm]{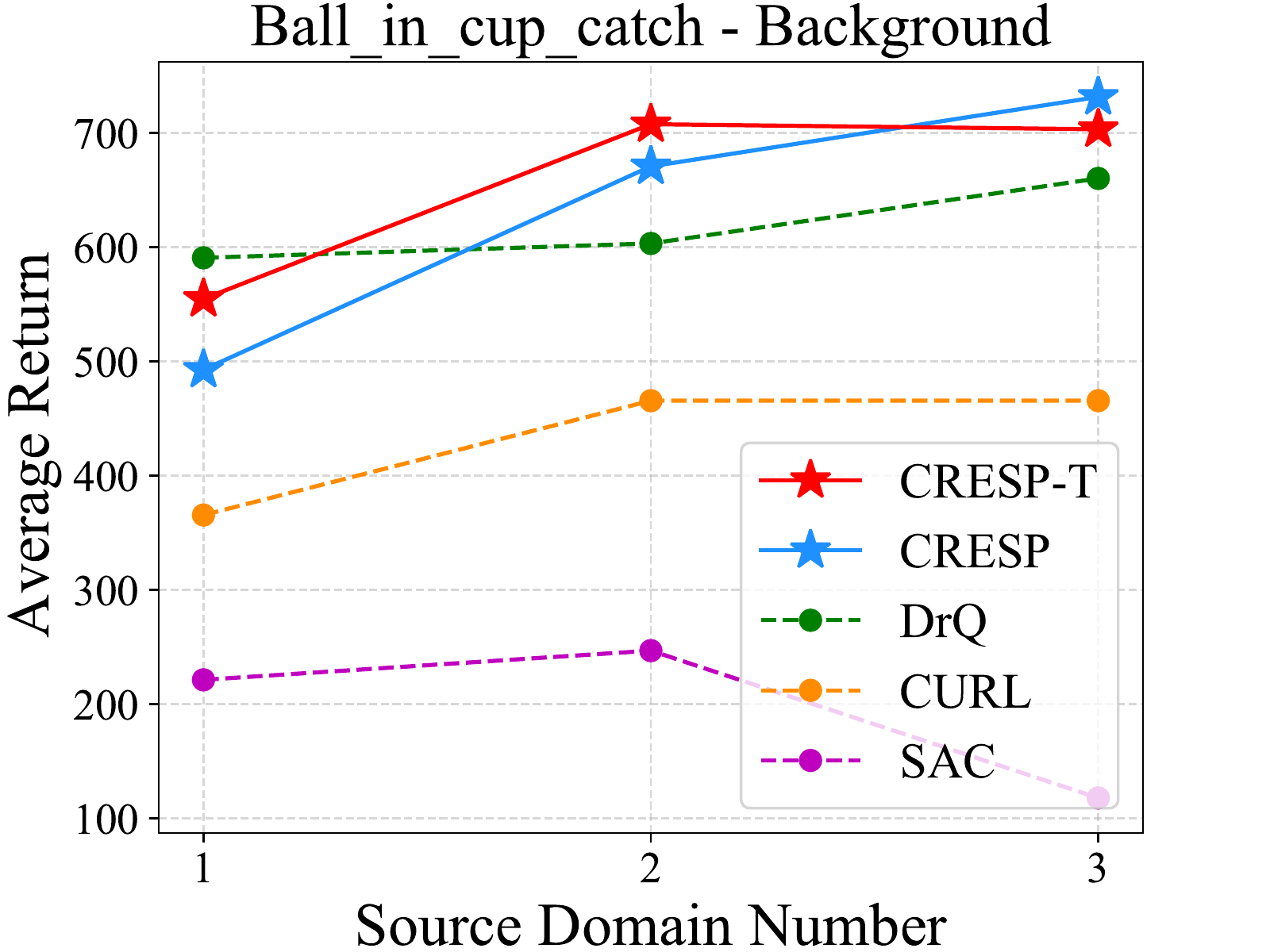}
  \includegraphics[width=5.8cm]{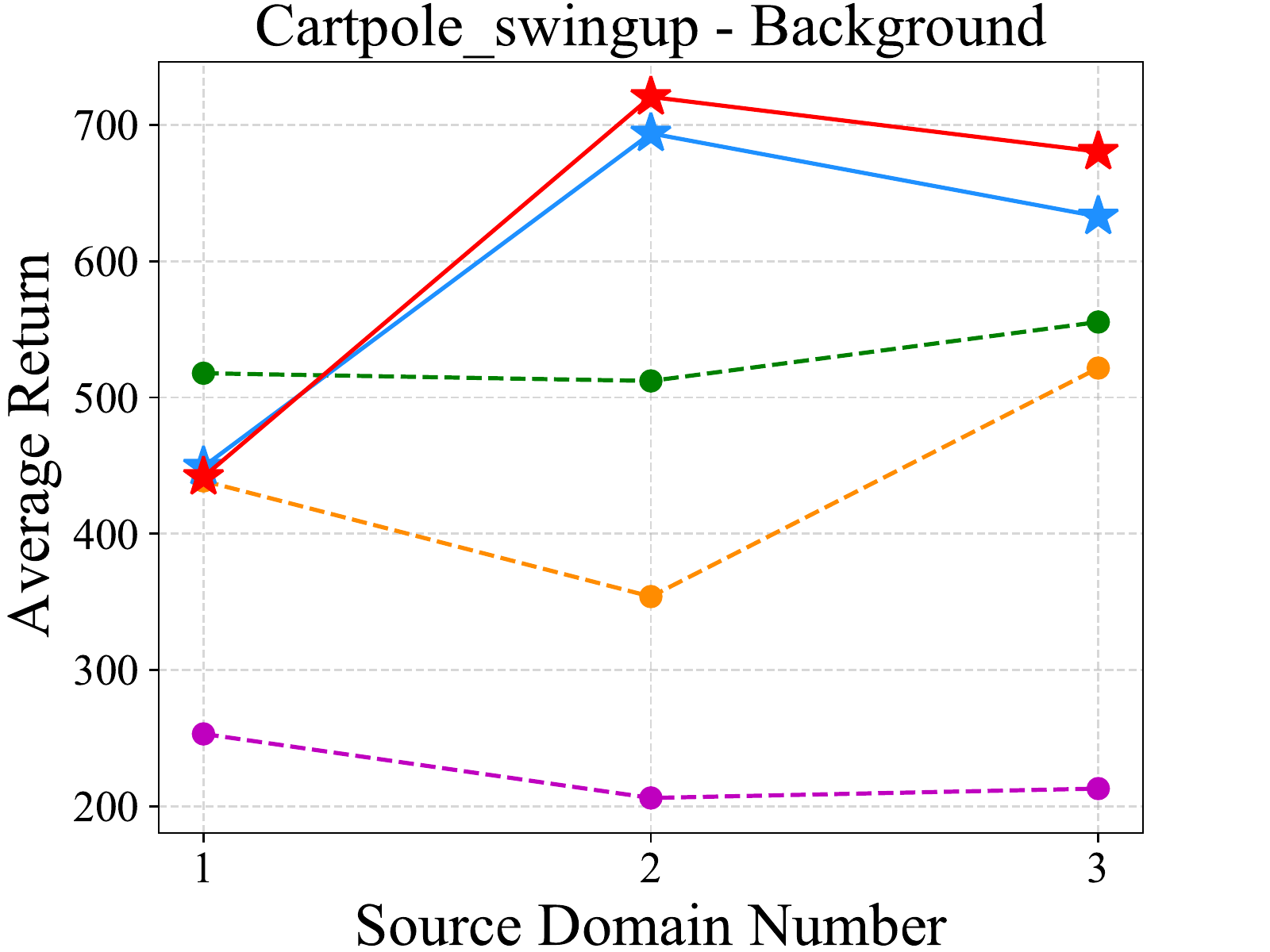}
  \includegraphics[width=5.8cm]{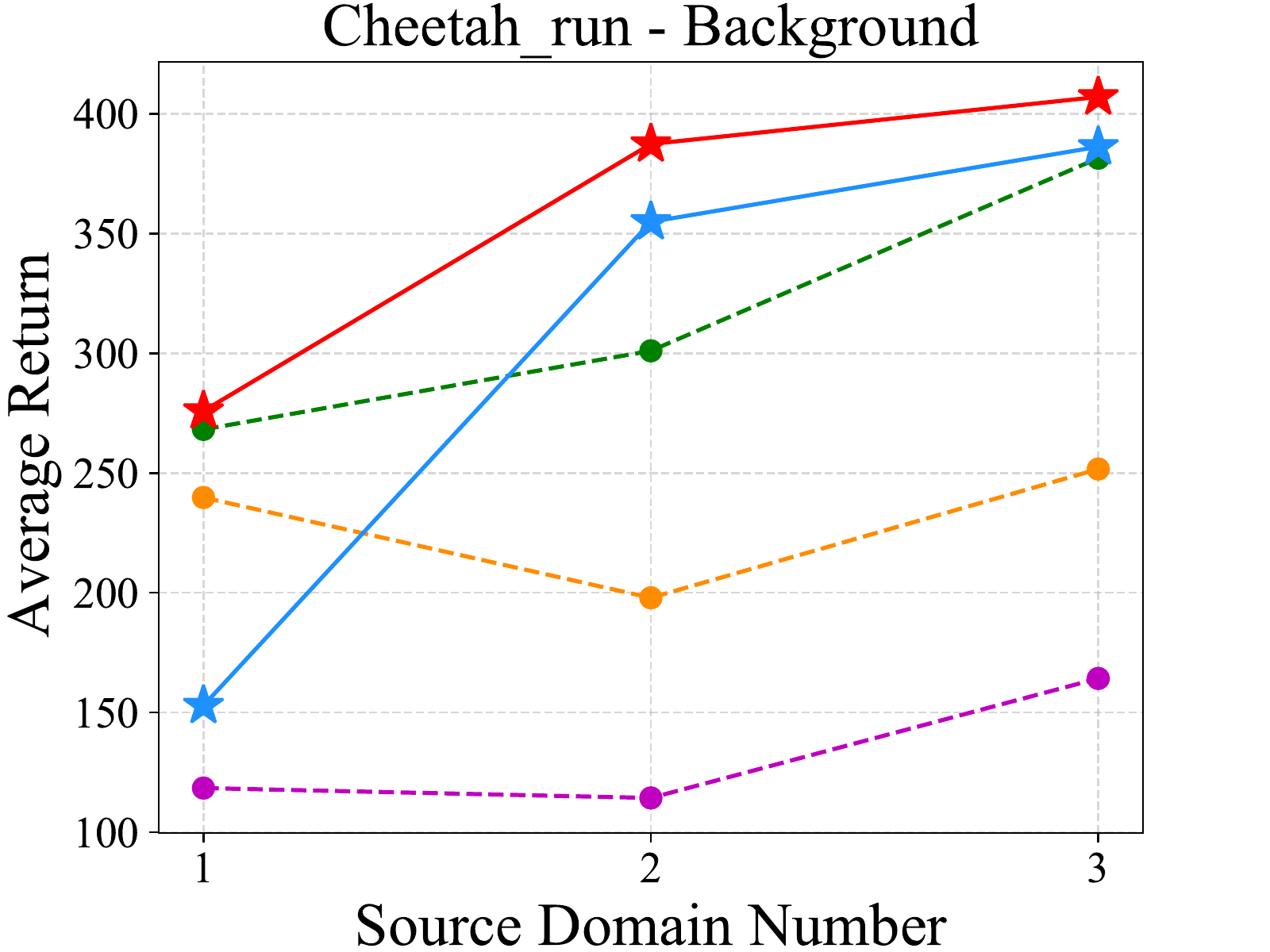}
  \vskip 0.05in
  
  \includegraphics[width=5.8cm]{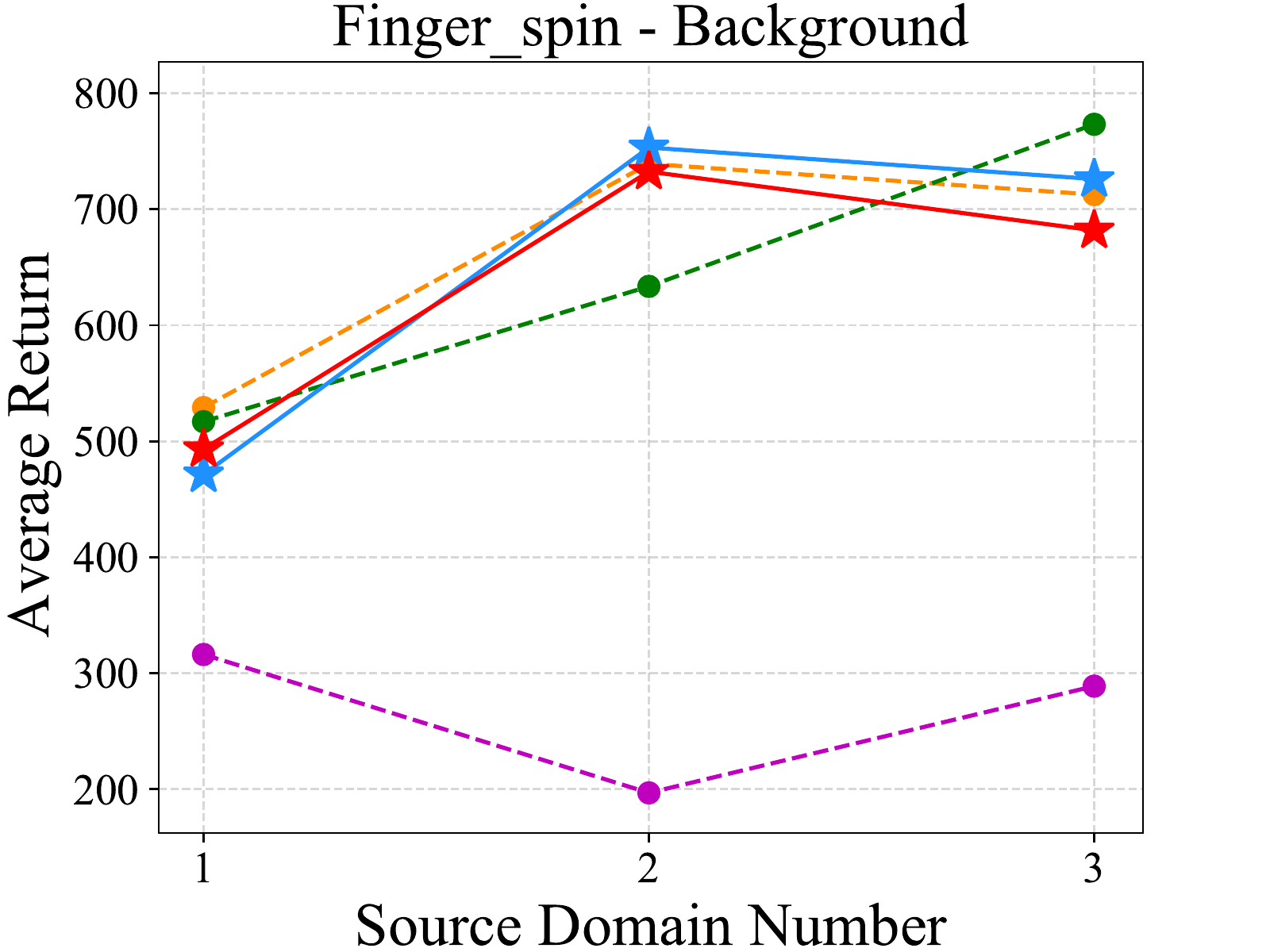}
  \includegraphics[width=5.8cm]{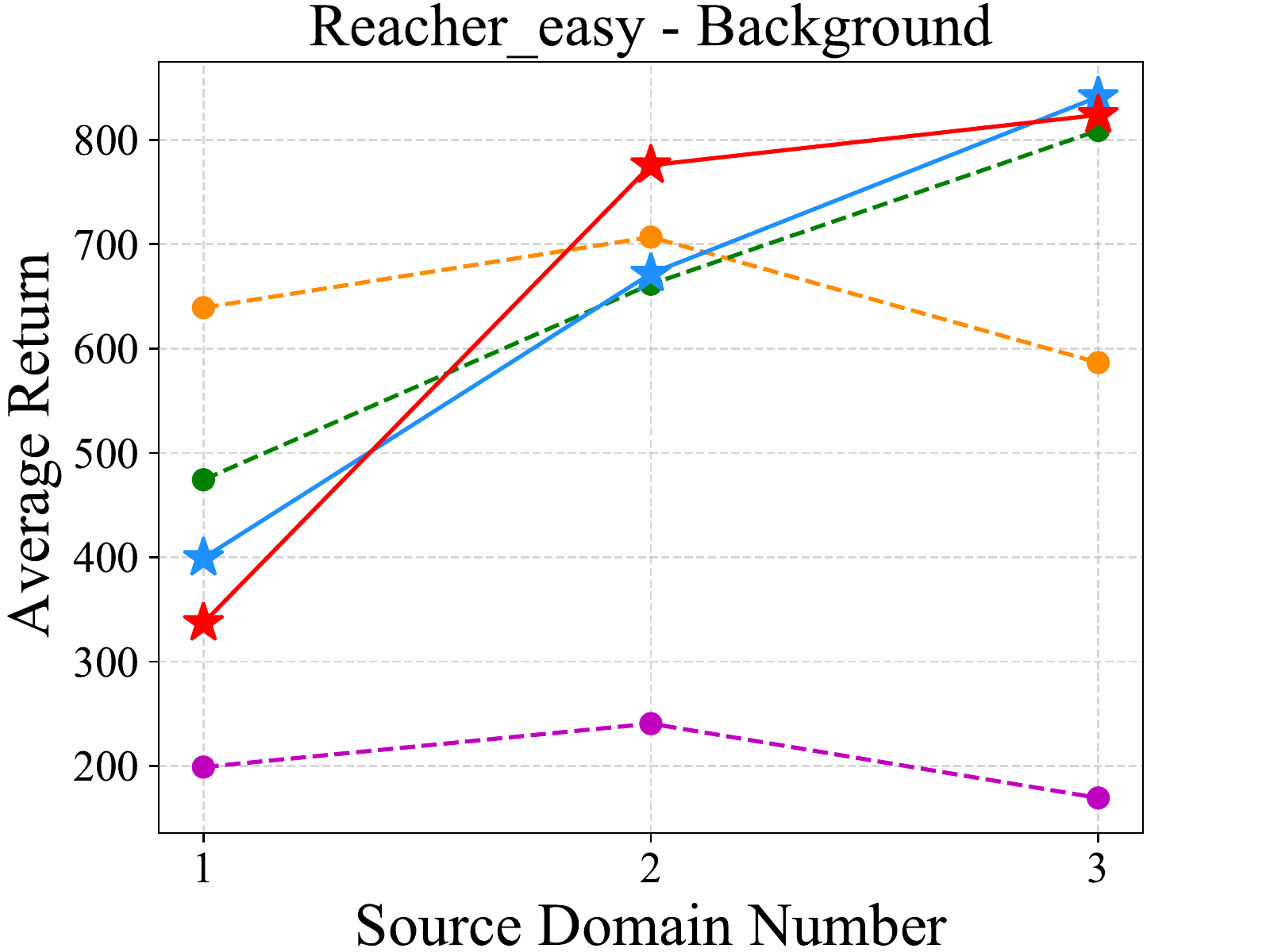}
  \includegraphics[width=5.8cm]{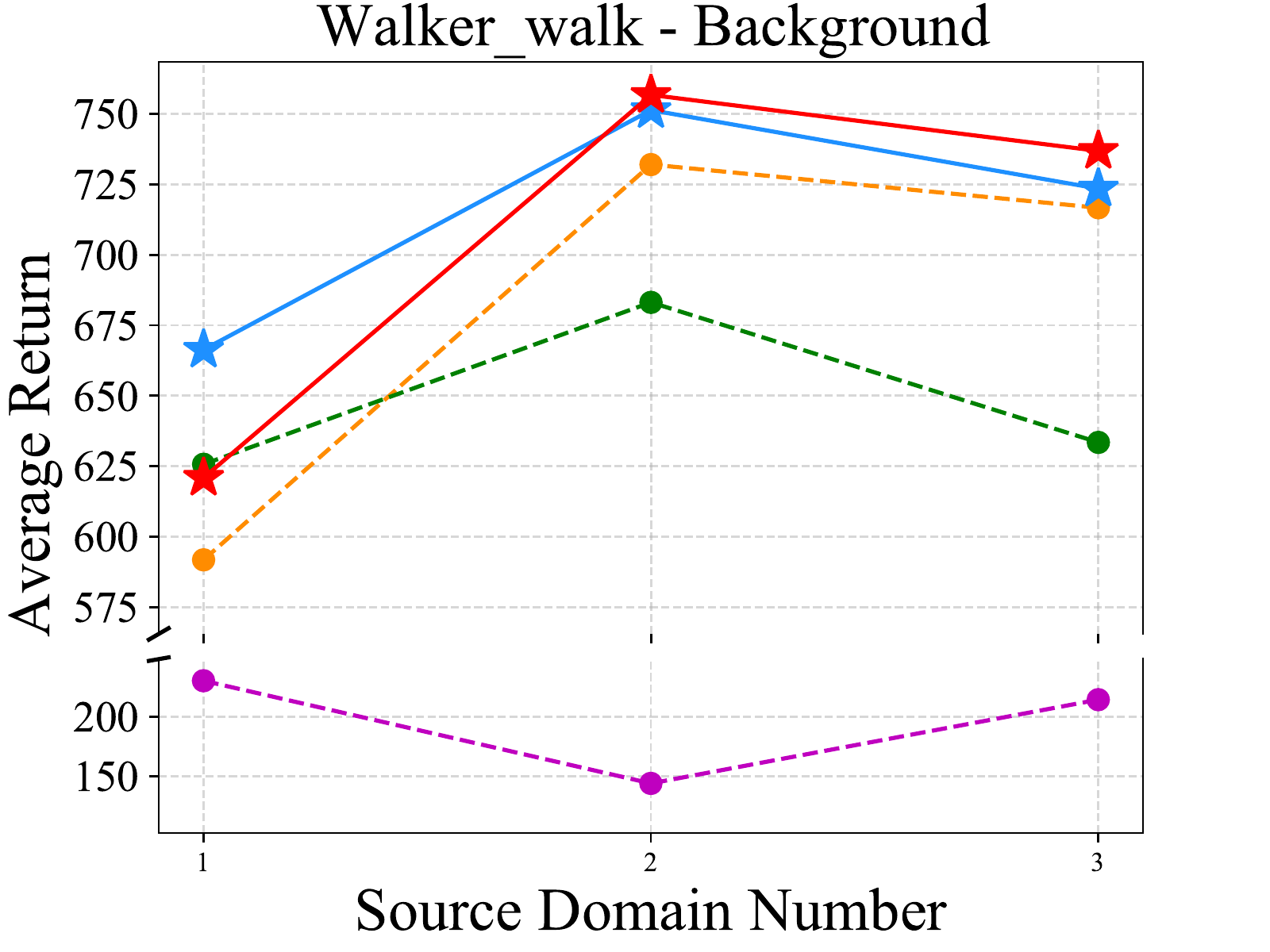}
  \vskip 0.05in
  
  \includegraphics[width=5.8cm]{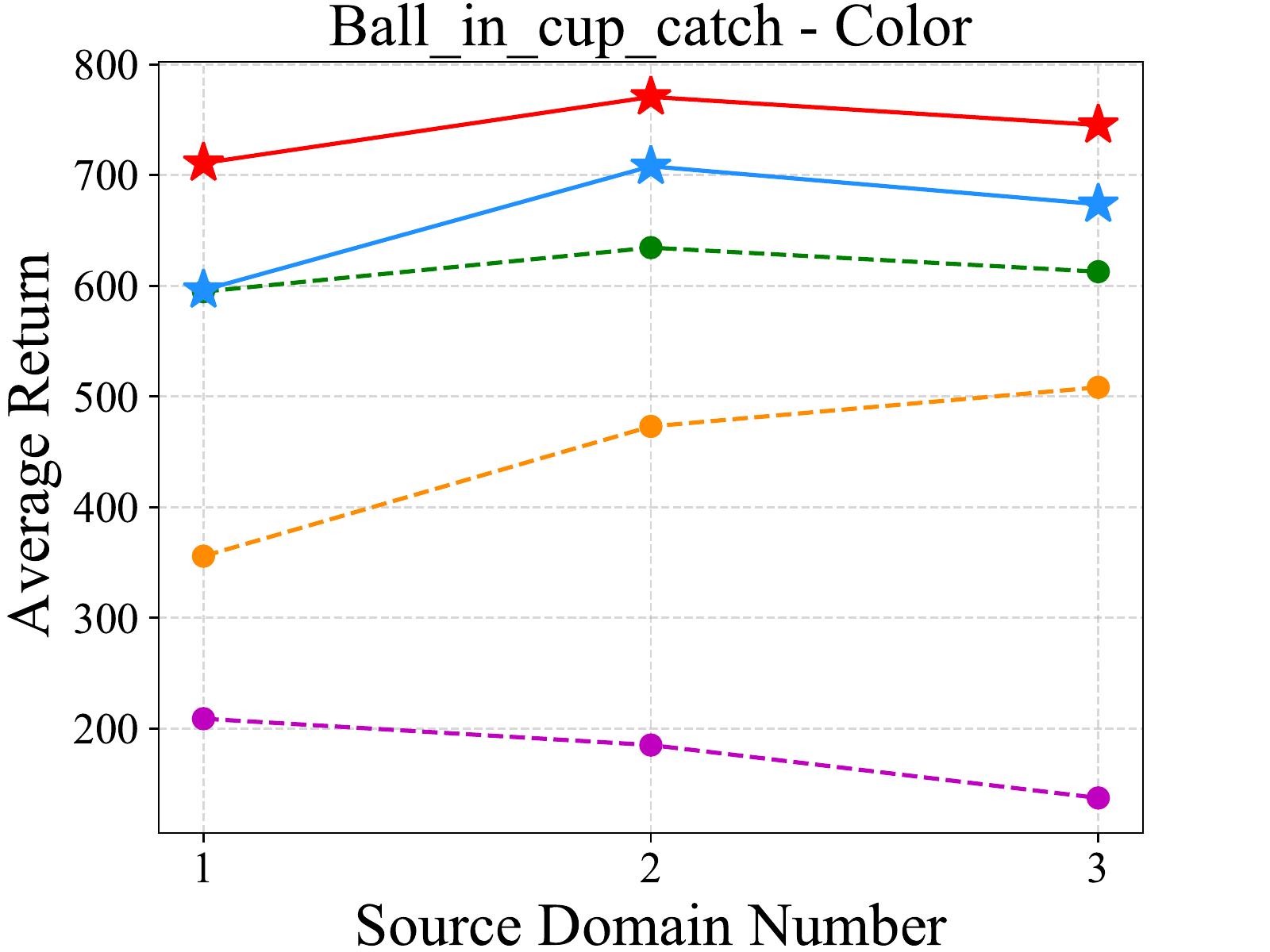}
  \includegraphics[width=5.8cm]{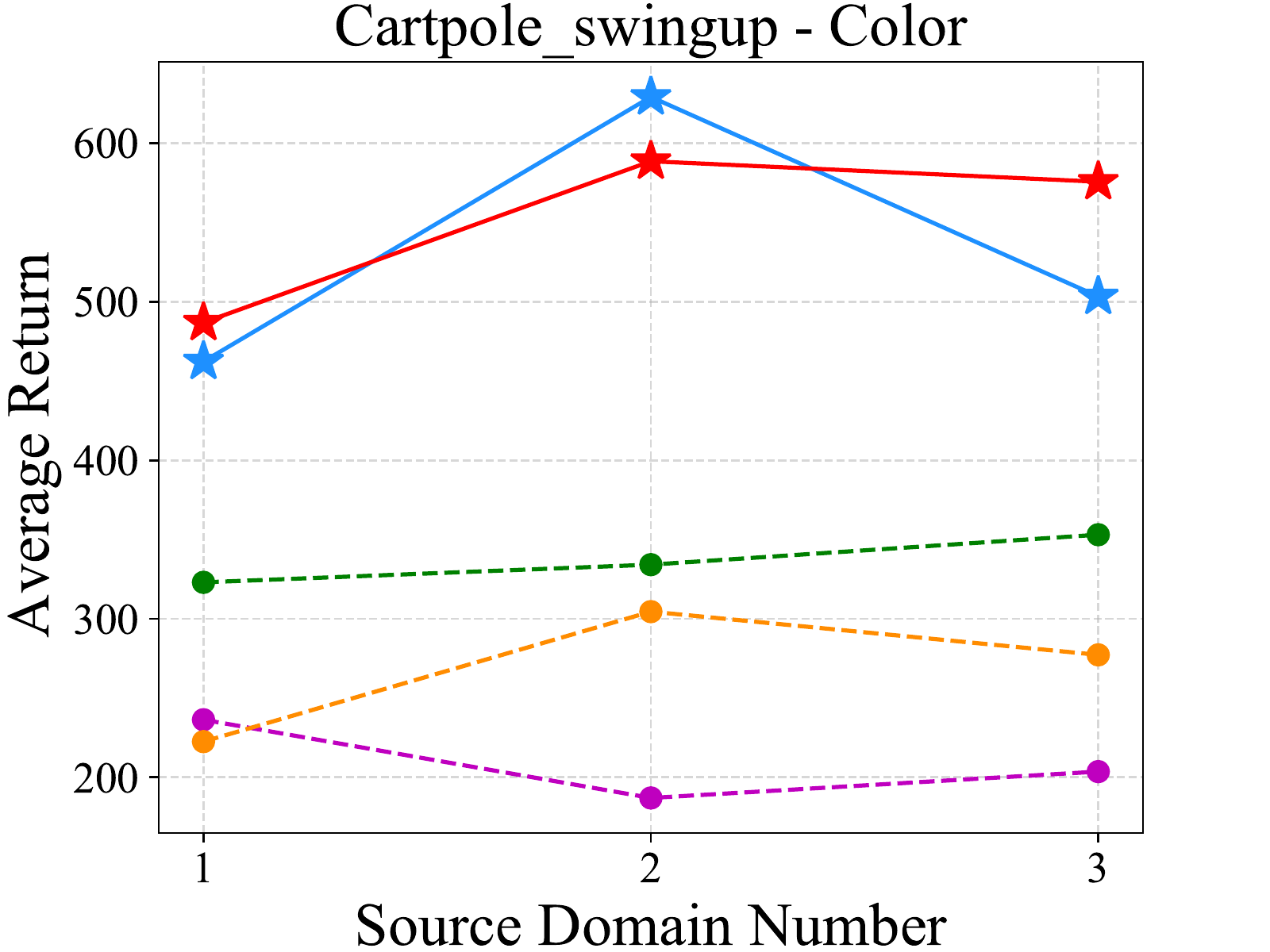}
  \includegraphics[width=5.8cm]{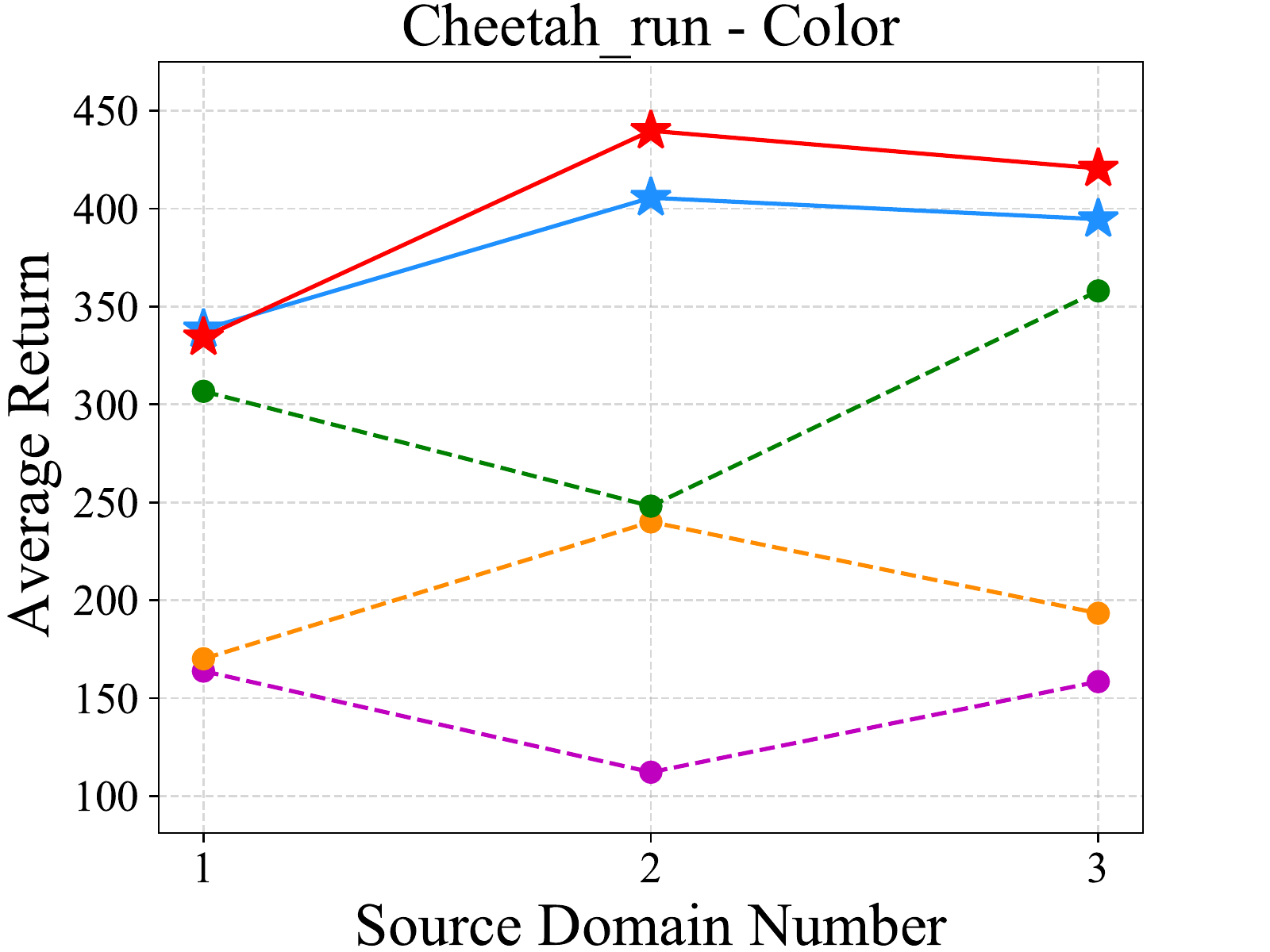}
  \vskip 0.05in
  
  \includegraphics[width=5.8cm]{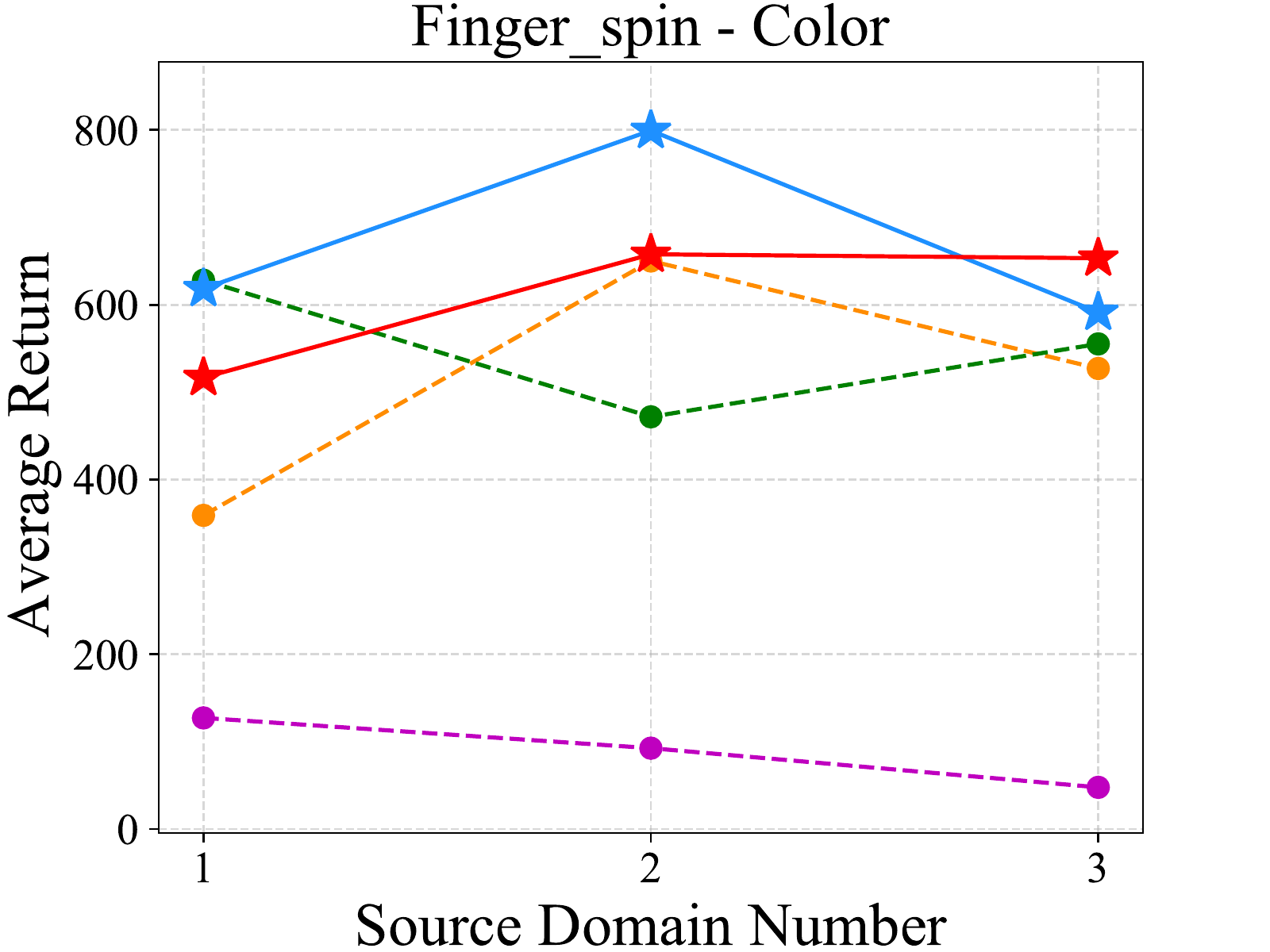}
  \includegraphics[width=5.8cm]{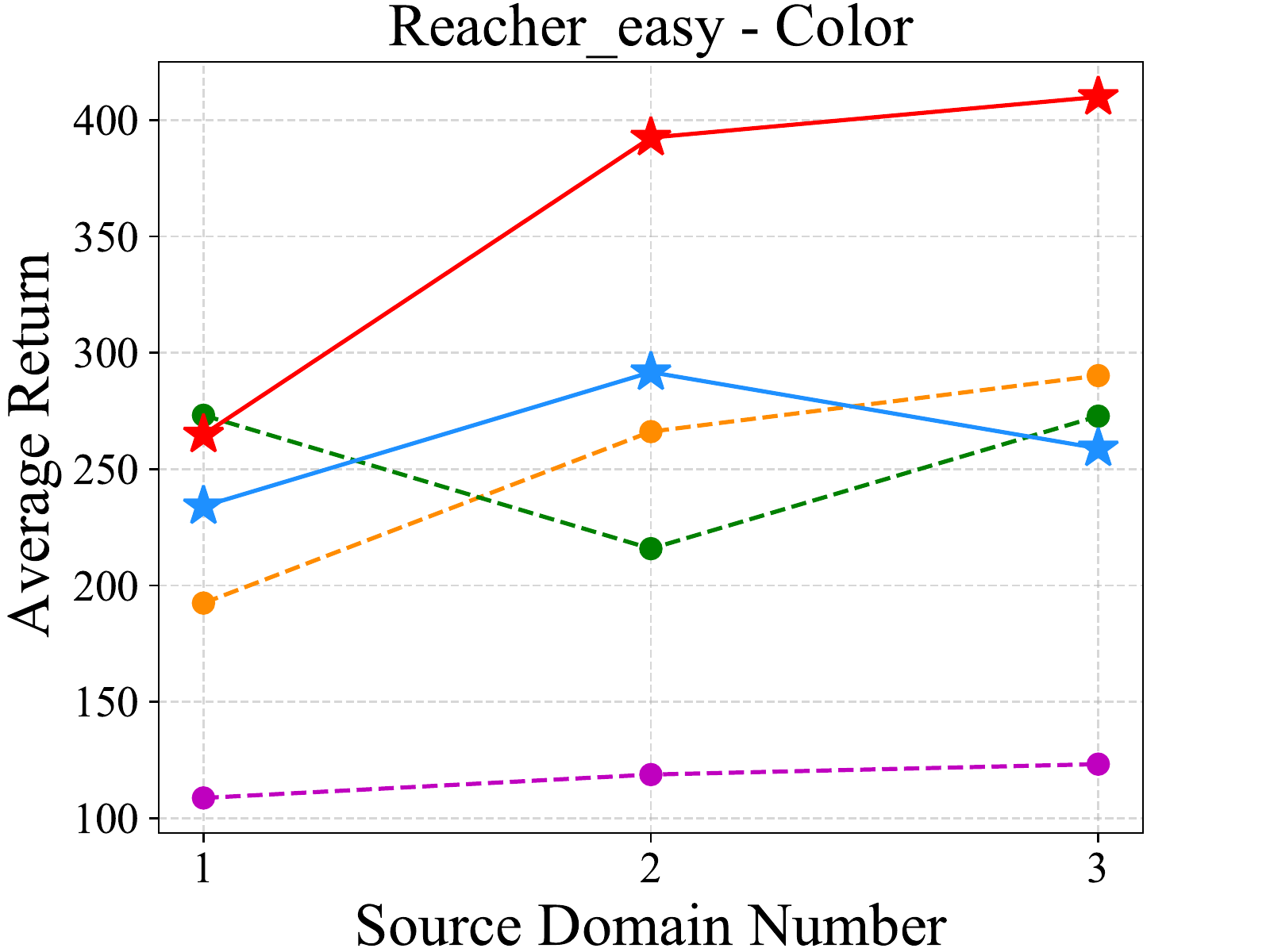}
  \includegraphics[width=5.8cm]{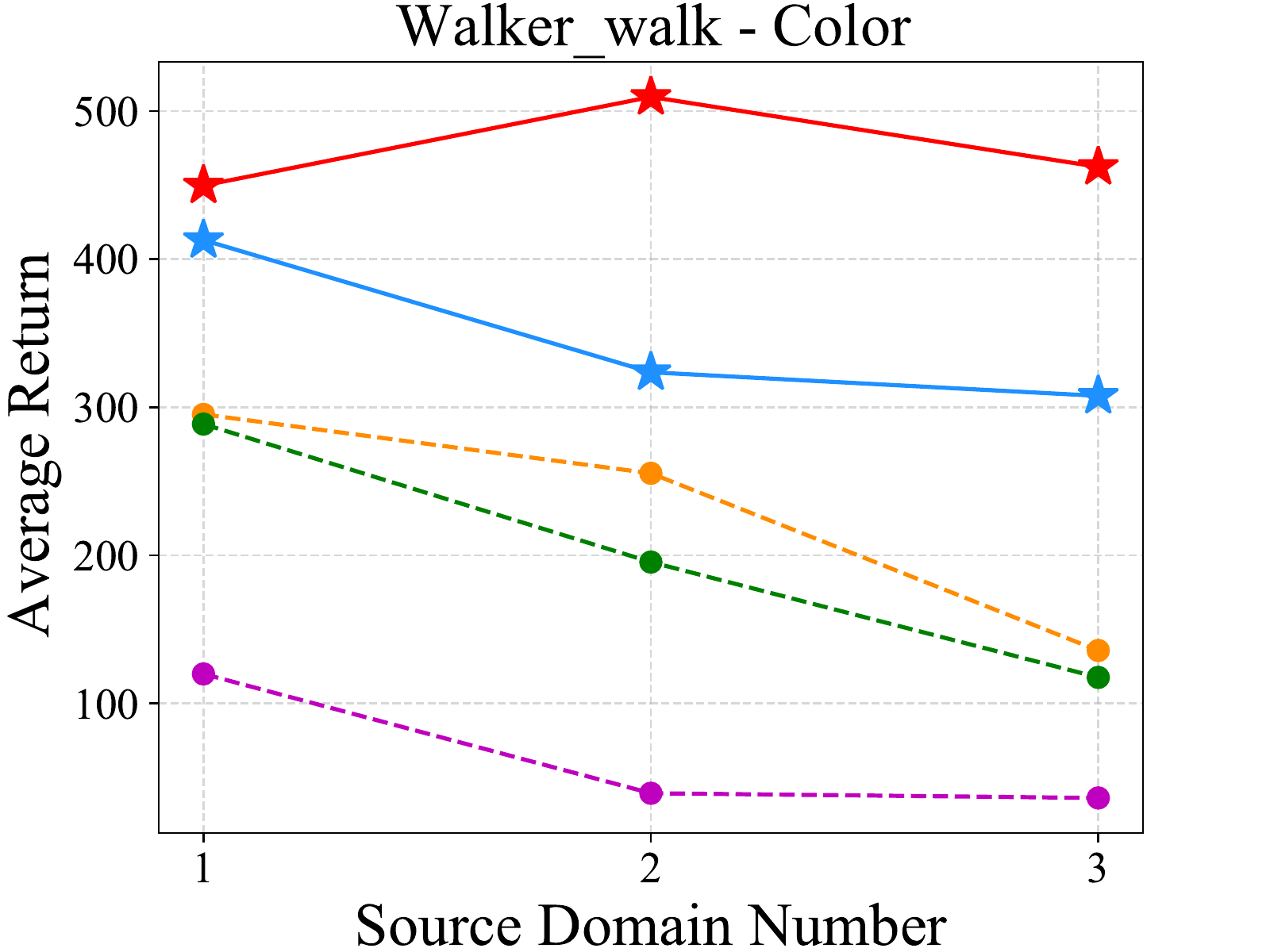}
  \caption{Results of six DCS tasks with dynamic distractions at 500K environment steps. Each checkpoint denotes the mean return over 6 trials and each return is the mean over 100 episodes on unseen test environments.}
\label{fig-result-dcs}
\end{figure*}

\subsection{Main Results under Visual Distractions}
\label{subsec-6.2}
We conduct extensive experiments on six tasks of the Distracting Control Suite (DCS)~\cite{corr/abs-2101-02722} to evaluate our approaches (\ourM{}, \ourM{-T}) and baselines.
Notice that we separately apply two types of visual distractions: (1) dynamic backgrounds and (2) dynamic colors of the objects (see Appendix~\ref{app-imp} for details).

To demonstrate that \ourM{} can improve the generalization performance, we conduct experiments under the two training environment setting and show results in Figure~\ref{fig-bar-avg} and the second column of Table~\ref{table-result-2}. 
Such evaluation results averaged over six DCS tasks show that \ourM{} and \ourM{-T} achieve average gains of $+14.7\%$ and $+20.1\%$, respectively, over the best prior method in dynamic backgrounds, and they also achieve average gains of $+45.7\%$ and $+55.1\%$, respectively, in dynamic color distractions.
Thus, we can find that \ourM{-T} improves the robustness of \ourM{} even in the two training environment setting where the performance improvement of \ourM{} is significant.

To further understand the generalization ability of \ourM{} and \ourM{-T}, we extend to more challenging settings of one and three training environments, respectively. In the one training environment setting, the agents may easily encode spurious information from observations with a single distraction~\cite{iccv/PengBXHSW19}, whereas in the three training environment setting, they may have difficulty capturing the task-relevant information from observations with multiple distractions~\cite{corr/abs-2204-13091}.
We provide the average results over six DCS tasks in the first and third columns of Table~\ref{table-result-2}. See Appendix~\ref{app-ar} for the histograms.

Results under the one training environment setting show that in dynamic color distractions, \ourM{} and \ourM{-T} achieve the average performance improvements by $+10.4\%$ and $+14.7\%$ margins, respectively. 
Since \ourM{} and \ourM{-T} achieve the better performance improvements in dynamic color distractions than those in dynamic backgrounds, and there is usually more spurious information from observations with dynamic backgrounds than those with dynamic color distractions, we can find that \ourM{} and \ourM{-T} are insensitive to spurious information.

Results under the three training environment setting show that \ourM{} and \ourM{-T} achieve the average gains of $+6.1\%$ and $+5.8\%$, respectively, in dynamic backgrounds, and they also achieve the average gains of $+20.4\%$ and $+43.9\%$, respectively, in dynamic color distractions. 
Since it gains an average of $+23.5\%$ more return over \ourM{} under the three training environment setting with dynamic color distractions, \ourM{-T} is more robust than \ourM{} in extracting task-relevant information from observations with multiple dynamic distractions.

In addition, in Figure~\ref{fig-result-dcs}, we provide the detailed performance of all methods under one, two, and three training environment settings, respectively. We also provide the average results for one, two, and three training environment settings per task in Table~\ref{table-result}.
See Appendix~\ref{app-ar} for more discussions and detailed test curves.

\begin{table*}
\renewcommand{\arraystretch}{1.2}
  \caption{We report the mean and standard error results with dynamic distractions at 500K steps. All means and standard errors are averaged over one, two, and three training environment settings.}
  \label{table-result}
  \setlength\tabcolsep{4pt}
  \begin{tabular}{r|c|cccccc}
  \toprule
  & Method & B-catch & C-swin & C-run & F-spin & R-easy & W-walk \\
  \midrule
  \multirow{5}{*}{\rotatebox{90}{Backgrounds}} & \textbf{\ourM{-T}} & $\textcolor{blue}{\mathbf{655} \pm \mathbf{23}\,\,(+6\%)}$ & $\textcolor{blue}{\mathbf{614} \pm \mathbf{7}\,\,(+16\%)}$ & $\textcolor{blue}{\mathbf{357} \pm \mathbf{12}\,\,(+13\%)}$ & $636 \pm 33$ & $\mathbf{646} \pm \mathbf{18}$ & $\mathbf{705} \pm \mathbf{25}\,\,(+4\%)$ \\
                & \textbf{\ourM{}}  & $\mathbf{632} \pm \mathbf{28}\,\,(+2\%)$ & $\mathbf{592} \pm \mathbf{14}\,\,(+12\%)$ & $298 \pm 11$ & $\mathbf{650} \pm \mathbf{49}$ & $637 \pm 11$ & $\textcolor{blue}{\mathbf{714} \pm \mathbf{12}\,\,(+5\%)}$ \\
                & DrQ  & $618 \pm 21$ & $528 \pm 12$ & $\mathbf{317} \pm \mathbf{7}$ & $641 \pm 34$ & $\textcolor{blue}{\mathbf{648} \pm \mathbf{27}}$ & $647 \pm 32$ \\
                & CURL & $432 \pm 50$ & $438 \pm 9$ & $230 \pm 6$ & $\textcolor{blue}{\mathbf{660} \pm \mathbf{39}}$ & $644 \pm 31$ & $680 \pm 22$ \\
                & SAC  & $196 \pm 30$ & $224 \pm 2$ & $132 \pm 2$ & $284 \pm 38$ & $203 \pm 13$ & $196 \pm 6$ \\
  \midrule
  \multirow{5}{*}{\rotatebox{90}{Colors}} & \textbf{\ourM{-T}} & $\textcolor{blue}{\mathbf{742} \pm \mathbf{19}\,\,(+21\%)}$ & $\textcolor{blue}{\mathbf{550} \pm \mathbf{9}\,\,(+63\%)}$ & $\textcolor{blue}{\mathbf{398} \pm \mathbf{9}\,\,(+31\%)}$ & $\mathbf{609} \pm \mathbf{18}\,\,(+10\%)$ & $\textcolor{blue}{\mathbf{356} \pm \mathbf{9}\,\,(+40\%)}$ & $\textcolor{blue}{\mathbf{474} \pm \mathbf{48}\,\,(+107\%)}$ \\
                & \textbf{\ourM{}}  & $\mathbf{659} \pm \mathbf{24}\,\,(+7\%)$ & $\mathbf{532} \pm \mathbf{13}\,\,(+58\%)$ & $\mathbf{379} \pm \mathbf{9}\,\,(+25\%)$ & $\textcolor{blue}{\mathbf{670} \pm \mathbf{35}\,\,(+21\%)}$ & $\mathbf{262} \pm \mathbf{15}\,\,(+3\%)$ & $\mathbf{348} \pm \mathbf{47}\,\,(+52\%)$ \\
                & DrQ  & $614 \pm 22$ & $337 \pm 15$ & $304 \pm 20$ & $552 \pm 49$ & $254 \pm 16$ & $201 \pm 25$ \\
                & CURL & $446 \pm 17$ & $268 \pm 7$ & $201 \pm 12$ & $512 \pm 19$ & $232 \pm 9$ & $229 \pm 35$ \\
                & SAC  & $177 \pm 7$ & $209 \pm 4$ & $144 \pm 2$ & $89 \pm 20$ & $117 \pm 6$ & $65 \pm 10$ \\
  \bottomrule
  \end{tabular}
  \vskip -0.1in
\end{table*}

To visualize the representations learned by \ourM{} and \ourM{-T}, we apply the t-distributed stochastic neighbor embedding (t-SNE) algorithm, a nonlinear dimensionality reduction technique to keep the similar high-dimensional vectors close in lower-dimensional space.
Figure~\ref{fig-tsne} illustrates that in both Cartpole-swingup and Cheetah-run tasks, \ourM{} and \ourM{-T} can well group the representations from different observations with similar latent states, since these observations are projected as adjacent points in the two-dimensional map space.

\subsection{Comparisons of Rewards and Transition Dynamics}
\label{subsec-6.3}
We conduct experiments on DCS with dynamic backgrounds to demonstrate two points: (1) the representations learned by using observation transition dynamics involve much task-irrelevant information; (2) we can learn better representations by using RSD-OA.

We evaluate the generalization performance of different representation learning methods:
(1) \textbf{RSP}: encode task-relevant information in rewards. We use starting observations and predefined action sequences to directly predict reward sequences. Specifically, we apply a 3-layer MLP with $T$-dimensional outputs after the pixel encoder to estimate $T$-dimensional reward sequences via MSE loss function.
(2) \textbf{TDP}: encode task-relevant information in transition dynamics. We apply a contrastive loss~\cite{corr/abs-1807-03748} to maximize the mutual information of representations from $o_t$ and $o_{t+T}$. Notice that we can simply implement TDP in CURL by changing the inputs of the target encoder from $o_t$ to $o_{t+T}$.
(3) \textbf{RDP}: encode task-relevant information in both rewards and transition dynamics. RDP is the combination of RSP and TDP. Notice that RDP with $T=1$ is similar to MISA.
(4) \textbf{RDP-BM}: encode task-relevant information in both rewards and transition dynamics. We first compute bisimulation metrics~\cite{iclr/0001MCGL21} in RDP and then optimize the MSE distances between representations to approximate their bisimulation metrics. Notice that RDP-BM with $T=1$ is similar to DBC.
(5) \textbf{\ourM{}}: encode task-relevant information in both rewards and transition dynamics. We learn representations by predicting the characteristic functions of RSD-OA.

For fair comparison, we apply the batch size 256, a 3-layer MLP as the predictor, and learn agents under the two training environment setting with dynamic backgrounds in all methods. We also conduct experiments to select the length $T$ of reward sequences from 1 to 7.

We show the results in Figure~\ref{fig-rp-rdp-bism}, which illustrates that \ourM{} outperforms others on Cartpole-swingup and Cheetah-run tasks. 
The results of RSP and \ourM{} demonstrate that the auxiliary task to predict characteristic functions of RSD-OA performs better than these directly predict the reward sequences---the expectations of RSD-OA.
Moreover, directly using observation transition dynamics to encode task-relevant information indeed hinder the generalization performance of agents, as RSP outperforms TDP and RDP for all the same reward lengths. These results empirically demonstrate our point of transition dynamics in Section~\ref{sec-4}.
The performances of RDP-BM are lower than others, indicating that the bisimulation metrics may need a more effective learning approach to improve representations rather than limiting the MSE distance between representations.
According to the results in Figure~\ref{fig-rp-rdp-bism}, the performance of \ourM{} is best when $T=5$.

\begin{figure}
  \centering
  \includegraphics[width=4.4cm]{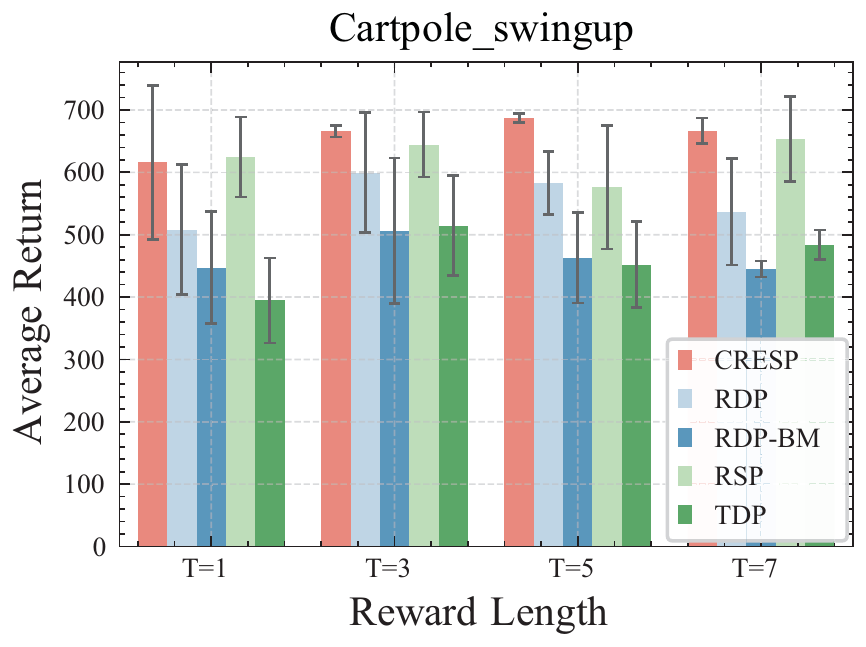}
  \includegraphics[width=4.4cm]{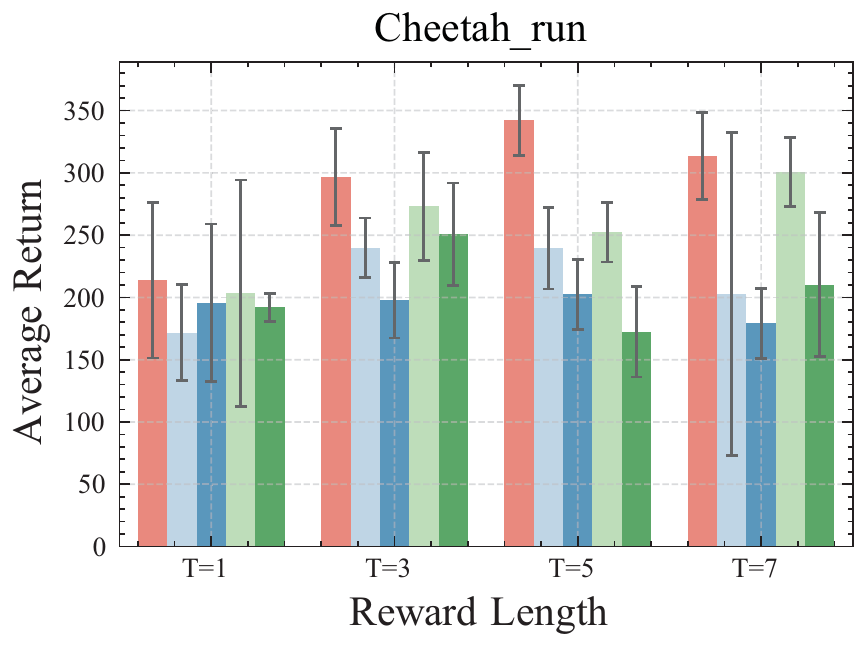}
  \vskip -0.05in
  \caption{We report the mean and standard deviation results under dynamic background settings with 3 random seeds at 500K steps.}
  \label{fig-rp-rdp-bism}
\end{figure}

\subsection{Quantify Task-Relevant and -Irrelevant Information}
\label{subsec-6.4}
To demonstrate the generalization of representations under dynamic background distractions, we design two experiments to quantify the task-relevant and -irrelevant information in representations learned by: 1) RSP; 2) TDP; 3) RDP; 4) DrQ; 5) \ourM{}; 6) \ourM{-T}.

We collect an offline dataset with 100K transitions (data of 800 episodes) drawn from environments with 20 unseen dynamic backgrounds. Then, we sample 80K transitions for training and the rest for evaluation. In each experiment, we apply a 3-layer MLP using ReLU activations up until the last layer. The inputs of the MLP are the learned representations under the two training environment setting with dynamic backgrounds at $500$K environment steps. We train the MLP by Adam with the learning rate 0.001 and evaluate every 10 epochs. In Figure~\ref{fig-tr-tir}, we report the best value of each experiment with three random seeds.

\subsubsection{Task-Irrelevant Information in Representations}
We leverage the environmental label, an one-hot vector of the environment numbers range from 1 to 20. 
Notice that the information of the environmental label is explicitly task-irrelevant for agents.
Therefore, we propose to measure the mutual information between the learned representations and the random variable of the environmental label to quantify the task-irrelevant information.
Based on contrastive learning, we update the 3-layer MLP to estimate the mutual information by minimizing the cross-entropy between representations and environmental labels.
A small cross-entropy indicates a high lower bound on the mutual information. 
Therefore, a \textit{smaller} cross-entropy means that there is \textit{more task-irrelevant} information in representations.

Experiment results in the first column of Figure~\ref{fig-tr-tir} reveal that \ourM{} and \ourM{-T} have less task-irrelevant information than other methods, which demonstrates the effectiveness of using characteristic functions of RSD-OA.
However, task-irrelevant information in representations learned by RSP is almost identical to that by TDP and RDP, which means that representations learned by RSP also involve much task-irrelevant information. This is an empirical evidence that directly predicting the expectations of RSD-OA is not a good choice for discarding task-irrelevant information. 

\begin{figure}
  \centering
  \subfigure[Task irrelevance on C-swin]{\includegraphics[width=3.8cm]{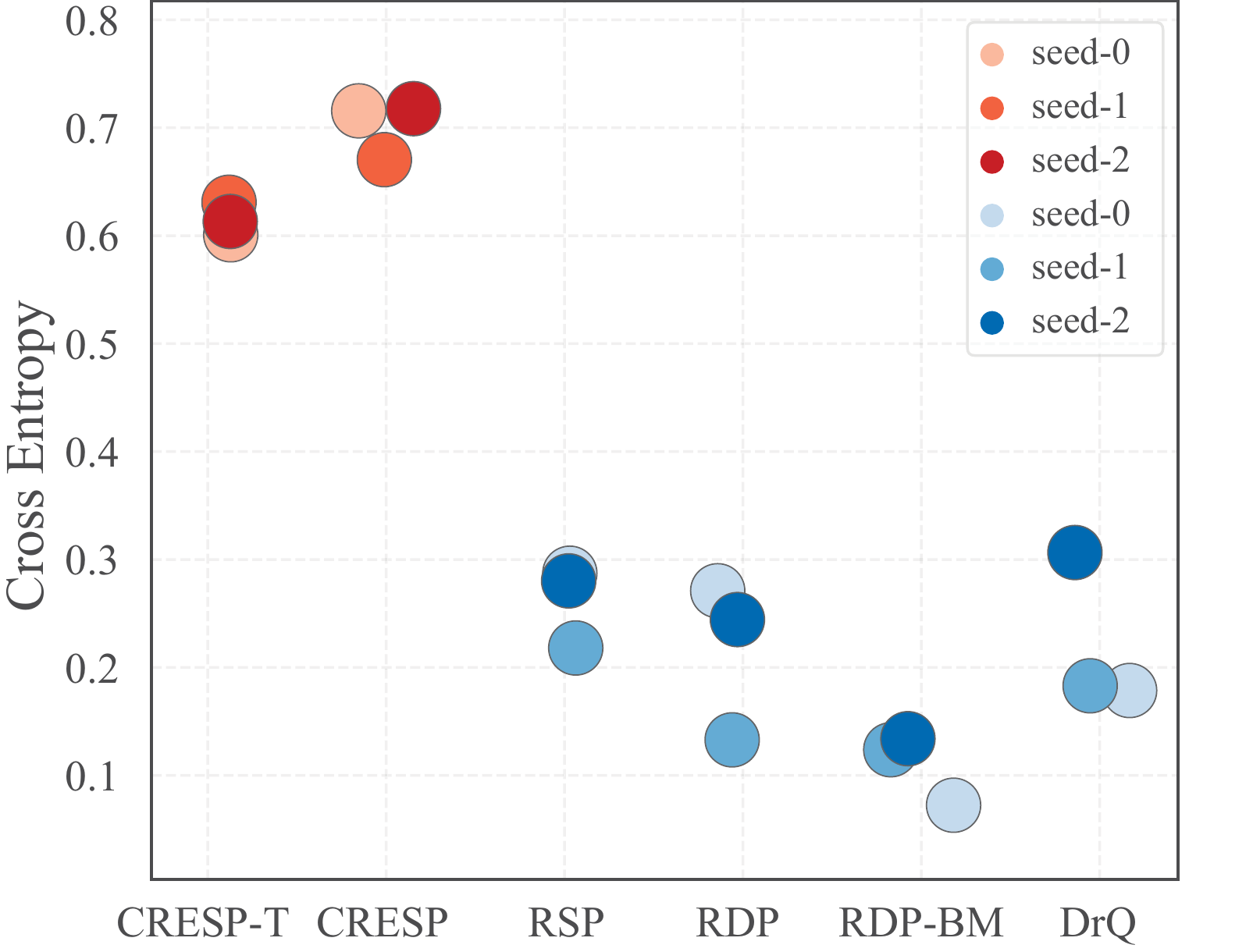}}
  \hskip 0.1in
  \subfigure[Task relevance on C-swin]{\includegraphics[width=3.8cm]{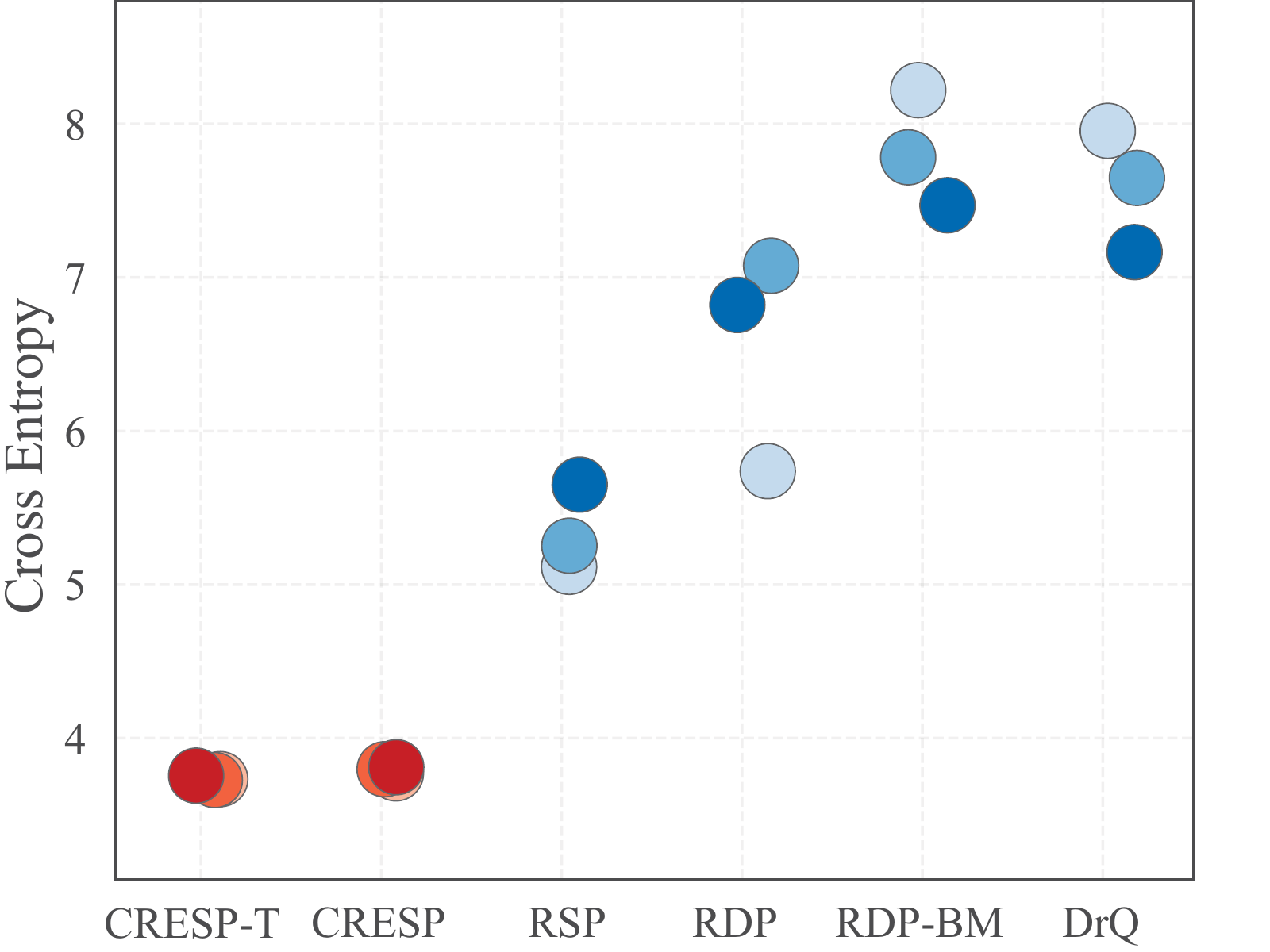}}
  
  \subfigure[Task irrelevance on C-run]{\includegraphics[width=3.8cm]{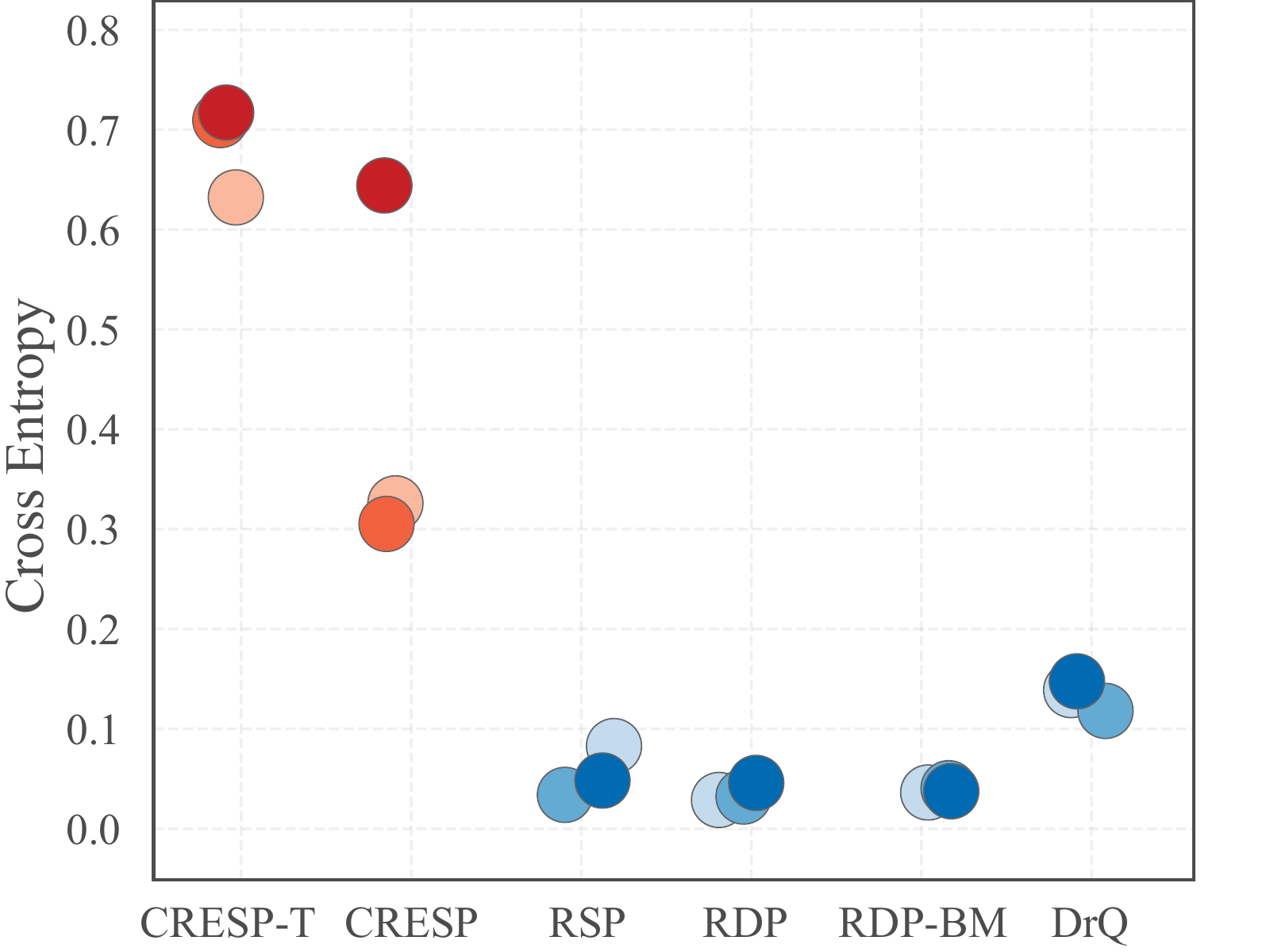}}
  \hskip 0.1in
  \subfigure[Task relevance on C-run]{\includegraphics[width=3.8cm]{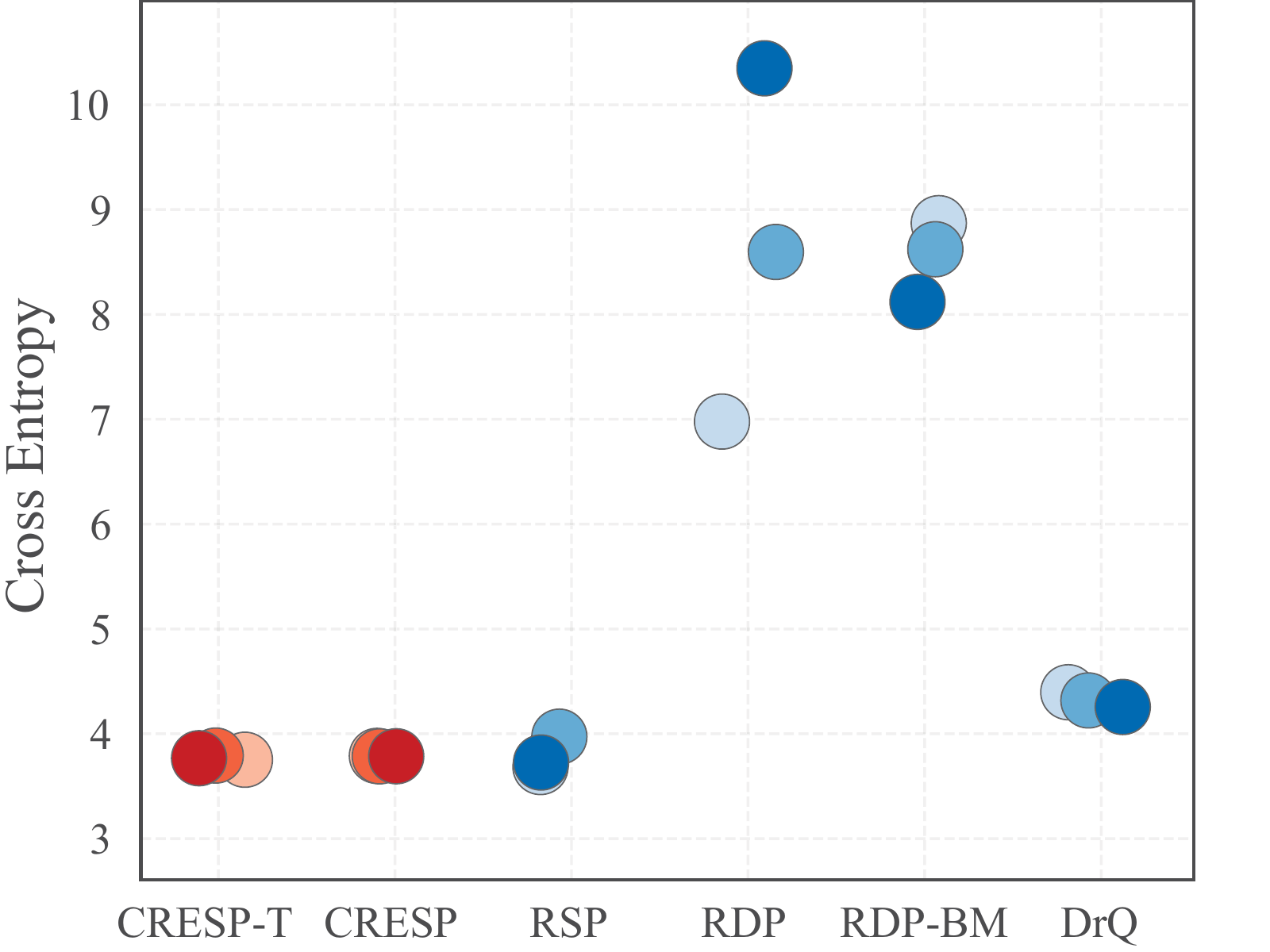}}
  \caption{Quantify the task-relevant and -irrelevant information of learned representations. In the first column, low values indicate that learned representations have much task-irrelevant information. On the contrary, low values in the second column indicate that representations have much task-relevant information. Best results are shown in red.}
  \label{fig-tr-tir}
\end{figure}

\subsubsection{Task-Relevant Information in Representations}
We think of the latent state as a random vector from the simulation to provide the information of the agent.
In principle, if the information of the latent state is presented in the pixel input, VRL algorithms should learn the representation to extract the information relevant to the latent state. Therefore, to quantify the task-relevant information, we design another experiment by measuring the mutual information between latent states and learned representations from pixels.
We also use the collected dataset and the MLP to estimate the mutual information by minimizing the cross-entropy between representations and collected latent states.
Different from the experiment that quantifies task-irrelevant information, a \textit{smaller} cross-entropy loss means that there is more \textit{task-relevant} information in learned representations.

The second column in Figure~\ref{fig-tr-tir} shows that \ourM{} and \ourM{-T} extract the most task-relevant information by predicting characteristic functions of RSD-OA than other methods.
The representations learned by RSP also encode more task-relevant information, but TDP and RDP extract less task-relevant information. The results confirm our point in Section~\ref{sec-4} that RSD-OA can capture the task-relevant information, while directly using observation transition dynamics leads to a negative impact for representation learning.

\begin{figure}
  \centering
  \subfigure[Different lengths on C-swin]{\includegraphics[width=3.9cm]{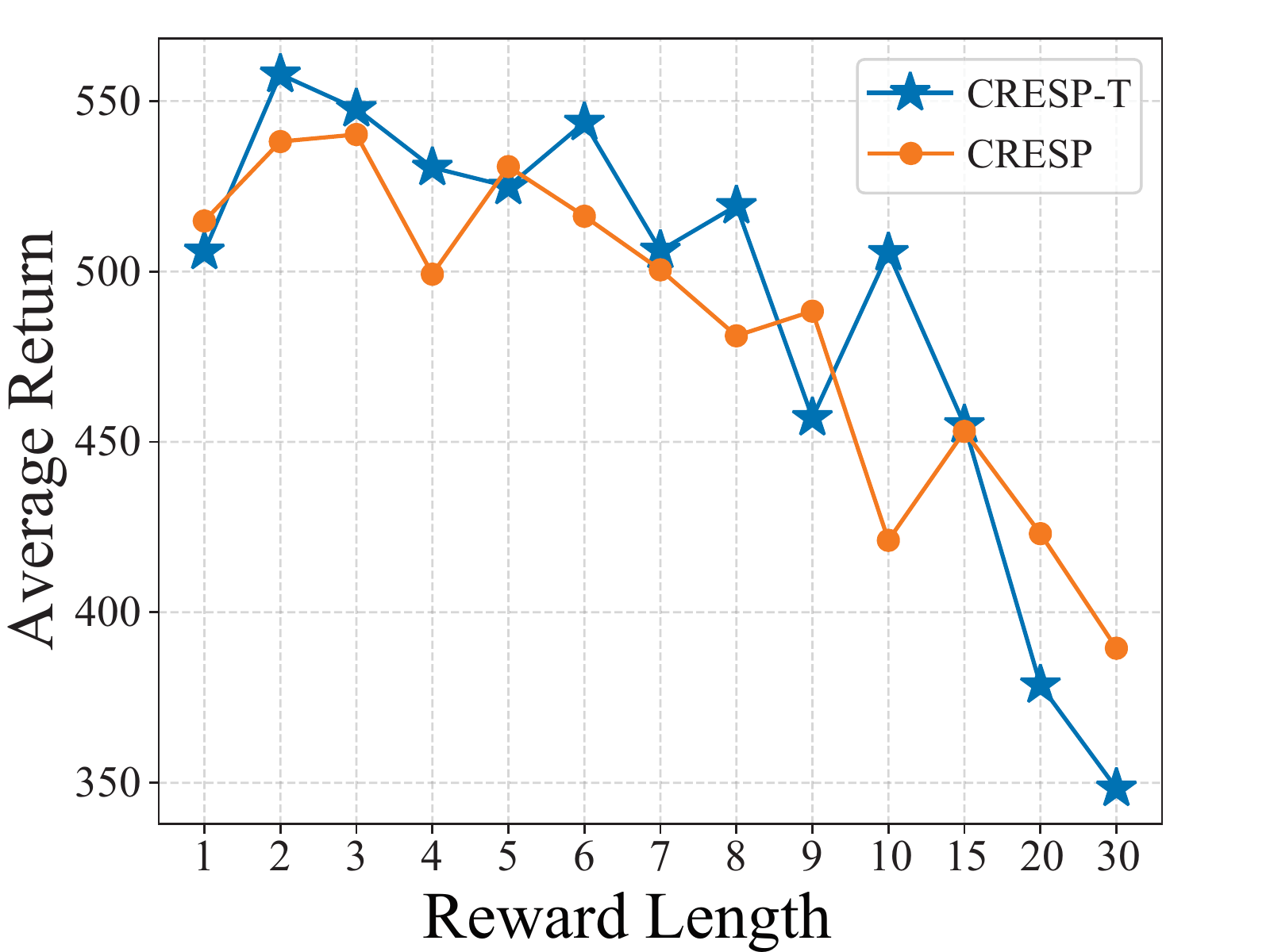}}
  \hskip 0.1in
  \subfigure[Different lengths on C-run]{\includegraphics[width=3.9cm]{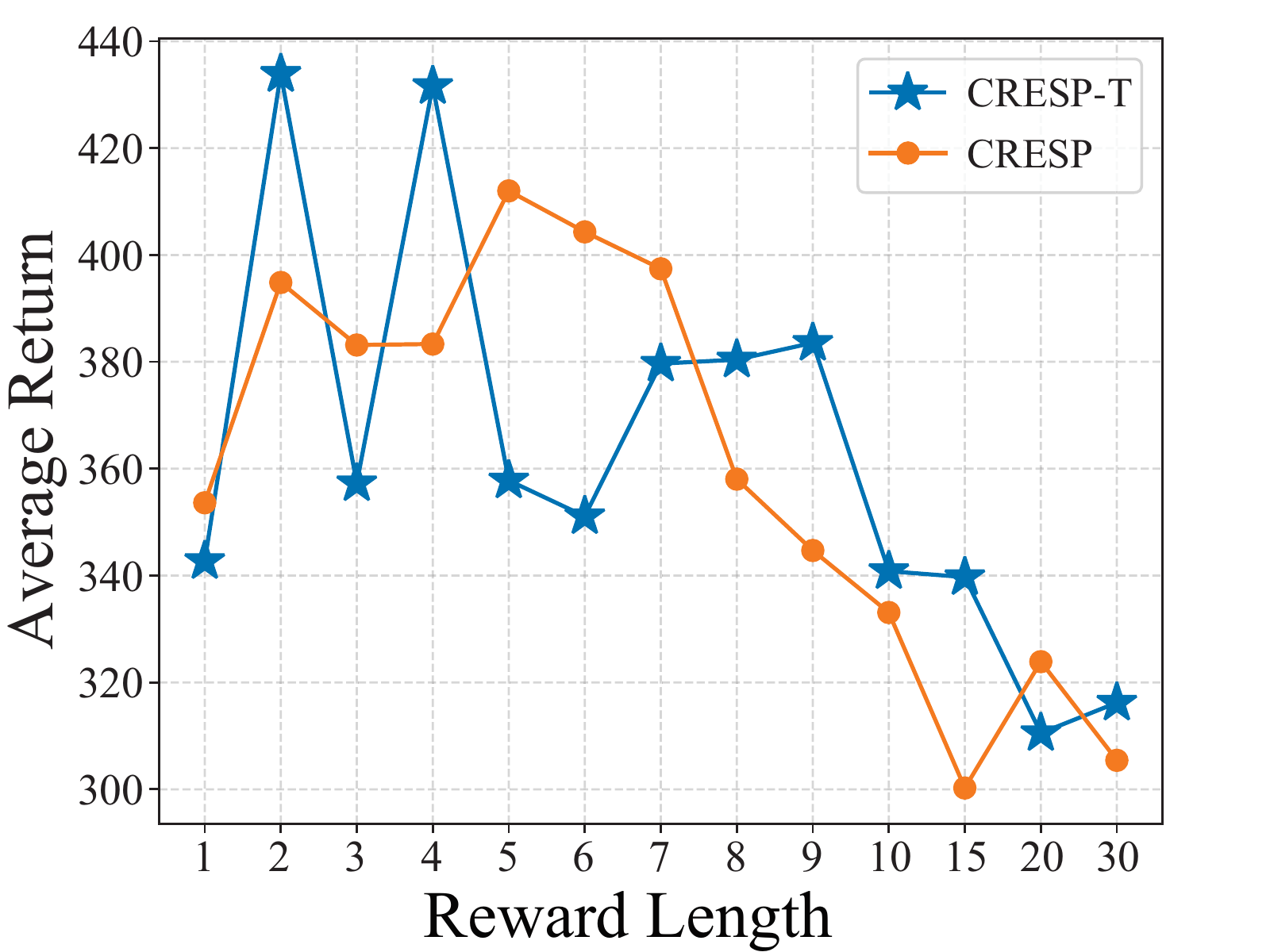}}

  \subfigure[Different prediction targets]{\includegraphics[width=3.9cm]{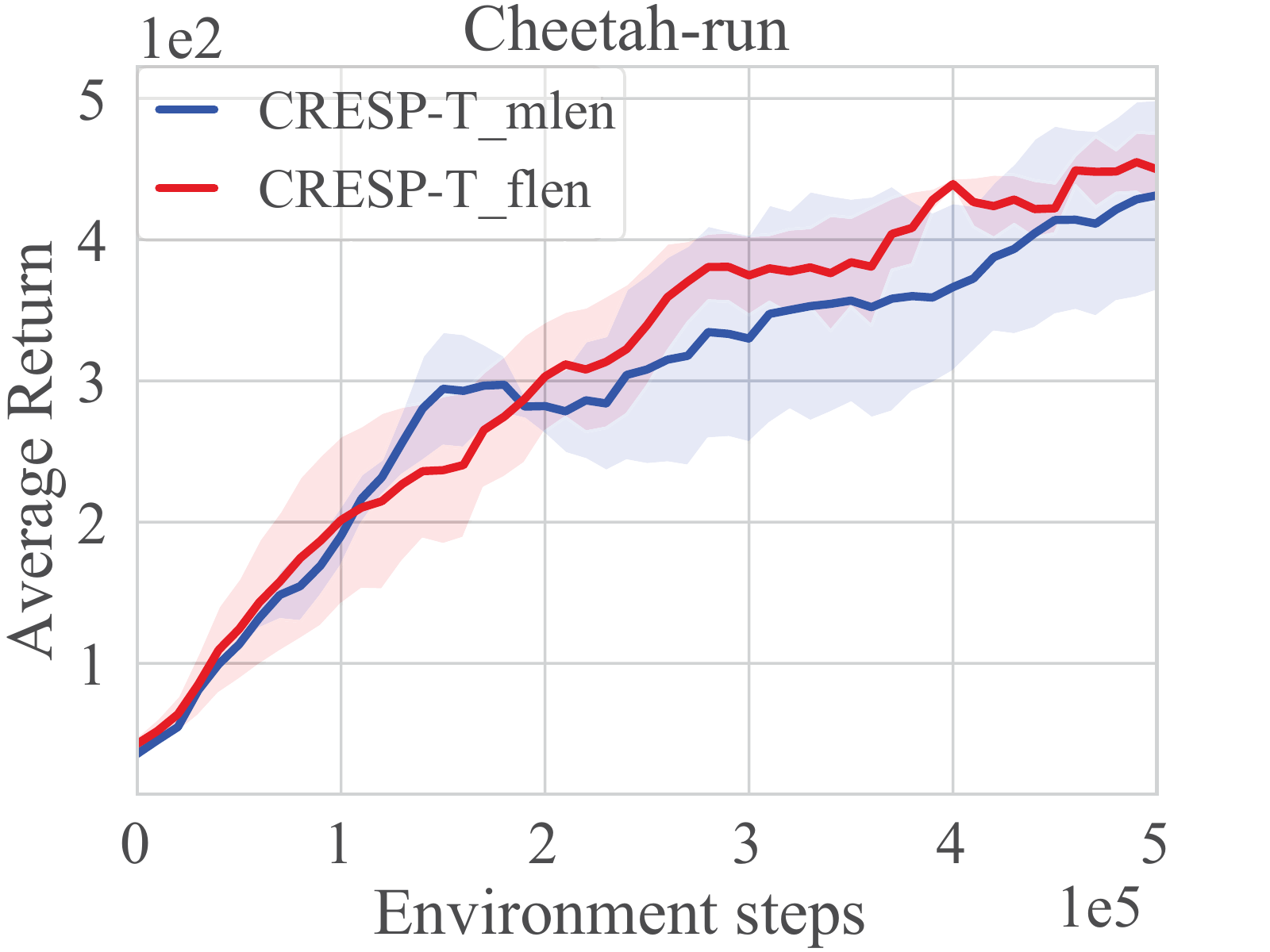}}
  \hskip 0.1in
  \subfigure[Different block numbers]{\includegraphics[width=3.9cm]{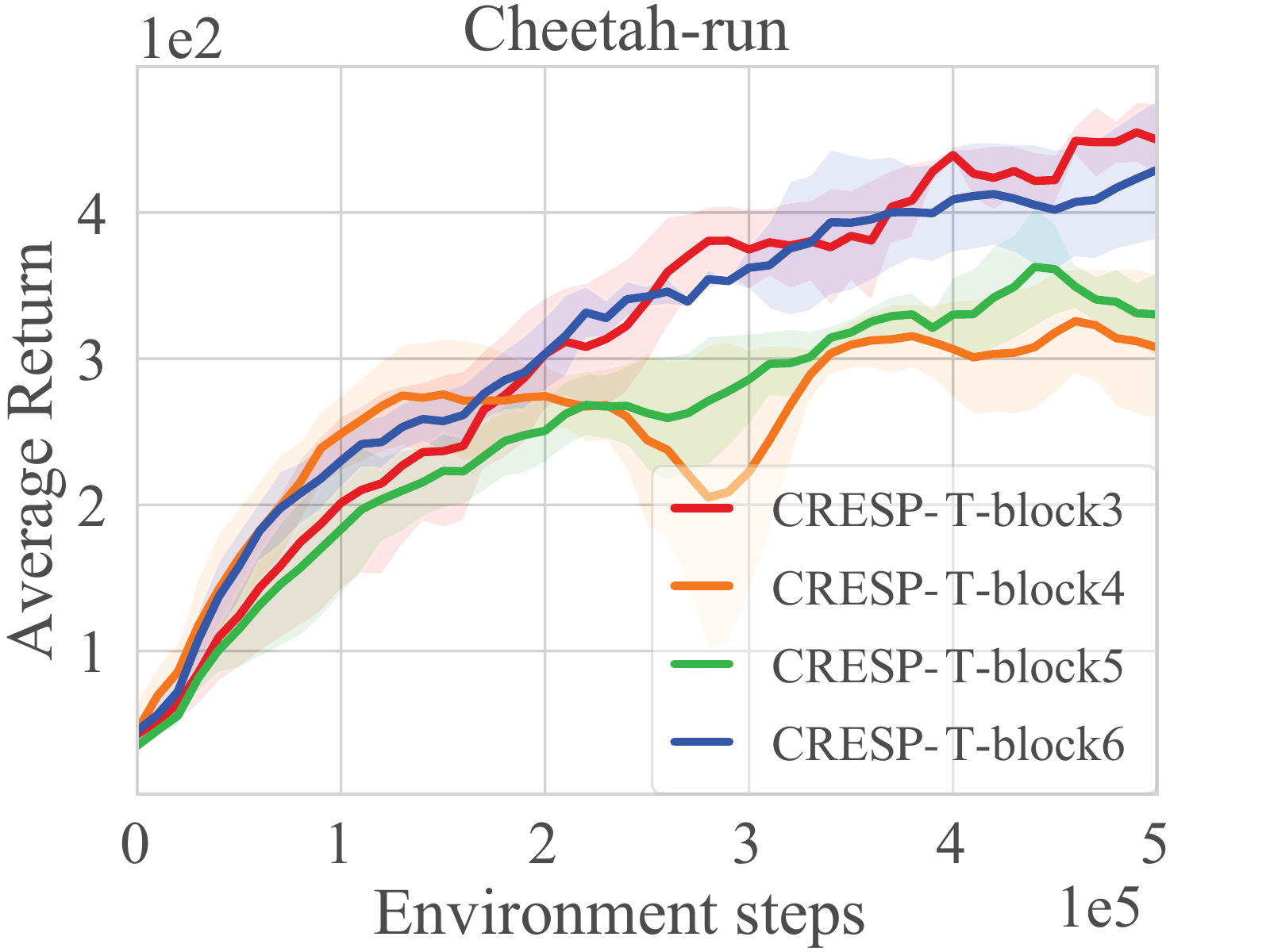}}
  \caption{We report the average results over 3 trails in unseen test environments with strong dynamic color distractions ($\beta=0.5$). All methods are trained in two training environments with weak dynamic color distractions ($\beta_1=0.1, \beta_2=0.2$). In the first row, we show the results at 500K steps to select reward lengths. In the second row, we provide the learning curves for hyperparameter selection of the transformer.}
  \label{fig-rsd-length-tsfm}
\end{figure}

\subsection{Hyperparameter Selection and Evaluation}
\label{subsec-hyperselect}
We conduct extensive experiments to select and evaluate different hyperparameters: (1) length of RSD-OA; (2) hyperparameters of the transformer architecture.
In all experiments, we use the batch size 128 and train agents over three trails in two training environments with dynamic colors.

\subsubsection{Reward Length of RSD-OA}
In the first row of Figure~\ref{fig-rsd-length-tsfm}, we report evaluation results to show that \ourM{-T} with $L=2$ achieves best performances in Cartpole-swingup and Cheetah-run tasks. Such results reveal that the generalization performance is poor when the sequence is long. We think this is because the variance of RSD-OA is too high when the reward length is too long. Therefore, high variance of RSD-OA will cause the difficulties of representation learning, resulting in much task-irrelevant information in the representations.
Based on the results in Figure~\ref{fig-rsd-length-tsfm} (a) and \ref{fig-rsd-length-tsfm} (b), we propose to use the reward length $L=5$ in \ourM{} and $L=2$ in \ourM{-T}.

\subsubsection{Transformer Architecture}
\label{ab-tsfm}
We first conduct experiments to evaluate: (1) \textit{\ourM{-T}-flen}: predict characteristic functions of RSD-OA with a fixed reward length $L$; (2) \textit{\ourM{-T}-mlen}: predict characteristic functions of multiple RSD-OA with reward lengths from 1 to $L$.
Based on $L=2$, Figure~\ref{fig-rsd-length-tsfm} (c) shows that \ourM{-T}-flen achieves better performance.

We then conduct experiments to select the block number in the transformer. Results in Figure~\ref{fig-rsd-length-tsfm} (d) show the performances of the block number from 3 to 6, which illustrates that \ourM{-T} is sensitive to the block number. For high computational efficiency and performance, we apply the transformer architecture with 3 blocks. The last two blocks of each experiment in Figure~\ref{fig-rsd-length-tsfm} (d) are used to predict the real and imaginary parts of characteristic functions, respectively. 


\begin{figure}
  \centering
  \subfigure[Ablating results on C-swin]{\includegraphics[width=3.9cm]{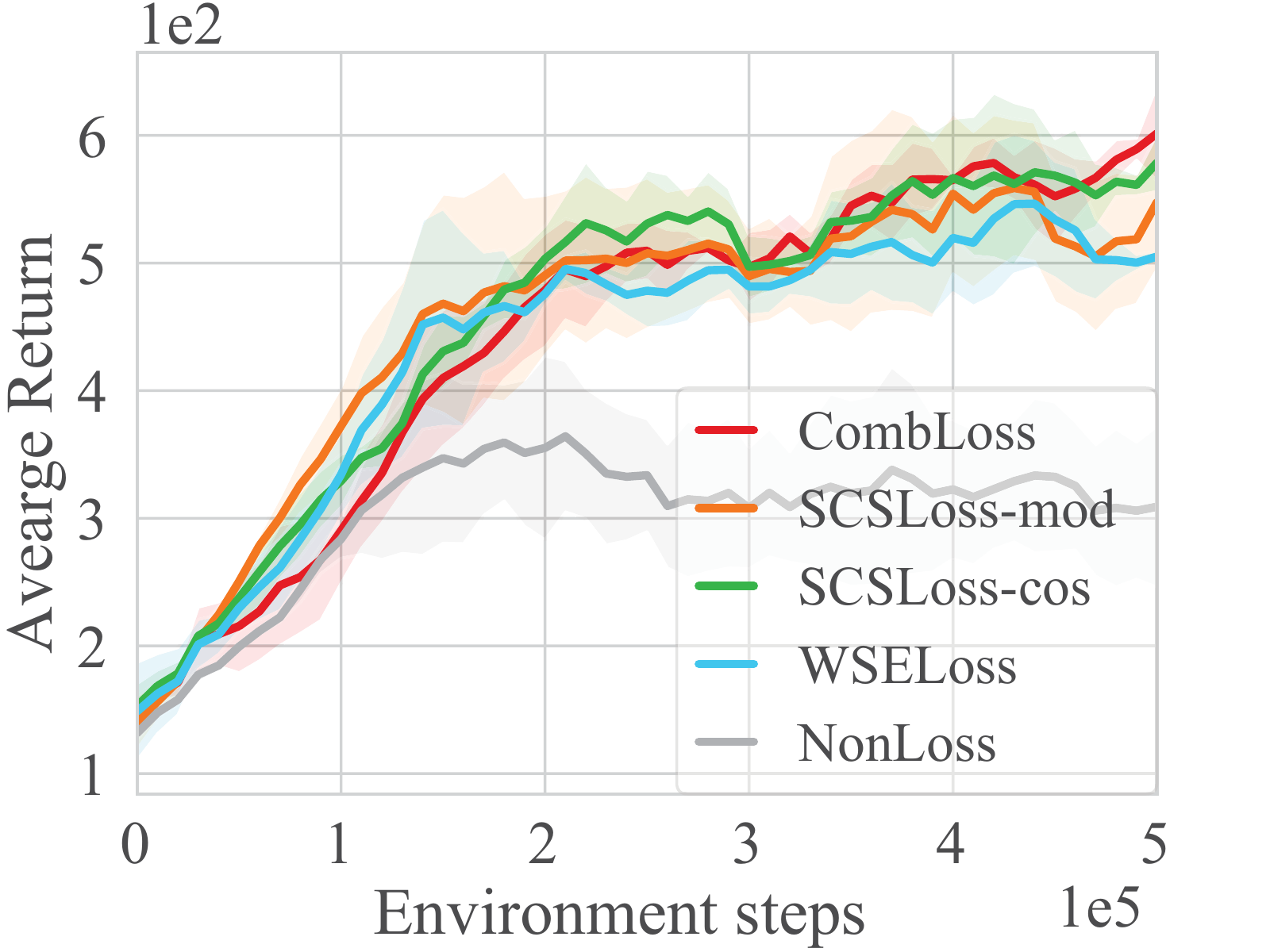}}
  \hskip 0.1in
  \subfigure[Ablating results on C-run]{\includegraphics[width=3.9cm]{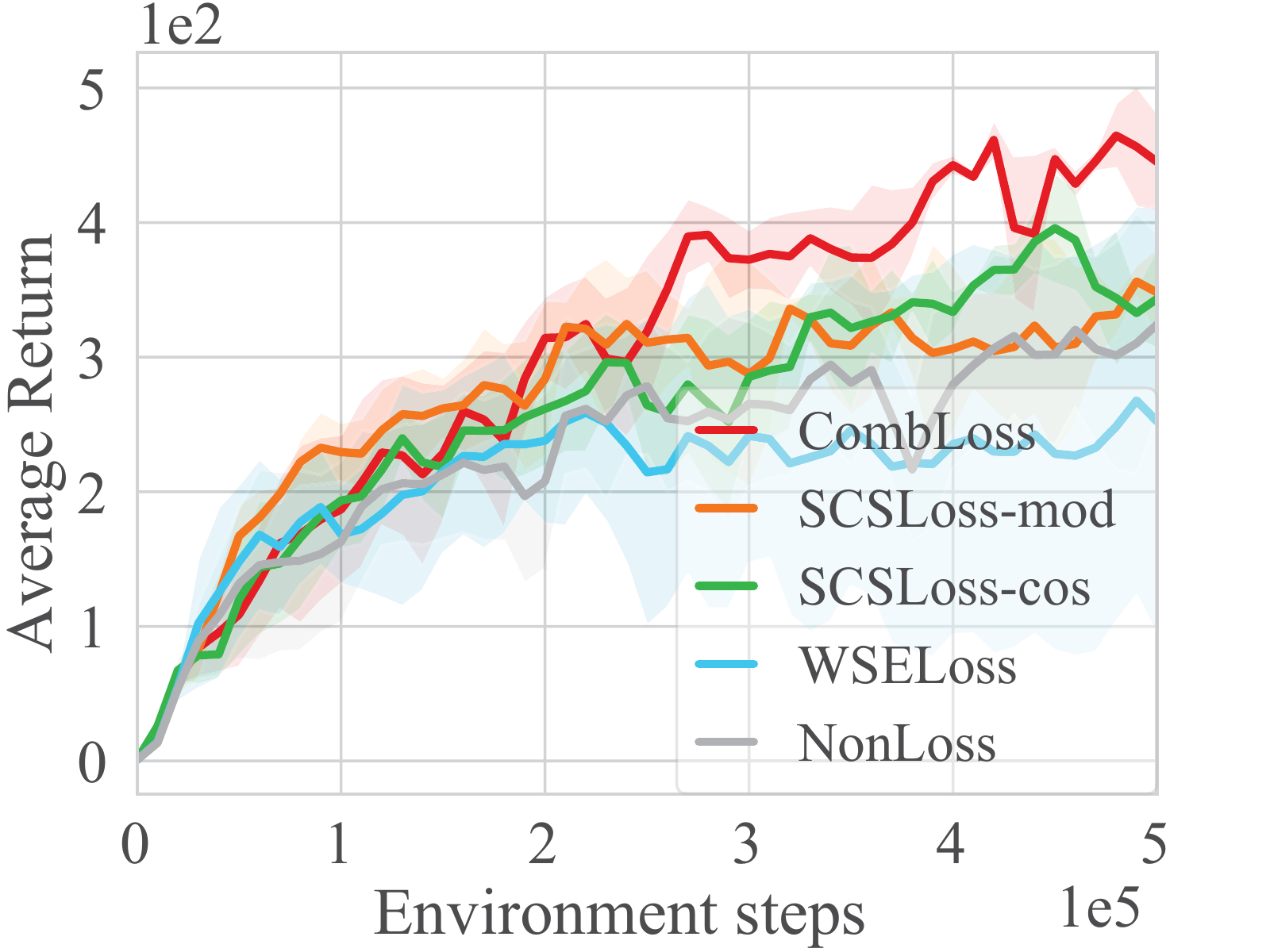}}
  
  \subfigure[Ablating results on C-swin]{\includegraphics[width=3.9cm]{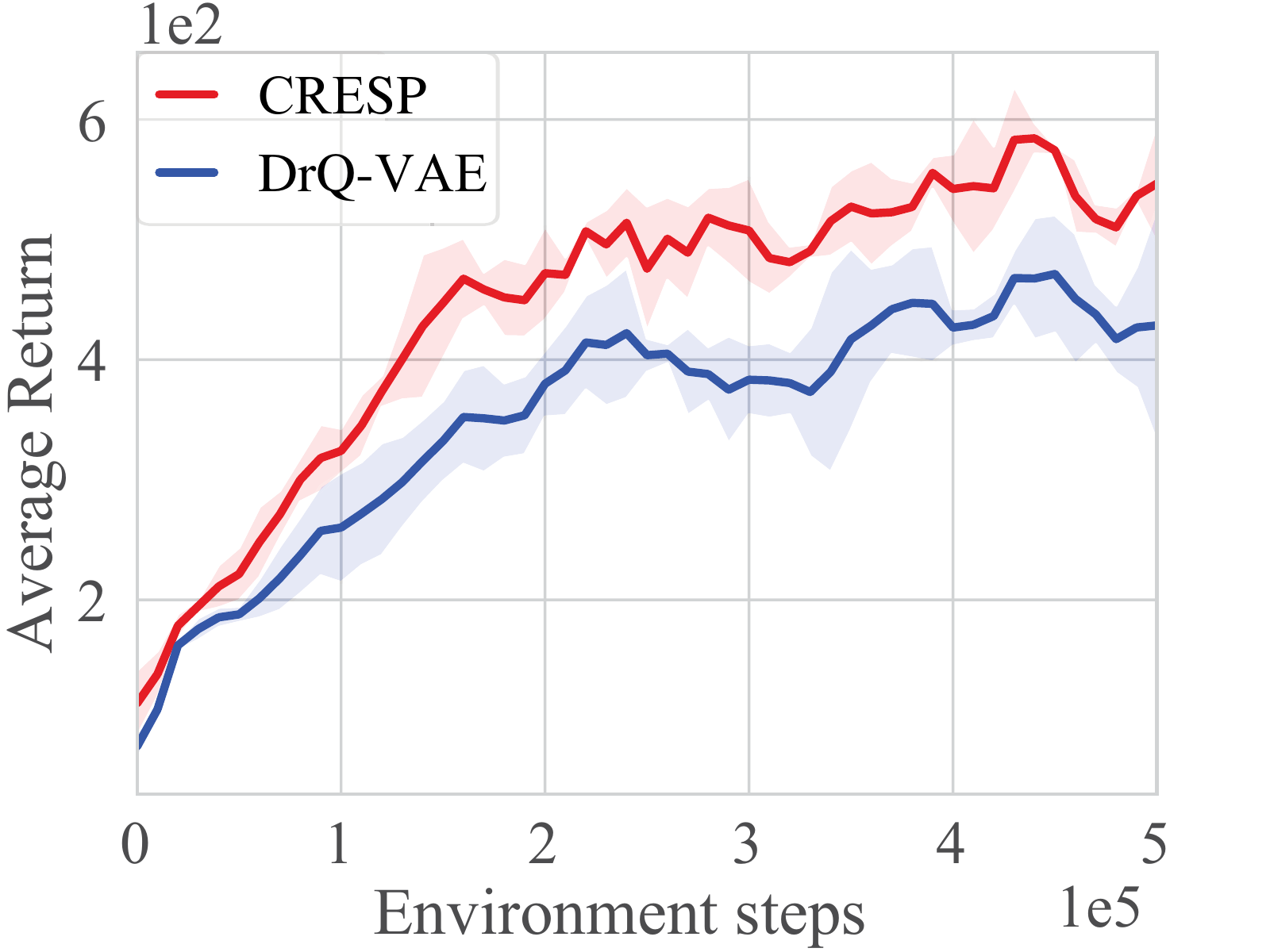}}
  \hskip 0.1in
  \subfigure[Ablating results on C-run]{\includegraphics[width=3.9cm]{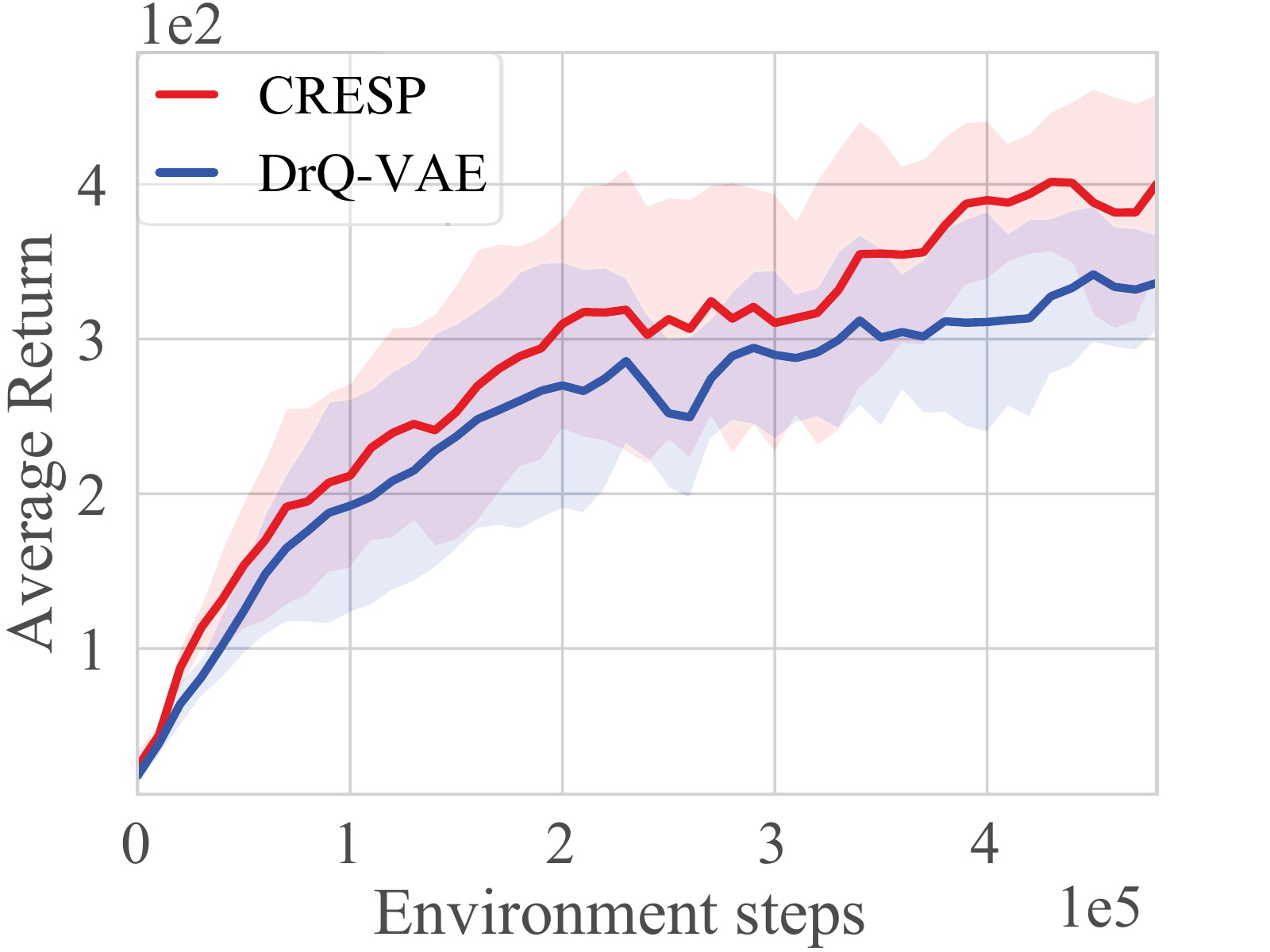}}
  \caption{We report the means and standard deviations of cumulative rewards over 3 trials per task. Results show the effectiveness of different loss functions (first row) and characteristic functions (second row).}
  \label{fig-loss-cf}
\end{figure}

\subsection{Ablation Studies}
\label{subsec-ablation}
We conduct ablation studies about the predictive loss function and characteristic functions.
In all experiments, we use the batch size 128 and train agents over three trails in two training environments with dynamic color distractions.

\subsubsection{Predictive Loss Function}
We conduct experiments of \ourM{-T} to evaluate predictive loss functions proposed in Sections~\ref{subsec-5.2.1} and~\ref{subsec-5.3.1}.
Results are in Figure~\ref{fig-loss-cf},
where \textit{NonLoss} is DrQ without proposed loss functions,
\textit{SCSLoss} is \ourM{-T} using the spectral cosine similarity in Equation~(\ref{eq-cos}),
\textit{WSELoss} is \ourM{-T} using the weighted mean squared error in Equation~(\ref{eq-wse}),
and \textit{CombLoss} is \ourM{-T} using the combination loss in Equation~(\ref{eq-comb}).
In \textit{SCSLoss}, we limit outputs of the transformer model of \ourM{-T} in $[-1,1]$. Therefore, we introduce: (1) \textit{SCSLoss-cos}: \textit{SCSLoss} with the cosine function as the activation function for outputs; (2) \textit{SCSLoss-mod}: \textit{SCSLoss} with remainder operation as the activation function for outputs.
Specifically, in \textit{SCSLoss-mod}, we use outputs to remainder 1 and keep signs of outputs.
Notice that we do not apply any activation function for outputs in \textit{WSELoss} and \textit{CombLoss}.

Results in the first row of Figure~\ref{fig-loss-cf} show that \textit{WSELoss} is worse than \textit{SCSLoss}, and \textit{CombLoss} achieves best performance. All results demonstrate that the spectral cosine similarity can encode more phase information of characteristic functions of RSD-OA.
Based on best results, we propose to use the combination loss in \ourM{-T}.

\subsubsection{Characteristic Functions}
To demonstrate that leveraging characteristic functions is a novel and effective idea for learning RSD-OA, we conduct an ablation studies to compare using characteristic functions with vanilla variational auto-encoder.
For fair comparison, we simply design \textit{DrQ-VAE} to learn RSD-OA by using VAE in DrQ, where both the encoder and decoder are 3-layer MLPs. The Gaussian noise dimension---the dimension of the input of VAE decoder---is 64, which means that the output dimension of VAE encoder based on reparameterization is 128.
We then compare DrQ-VAE with \ourM{}, which learns RSD-OA by using characteristic functions with the 3-layer MLP predictor.
The results in the second row of Figure~\ref{fig-loss-cf} reveal that using characteristic functions of RSD-OA is better to learn task-relevant representations.


\section{Conclusion}
Generalization has been a great challenge in VRL.
To address this challenge, we propose RSD-OA to capture the task-relevant information in both rewards and transition dynamics. Based on RSD-OA, we propose a novel approach, namely \ourM{}, to predict the characteristic functions of RSD-OA for representation learning.
Experiments on unseen test environments with different visual distractions demonstrate the effectiveness of \ourM{} and \ourM{-T}.
We plan to extend the idea of \ourM{} to offline RL settings, which have broad applications in real-world scenarios.

\section*{Acknowledgments}

We would like to thank all the anonymous reviewers for their insightful comments. This work was supported in part by National Nature Science Foundations of China grants U19B2026, U19B2044, 61836011, and 61836006, and the Fundamental Research Funds for the Central Universities grant WK3490000004.

\ifCLASSOPTIONcaptionsoff
  \newpage
\fi



\bibliographystyle{IEEEtran}
\bibliography{reference}

\begin{thebibliography}{10}
\providecommand{\url}[1]{#1}
\csname url@samestyle\endcsname
\providecommand{\newblock}{\relax}
\providecommand{\bibinfo}[2]{#2}
\providecommand{\BIBentrySTDinterwordspacing}{\spaceskip=0pt\relax}
\providecommand{\BIBentryALTinterwordstretchfactor}{4}
\providecommand{\BIBentryALTinterwordspacing}{\spaceskip=\fontdimen2\font plus
\BIBentryALTinterwordstretchfactor\fontdimen3\font minus
  \fontdimen4\font\relax}
\providecommand{\BIBforeignlanguage}[2]{{%
\expandafter\ifx\csname l@#1\endcsname\relax
\typeout{** WARNING: IEEEtran.bst: No hyphenation pattern has been}%
\typeout{** loaded for the language `#1'. Using the pattern for}%
\typeout{** the default language instead.}%
\else
\language=\csname l@#1\endcsname
\fi
#2}}
\providecommand{\BIBdecl}{\relax}
\BIBdecl

\bibitem{nips/LeeNAL20}
A.~X. Lee, A.~Nagabandi, P.~Abbeel, and S.~Levine, ``Stochastic latent
  actor-critic: Deep reinforcement learning with a latent variable model,'' in
  \emph{Advances in Neural Information Processing Systems 33}, 2020.

\bibitem{nips/ChenLSA21}
L.~Chen, K.~Lee, A.~Srinivas, and P.~Abbeel, ``Improving computational
  efficiency in visual reinforcement learning via stored embeddings,'' in
  \emph{Advances in Neural Information Processing Systems 34}, 2021, pp.
  26\,779--26\,791.

\bibitem{iclr/YaratsKF21}
D.~Yarats, I.~Kostrikov, and R.~Fergus, ``Image augmentation is all you need:
  Regularizing deep reinforcement learning from pixels,'' in
  \emph{{International Conference on Learning Representations}}, 2021.

\bibitem{icml/EspeholtSMSMWDF18}
L.~Espeholt, H.~Soyer, R.~Munos, K.~Simonyan, V.~Mnih, T.~Ward, Y.~Doron,
  V.~Firoiu, T.~Harley, I.~Dunning, S.~Legg, and K.~Kavukcuoglu, ``{IMPALA:}
  scalable distributed deep-rl with importance weighted actor-learner
  architectures,'' in \emph{Proceedings of the 35th International Conference on
  Machine Learning}, 2018.

\bibitem{corr/abs-1806-10293}
D.~Kalashnikov, A.~Irpan, P.~Pastor, J.~Ibarz, A.~Herzog, E.~Jang, D.~Quillen,
  E.~Holly, M.~Kalakrishnan, V.~Vanhoucke, and S.~Levine, ``Scalable deep
  reinforcement learning for vision-based robotic manipulation,'' in \emph{2nd
  Annual Conference on Robot Learning}, vol.~87, 2018, pp. 651--673.

\bibitem{iclr/SongJTDN20}
X.~Song, Y.~Jiang, S.~Tu, Y.~Du, and B.~Neyshabur, ``Observational overfitting
  in reinforcement learning,'' in \emph{International Conference on Learning
  Representations}, 2020.

\bibitem{icml/RaileanuF21}
R.~Raileanu and R.~Fergus, ``Decoupling value and policy for generalization in
  reinforcement learning,'' in \emph{Proceedings of the 38th International
  Conference on Machine Learning}, vol. 139, 2021, pp. 8787--8798.

\bibitem{nips/MazoureCDBH20}
B.~Mazoure, R.~T. des Combes, T.~Doan, P.~Bachman, and R.~D. Hjelm, ``Deep
  reinforcement and infomax learning,'' in \emph{Advances in Neural Information
  Processing Systems 33}, 2020.

\bibitem{iclr/AgarwalMCB21}
R.~Agarwal, M.~C. Machado, P.~S. Castro, and M.~G. Bellemare, ``Contrastive
  behavioral similarity embeddings for generalization in reinforcement
  learning,'' in \emph{International Conference on Learning Representations},
  2021.

\bibitem{icml/FanL22}
J.~Fan and W.~Li, ``{DRIBO:} robust deep reinforcement learning via multi-view
  information bottleneck,'' in \emph{International Conference on Machine
  Learning}, 2022, pp. 6074--6102.

\bibitem{icml/0001LSFKPGP20}
A.~Zhang, C.~Lyle, S.~Sodhani, A.~Filos, M.~Kwiatkowska, J.~Pineau, Y.~Gal, and
  D.~Precup, ``Invariant causal prediction for block mdps,'' in
  \emph{Proceedings of the 37th International Conference on Machine Learning},
  vol. 119, 2020, pp. 11\,214--11\,224.

\bibitem{iclr/0001MCGL21}
A.~Zhang, R.~T. McAllister, R.~Calandra, Y.~Gal, and S.~Levine, ``Learning
  invariant representations for reinforcement learning without
  reconstruction,'' in \emph{International Conference on Learning
  Representations}, 2021.

\bibitem{iclr/MazoureAHKM22}
B.~Mazoure, A.~M. Ahmed, R.~D. Hjelm, A.~Kolobov, and P.~MacAlpine,
  ``Cross-trajectory representation learning for zero-shot generalization in
  {RL},'' in \emph{The Tenth International Conference on Learning
  Representations}, 2022.

\bibitem{ansari2020characteristic}
A.~F. Ansari, J.~Scarlett, and H.~Soh, ``A characteristic function approach to
  deep implicit generative modeling,'' in \emph{Conference on Computer Vision
  and Pattern Recognition}, 2020.

\bibitem{yu2004empirical}
J.~Yu, ``Empirical characteristic function estimation and its applications,''
  \emph{Econometric reviews}, vol.~23, no.~2, pp. 93--123, 2004.

\bibitem{corr/abs-1801-00690}
Y.~Tassa, Y.~Doron, A.~Muldal, T.~Erez, Y.~Li, D.~de~Las~Casas, D.~Budden,
  A.~Abdolmaleki, J.~Merel, A.~Lefrancq, T.~P. Lillicrap, and M.~A. Riedmiller,
  ``Deepmind control suite,'' \emph{CoRR}, vol. abs/1801.00690, 2018.

\bibitem{corr/abs-2101-02722}
A.~Stone, O.~Ramirez, K.~Konolige, and R.~Jonschkowski, ``The distracting
  control suite - {A} challenging benchmark for reinforcement learning from
  pixels,'' \emph{CoRR}, vol. abs/2101.02722, 2021.

\bibitem{Yang_2022}
R.~Yang, J.~Wang, Z.~Geng, M.~Ye, S.~Ji, B.~Li, and F.~Wu, ``Learning
  task-relevant representations for generalization via characteristic functions
  of reward sequence distributions,'' in \emph{Proceedings of the 28th {ACM}
  {SIGKDD} Conference on Knowledge Discovery and Data Mining}, 2022.

\bibitem{farebrother2018generalization}
J.~Farebrother, M.~C. Machado, and M.~Bowling, ``Generalization and
  regularization in dqn,'' \emph{arXiv preprint arXiv:1810.00123}, 2018.

\bibitem{nips/IglCLTZDH19}
M.~Igl, K.~Ciosek, Y.~Li, S.~Tschiatschek, C.~Zhang, S.~Devlin, and K.~Hofmann,
  ``Generalization in reinforcement learning with selective noise injection and
  information bottleneck,'' in \emph{Advances in Neural Information Processing
  Systems}, 2019.

\bibitem{lee2020network}
K.~Lee, K.~Lee, J.~Shin, and H.~Lee, ``Network randomization: {A} simple
  technique for generalization in deep reinforcement learning,'' in
  \emph{International Conference on Learning Representations}, 2020.

\bibitem{laskin2020reinforcement}
M.~Laskin, K.~Lee, A.~Stooke, L.~Pinto, P.~Abbeel, and A.~Srinivas,
  ``Reinforcement learning with augmented data,'' in \emph{Advances in Neural
  Information Processing Systems}, 2020.

\bibitem{raileanu2021automatic}
R.~Raileanu, M.~Goldstein, D.~Yarats, I.~Kostrikov, and R.~Fergus, ``Automatic
  data augmentation for generalization in deep reinforcement learning,''
  \emph{CoRR}, vol. abs/2006.12862, 2020.

\bibitem{corr/SurveyOfGen}
\BIBentryALTinterwordspacing
R.~Kirk, A.~Zhang, E.~Grefenstette, and T.~Rockt{\"{a}}schel, ``A survey of
  generalisation in deep reinforcement learning,'' \emph{CoRR}, vol.
  abs/2111.09794, 2021. [Online]. Available:
  \url{https://arxiv.org/abs/2111.09794}
\BIBentrySTDinterwordspacing

\bibitem{lange2010deep}
S.~Lange and M.~A. Riedmiller, ``Deep auto-encoder neural networks in
  reinforcement learning,'' in \emph{International Joint Conference on Neural
  Networks}, 2010.

\bibitem{lange2012autonomous}
S.~Lange, M.~A. Riedmiller, and A.~Voigtl{\"{a}}nder, ``Autonomous
  reinforcement learning on raw visual input data in a real world
  application,'' in \emph{International Joint Conference on Neural Networks},
  2012, pp. 1--8.

\bibitem{LehnertLF20}
L.~Lehnert, M.~L. Littman, and M.~J. Frank, ``Reward-predictive representations
  generalize across tasks in reinforcement learning,'' \emph{PLoS Comput.
  Biol.}, vol.~16, no.~10, 2020.

\bibitem{VaswaniSPUJGKP17}
A.~Vaswani, N.~Shazeer, N.~Parmar, J.~Uszkoreit, L.~Jones, A.~N. Gomez,
  L.~Kaiser, and I.~Polosukhin, ``Attention is all you need,'' in
  \emph{Advances in Neural Information Processing Systems}, 2017.

\bibitem{aaai/ZhouZPZLXZ21}
H.~Zhou, S.~Zhang, J.~Peng, S.~Zhang, J.~Li, H.~Xiong, and W.~Zhang,
  ``Informer: Beyond efficient transformer for long sequence time-series
  forecasting,'' in \emph{Proceedings of the Thirty-Fifth {AAAI} Conference on
  Artificial Intelligence}, 2021, pp. 11\,106--11\,115.

\bibitem{nips/ChenLRLGLASM21}
L.~Chen, K.~Lu, A.~Rajeswaran, K.~Lee, A.~Grover, M.~Laskin, P.~Abbeel,
  A.~Srinivas, and I.~Mordatch, ``Decision transformer: Reinforcement learning
  via sequence modeling,'' in \emph{Advances in Neural Information Processing
  Systems}, 2021, pp. 15\,084--15\,097.

\bibitem{corr/abs-2202-09481}
C.~Chen, Y.~Wu, J.~Yoon, and S.~Ahn, ``Transdreamer: Reinforcement learning
  with transformer world models,'' \emph{CoRR}, vol. abs/2202.09481, 2022.

\bibitem{du2019provably}
S.~S. Du, A.~Krishnamurthy, N.~Jiang, A.~Agarwal, M.~Dud{\'{\i}}k, and
  J.~Langford, ``Provably efficient {RL} with rich observations via latent
  state decoding,'' in \emph{Proceedings of the 36th International Conference
  on Machine Learning}, vol.~97, 2019, pp. 1665--1674.

\bibitem{corr/abs-1807-03748}
A.~van~den Oord, Y.~Li, and O.~Vinyals, ``Representation learning with
  contrastive predictive coding,'' \emph{CoRR}, vol. abs/1807.03748, 2018.

\bibitem{aaai/Wang00LL22}
Z.~Wang, J.~Wang, Q.~Zhou, B.~Li, and H.~Li, ``Sample-efficient reinforcement
  learning via conservative model-based actor-critic,'' in \emph{Proceedings of
  the Thirty-Sixth {AAAI} Conference on Artificial Intelligence}, 2022, pp.
  8612--8620.

\bibitem{iccv/PengBXHSW19}
X.~Peng, Q.~Bai, X.~Xia, Z.~Huang, K.~Saenko, and B.~Wang, ``Moment matching
  for multi-source domain adaptation,'' in \emph{International Conference on
  Computer Vision}, 2019, pp. 1406--1415.

\bibitem{corr/abs-2204-13091}
I.~{\c{C}}ugu, M.~Mancini, Y.~Chen, and Z.~Akata, ``Attention consistency on
  visual corruptions for single-source domain generalization,'' \emph{CoRR},
  vol. abs/2204.13091, 2022.

\bibitem{icml/LaskinSA20}
M.~Laskin, A.~Srinivas, and P.~Abbeel, ``{CURL:} contrastive unsupervised
  representations for reinforcement learning,'' in \emph{Proceedings of the
  37th International Conference on Machine Learning}, 2020.

\bibitem{icml/HaarnojaZAL18}
T.~Haarnoja, A.~Zhou, P.~Abbeel, and S.~Levine, ``Soft actor-critic: Off-policy
  maximum entropy deep reinforcement learning with a stochastic actor,'' in
  \emph{Proceedings of the 35th International Conference on Machine Learning},
  vol.~80, 2018, pp. 1856--1865.

\bibitem{iclr/ZhangCDL18}
H.~Zhang, M.~Ciss{\'{e}}, Y.~N. Dauphin, and D.~Lopez{-}Paz, ``mixup: Beyond
  empirical risk minimization,'' in \emph{International Conference on Learning
  Representations}, 2018.

\bibitem{icml/FanWHY0ZA21}
L.~Fan, G.~Wang, D.~Huang, Z.~Yu, L.~Fei{-}Fei, Y.~Zhu, and A.~Anandkumar,
  ``{SECANT:} self-expert cloning for zero-shot generalization of visual
  policies,'' in \emph{Proceedings of the 38th International Conference on
  Machine Learning}, vol. 139, 2021, pp. 3088--3099.

\bibitem{iclr/HansenJSAAEPW21}
N.~Hansen, R.~Jangir, Y.~Sun, G.~Aleny{\`{a}}, P.~Abbeel, A.~A. Efros,
  L.~Pinto, and X.~Wang, ``Self-supervised policy adaptation during
  deployment,'' in \emph{International Conference on Learning Representations},
  2021.

\bibitem{l4dc/SonarPM21}
A.~Sonar, V.~Pacelli, and A.~Majumdar, ``Invariant policy optimization: Towards
  stronger generalization in reinforcement learning,'' in \emph{Proceedings of
  the 3rd Annual Conference on Learning for Dynamics and Control}, vol. 144,
  2021, pp. 21--33.

\bibitem{icml/HafnerLFVHLD19}
D.~Hafner, T.~P. Lillicrap, I.~Fischer, R.~Villegas, D.~Ha, H.~Lee, and
  J.~Davidson, ``Learning latent dynamics for planning from pixels,'' in
  \emph{Proceedings of the 36th International Conference on Machine Learning},
  vol.~97, 2019, pp. 2555--2565.

\bibitem{radford2018improving}
A.~Radford, K.~Narasimhan, T.~Salimans, I.~Sutskever \emph{et~al.}, ``Improving
  language understanding by generative pre-training,'' 2018.

\bibitem{grimmett2020probability}
G.~Grimmett and D.~Stirzaker, \emph{Probability and random processes}.\hskip
  1em plus 0.5em minus 0.4em\relax Oxford university press, 2020.

\end{thebibliography}
%



%

\begin{IEEEbiography}[{\includegraphics[width=1in,height=1.25in,clip,keepaspectratio]{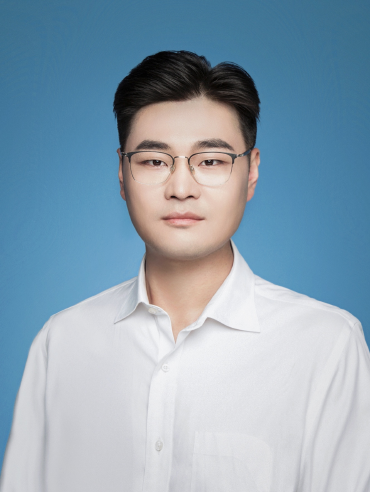}}]{Jie Wang}
	received the B.Sc. degree in electronic
	information science and technology from University of Science and Technology of China, Hefei, China, in 2005, and the Ph.D. degree in computational science from the Florida State University, Tallahassee, FL, in 2011. He is currently a Professor in the Department of Electronic Engineering and Information Science at University of Science and Technology of China, Hefei, China. His research interests include reinforcement learning, knowledge graph, large-scale optimization, deep learning, etc. He is a Senior Member of IEEE.
\end{IEEEbiography}

\begin{IEEEbiography}[{\includegraphics[width=1in,height=1.25in,clip,keepaspectratio]{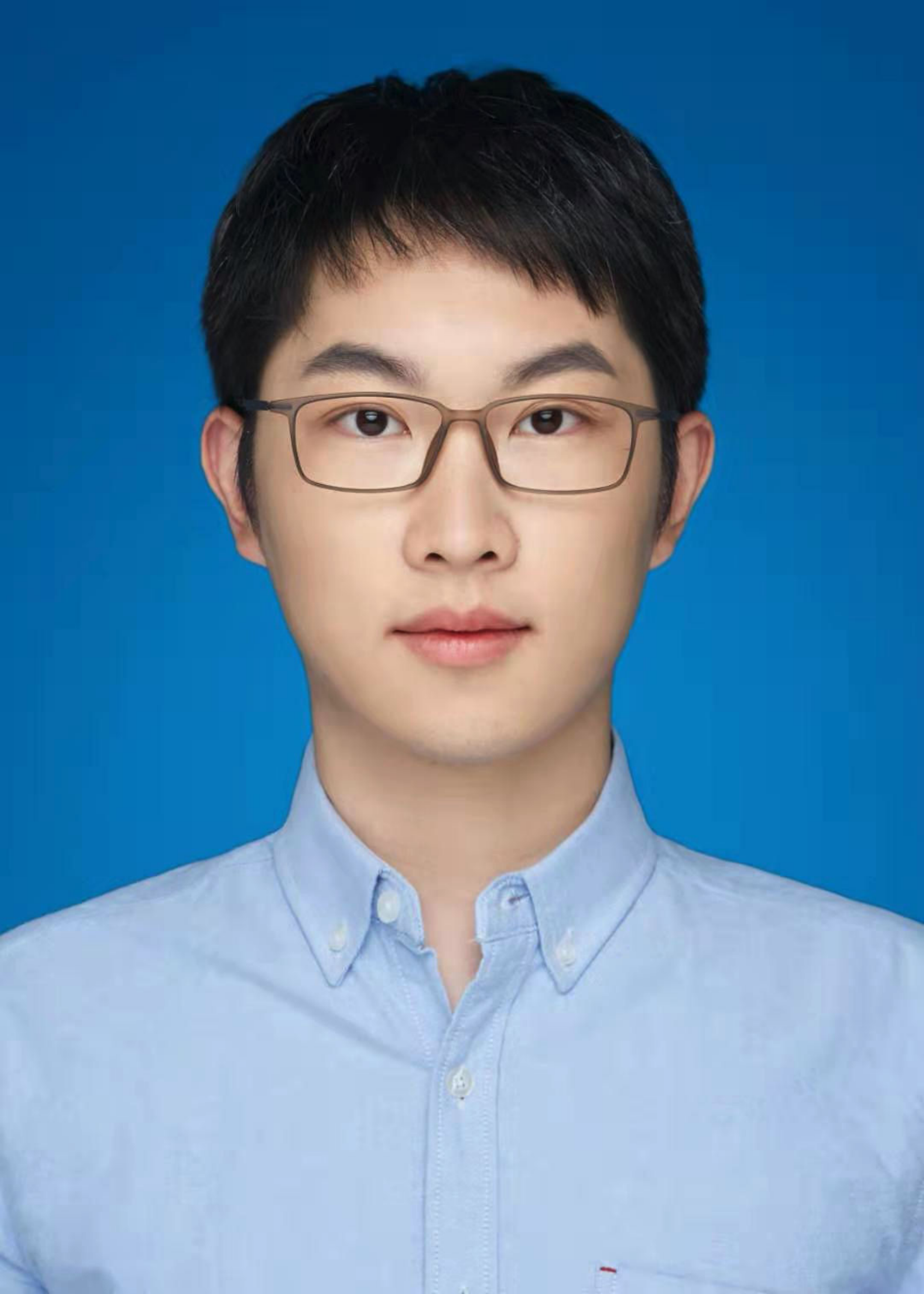}}]{Rui Yang}
	received the B.Sc. degree in information and computing science from Hefei University of Technology, Hefei, China, in 2019. He is currently a Eng.D. candidate in the Department of Electronic Engineering and Information Science at University of Science and Technology of China, Hefei, China. His research interests include reinforcement learning and representation learning.
\end{IEEEbiography}

\begin{IEEEbiography}[{\includegraphics[width=1in,height=1.25in,clip,keepaspectratio]{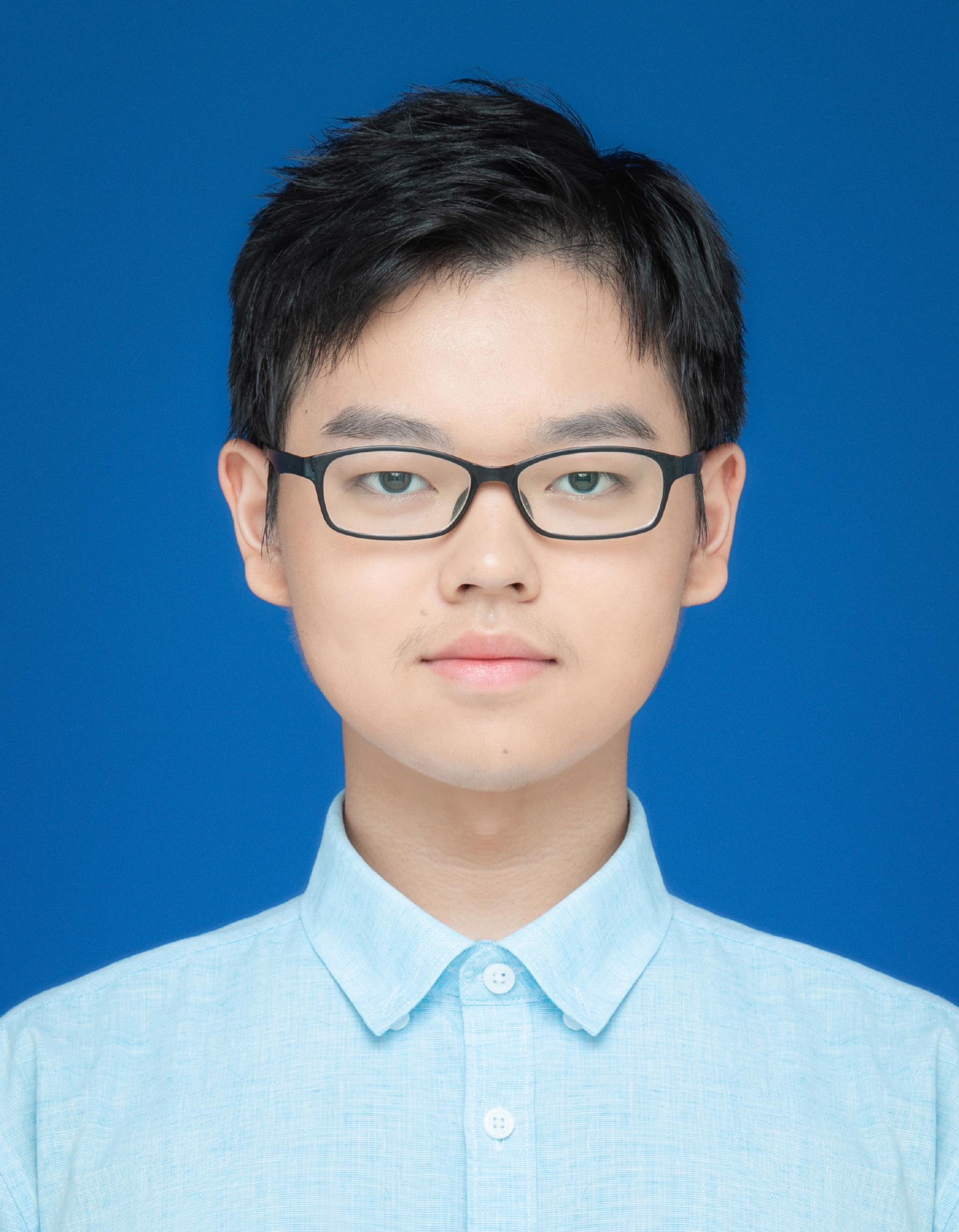}}]{Zijie Geng}
	received the B.Sc. degree in mathematics and applied mathematics from School of the Gifted Young, University of Science and Technology of China, Hefei, China, in 2022. He is currently an M.Sc. candidate in the Department of Electronic Engineering and Information Science at University of Science and Technology of China, Hefei, China. His research interests include reinforcement learning, graph learning, AI for science and machine learning for combinatorial optimization.
\end{IEEEbiography}

\begin{IEEEbiography}[{\includegraphics[width=1in,height=1.25in,clip,keepaspectratio]{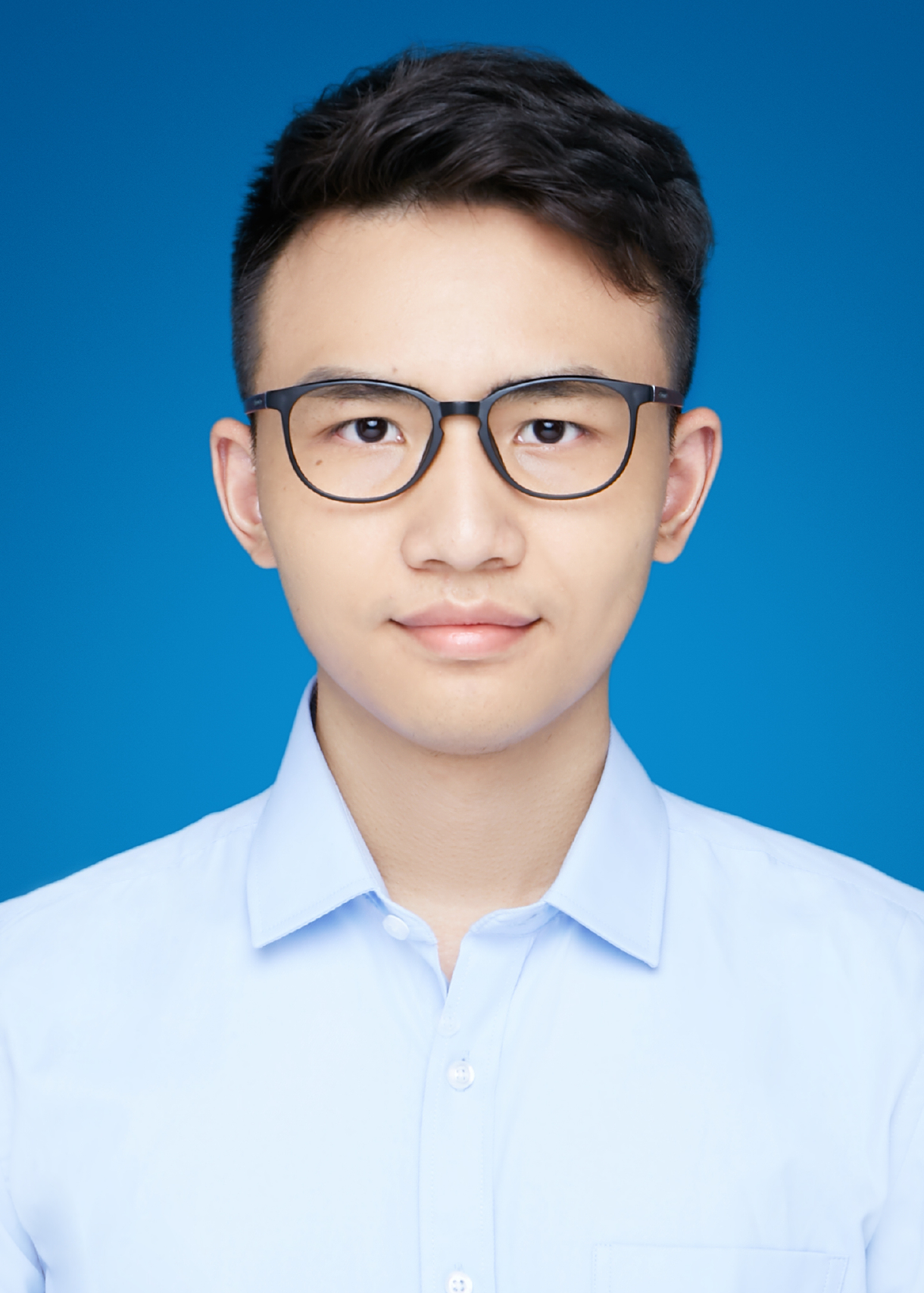}}]{Zhihao Shi}
	received the B.Sc. degree in Department of Electronic Engineering and Information Science from University of Science and Technology of China, Hefei, China, in 2020. a Ph.D. candidate in the Department of Electronic Engineering and Information Science at University of Science and Technology of China, Hefei, China. His research interests include graph representation learning and natural language processing.
\end{IEEEbiography}

\begin{IEEEbiography}[{\includegraphics[width=1in,height=1.25in,clip,keepaspectratio]{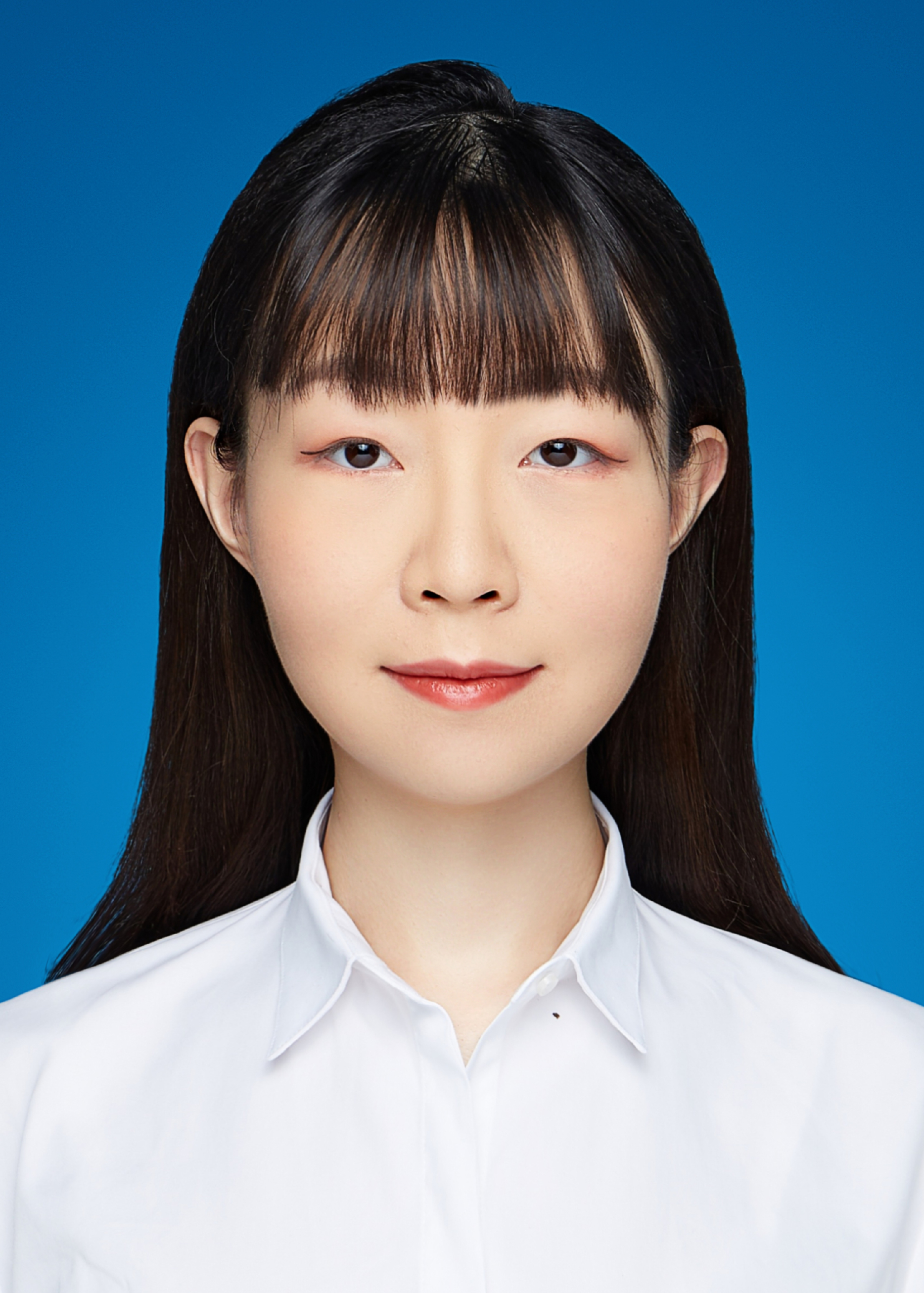}}]{Mingxuan Ye}
	received the B.Sc. degree in information and computing sciences from Nanjing University, Nanjing, China, in 2018. She is currently an M.Sc. candidate in the Department of Electronic Engineering and Information Science at University of Science and Technology of China, Hefei, China. Her research interests include reinforcement learning.
\end{IEEEbiography}

\begin{IEEEbiography}[{\includegraphics[width=1in,height=1.25in,clip,keepaspectratio]{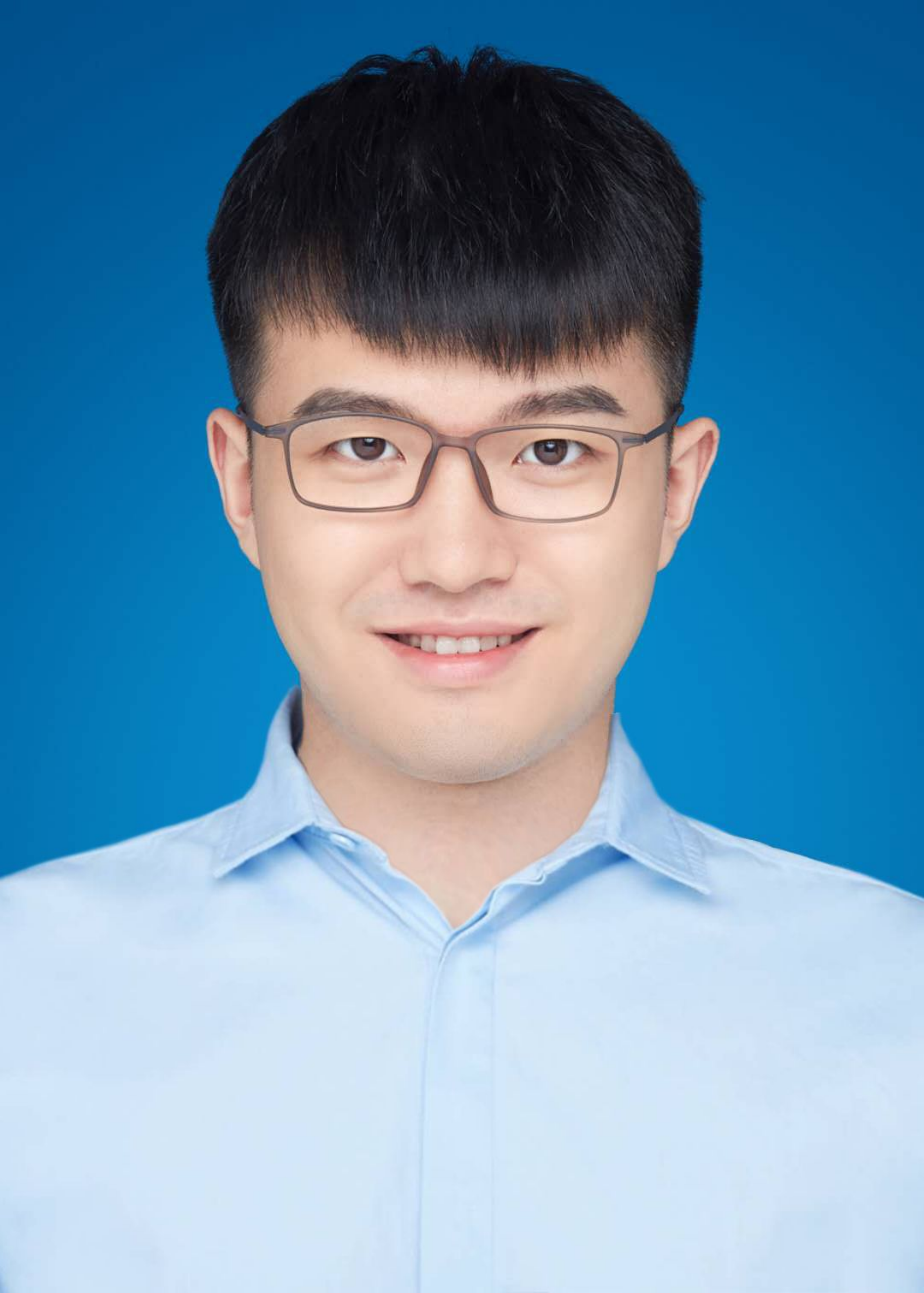}}]{Qi Zhou}
	received the B.Sc. degree in computer science and technology from University of Science and Technology of China, Hefei, China, in 2018. He is currently a Ph.D. candidate in the Department of Electronic Engineering and Information Science at University of Science and Technology of China, Hefei, China. His research interests include trusted and sample-efficient reinforcement learning.
\end{IEEEbiography}

\begin{IEEEbiography}[{\includegraphics[width=1in,height=1.25in,clip,keepaspectratio]{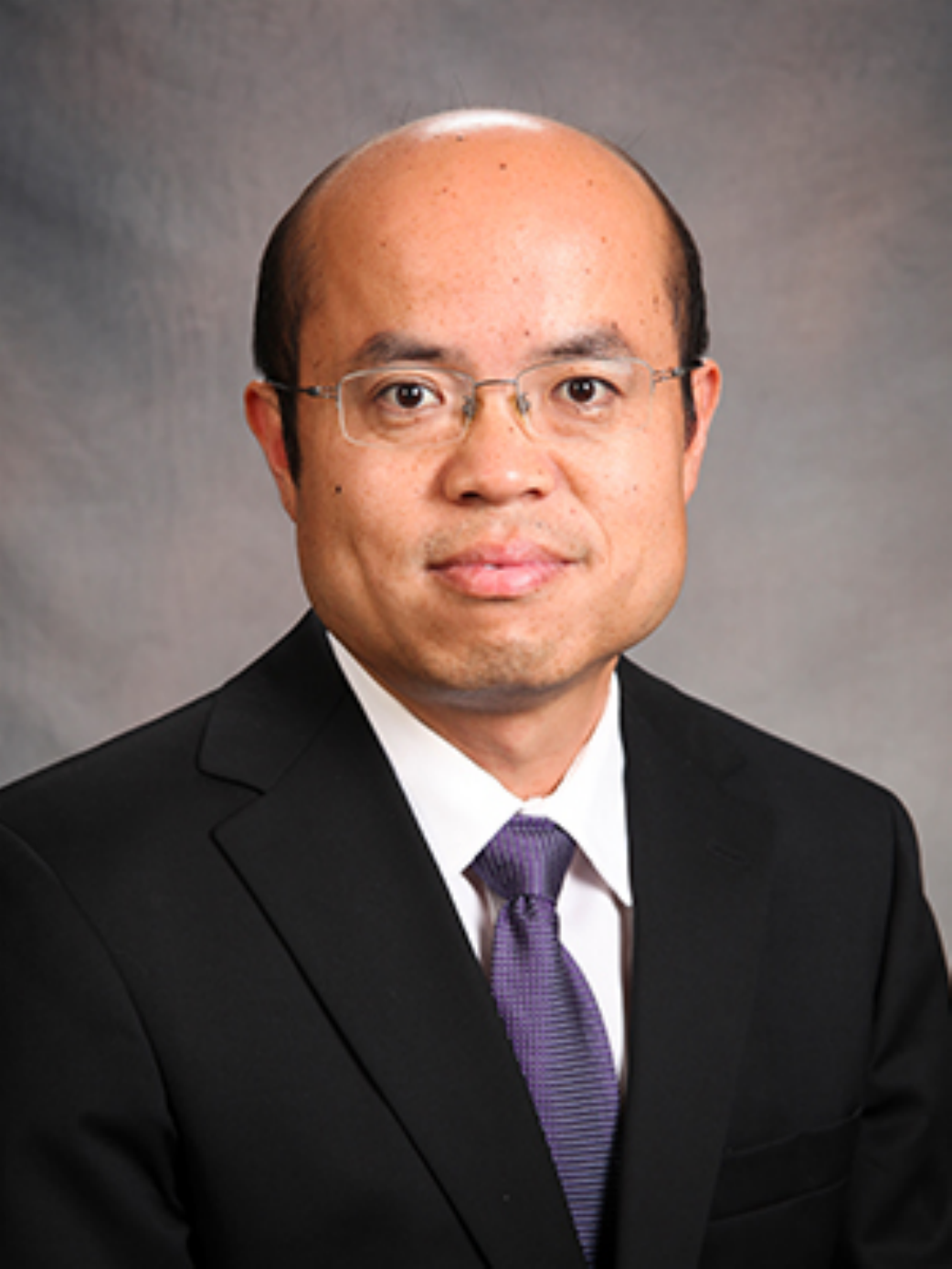}}]{Shuiwang Ji}
	received the Ph.D. degree in computer science from Arizona State University, Tempe, Arizona, in 2010. Currently, he is a Professor and Presidential Impact Fellow in the Department of Computer Science and Engineering, Texas A\&M University, College Station, Texas. His research interests include machine learning, deep learning, graph and image analysis, and quantum systems. He received the National Science Foundation CAREER Award in 2014. He is currently an Associate Editor for IEEE Transactions on Pattern Analysis and Machine Intelligence, ACM Transactions on Knowledge Discovery from Data, and ACM Computing Surveys. He regularly serves as an Area Chair or equivalent roles for data mining and machine learning conferences, including AAAI, ICLR, ICML, IJCAI, KDD, and NeurIPS. He is a Fellow of IEEE. 
\end{IEEEbiography}

\vskip -0.2in

\begin{IEEEbiography}[{\includegraphics[width=1in,height=1.25in,clip,keepaspectratio]{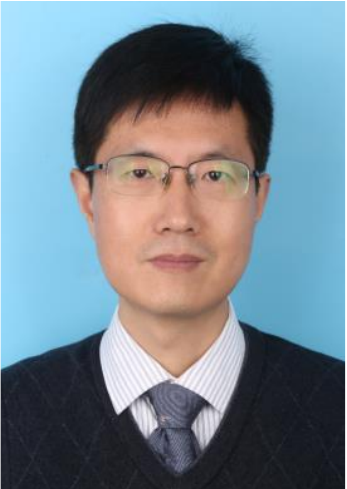}}]{Bin Li}
	received the B.Sc. degree in electrical engineering from Hefei University of Technology, Hefei, China, in 1992, the M.Sc. degree from the Institute of Plasma Physics, Chinese Academy of Sciences, Hefei, in 1995, and the Ph.D. degree in Electronic Science and Technology from the University of Science and Technology of China (USTC), Hefei, in 2001. He is currently a Professor at the School of Information Science and Technology, USTC. He has authored or co-authored over 60 refereed publications. His current research interests include evolutionary computation, pattern recognition, and human-computer interaction. Dr. Li is the Founding Chair of IEEE Computational Intelligence Society Hefei Chapter, a Counselor of IEEE USTC Student Branch, a Senior Member of Chinese Institute of Electronics (CIE), and a member of Technical Committee of the Electronic Circuits and Systems Section of CIE. He is a Member of IEEE.
\end{IEEEbiography}

\vskip -0.2in

\begin{IEEEbiography}[{\includegraphics[width=1in,height=1.25in,clip,keepaspectratio]{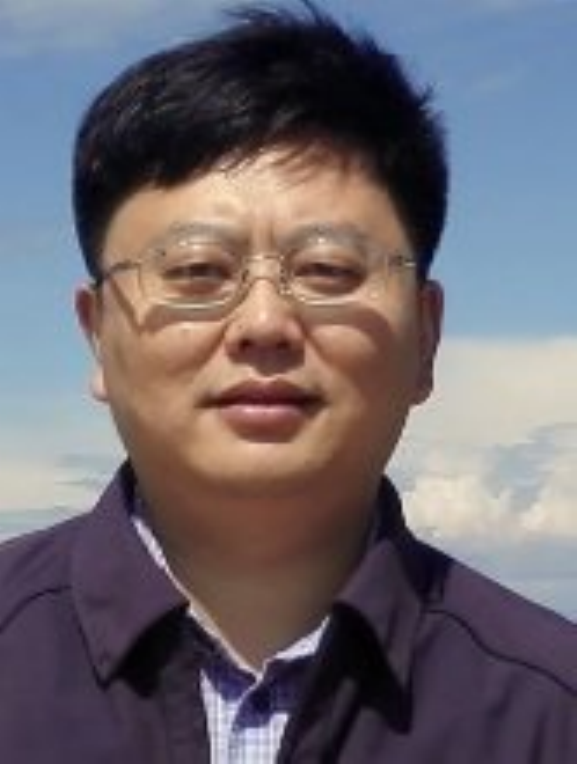}}]{Yongdong Zhang}
	received the Ph.D. degree in electronic engineering from Tianjin University, Tianjin, China, in 2002. He is currently a Professor at the University of Science and Technology of China. He has authored more than 100 refereed journal articles and conference papers. His current research interests include multimedia content analysis and understanding, multimedia content security, video encoding, and streaming media technology. He serves as an Editorial Board Member of Multimedia Systems journal and Neurocomputing. He was the recipient of the Best Paper Award in PCM 2013, ICIMCS 2013, and ICME 2010, and the Best Paper Candidate in ICME 2011. He is a Senior Member of IEEE.
\end{IEEEbiography}

\vskip -0.2in

\begin{IEEEbiography}[{\includegraphics[width=1in,height=1.35in,clip,keepaspectratio]{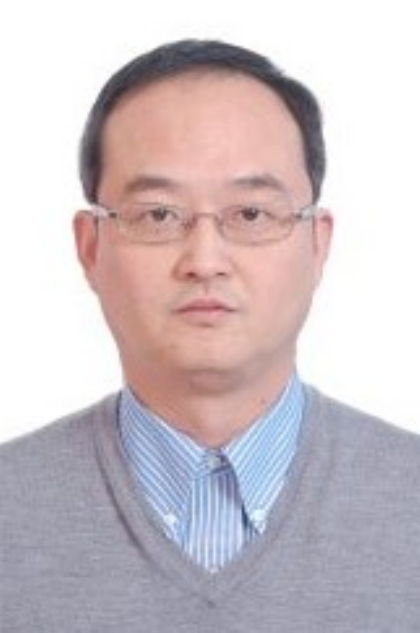}}]{Feng Wu}
	received the B.Sc. degree in electronic engineering from Xidian University, Xi’an, China, in 1992, and received the M.Sc. and Ph.D. degrees from the Harbin Institute of Technology, Harbin, China, in 1996 and 1999, respectively.
	He is now a Professor and Vice President at the University of Science and Technology of China, Hefei, China. Previously, he was a Principle Researcher and Research Manager with Microsoft Research Asia, Beijing, China.
	His research interests include image and video compression, media communication, and media analysis and synthesis. 
	He has authored or co-authored over 200 high-quality articles. His 15 techniques have been adopted into international video coding standards. 
	He serves or had served as the Editor-in-Chief for IEEE Transactions on Circuits and Systems for Video Technology (TCSVT) and as an Associate Editor for IEEE Transactions on Image Processing (TIP) and IEEE Transactions on Multimedia. He also serves as General Chair in ICME 2019, TPC Chair in MMSP
	2011, VCIP 2010, and PCM 2009. He received the IEEE CAS Mac Van Valkenburg Award in 2021, the best paper awards in IEEE TCSVT 2009, VCIP 2016, PCM 2008, and VCIP 2007, and the Best Associate Editor Award of IEEE Transactions on Image Processing (TIP) in 2018. He is a Fellow of IEEE. 
\end{IEEEbiography}








%
\newpage

\appendices

\section{Proofs}


\begin{lemma}~\cite{grimmett2020probability}
\label{lem:cf}
Two random vectors $\mathbf{X}$ and $\mathbf{Y}$ have the same characteristic function if and only if they have the same probability distribution function.
\end{lemma}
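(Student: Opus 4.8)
The plan is to prove the two implications separately. The forward direction is immediate: by definition the characteristic function $\varphi_{\mathbf{X}}(\bm{\omega}) = \mathbb{E}[e^{j\langle\bm{\omega},\mathbf{X}\rangle}]$ is the expectation of a bounded functional of $\mathbf{X}$, hence depends only on the law of $\mathbf{X}$; so if $\mathbf{X}$ and $\mathbf{Y}$ share a distribution they share a characteristic function. The substance is the converse: assuming $\varphi_{\mathbf{X}}\equiv\varphi_{\mathbf{Y}}$ on $\mathbb{R}^n$, one must conclude that $\mathbf{X}$ and $\mathbf{Y}$ have the same distribution.

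First I would recall the Fourier inversion formula: if a characteristic function $\varphi$ is integrable over $\mathbb{R}^n$, then the corresponding distribution has a bounded continuous density recovered by $p(\mathbf{x}) = (2\pi)^{-n}\int_{\mathbb{R}^n} e^{-j\langle\bm{\omega},\mathbf{x}\rangle}\varphi(\bm{\omega})\,\mathrm{d}\bm{\omega}$. The difficulty, which I expect to be the main obstacle, is that a generic characteristic function need not be integrable, so inversion cannot be applied to $\varphi_{\mathbf{X}}$ directly; I would circumvent this by a standard Gaussian mollification.

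Concretely, let $\mathbf{Z}\sim\mathcal{N}(\bm{0},I)$ be independent of $\mathbf{X}$ and $\mathbf{Y}$, and for $\sigma>0$ set $\mathbf{X}_\sigma = \mathbf{X} + \sigma\mathbf{Z}$ and $\mathbf{Y}_\sigma = \mathbf{Y} + \sigma\mathbf{Z}$. By independence, $\varphi_{\mathbf{X}_\sigma}(\bm{\omega}) = \varphi_{\mathbf{X}}(\bm{\omega})\,e^{-\sigma^2\|\bm{\omega}\|^2/2}$, which is integrable because $|\varphi_{\mathbf{X}}|\le 1$; the same holds for $\mathbf{Y}_\sigma$. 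Applying Fourier inversion, the density of $\mathbf{X}_\sigma$ is a fixed continuous function of $\varphi_{\mathbf{X}}$ alone, and likewise for $\mathbf{Y}_\sigma$; hence $\varphi_{\mathbf{X}}\equiv\varphi_{\mathbf{Y}}$ forces $\mathbf{X}_\sigma$ and $\mathbf{Y}_\sigma$ to have the same law for every $\sigma>0$. Finally I would let $\sigma\downarrow 0$: since $\mathbf{X}_\sigma\to\mathbf{X}$ and $\mathbf{Y}_\sigma\to\mathbf{Y}$ almost surely, hence in distribution, uniqueness of weak limits yields that $\mathbf{X}$ and $\mathbf{Y}$ have the same distribution, equivalently the same joint distribution function, and the same density wherever one exists. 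An alternative route, avoiding inversion, is to show $\mathbb{E}[h(\mathbf{X})]=\mathbb{E}[h(\mathbf{Y})]$ for every bounded continuous $h$ by uniformly approximating $h$ on large cubes with finite trigonometric sums $\sum_k c_k e^{j\langle\bm{\omega}_k,\cdot\rangle}$ and controlling the tails via tightness; but the mollification argument is the cleaner one to write out.
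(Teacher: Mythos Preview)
Your argument is correct and is one of the standard textbook proofs of the uniqueness theorem for characteristic functions: the Gaussian smoothing renders the characteristic function integrable, Fourier inversion then identifies the smoothed laws, and passing to the limit recovers the original distributions.

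The paper, however, does not prove this lemma at all; it is merely stated with a citation to \cite{grimmett2020probability} and invoked as a black box in the proof of Theorem~\ref{app-thm:cf}. So there is no ``paper's own proof'' to compare against. Your mollification route is exactly the kind of argument one finds in Grimmett--Stirzaker or similar references, and it would serve perfectly well if the paper needed to be self-contained. The alternative you sketch (approximation of bounded continuous test functions by trigonometric polynomials plus a tightness tail bound) is also standard and equally valid, though, as you note, slightly more cumbersome to execute cleanly in the multivariate case.
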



\begin{theorem}
\label{app-thm:bd}
Let $\Phi(o)$ be a $T$-level reward sequence representation from any observation $o \in \mathcal{O}$, $V^e_*:\mathcal{O}\to\mathbb{R}$ be the optimal value function in the environment $e\in\mathcal{E}$, $\Bar{V}^e_*:\mathcal{Z}\to\mathbb{R}$ be the optimal value function on the representation space, and $\bar{r}$ be a bound of the reward space, i.e., $|r|<\bar{r}$ for any $r\in\mathcal{R}$.
We have
\begin{align*}
    0\le V^e_*(o)-\Bar{V}^e_*\circ\Phi(o)\le\frac{2\gamma^T}{1-\gamma}\bar{r},
\end{align*}
for any $o\in\mathcal{O}$ and $e\in\mathcal{E}$.
\end{theorem}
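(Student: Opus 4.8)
The plan is to prove the two inequalities separately, bridging the upper bound through the optimal finite-horizon ($T$-step) value. The left inequality $0 \le V^e_*(o) - \bar V^e_* \circ \Phi(o)$ is a pure policy-class restriction: any policy defined on the representation space $\mathcal{Z}$ induces an observation policy $o \mapsto \bar\pi(\cdot\mid\Phi(o))$, so the policies realizable on $\mathcal{Z}$ form a subset of all observation policies, and the optimal value over the smaller class cannot exceed $V^e_*$. The real work is the right inequality.

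For the upper bound I would introduce the optimal $T$-step value $V^{e}_{*,T}(o)$, the best expected discounted sum of the first $T$ rewards $R_2,\dots,R_{T+1}$ starting from $o$, and run a three-step argument. Step A: $V^e_{*,T}$ factors through $\Phi$. Since $\Phi(o)$ is a $T$-level reward sequence representation it determines $p(\mathbf{r}\mid o,\mathbf{a})$ for every length-$T$ action sequence, hence by marginalization every $p(r_2,\dots,r_{k+1}\mid o,a_1,\dots,a_k)$ with $k\le T$; the $T$-step planning problem depends on $o$ only through these quantities, so $V^e_{*,T}(o)=h(\Phi(o))$ for some $h$. Step B: build a near-optimal $\mathcal{Z}$-policy that executes, over its first $T$ steps, the $T$-step-optimal plan attached to $\Phi(o_1)$ (well defined by Step A) and then acts arbitrarily; because $|R|<\bar r$, the tail after step $T$ costs at most $\sum_{i\ge T}\gamma^{i}\bar r=\frac{\gamma^{T}}{1-\gamma}\bar r$, so $\bar V^e_*(\Phi(o))\ge V^e_{*,T}(o)-\frac{\gamma^{T}}{1-\gamma}\bar r$. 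Step C: bound the true optimum the same way — for the genuinely optimal policy the first-$T$-steps return is at most $V^e_{*,T}(o)$ and the remaining tail is at most $\frac{\gamma^{T}}{1-\gamma}\bar r$, so $V^e_*(o)\le V^e_{*,T}(o)+\frac{\gamma^{T}}{1-\gamma}\bar r$. Subtracting the two displays cancels $V^e_{*,T}(o)$ and gives $V^e_*(o)-\bar V^e_*\circ\Phi(o)\le\frac{2\gamma^{T}}{1-\gamma}\bar r$.

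The main obstacle is Step B in the general stochastic case: an optimal $T$-step policy is closed-loop, so a policy that only observes $\Phi(o_t)$ (and the reward history) cannot in general reproduce the full $T$-step-optimal return, and correspondingly a closed-loop $V^e_{*,T}$ is not obviously a function of the marginal reward-sequence distributions that $\Phi$ captures. For deterministic environments this difficulty disappears cleanly — a plan is literally an action sequence, $V^e_{*,T}$ is a deterministic function of the RSF-OA values, and a $\mathcal{Z}$-policy can execute it exactly — so I would first establish the deterministic statement, where Steps A--C are immediate discounting bookkeeping. For stochastic environments the decomposition still goes through provided one either (i) formalizes $\bar V^e_*$ through the $\Phi$-measurable policy class that has access to the observed reward sequence, and argues this information suffices to recover the optimal $T$-step return up to the $O(\gamma^{T})$ slack, or (ii) sets up an explicit coupling between trajectories generated from $o$ under the optimal policy and from the representation-based policy, controlling the per-step discrepancy. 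Verifying that sufficiency (or bounding the coupling error) is the technical heart; everything else reduces to the geometric-series estimate $\sum_{i\ge T}\gamma^{i}\bar r=\frac{\gamma^{T}}{1-\gamma}\bar r$ applied twice.
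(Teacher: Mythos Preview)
Your strategy is sound and would deliver the bound, but the decomposition differs from the paper's. You pivot through the finite-horizon optimum $V^e_{*,T}(o)$ and sandwich both $V^e_*$ and $\bar V^e_*\circ\Phi$ around it, paying $\tfrac{\gamma^T}{1-\gamma}\bar r$ once in Step~B and once in Step~C. The paper never introduces $V^e_{*,T}$: it fixes an optimal policy $\pi_*$ that depends only on the latent state, defines $\hat\pi(z):=\pi_*(\bar o)$ for a representative $\bar o$ with $\Phi(\bar o)=z$, and argues that the first-$T$-step returns of $\pi_*$ and $\hat\pi\circ\Phi$ coincide \emph{exactly}; the factor $2$ then arises solely from bounding the tail difference $|\mathbb E_{\pi_*}[R_{t+1}]-\mathbb E_{\hat\pi\circ\Phi}[R_{t+1}]|\le 2\bar r$ for $t\ge T$. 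This pull-back construction is precisely what lets the paper skip your ``main obstacle'': since $\hat\pi$ mimics $\pi_*$ at every step via the representative map, there is no need to synthesize a $T$-step-optimal plan, and hence no need for your Step~A (that the closed-loop $V^e_{*,T}$ factors through $\Phi$), which you correctly flag as delicate in the stochastic case. The paper's justification that the first-$T$-step return under any $\Phi$-based policy depends only on $\Phi(o)$ is invoked directly from the $T$-level property $\Phi(o)=\Phi(o')\Rightarrow p(\mathbf r\mid o,\mathbf a)=p(\mathbf r\mid o',\mathbf a)$ and is passed over at roughly the same level of rigor as your concern; the practical gain of the paper's route is that it avoids the extra moving part $V^e_{*,T}$ and the coupling workaround you were anticipating.
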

\begin{proof}
By the definition of optimal value function, we have $\Bar{V}^e_*\circ\Phi(o)\le V^e_*(o)$.
It suffices to show that $V^e_*(o)-\Bar{V}^e_*\circ\Phi(o)\le\frac{\gamma^T}{1-\gamma}\bar{r}$.
Without loss of generality, let $\pi_*:\mathcal{O}\to\Delta(\mathcal{R}^T)$ be an optimal policy such that $\pi_*(\cdot|o)=\pi_*(\cdot|o')$ if $[o]_s=[o']_s$.
Let $\hat{\pi}:\mathcal{Z}\to\Delta(\mathcal{R}^T)$ be any policy in the representation space.
Then we have
\begin{align*}
    V^e_*(o)&-\Bar{V}^e_*\circ\Phi(o)\le \mathbb{E}_{\pi_*}^e\left[\sum_{t=0}^\infty\gamma^tR_{t+1}\right]-\mathbb{E}_{\hat{\pi}\circ\Phi}^e\left[\sum_{t=0}^\infty\gamma^tR_{t+1}\right]\\
    \le& \mathbb{E}_{\pi_*}^e\left[\sum_{t=0}^{T-1}\gamma^tR_{t+1}\right] - \mathbb{E}_{\hat{\pi}\circ\Phi}^e\left[\sum_{t=0}^{T-1}\gamma^tR_{t+1}\right]\\
    &+ \sum_{t=T}^\infty \gamma^t \left|\mathbb{E}_{\pi_*}^e\left[R_{t+1}\right]-\mathbb{E}_{\hat{\pi}\circ\Phi}^e\left[R_{t+1}\right]\right|\\
    \le& \mathbb{E}_{\pi_*}^e\left[\sum_{t=0}^{T-1}\gamma^tR_{t+1}\right] - \mathbb{E}_{\hat{\pi}\circ\Phi}^e\left[\sum_{t=0}^{T-1}\gamma^tR_{t+1}\right]+ 2\sum_{t=T}^\infty \gamma^t \bar{r}\\
    \le& \mathbb{E}_{\pi_*}^e\left[\sum_{t=0}^{T-1}\gamma^tR_{t+1}\right] - \mathbb{E}_{\hat{\pi}\circ\Phi}^e\left[\sum_{t=0}^{T-1}\gamma^tR_{t+1}\right] +\frac{2\gamma^T}{1-\gamma}\bar{r}.
\end{align*}
We are now to show that there exists a $\hat{\pi}$ such that \begin{align*}
    \mathbb{E}_{\pi_*}^e\left[\sum_{t=0}^{T-1}\gamma^tR_{t+1}\right] = \mathbb{E}_{\hat{\pi}\circ\Phi}^e\left[\sum_{t=0}^{T-1}\gamma^tR_{t+1}\right].
\end{align*}
For any two observations $o,o'\in\mathcal{O}$ such that $\Phi(o)=\Phi(o')$, since $\Phi(o)$ and $\Phi(o')$ are $T$-level reward sequence representations, we have $p(\mathbf{r}|o,\mathbf{a})=p(\mathbf{r}|o',\mathbf{a})$.
Thus, we have 
\begin{align*}
    \mathbb{E}_{\hat{\pi}\circ\Phi}^e\left[\sum_{t=0}^{T-1}\gamma^tR_{t+1}|O_0=o\right]=\mathbb{E}_{\hat{\pi}\circ\Phi}^e\left[\sum_{t=0}^{T-1}\gamma^tR_{t+1}|O_0=o'\right]
\end{align*} for any $\hat{\pi}$.
We define $\hat{\pi}(z)$ as $\pi_*(\bar{o})$, where $\bar{o}$ is an representative observation such that $\Phi(\bar{o})=z$. 
Then, we have
\begin{align*}
    \mathbb{E}_{\hat{\pi}\circ\Phi}^e\left[\sum_{t=0}^{T-1}\gamma^tR_{t+1}|O_0=o\right]=&\mathbb{E}_{\hat{\pi}\circ\Phi}^e\left[\sum_{t=0}^{T-1}\gamma^tR_{t+1}|O_0=\bar{o}\right]\\
    =&\mathbb{E}_{\pi_*}^e\left[\sum_{t=0}^{T-1}\gamma^tR_{t+1}\right].
\end{align*}
\end{proof}

\begin{theorem}
\label{app-thm:cf}
A representation $\Phi(o)$ from any observation $o\in\mathcal{O}$ is a $T$-level reward sequence representation if and only if there exits a predictor $\Psi$ such that for all $\bm{w}\in\mathbb{R}^{T},o\in\mathcal{O}$, and $\mathbf{a}\in\mathcal{A}^T$,
\begin{align*}
    \Psi(\bm{\omega};\Phi(o),\mathbf{a})=\varphi_{\mathbf{R}|o,\mathbf{a}}(\bm{\omega})=\mathbb{E}_{\mathbf{R}\sim p(\cdot|o,\mathbf{a})}\left[e^{i\langle\bm{\omega},\mathbf{R}\rangle}\right].
\end{align*}
\end{theorem}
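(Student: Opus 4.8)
The plan is to prove the two implications separately, with Lemma~\ref{lem:cf} serving as the bridge between reward sequence distributions and their characteristic functions.

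For the forward direction, suppose $\Phi$ is a $T$-level reward sequence representation, so by definition there is a decoder $f$ with $f(\mathbf{r};\Phi(o),\mathbf{a})=p(\mathbf{r}|o,\mathbf{a})$ for all $\mathbf{r}\in\mathcal{R}^T$, $o\in\mathcal{O}$, and $\mathbf{a}\in\mathcal{A}^T$. I would simply define the predictor by integrating the decoded density against the complex exponential kernel, $\Psi(\bm{\omega};z,\mathbf{a}):=\int_{\mathcal{R}^T}e^{j\langle\bm{\omega},\mathbf{r}\rangle}f(\mathbf{r};z,\mathbf{a})\,\mathrm{d}\mathbf{r}$, which is finite since $\mathcal{R}$ (hence $\mathbf{R}$) is bounded. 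Substituting $z=\Phi(o)$ and using the defining property of $f$ turns the right-hand side into exactly the integral expression for $\varphi_{\mathbf{R}|o,\mathbf{a}}(\bm{\omega})$, giving $\Psi(\bm{\omega};\Phi(o),\mathbf{a})=\varphi_{\mathbf{R}|o,\mathbf{a}}(\bm{\omega})$ for all $\bm{\omega},o,\mathbf{a}$.

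For the converse, suppose such a $\Psi$ exists. The key observation is that whenever $\Phi(o)=\Phi(o')$ we get $\varphi_{\mathbf{R}|o,\mathbf{a}}(\bm{\omega})=\Psi(\bm{\omega};\Phi(o),\mathbf{a})=\Psi(\bm{\omega};\Phi(o'),\mathbf{a})=\varphi_{\mathbf{R}|o',\mathbf{a}}(\bm{\omega})$ for all $\bm{\omega}\in\mathbb{R}^T$ and $\mathbf{a}\in\mathcal{A}^T$. Before invoking Lemma~\ref{lem:cf} I would note that the weighted inner product $\langle\bm{\omega},\mathbf{r}\rangle=\sum_t\gamma^t\omega_tr_t$ differs from the Euclidean one only by the invertible linear substitution $\bm{\omega}\mapsto(\gamma\omega_1,\dots,\gamma^T\omega_T)$ on $\mathbb{R}^T$; hence equality of the weighted characteristic functions on all of $\mathbb{R}^T$ is equivalent to equality of the standard ones, and Lemma~\ref{lem:cf} then yields $p(\cdot|o,\mathbf{a})=p(\cdot|o',\mathbf{a})$. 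Therefore $p(\mathbf{r}|o,\mathbf{a})$ depends on $o$ only through $\Phi(o)$, so I can choose for each $z$ in the image of $\Phi$ a representative observation $\bar{o}_z$ with $\Phi(\bar{o}_z)=z$ and define $f(\mathbf{r};z,\mathbf{a}):=p(\mathbf{r}|\bar{o}_z,\mathbf{a})$; the well-definedness just established gives $f(\mathbf{r};\Phi(o),\mathbf{a})=p(\mathbf{r}|o,\mathbf{a})$ for every $o$, which is precisely the definition of a $T$-level reward sequence representation.

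I expect the only genuinely delicate points to be in the converse: confirming that the \emph{weighted} characteristic function still uniquely determines the distribution (handled by the linear reparametrization above, so that Lemma~\ref{lem:cf} applies verbatim), and justifying the representative-observation construction of $f$. The latter is the same device already used in the proof of Theorem~\ref{app-thm:bd}, so I would reuse that argument; everything else reduces to a direct substitution.
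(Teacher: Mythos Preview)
Your proposal is correct and follows essentially the same approach as the paper: both directions hinge on the bijection between distributions and their characteristic functions (Lemma~\ref{lem:cf}), with the paper phrasing this abstractly as a bijection at the level of mappings $\mathcal{Z}\to\mathcal{M}$ while you give the explicit constructions of $\Psi$ and $f$. Your additional remark that the weighted inner product $\langle\bm{\omega},\mathbf{r}\rangle=\sum_t\gamma^t\omega_tr_t$ reduces to the standard one under an invertible linear reparametrization is a detail the paper's proof leaves implicit, so your version is in fact slightly more careful on that point.
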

\begin{proof}
By definition, for any observation $o\in\mathcal{O}$, $\Phi(o)$ is a $T$-level reward sequence representation if and only if there exists $f$ such that
\begin{align*}
    f(\mathbf{r};\Phi(o),\mathbf{a})=p(\mathbf{r}|o,\mathbf{a}).
\end{align*}
Let $\mathcal{M}$ be the set of mappings from $\mathcal{A}^T$ to $\Delta(\mathcal{R}^T)$. Then, $\Phi(o)$ is a $T$-level reward sequence representation if and only if there exists $\hat{f}:\mathcal{Z}\to\mathcal{M}$ such that $\hat{f}\circ\Phi (o)=M_o$ for any $o\in\mathcal{O}$, where $M_o(\mathbf{a})=p(\cdot|o,\mathbf{a})$ for any $\mathbf{a}\in\mathcal{A}^T$.
Since a characteristic function uniquely determines a distribution, and vice versa, there exists a bijection between the distributions $p(\cdot|o,\mathbf{a})$ and their corresponding characteristic functions $\varphi_{\mathbf{R}|o,\mathbf{a}}(\bm{\omega})=\mathbb{E}_{\mathbf{R}\sim p(\cdot|o,\mathbf{a})}\left[e^{i\langle\bm{\omega},\mathbf{R}\rangle}\right]$.
As a result, for any observation $o\in\mathcal{O}$, $\Phi(o)$ is a $T$-level reward sequence representation if and only if there exists $\Tilde{f}:\mathcal{Z}\to\mathcal{M}$ such that $\Tilde{f}\circ\Phi (o)=\Tilde{M}_o$, where $M_o(\mathbf{a})=\varphi_{\mathbf{R}|o,\mathbf{a}}(\cdot)$ for any $\mathbf{a}\in\mathcal{A}^T$.
Such result is equivalent to that there exists a predictor $\Psi$ such that $\Psi(\cdot;\Phi(o),\mathbf{a})=\varphi_{\mathbf{R}|o,\mathbf{a}}(\cdot)$. The predictor $\Psi$ is linear in the ideal case.
\end{proof}

\section{Research Methods}
\label{app-rm}

\subsection{Implementation Details}
\label{app-imp}

\paragraph{Dynamic Background Distractions}
We follow the dynamic background settings of Distracting Control Suite (DCS)~\cite{corr/abs-2101-02722}. To set up different training environments, we take the $N$ videos from the DAVIS 2017 training set. Notice that $N$ is the number of training environments. One environment uses one video as the background and randomly samples a scene and a frame from the video at the start of every episode. Moreover, we set $\beta_{\text{bg}}=1.0$, which means we use the distracting background instead of original skybox.
We apply the 30 videos from the DAVIS 2017 validation dataset as unseen dynamic backgrounds for evaluation. Moreover, we randomly select 1 of 30 dynamic backgrounds in each episode of the test environment.

\paragraph{Dynamic Color Distractions}
In dynamic color settings of DCS, the color is sampled uniformly per channel $x_{0} \sim \mathcal{U}(x-\beta, x+\beta)$ at the start of each episode, where $x$ is the original color in DCS, and $\beta$ is a hyperparameter. We enable the dynamic setting that the color $x_t$ changes to $x_{t+1} = \text{clip} (\hat{x}_{t+1}, x_t - \beta, x_t + \beta)$, where $\hat{x}_{t+1} \sim \mathcal{N}(x_t, 0.03\cdot\beta)$.
To set up different training environments with weak color distractions, we divide the interval $[0.1, 0.2]$ into $N$ parts to obtain different $\beta$: (1) for the one training environment, $\beta_1 = 0.1$; (2) for two training environments, $\beta_1=0.1, \beta_2=0.2$; (3) for three training environments, $\beta_1=0.1, \beta_2=0.15, \beta_3=0.2$.
Then, we evaluate all agents in the test environment with strong dynamic color distractions ($\beta=0.5$).

\paragraph{Network Details}
\label{app-nd}
A shared pixel encoder uses four convolutional layers using 3 × 3 kernels and 32 filters with an stride of 2 for the first layer and 1 for others. Rectified linear unit (ReLU) activations are applied after each convolution. Thereafter, a 64 dimensional output dense layer normalized by LayerNorm is applied with a $\tanh$ activation. Both critic and actor networks are parametrized with 3 fully connected layers using ReLU activations up until the last layer. The output dimension of these hidden layers is 1024.  The pixel encoder weights are shared for the critic and the actor, and gradients of the encoder are not computed through the actor optimizer.
Moreover, we use the random cropping for image pre-processing proposed by DrQ and RAD~\cite{laskin2020reinforcement} as a weak augmentation without prior knowledge of test environments.
For the predictor of characteristic functions, we use GPT architecture with 3 transformer blocks and 2 attention heads to model RSD-OA. The last 2 blocks in the transformer are used to predict the real and imaginary parts of characteristic functions, respectively. All dropout rates are $0.1$ in the transformer.
Furthermore, we list the hyperparameters in Table~\ref{table-hypp}.

\begin{figure}
    \begin{center}
    \centerline{\includegraphics[scale=0.3]{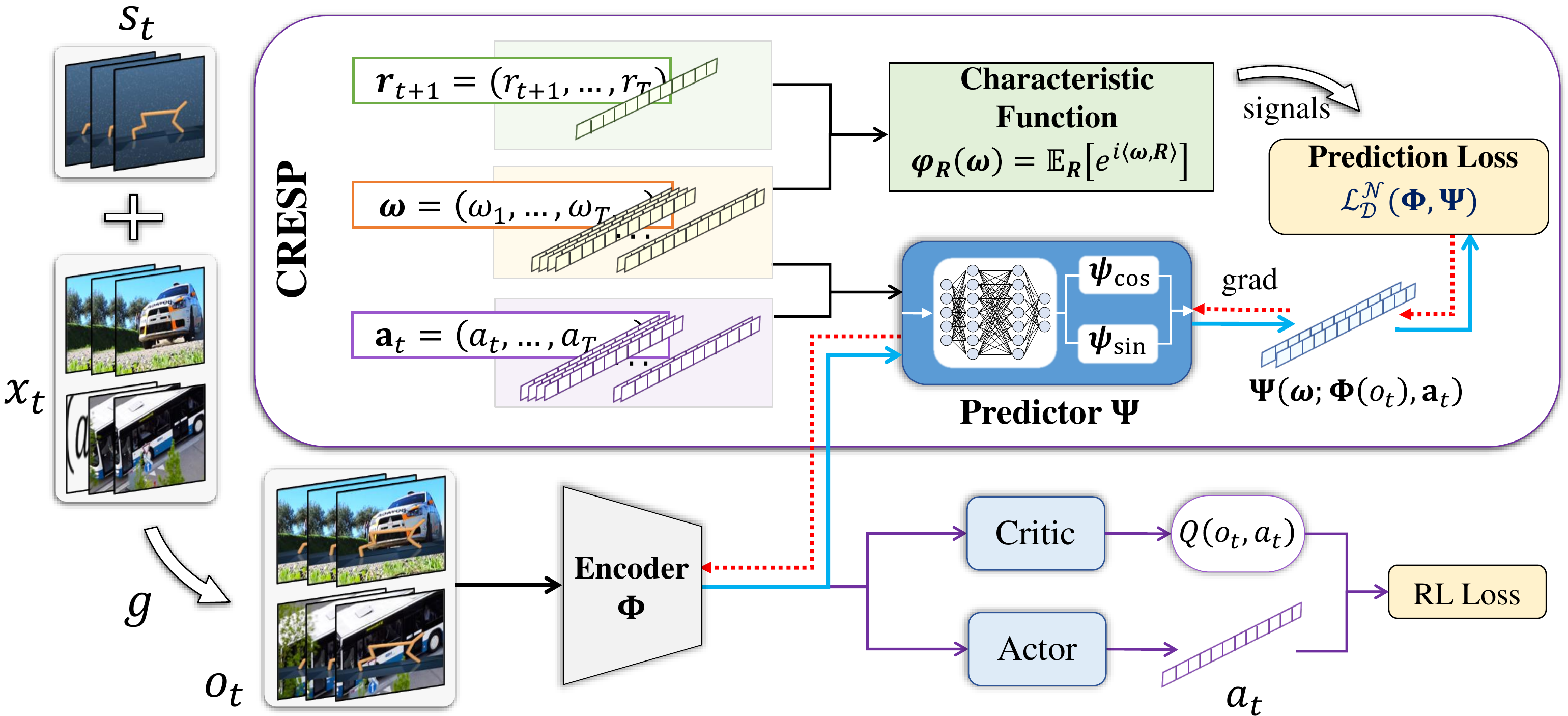}}
    \caption{The overall architecture of \ourM{}.}
    \label{fig-arc-m}
    \end{center}
\end{figure}

\begin{table}
  \caption{Hyperparameters of main experiments in the Distracting Control Suite.}
  \label{table-hypp}
  \centering
  \begin{tabular}{l|l}
    \toprule
    \textbf{Hyperparameter}             & \textbf{Setting} \\
    \midrule
    Optimizer                           & Adam \\
    Discount $\gamma$                   & 0.99 \\
    Learning rate                       & 0.001 \\
    Number of batch size                & 128 \\
    Number of hidden layers             & 2 \\
    Number of hidden units per layer    & 1024 \\
    Replay buffer size                  & 100,000 \\
    Initial steps                       & 1000 \\
    Target smoothing coefficient $\tau$ & 0.01 \\
    Critic target update frequency      & 2 \\
    Actor update frequency              & 1 \\
    Actor log stddev bounds             & [-5, 2] \\
    Initial temperature $\alpha$        & 0.1 \\
    \midrule
    \emph{Hyperparameters of \ourM{} and \ourM{-T}}     & \\
    \quad Gaussian distribution $\mathcal{N}$ & $\mathcal{N}(\textbf{0}, \textbf{1})$\\
    \quad Sample number $\kappa$ from $\mathcal{N}$ & 256 \\
    \quad Trade-off coefficient $\lambda$ & 0.5 \\
    \midrule
    \emph{Hyperparameters of \ourM{}}     & \\
    \quad Number of hidden layers for predictor & 1 \\
    \quad reward length    & 5 \\
    \midrule
    \emph{Hyperparameters of \ourM{-T}}     & \\
    \quad Number of blocks              & 3 \\
    \quad Number of heads               & 2 \\
    \quad dropout rate                  & 0.1 \\
    \quad weight decay                  & 0 \\
    \quad reward length    & 2 \\
    \bottomrule
  \end{tabular}
\end{table}

\paragraph{Details of Visualization}
In Figure~\ref{fig-tsne}, we visualize the representations of \ourM{}, \ourM{-T}, DrQ and CURL in both Cartpole-swingup and Cheetah-run tasks via t-SNE approach. We leverage the fixed 500 observations from two training environments with different dynamic backgrounds in each task. All labels we used to color the points are generated by KMeans with the original states as inputs.

\subsection{Experiments}
\label{app-ar}

\subsubsection{Main Results under Dynamic Backgrounds}
\label{subsec-exp-b}
We validate the generalization capacity of our approaches under one, two, and three training environment settings with dynamic background distractions, respectively.
Specifically, we train the agents in one, two, and three training environments respectively. Each training environment has its specific dynamic background distractions.
We then evaluate these agents in the unseen test environment, which randomly selects 1 of 30 unseen dynamic backgrounds to replace the original background in each episode.
Across all 6 tasks, \ourM{-T} achieves an average performance improvement of $+6.5\%$ (see Table~\ref{table-result}) under distracting backgrounds.
Figure~\ref{fig-result-dcs} shows the generalization performance of our policy under different numbers of the training environments (source domains) in detail.
Furthermore, we list the evaluation results under one, two, and three training environments with dynamic backgrounds in Table~\ref{table-result-2}, which shows that \ourM{} and \ourM{-T} significantly outperform prior state-of-the-arts.

\begin{figure}
  \centering
  \includegraphics[width=4.4cm]{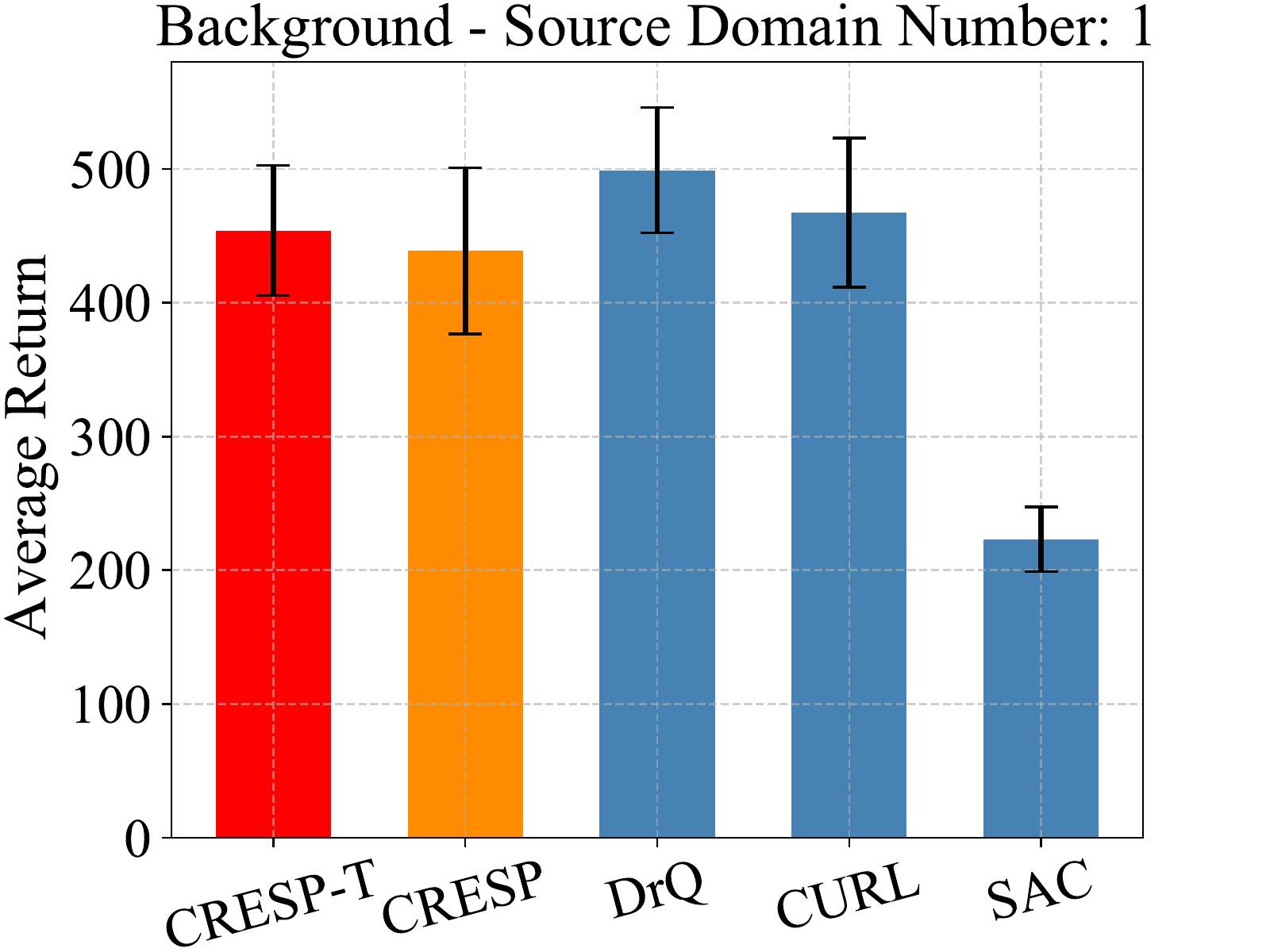}
  \includegraphics[width=4.4cm]{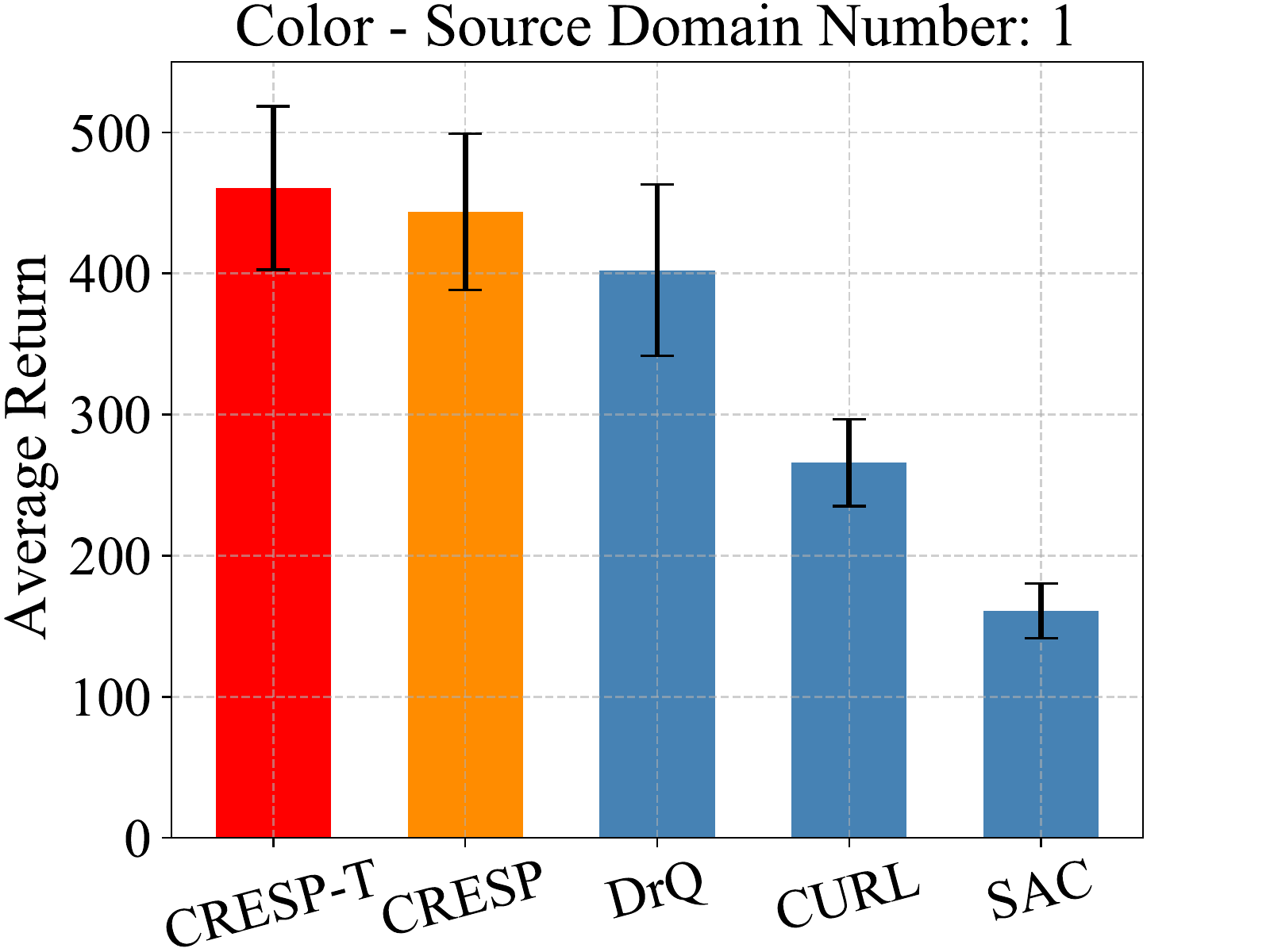}
  \vskip 0.1in
  
  \includegraphics[width=4.4cm]{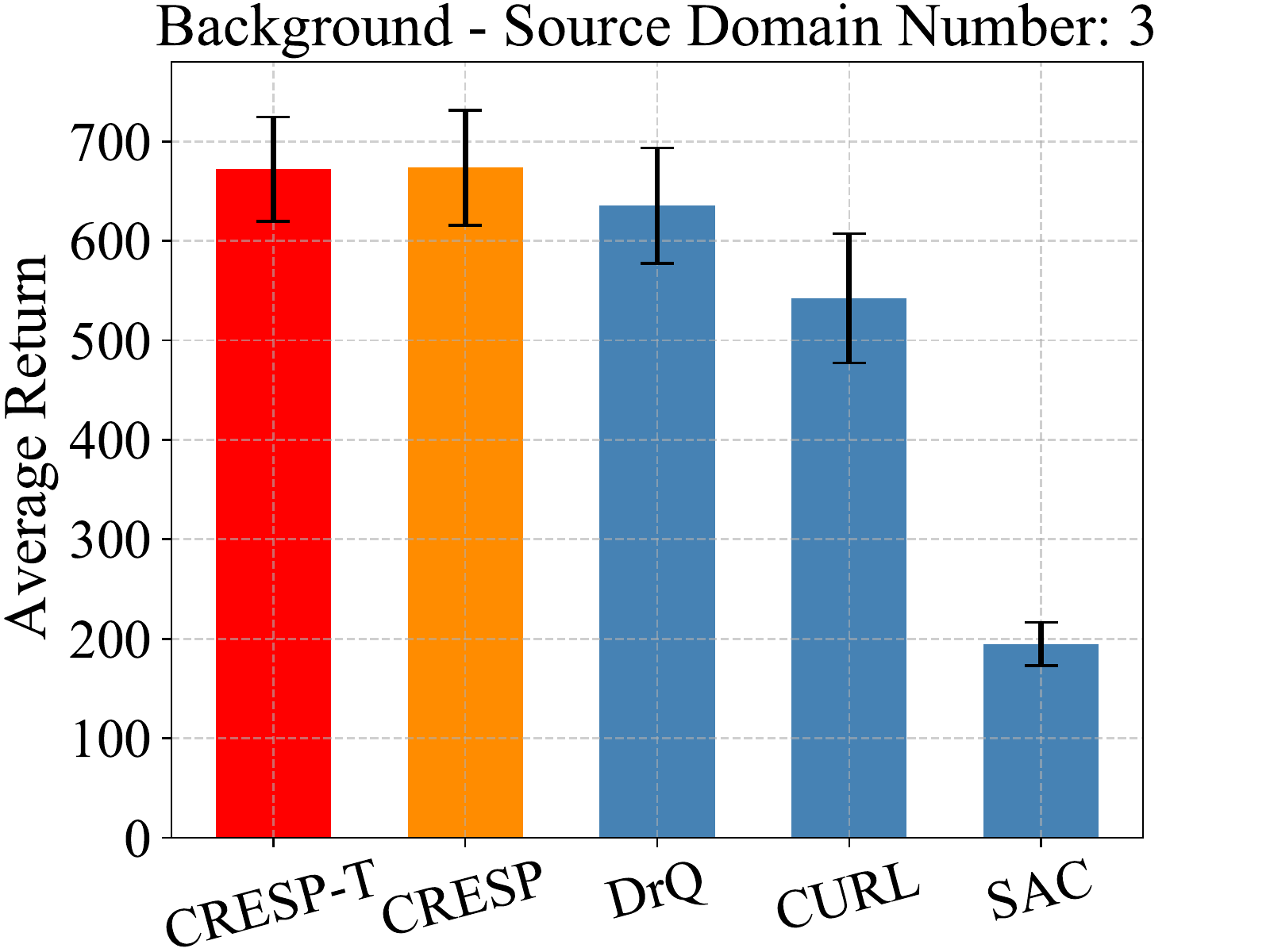}
  \includegraphics[width=4.4cm]{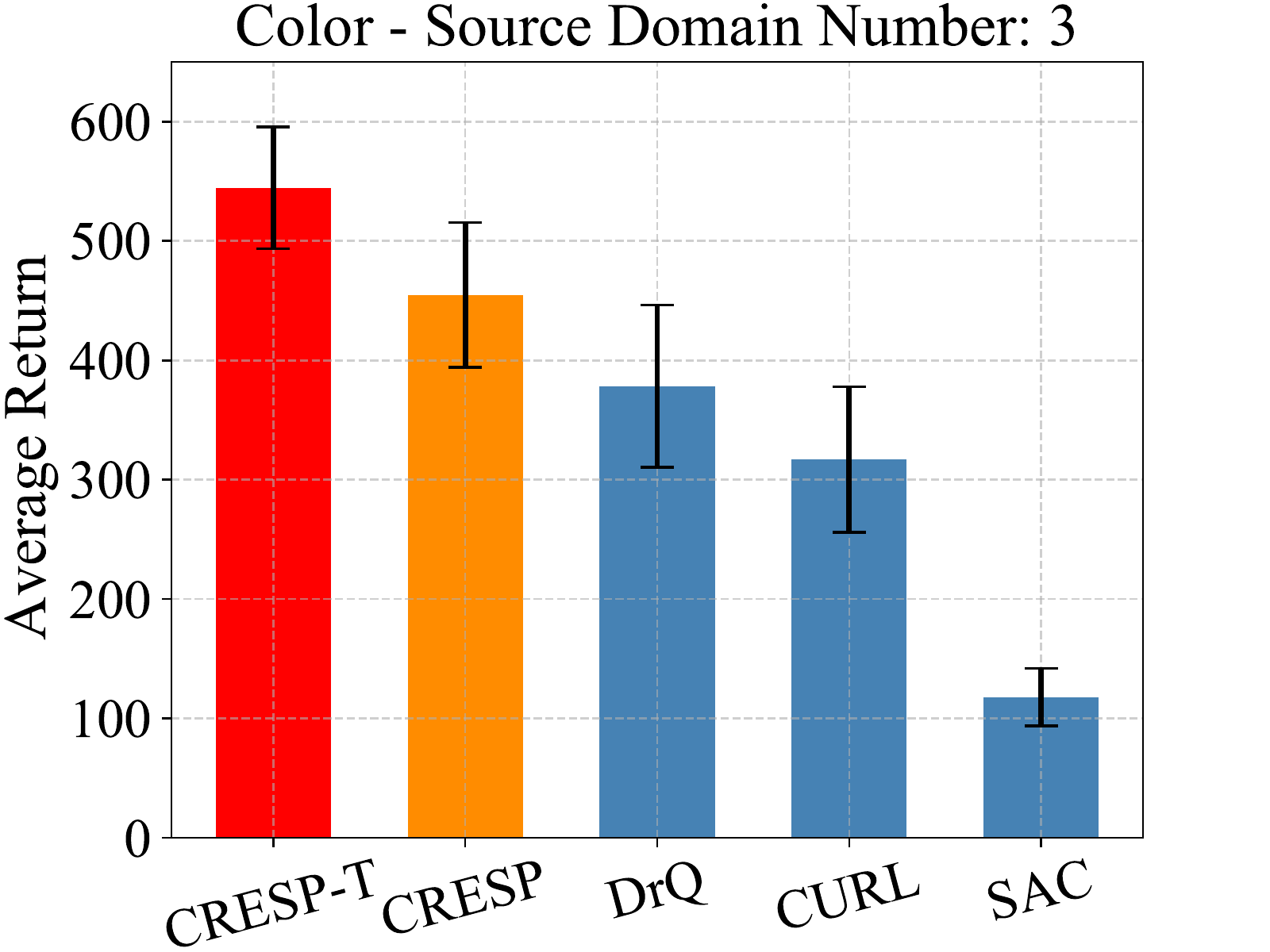}
  \caption{We report the mean and standard error results averaged over 6 DCS tasks under \textbf{one} and \textbf{three} training environment settings with 6 random seeds at 500K steps, respectively.}
  \label{fig-bar-avg-1}
\end{figure}

\subsubsection{Main Results under Dynamic Color Distractions}
\label{subsec-exp-c}
We then select another type of visual distractions, dynamic color distractions~\cite{corr/abs-2101-02722}, to further demonstrate that our method learns representations that ignore task-irrelevant distractions.
In practice, we change the color of the agent (i.e., the object under control) smoothly during episodes.
The variation of colors is modeled as a Gaussian distribution, whose mean is the color at the previous time and whose standard deviation is set as a hyperparameter $\beta$.
Specifically, we choose one, two, and three training environments with random standard deviations from $\beta=0.1$ to $\beta=0.2$. We then evaluate the agents under unseen test environments with $\beta=0.5$.

In Figure~\ref{fig-result-dcs}, the line charts demonstrate that the representations learned by \ourM{-T} under one, two, and three training environments are better for generalization than those learned by other methods.
According to the results in Table~\ref{table-result}, \ourM{-T} achieves an average gain of $+45.3\%$ over the best prior method under the test environment with color distractions, and \ourM{} also achieves an average gain of $+27.7\%$. 

\subsubsection{Additional Curves}
In Figures~\ref{fig-result-dc-1}, ~\ref{fig-result-db-1}, ~\ref{fig-result-dc-2}, ~\ref{fig-result-db-2}, ~\ref{fig-result-dc-3}, and~\ref{fig-result-db-3}, we show learning curves under in 1/2/3 training environment setting with dynamic background/color distractions. Notice that these curves are the training processes of the main results in Section~\ref{subsec-6.2}, which record the generalization abilities of agents during training. Each checkpoint in Figures~\ref{fig-result-dc-1}, ~\ref{fig-result-db-1}, ~\ref{fig-result-dc-2}, ~\ref{fig-result-db-2}, ~\ref{fig-result-dc-3}, and~\ref{fig-result-db-3} are evaluated by 10 episodes on unseen test environments.

\subsection{Hyperparameter Selection and Ablation Studies}
\label{app-hs-as}

\subsubsection{The Random Variable subject to The Gaussian Distribution}
\label{app-pred-loss}
For all our experiments, we use the predictive loss $\mathcal{L}^{\mathcal{N}}(\Phi, \Psi|\mathcal{D})$ to learn task-relevant representations. For the computation of $\mathbb{E}_{\Omega\sim\mathcal{N}}\left[\cdot\right]$ in such predictive loss function, we conduct experiments to select the sample number $\kappa$ of $\Omega$ in the left of Figure~\ref{fig-k}.
Results show that $\kappa=256$ performs better than others. Moreover, for the choice of the distribution $\mathcal{N}$, in \ourM{}, we parameterize the Gaussian distribution as $\mathcal{N}$ and update hyperpararemeters of $\mathcal{N}$ to maximize the predictive loss (N-max) or minimize it (N-min). We compare N-max and N-min with N-sta---use the standard Gaussian distribution as $\mathcal{N}$ in \ourM{}---in the right of Figure~\ref{fig-k}. Results indicate that N-max is not stable and N-min is similar to N-sta. For computational efficiency, we choose the standard Gaussian distribution as $\mathcal{N}$.

\begin{figure}
  \centering
  \includegraphics[width=4.3cm]{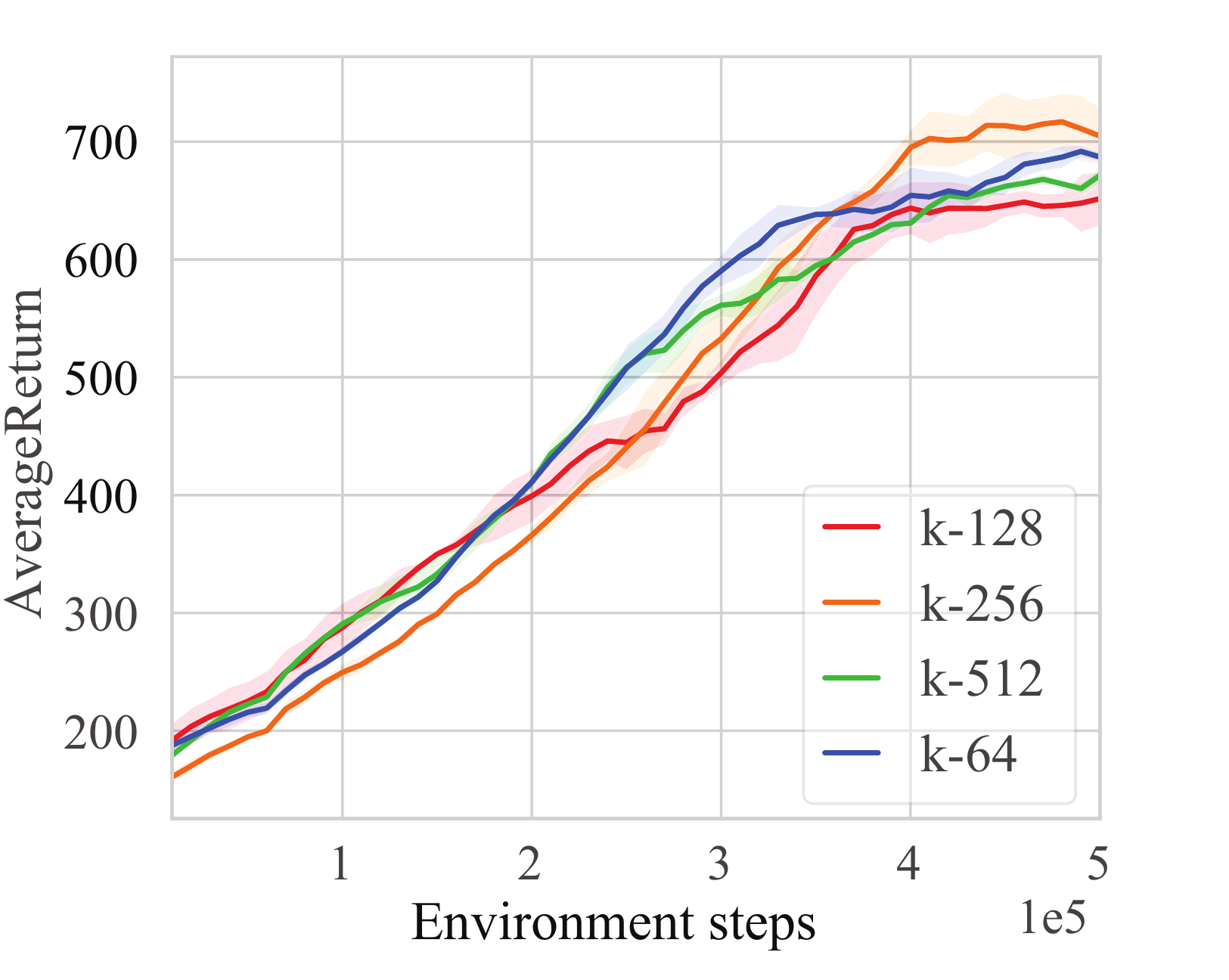}
  \includegraphics[width=4.3cm]{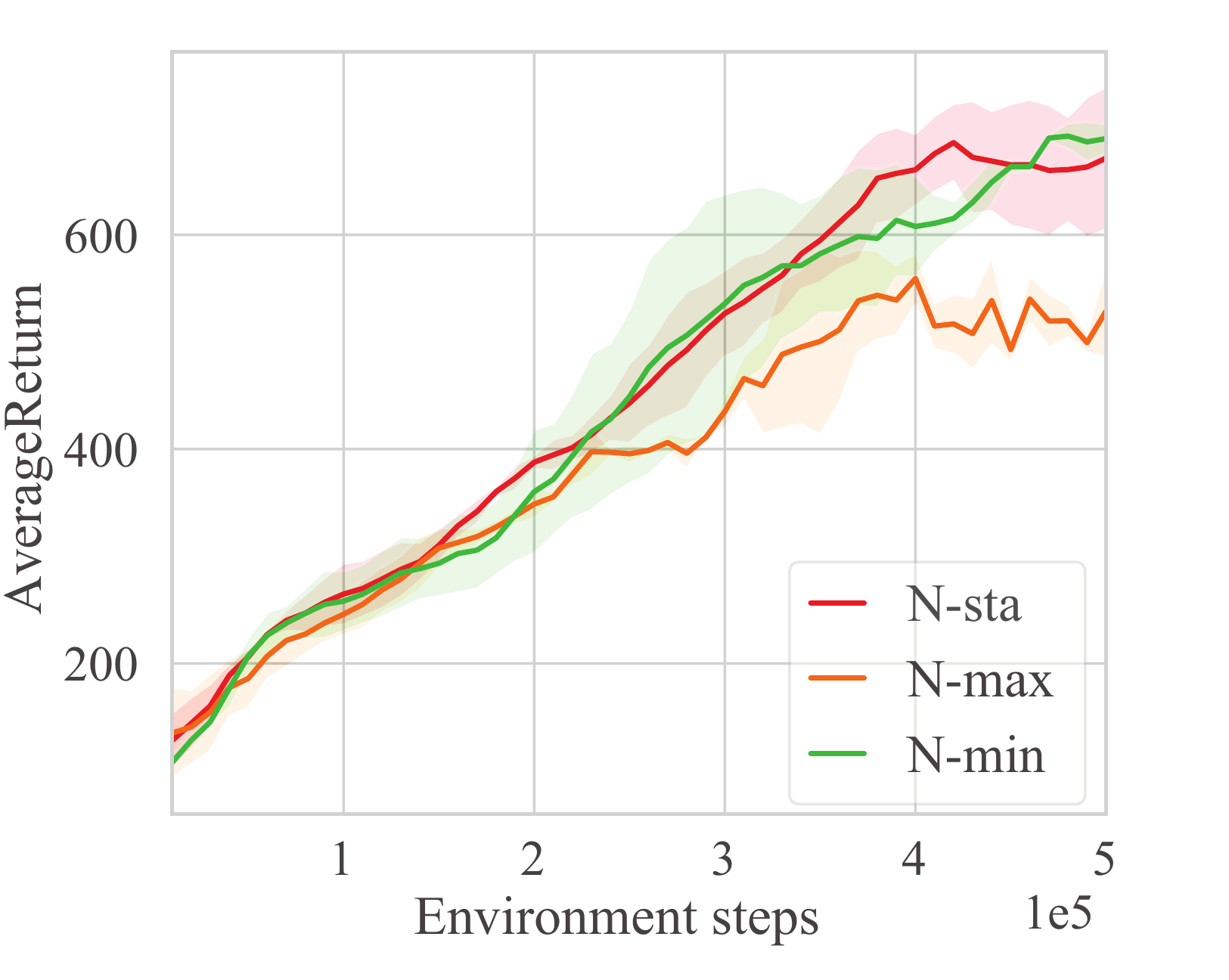}
  \caption{Hyperparameter selection of \ourM{} with the batch size 128 over 2 trails on Cartpole-swingup under two training environments with dynamic backgrounds. In the left, $\kappa$ is sample number of $\Omega$ from $\mathcal{N}$. In the right, N-sta is \ourM{} using the standard Gaussian distribution as $\mathcal{N}$. Both N-max and N-min are \ourM{} using the parameterized Gaussian distribution as $\mathcal{N}$. Furthermore, N-max and N-min update the hyperparameters of $\mathcal{N}$ to maximize and minimize the predictive loss function, respectively.}
  \label{fig-k}
\end{figure}

\subsubsection{Reward Sequence Distributions vs Distributions of Sum of Reward Sequences}
\label{app-as}
We introduce RSD-OA to capture task-relevant information in both rewards and transition dynamics.
In \ourM{}, we propose to use the characteristic functions of RSD-OA as supervised signals to learn representations.
Notice that we can also predict (1. \textit{RSP}) the expectations of RSD-OA, (2. \textit{RSP-Sum}) the expectations of the sum of RSD-OA, and (3. \textit{\ourM{-Sum}}) the characteristic functions of sum of RSD-OA, where the sum of RSD-OA is the distribution of sum of the reward sequence conditioned on the starting observation and predefined subsequent action sequence.
Therefore, we conduct a simple ablation study about \ourM{} with a 3-layer MLP predictor to evaluate the differences among RSP, RSP-Sum, \ourM{}, and \ourM{-Sum}. 
We adopt the same hyperparameters except for the reward length $T$.

All results in Table~\ref{table-sum} are in two training environments with dynamic background distractions. We boldface the results that have highest means. Notice that for reward length $T=1$, RSP is same as RSP-Sum, and \ourM{} is same as \ourM{-Sum}.
The average performance of RSP-Sum is lower than that of RSP for all different reward lengths. \ourM{} also outperforms \ourM{-Sum}. These results show that the high-dimensional targets can provide more helpful information than one-dimensional targets. Such results are similar to the effectiveness of knowledge distillation by a soft target distribution.
Moreover, \ourM{} achieves the best average performance, outperforming RSP, RSP-Sum, and \ourM{-Sum}, which empirically demonstrates that learning distributions provides benefits for representation learning.

\begin{table}
  \caption{We report the means and standard errors with 3 seeds on Cartpole-swingup with dynamic backgrounds at 500K environment steps. Notice that $T$ is the length of reward sequences using for representation learning. Best results are in boldfaced.}
  \label{table-sum}
  \centering
  \begin{tabular}{c|cccc}
  \toprule
  Rew Length & RSP & RSP-Sum & \ourM{} & \ourM{}-Sum \\
  \midrule
  $T=1$  & $\textbf{625} \pm \textbf{49}$ & $\textbf{625} \pm \textbf{49}$ & $616 \pm 89$ & $616 \pm 89$\\
  $T=3$  & $645 \pm 32$ & $631 \pm 20$ & $\textbf{666} \pm \textbf{24}$ & $610 \pm 51$\\
  $T=5$  & $575 \pm 64$ & $610 \pm 49$ & $\textbf{687} \pm \textbf{29}$ & $629 \pm 35$\\
  $T=7$  & $654 \pm 56$ & $599 \pm 79$ & $\textbf{667} \pm \textbf{25}$ & $639 \pm 35$\\
  \midrule
  Average  & $625 \pm 54$ & $613 \pm 55$ & $\textbf{658} \pm \textbf{57}$ & $623 \pm 58$\\
  \bottomrule
  \end{tabular}
\end{table}

\subsection{Code}
We implement all of our codes in Python version 3.8 and make the code available online~\footnote{\url{https://github.com/MIRALab-USTC/RL-CRESP}}.
We used NVIDIA GeForce RTX 2080 Ti GPUs for all experiments. Each trials of \ourM{} was trained for 32 hours on average.

\begin{figure*}
  \centering
  \includegraphics[width=9cm]{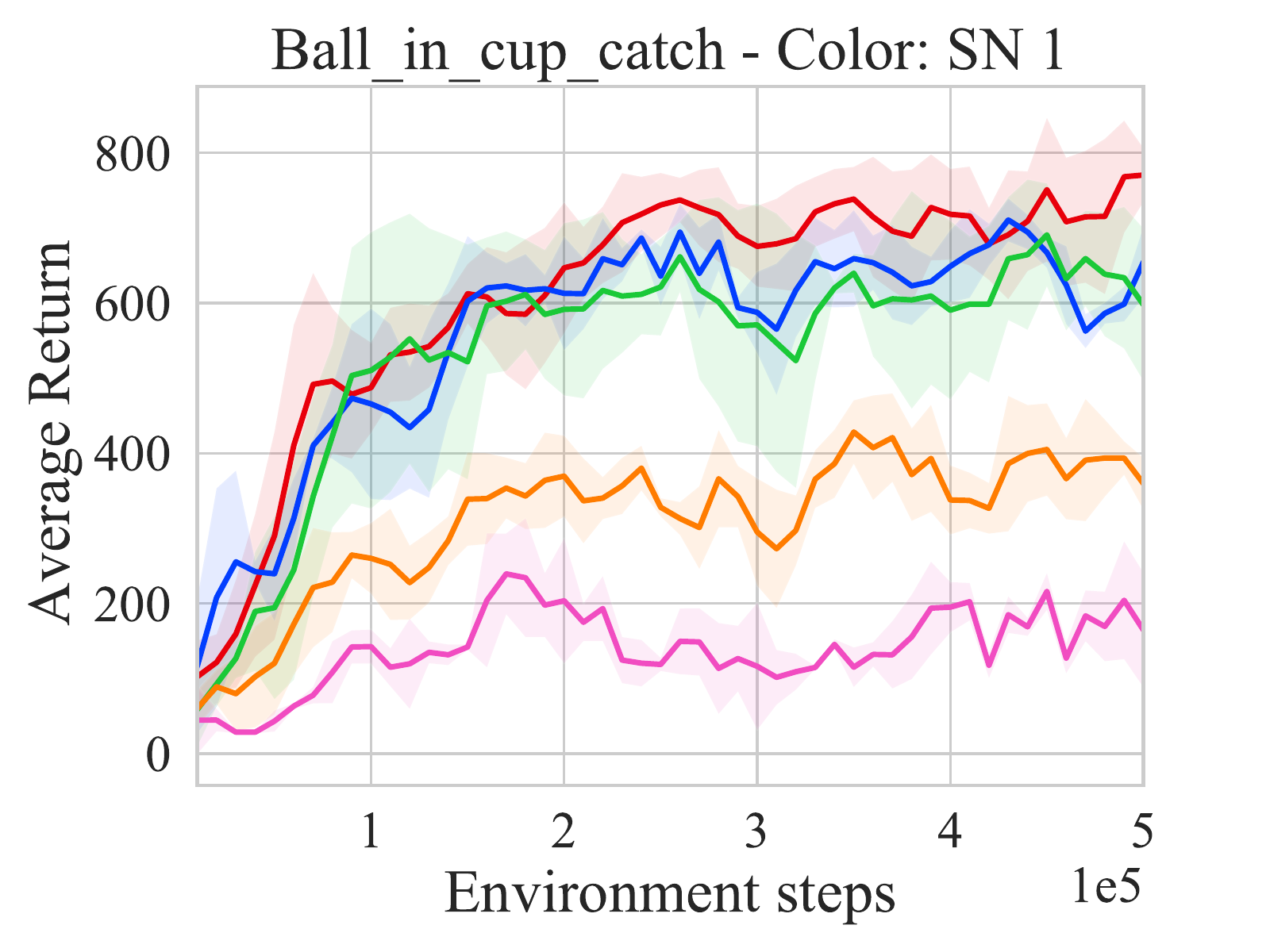}
  \includegraphics[width=9cm]{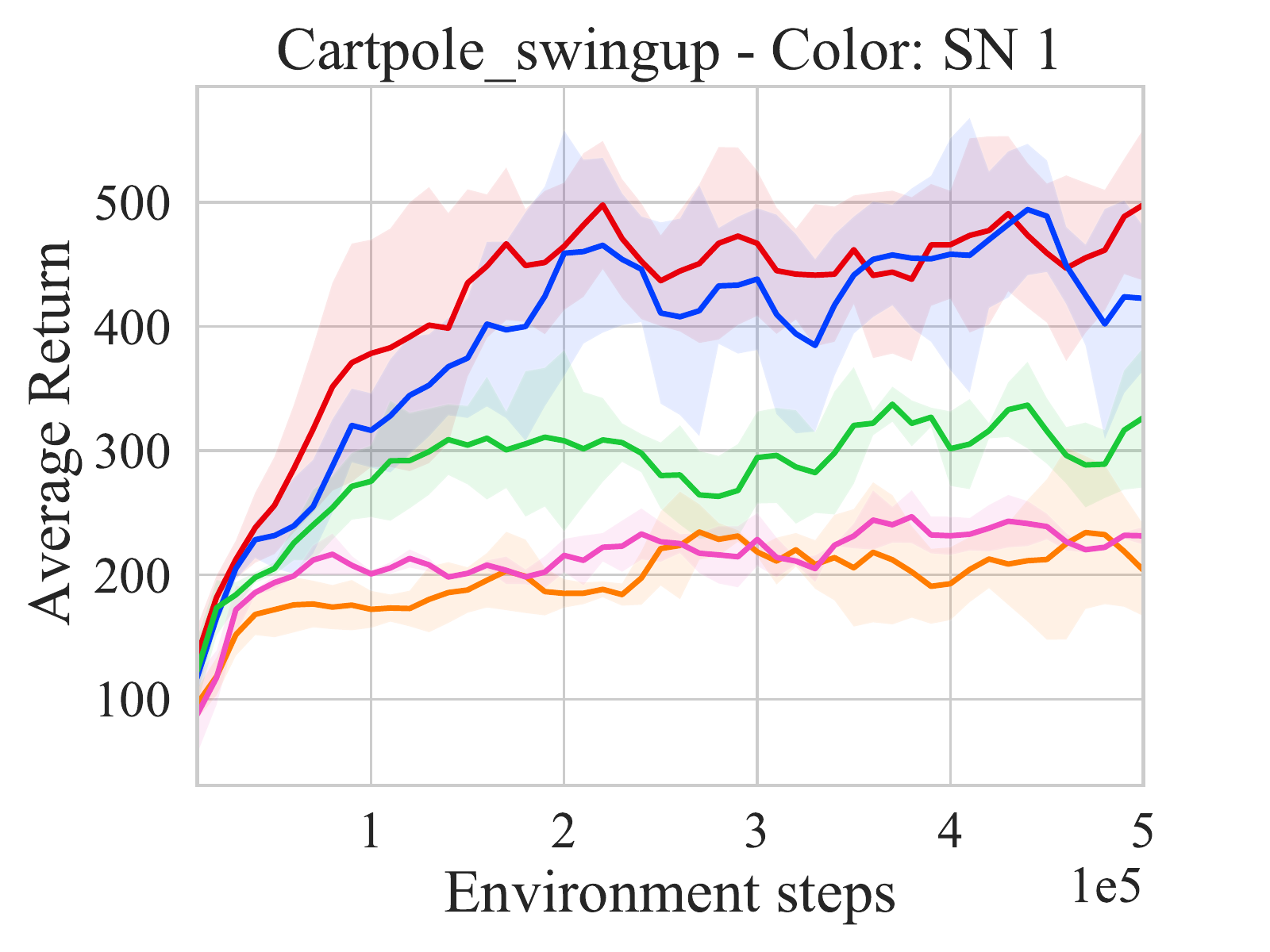}
  
  \includegraphics[width=9cm]{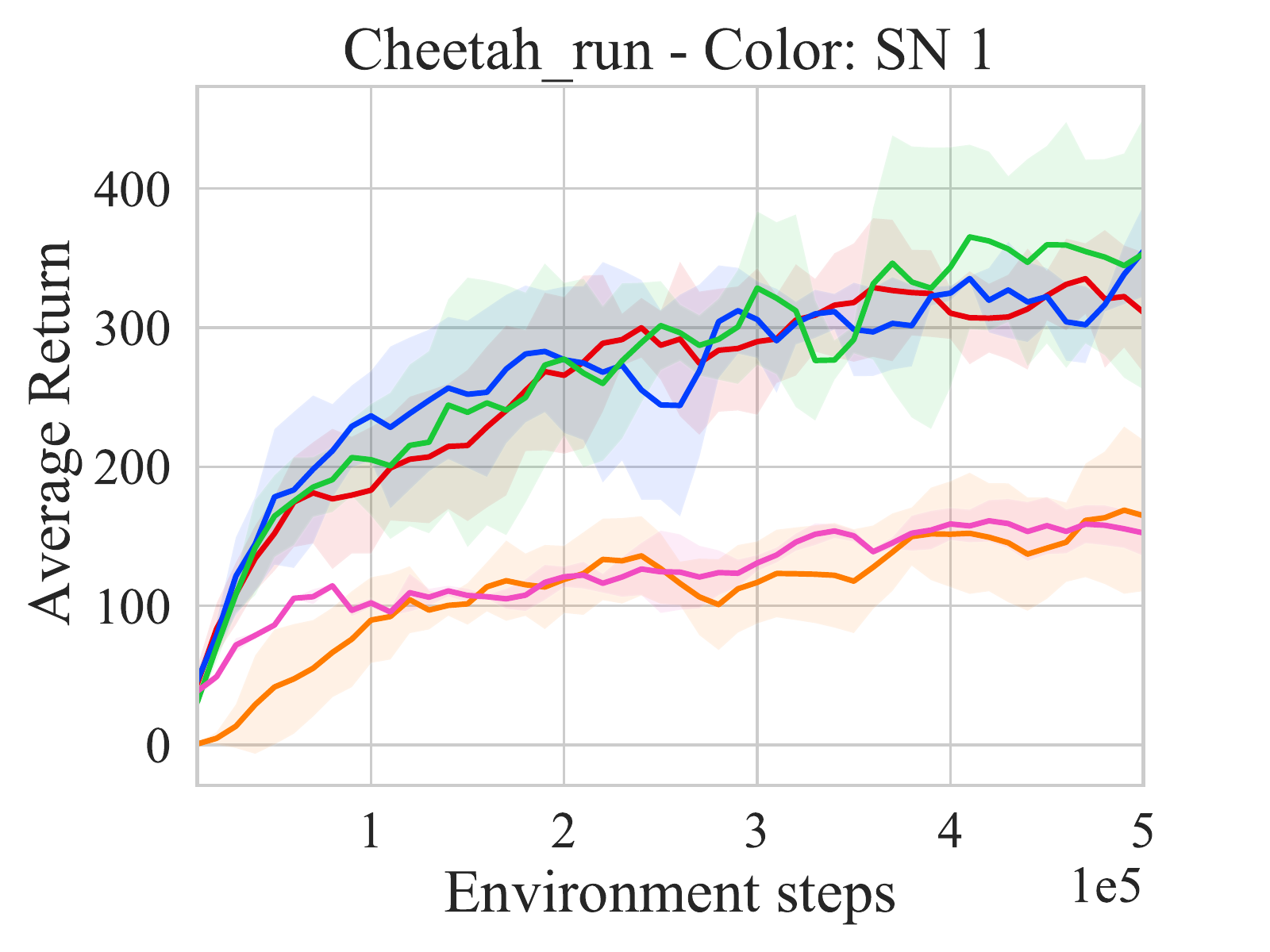}
  \includegraphics[width=9cm]{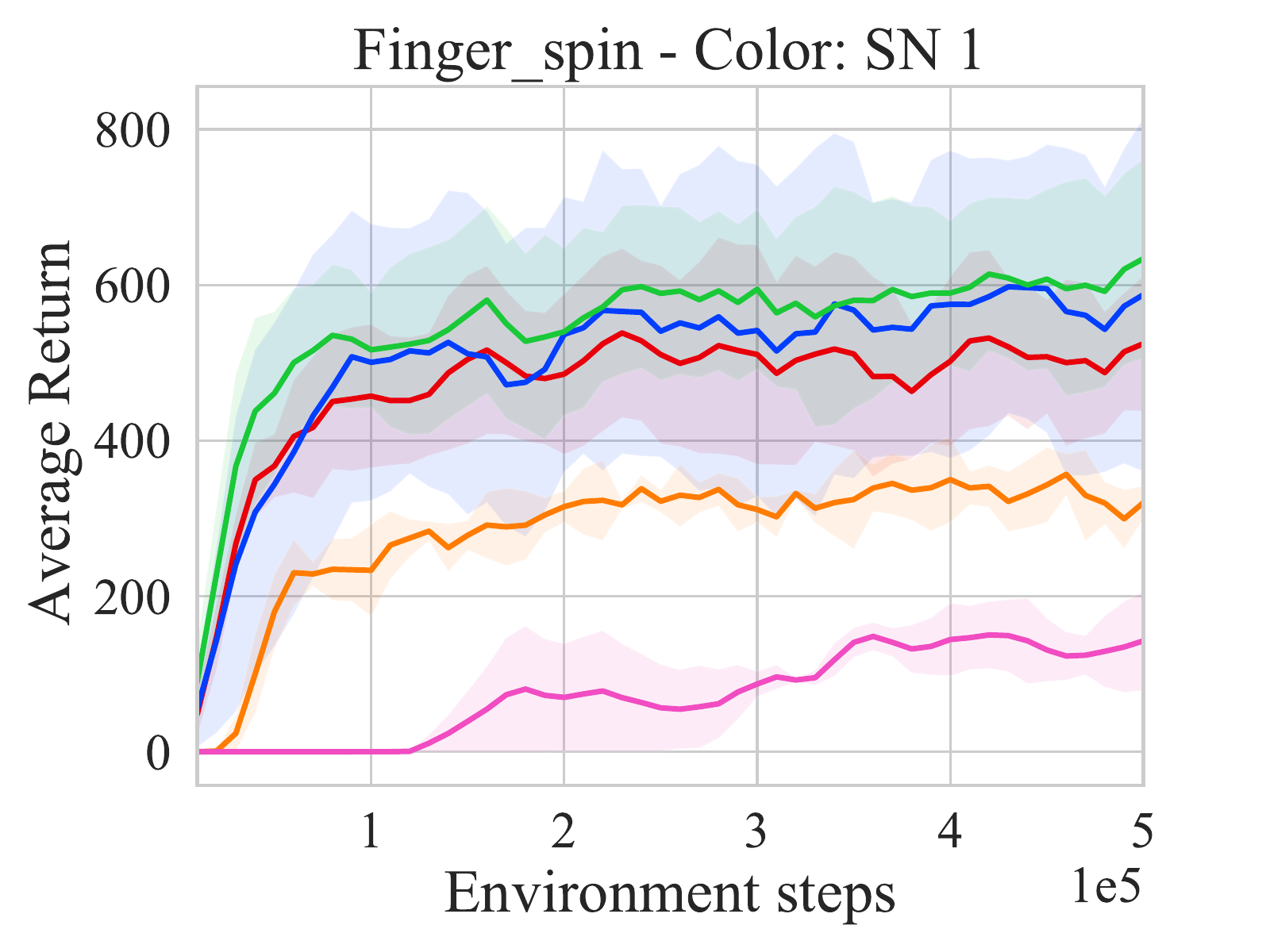}
  
  \includegraphics[width=9cm]{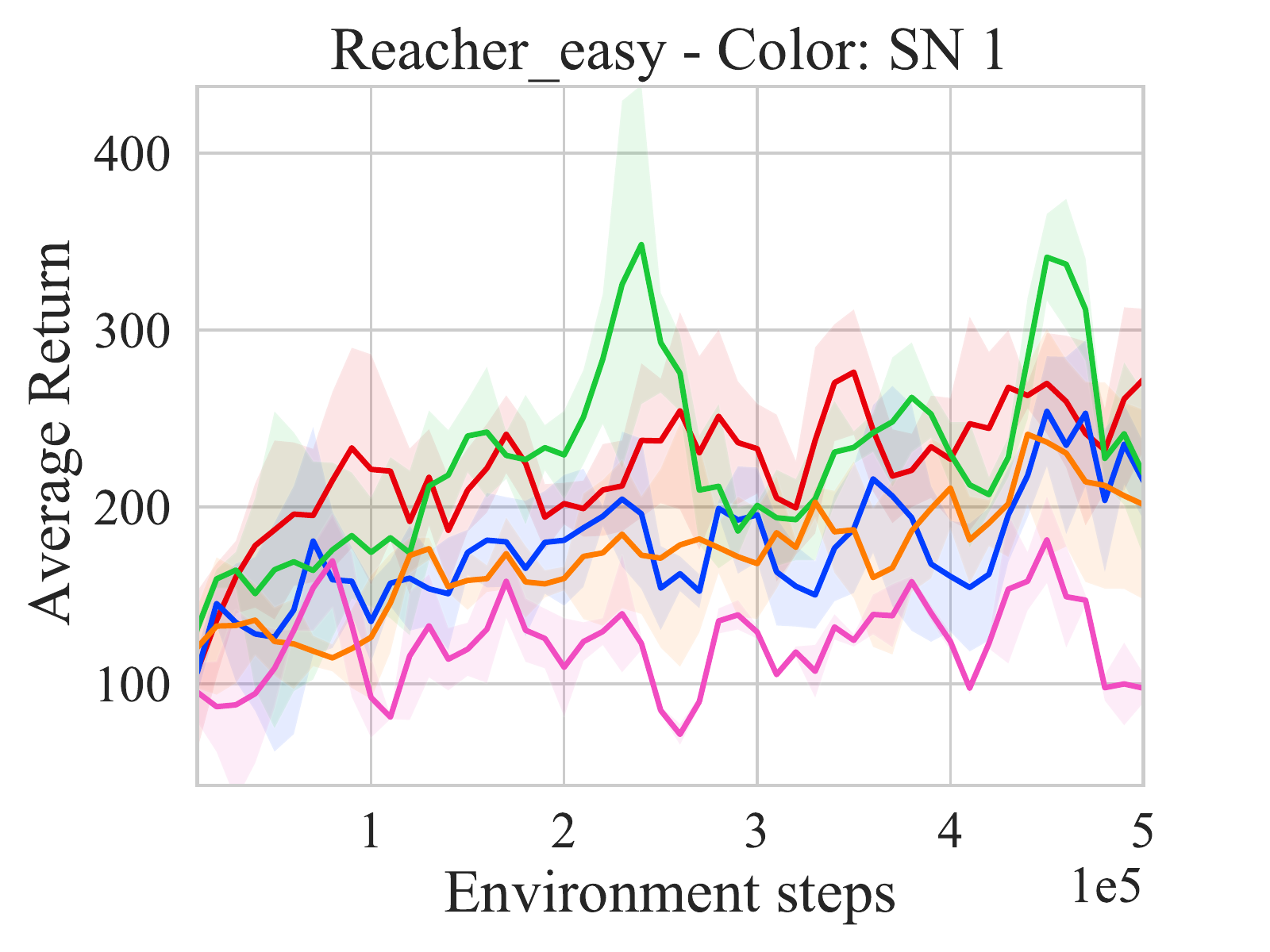}
  \includegraphics[width=9cm]{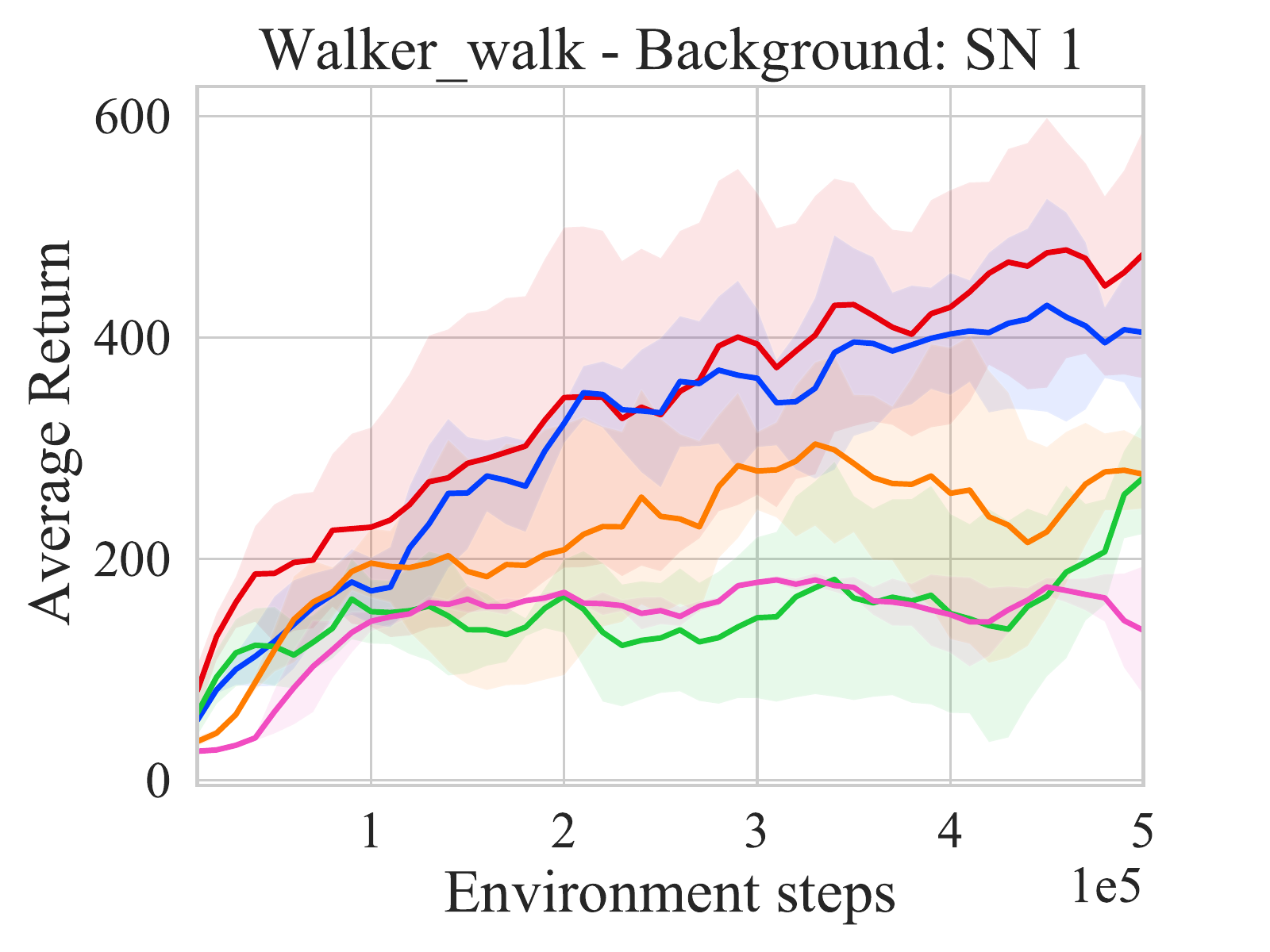}
  
  \includegraphics[width=15cm]{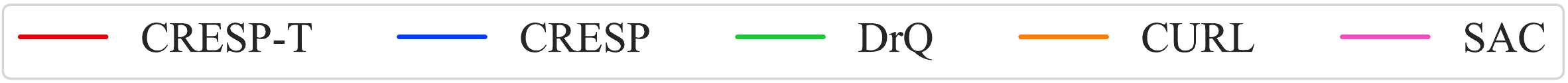}
  \caption{Learning curves on six tasks under the \textbf{one} training environment setting with dynamic \textbf{color} distractions for 500K environment steps. SN denotes the number of source domain, which is the number of training environment.}
\label{fig-result-dc-1}
\end{figure*}

\begin{figure*}
  \centering
  \includegraphics[width=9cm]{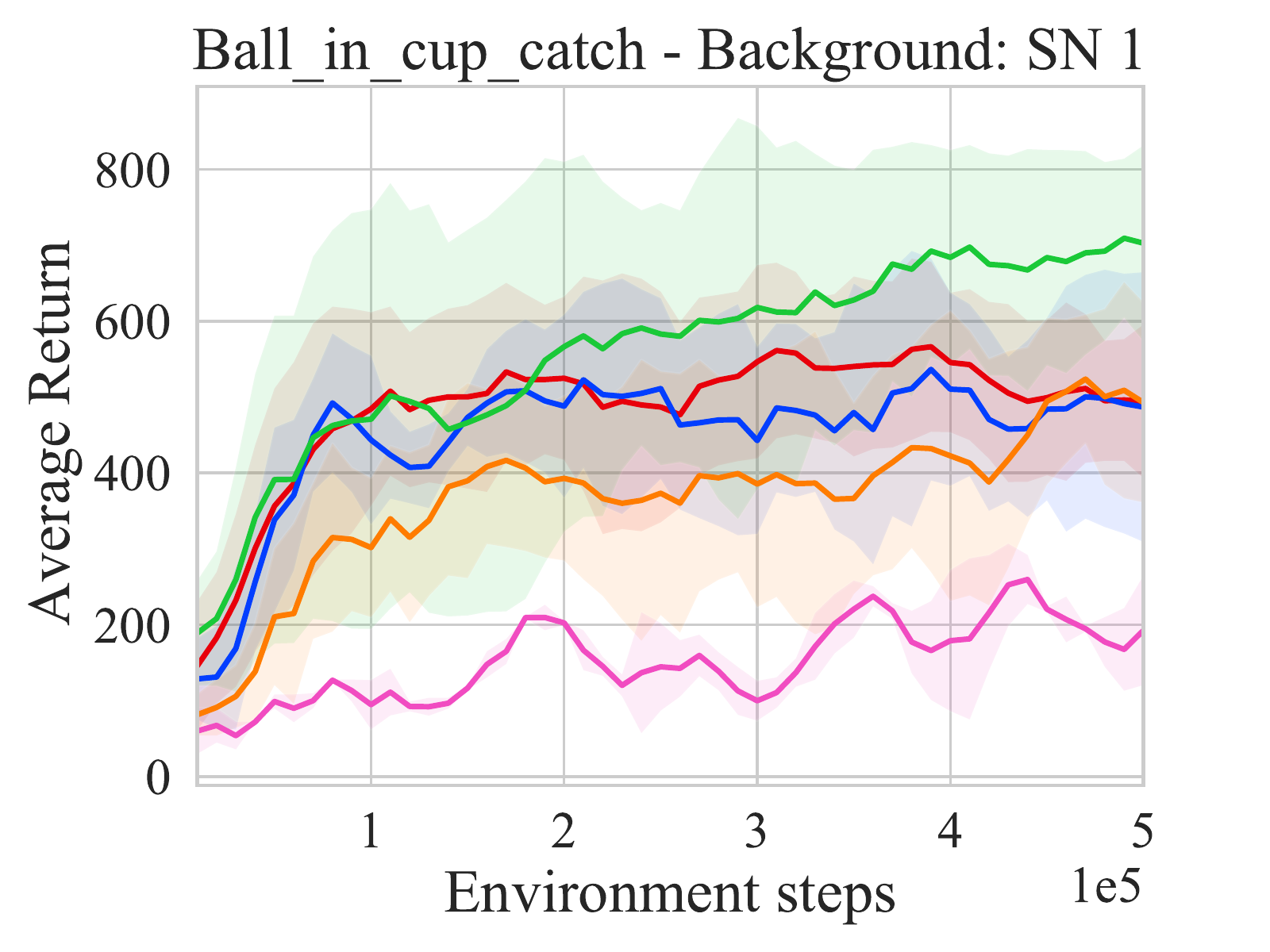}
  \includegraphics[width=9cm]{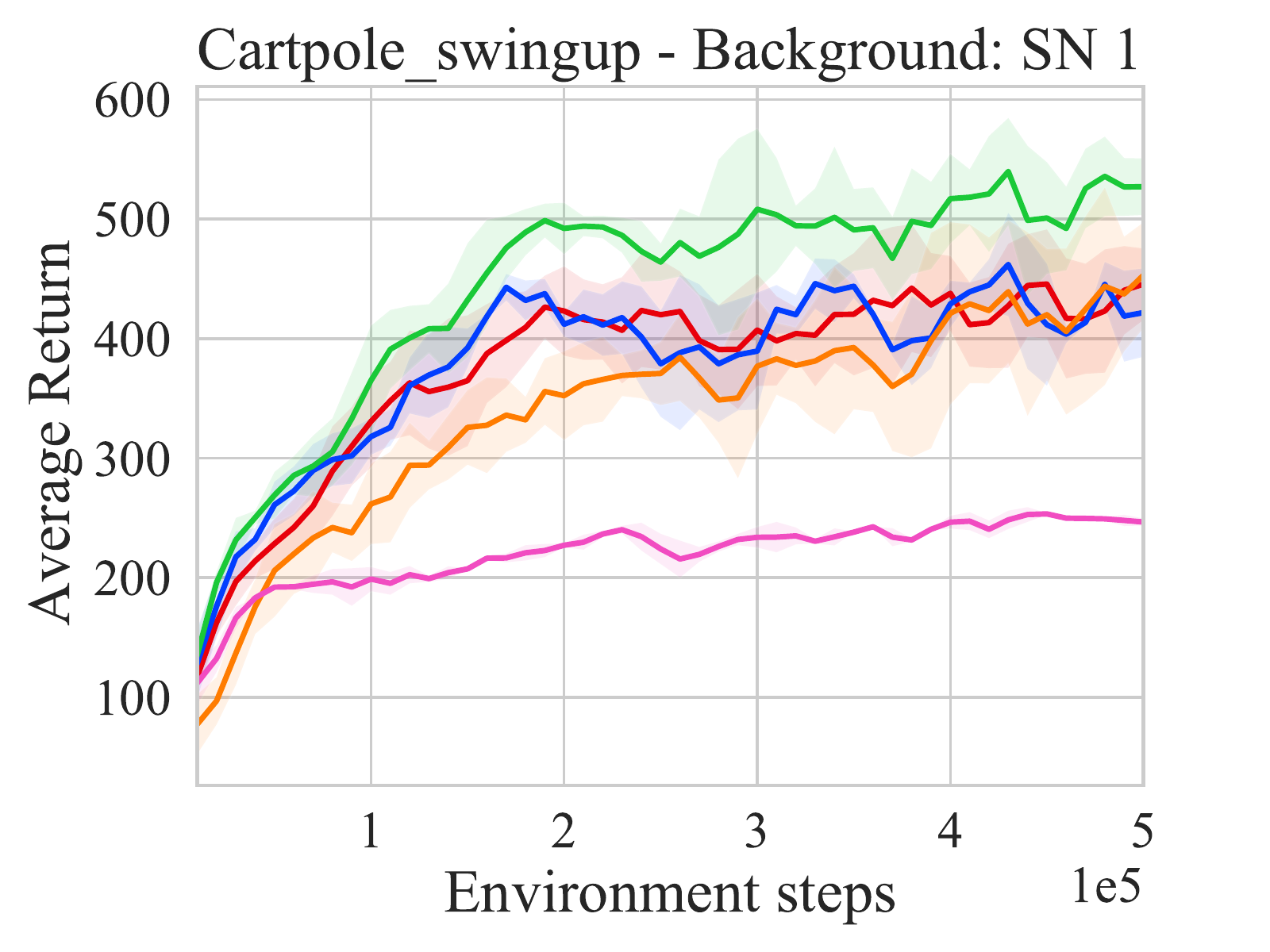}

  \includegraphics[width=9cm]{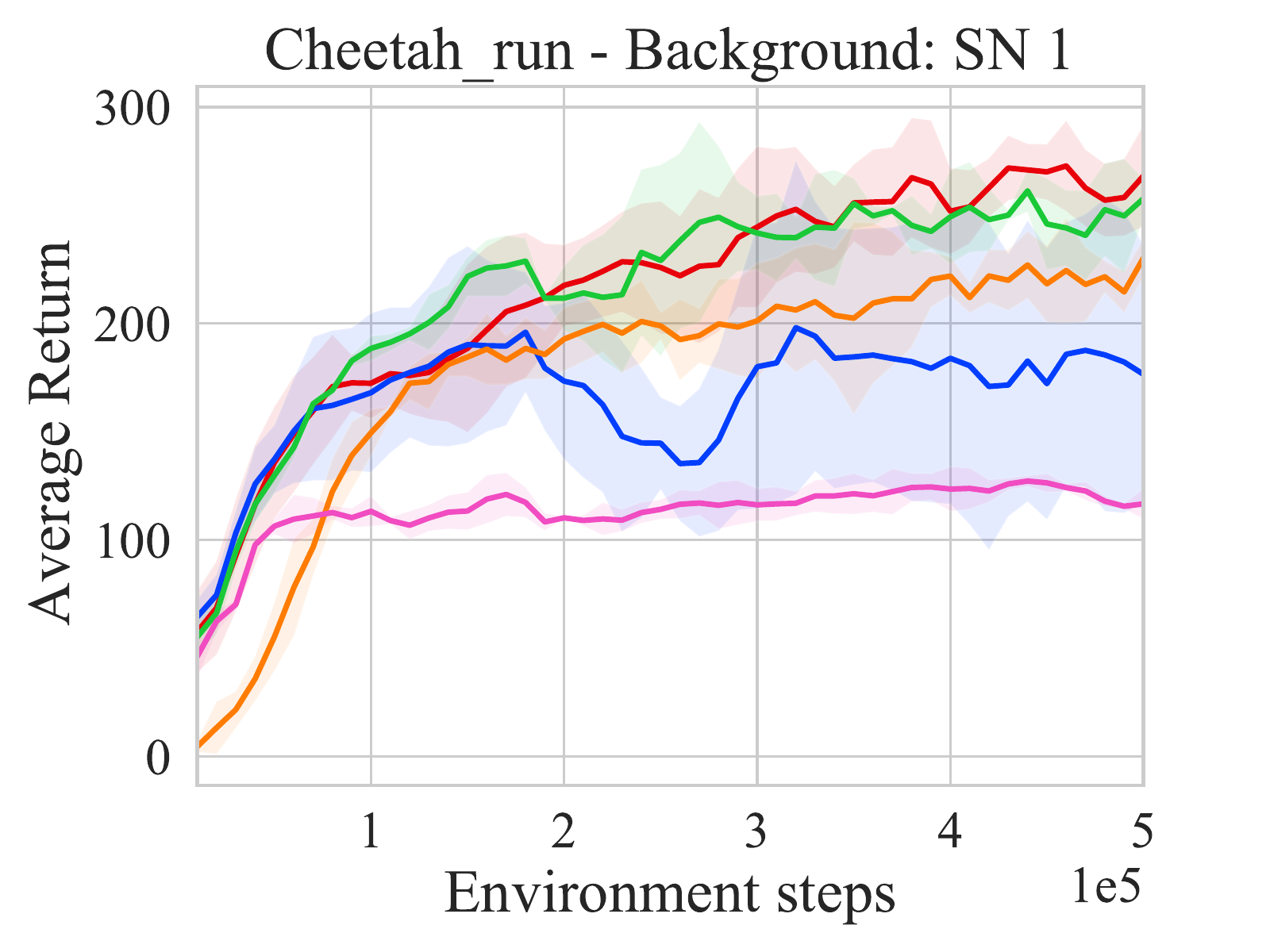}
  \includegraphics[width=9cm]{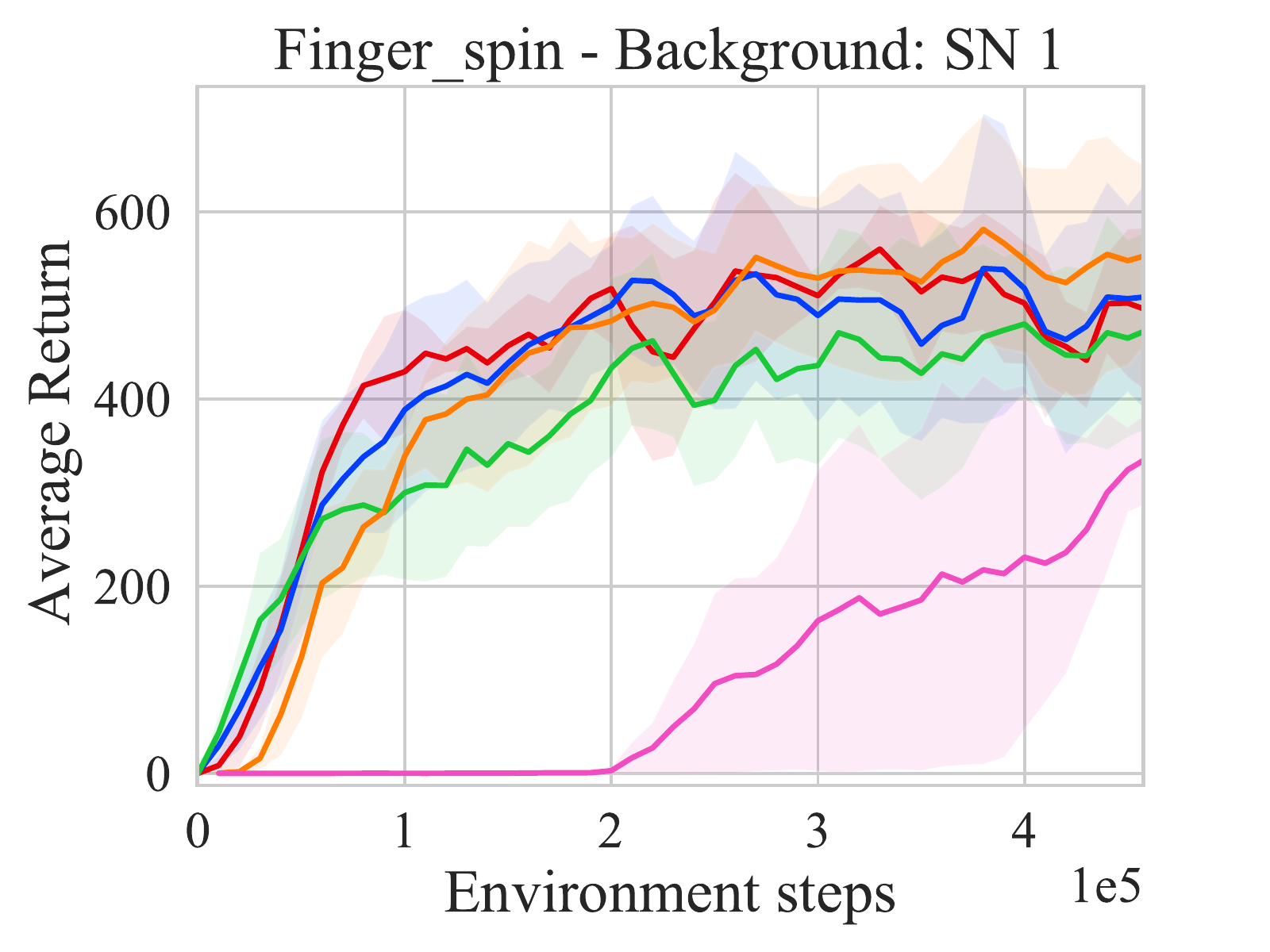}
  
  \includegraphics[width=9cm]{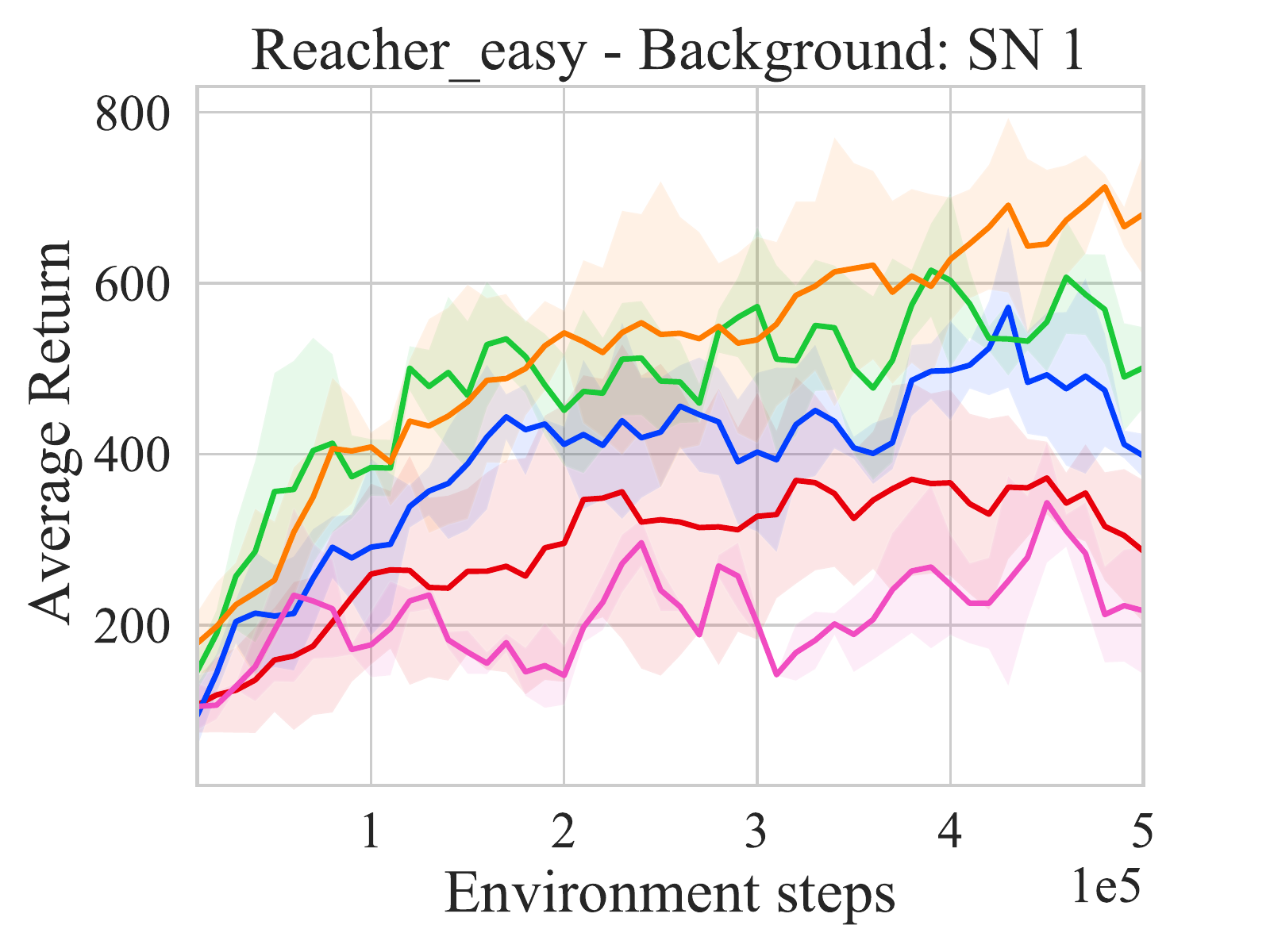}
  \includegraphics[width=9cm]{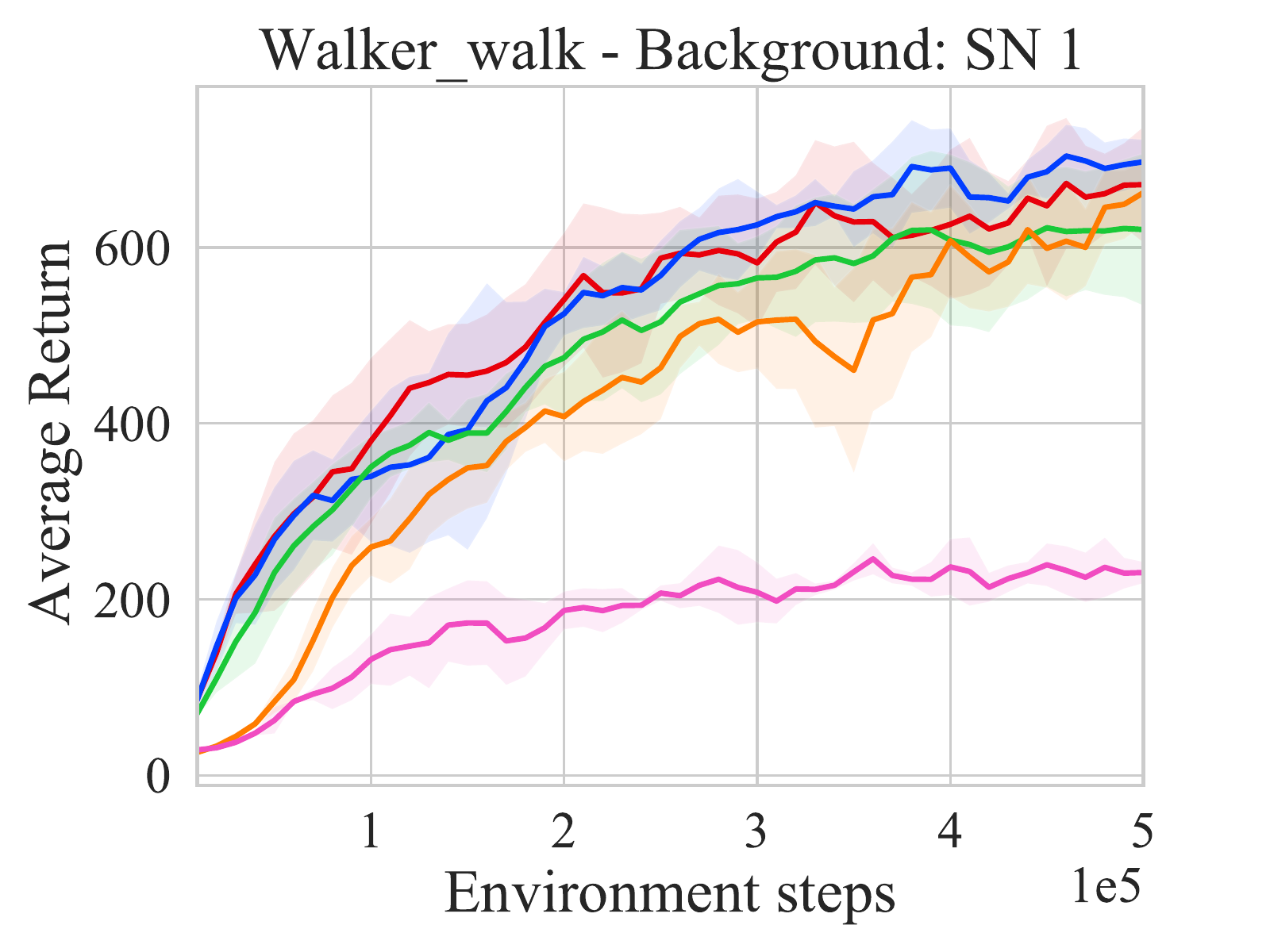}
  
  \includegraphics[width=15cm]{legend-13.pdf}
  \caption{Learning curves on six tasks under the \textbf{one} training environment setting with dynamic \textbf{background} distractions for 500K environment steps. SN denotes the number of source domain, which is the number of training environment.}
\label{fig-result-db-1}
\end{figure*}

\begin{figure*}
  \centering
  \includegraphics[width=9cm]{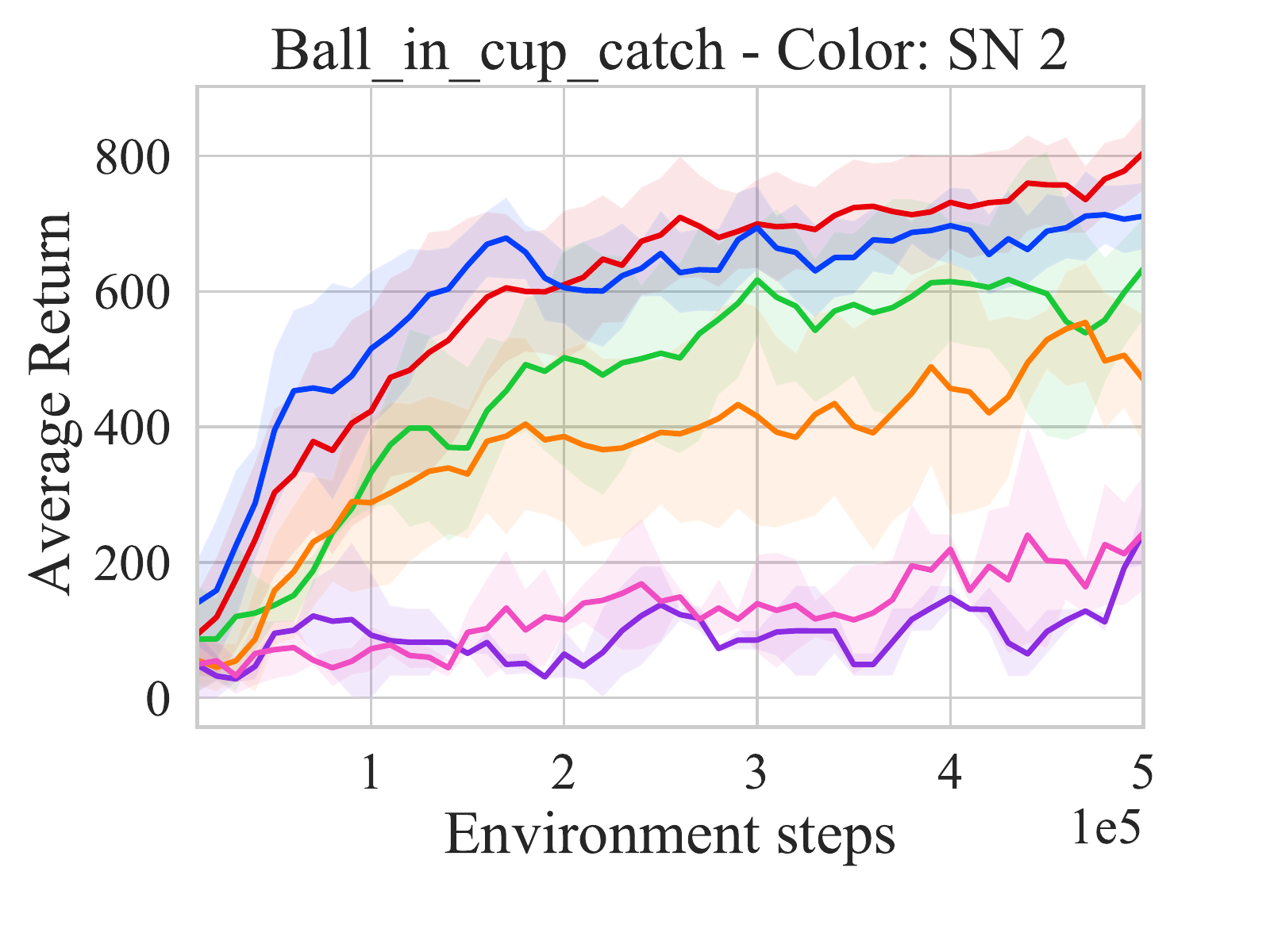}
  \includegraphics[width=9cm]{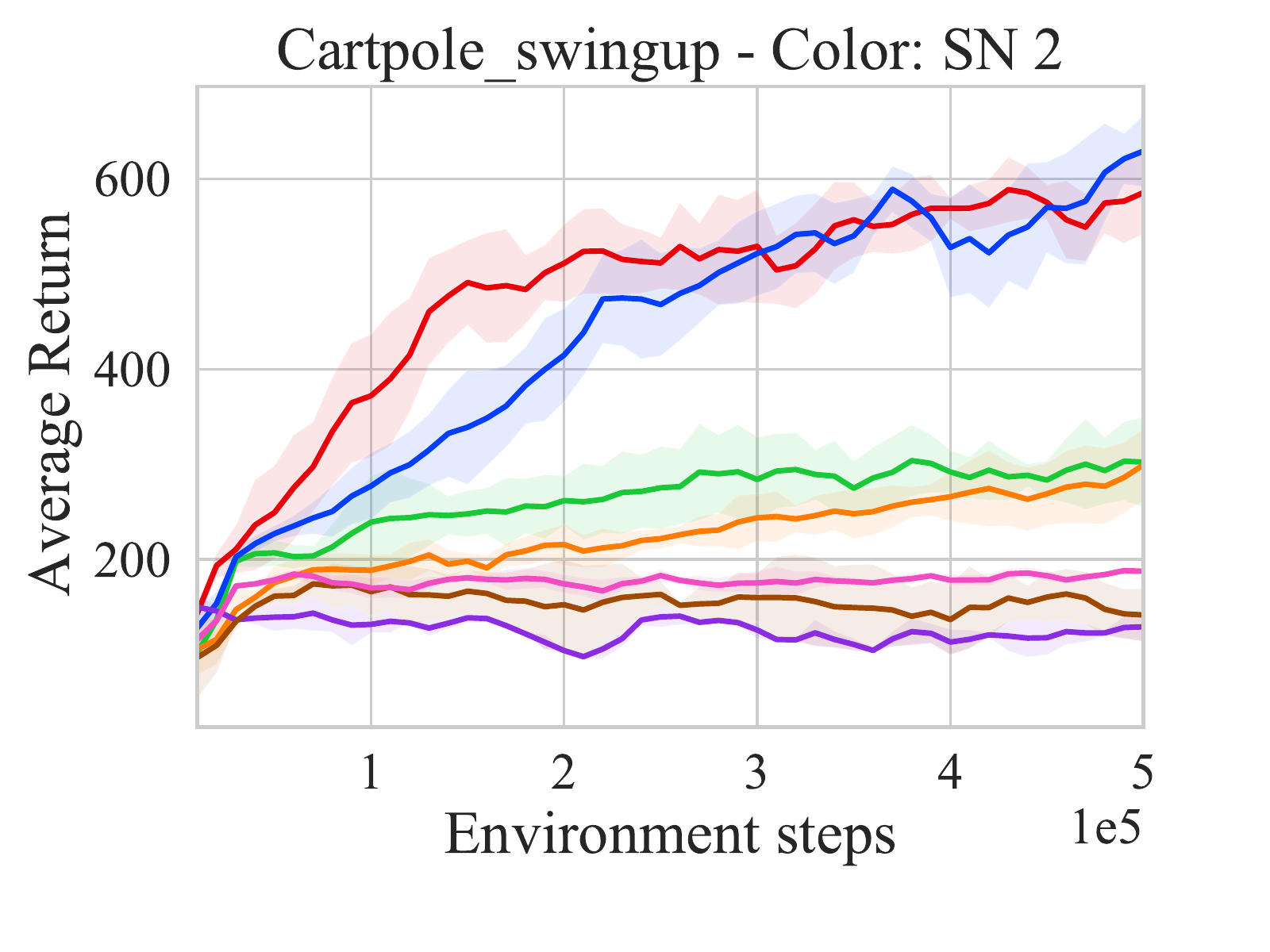}
  
  \includegraphics[width=9cm]{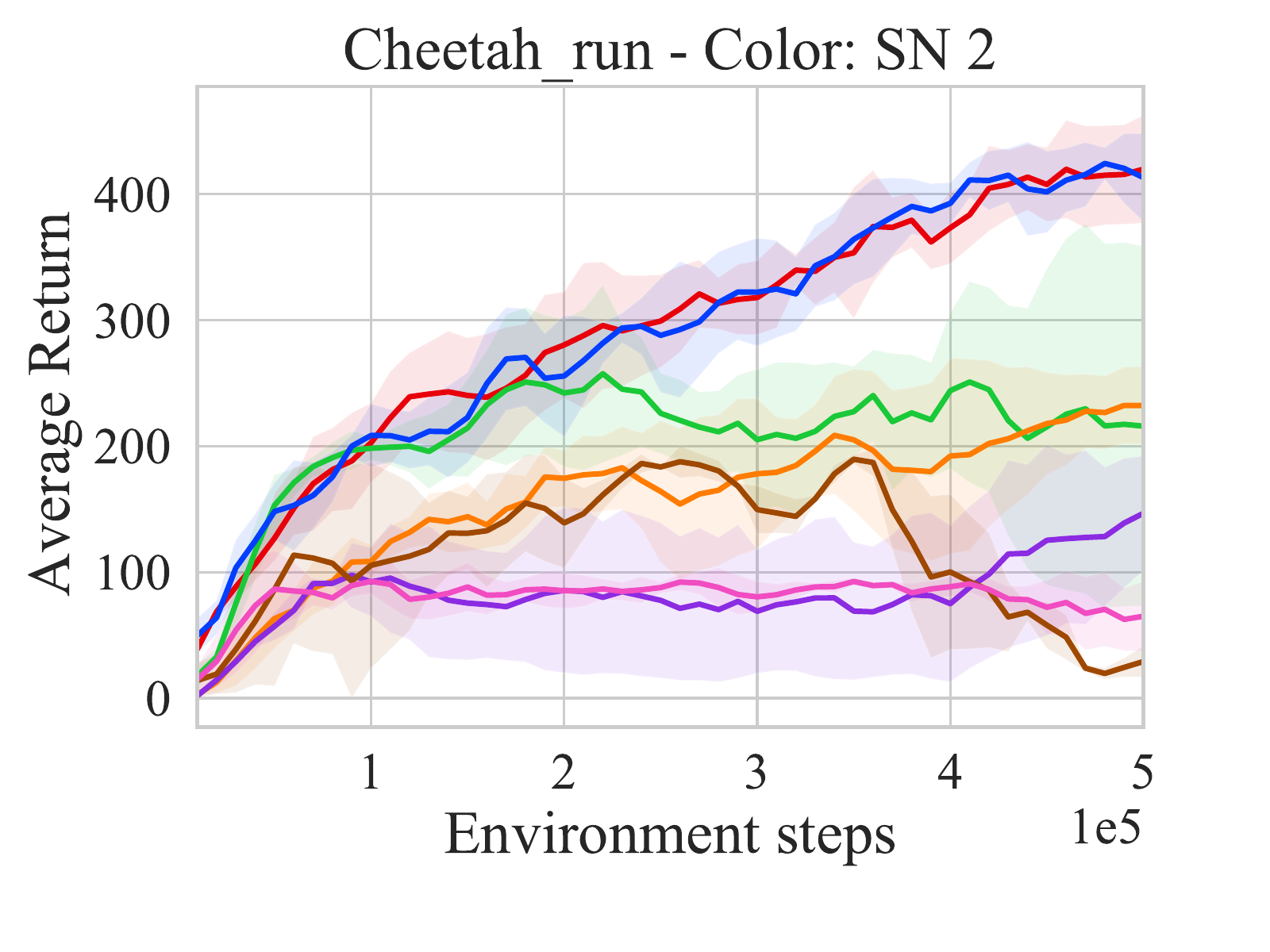}
  \includegraphics[width=9cm]{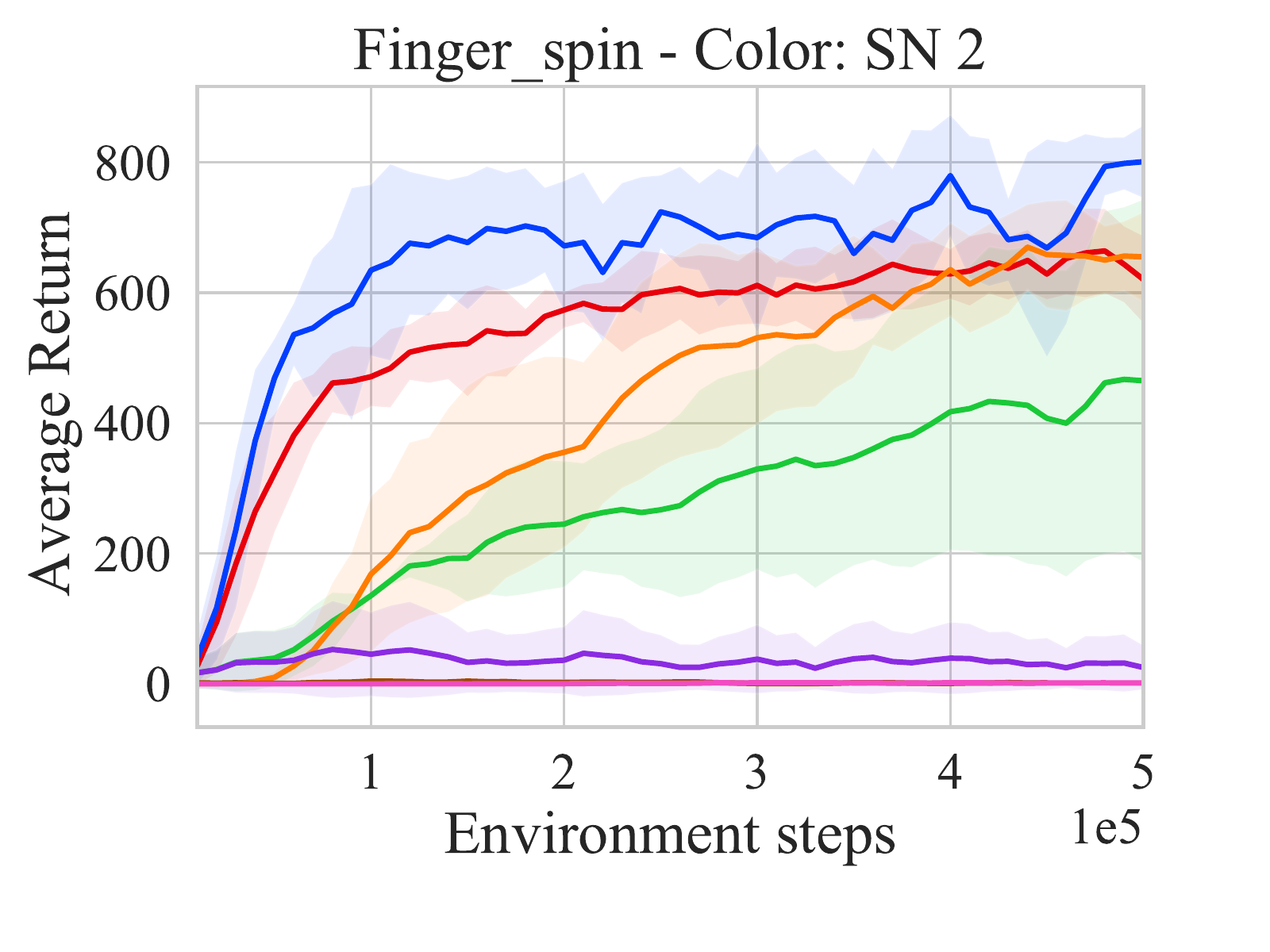}
  
  \includegraphics[width=9cm]{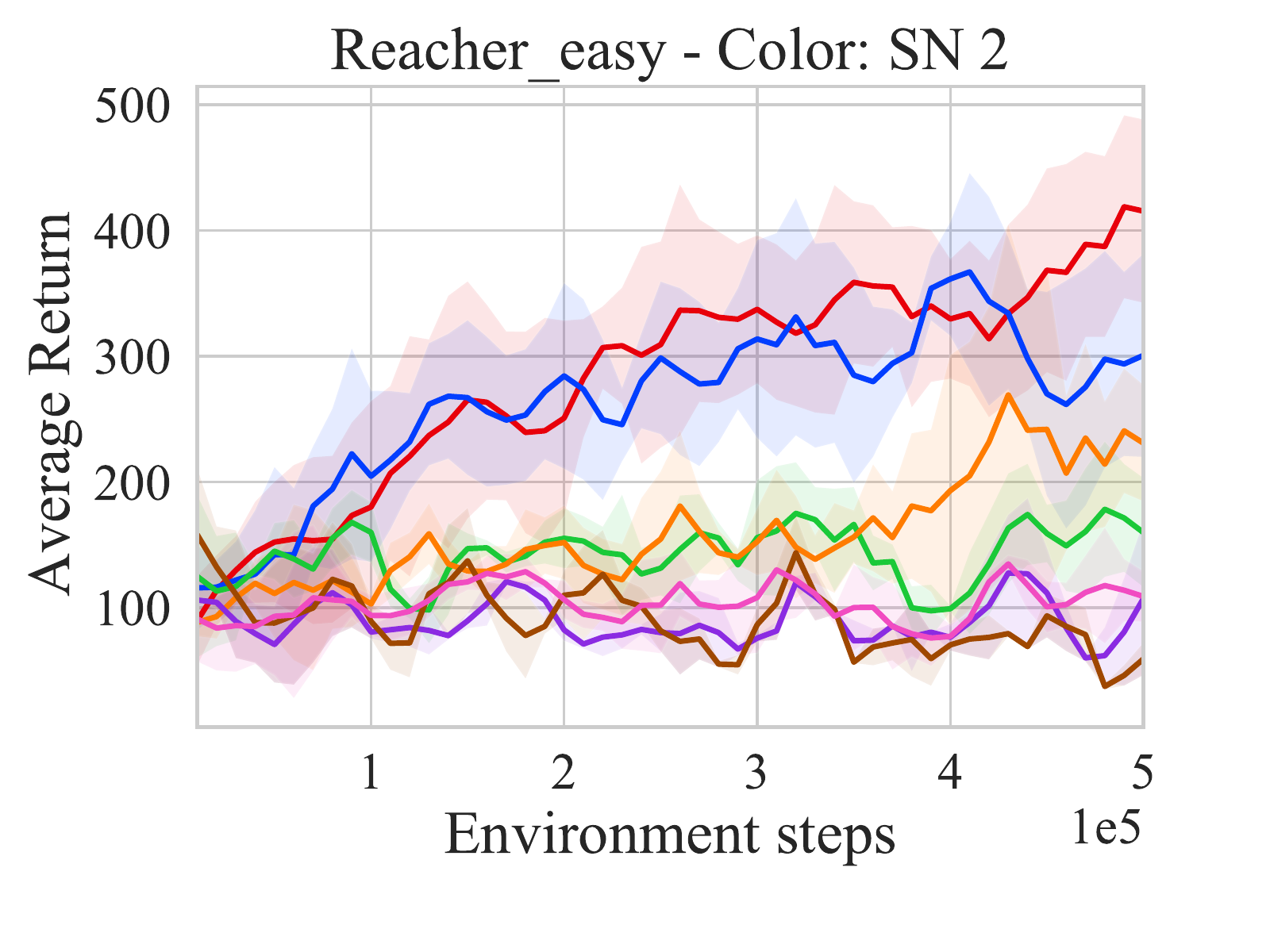}
  \includegraphics[width=9cm]{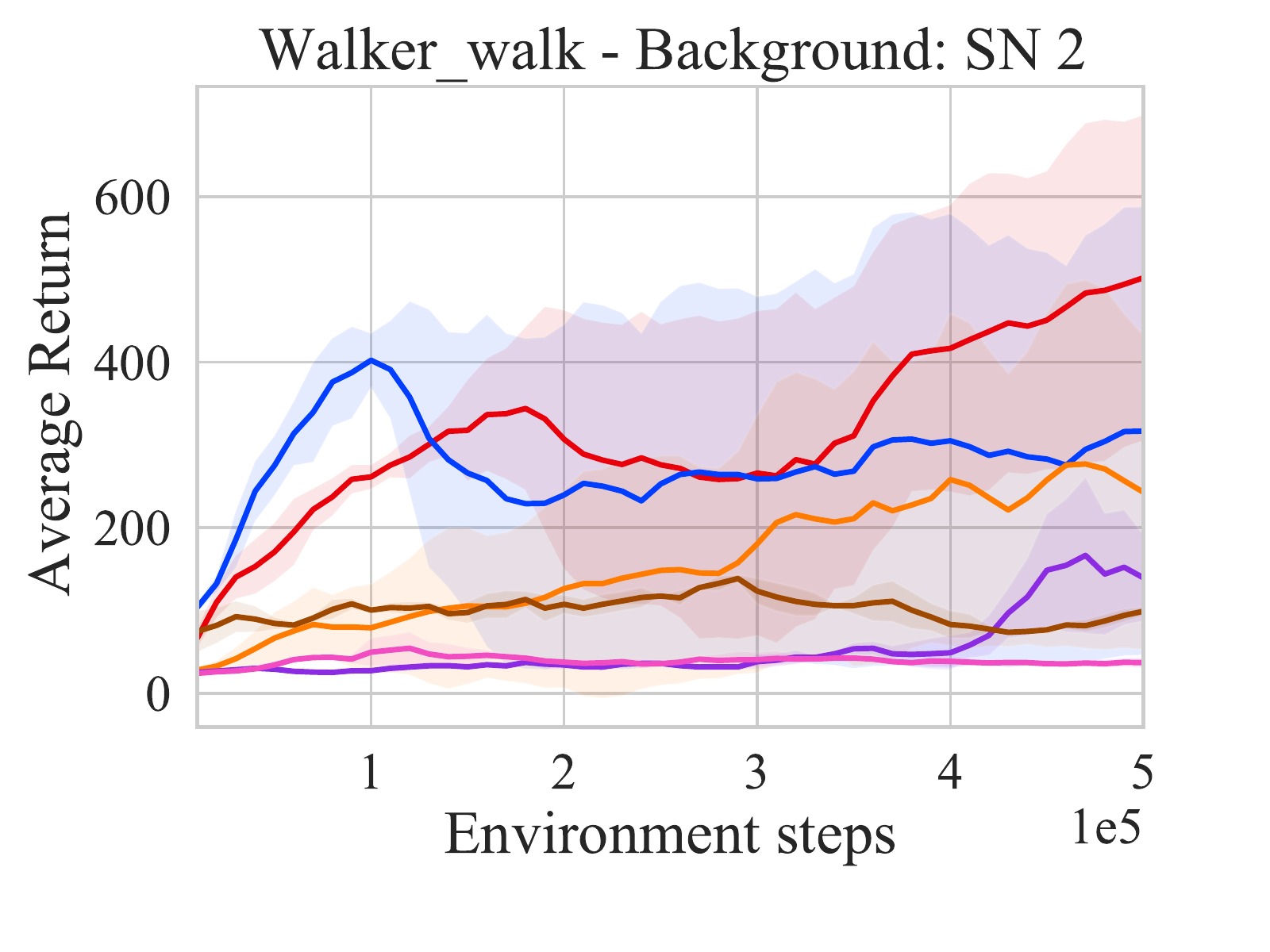}
  
  \includegraphics[width=18cm]{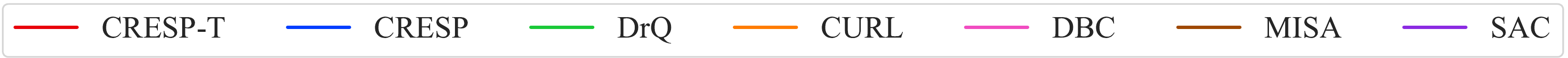}
  \caption{Learning curves on six tasks under the \textbf{two} training environment setting with dynamic \textbf{color} distractions for 500K environment steps. SN denotes the number of source domain, which is the number of training environment.}
\label{fig-result-dc-2}
\end{figure*}

\begin{figure*}
  \centering
  \includegraphics[width=9cm]{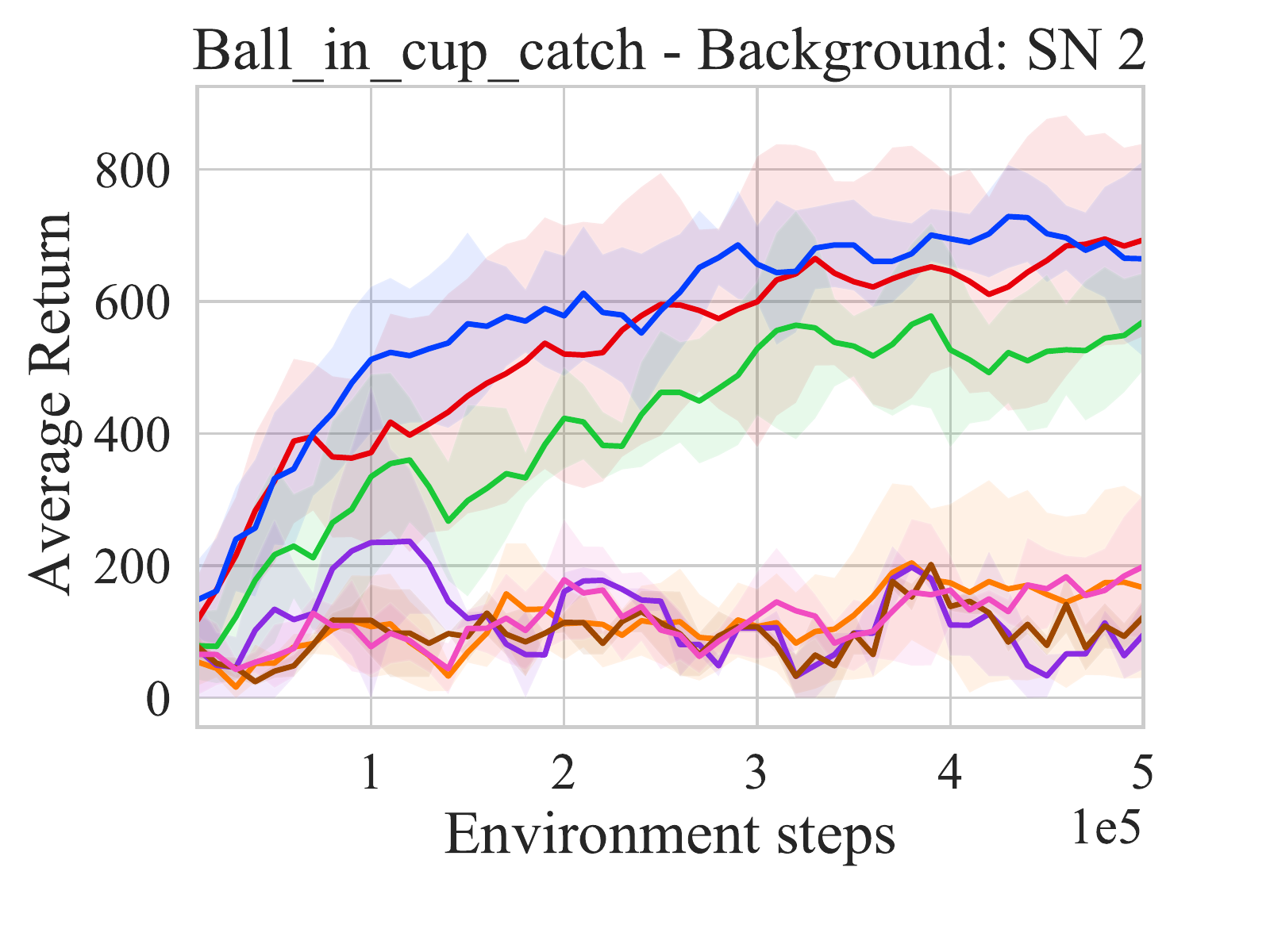}
  \includegraphics[width=9cm]{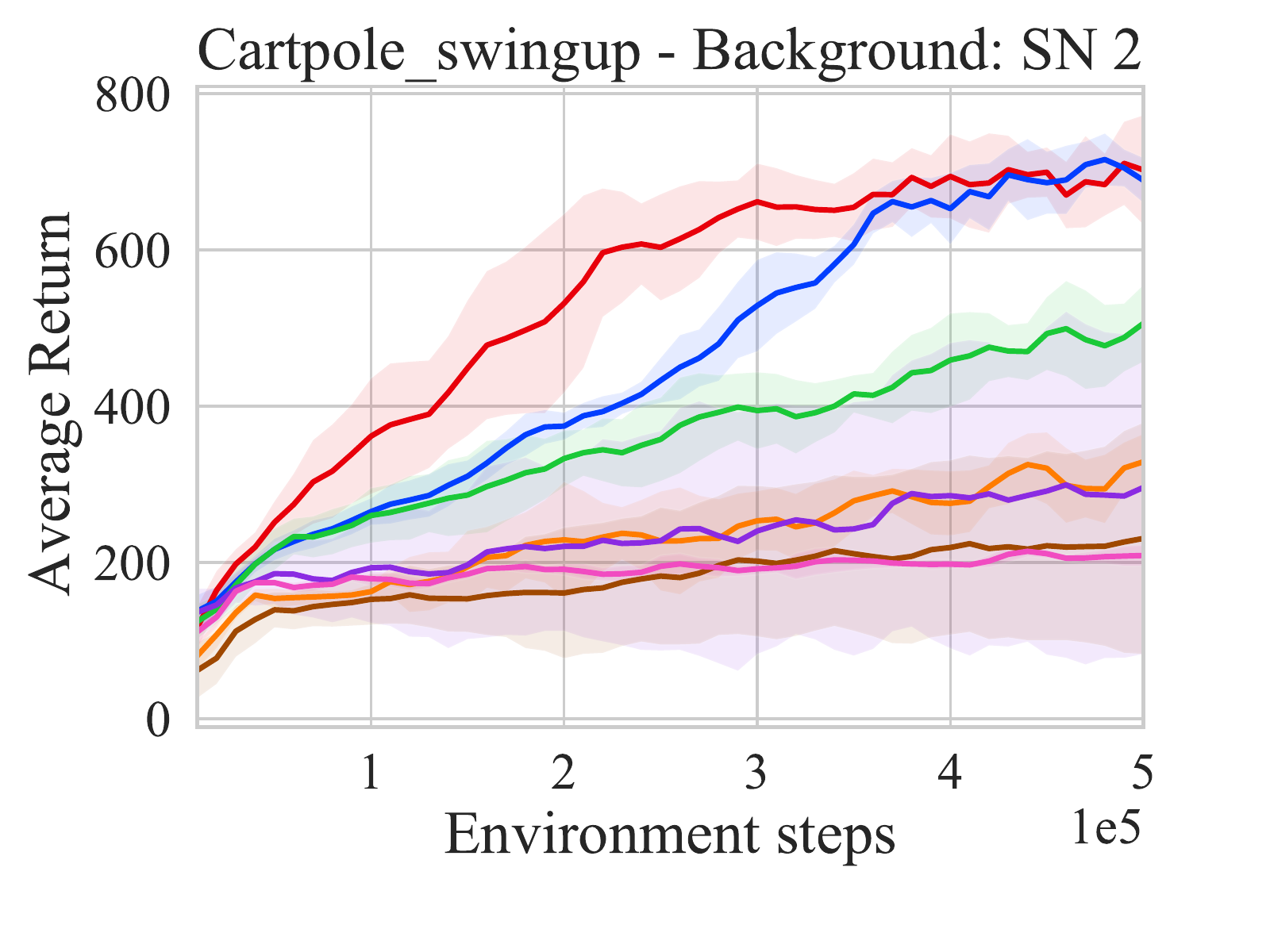}

  \includegraphics[width=9cm]{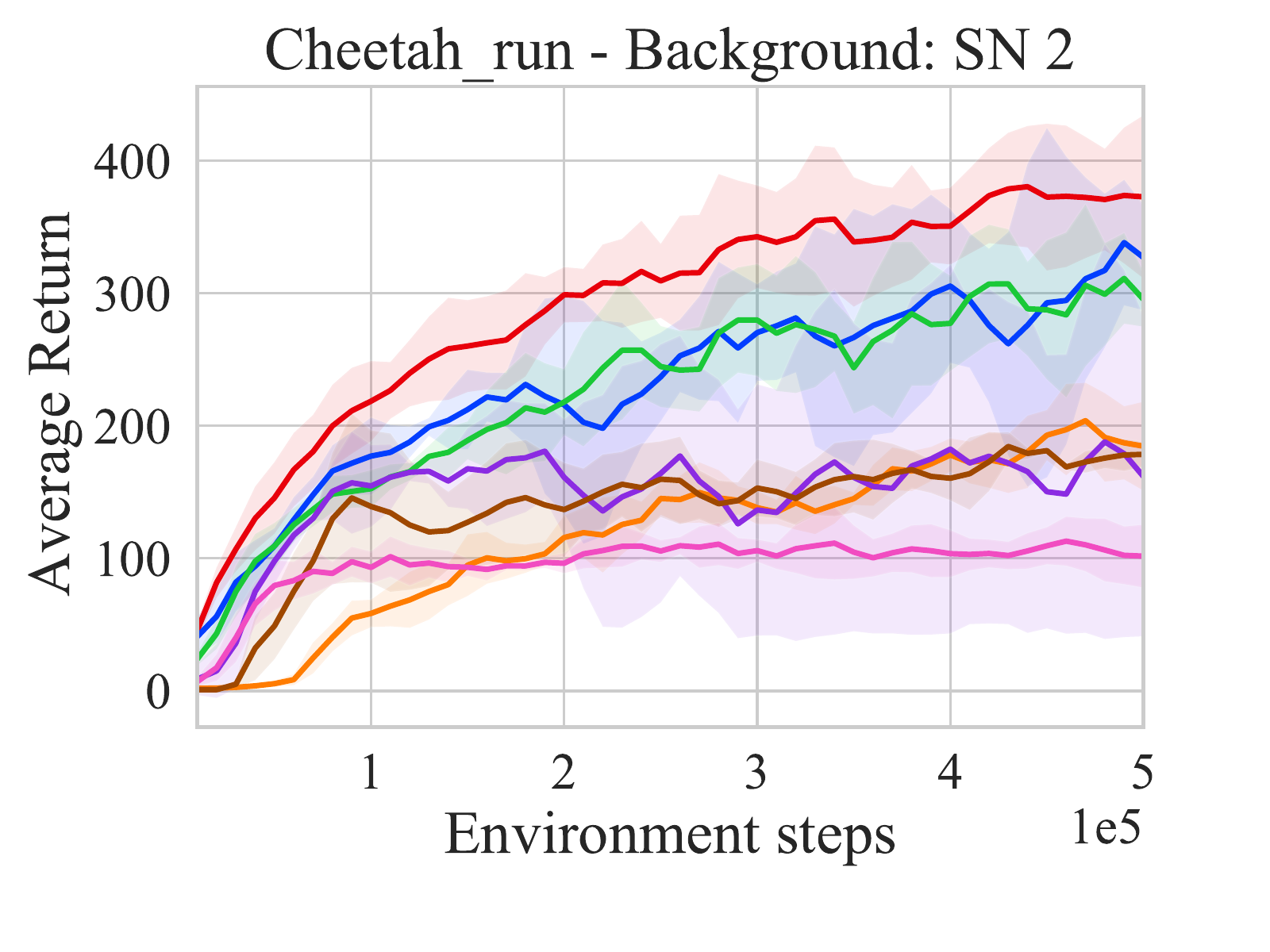}
  \includegraphics[width=9cm]{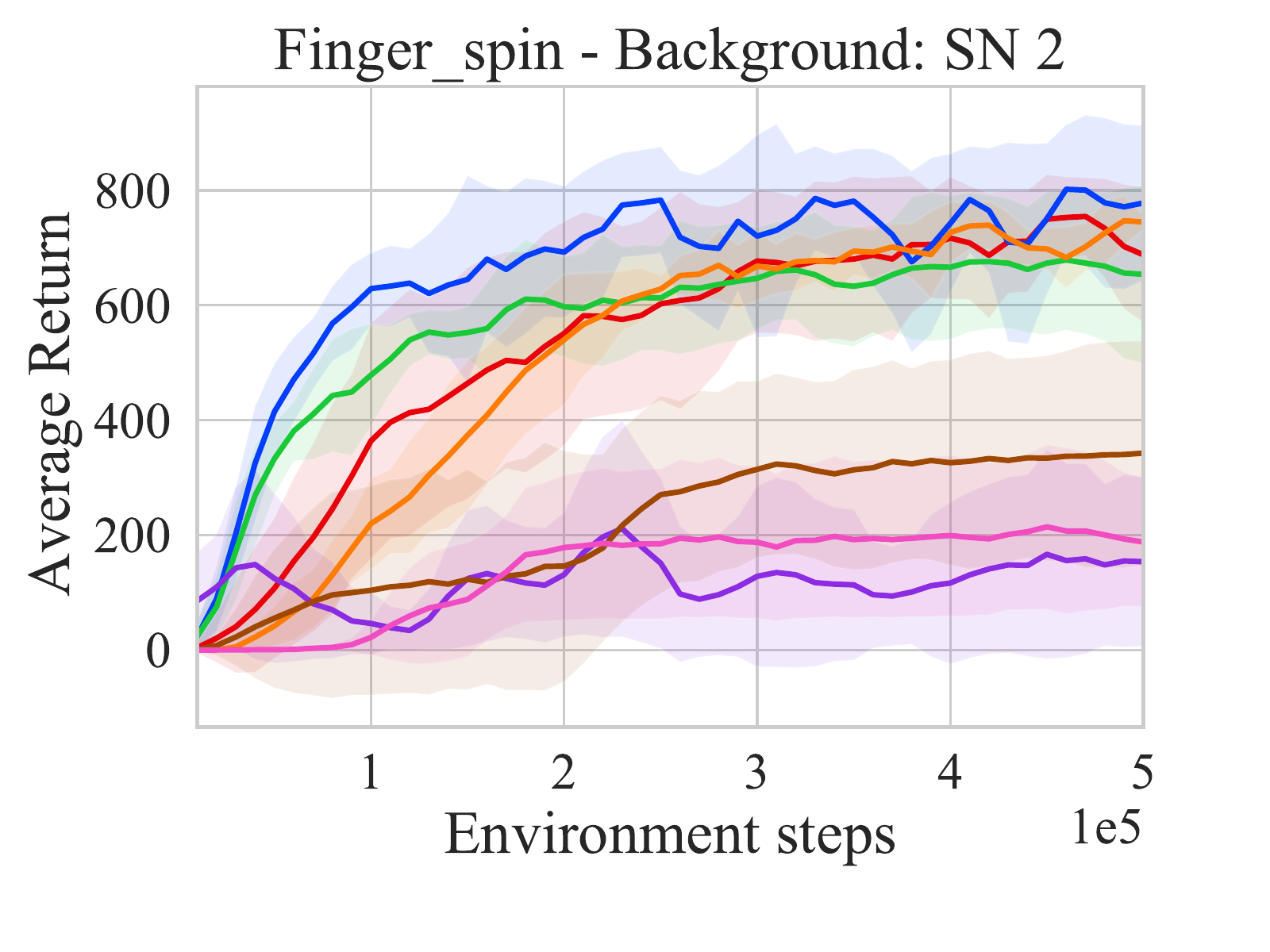}
  
  \includegraphics[width=9cm]{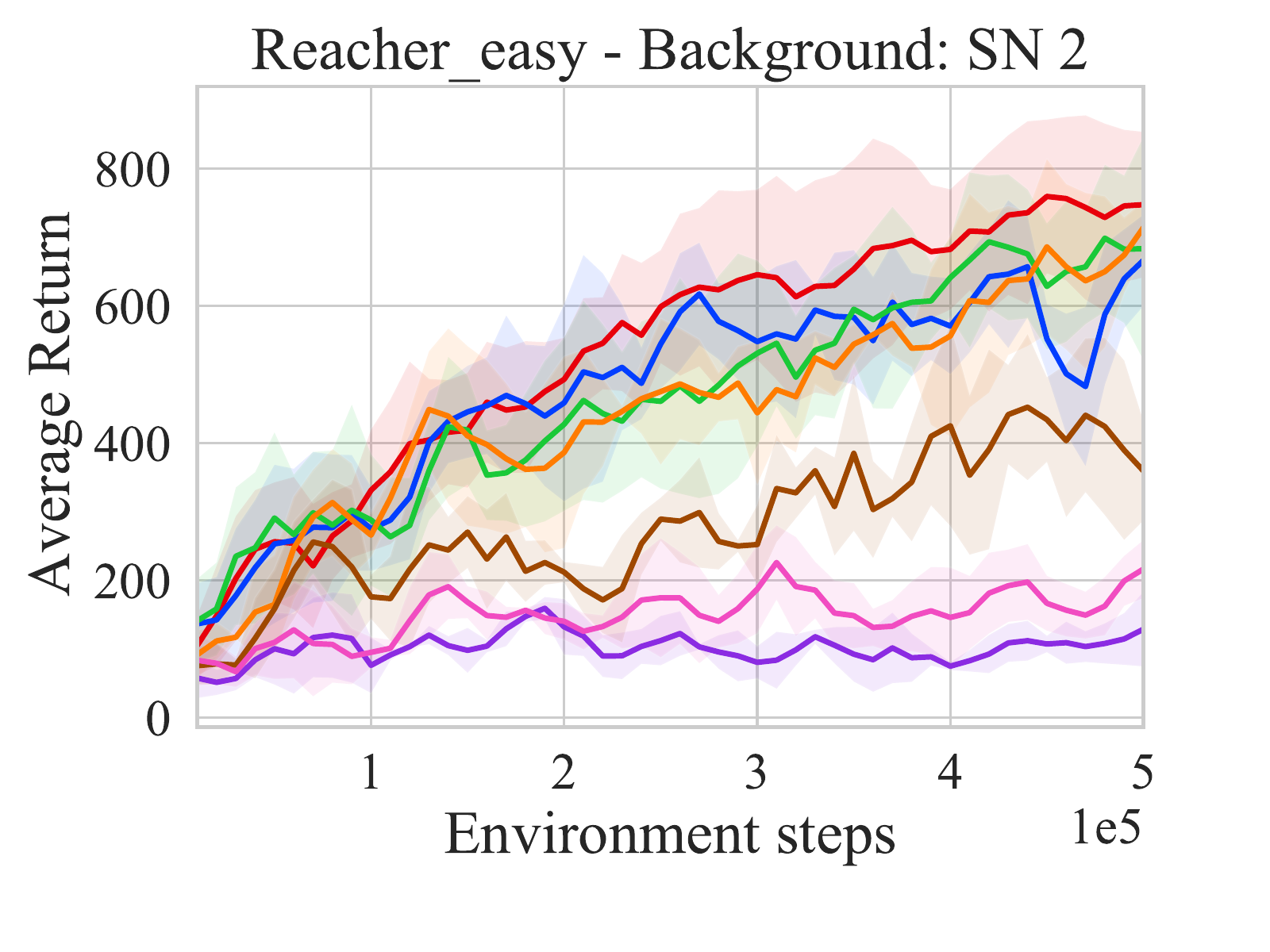}
  \includegraphics[width=9cm]{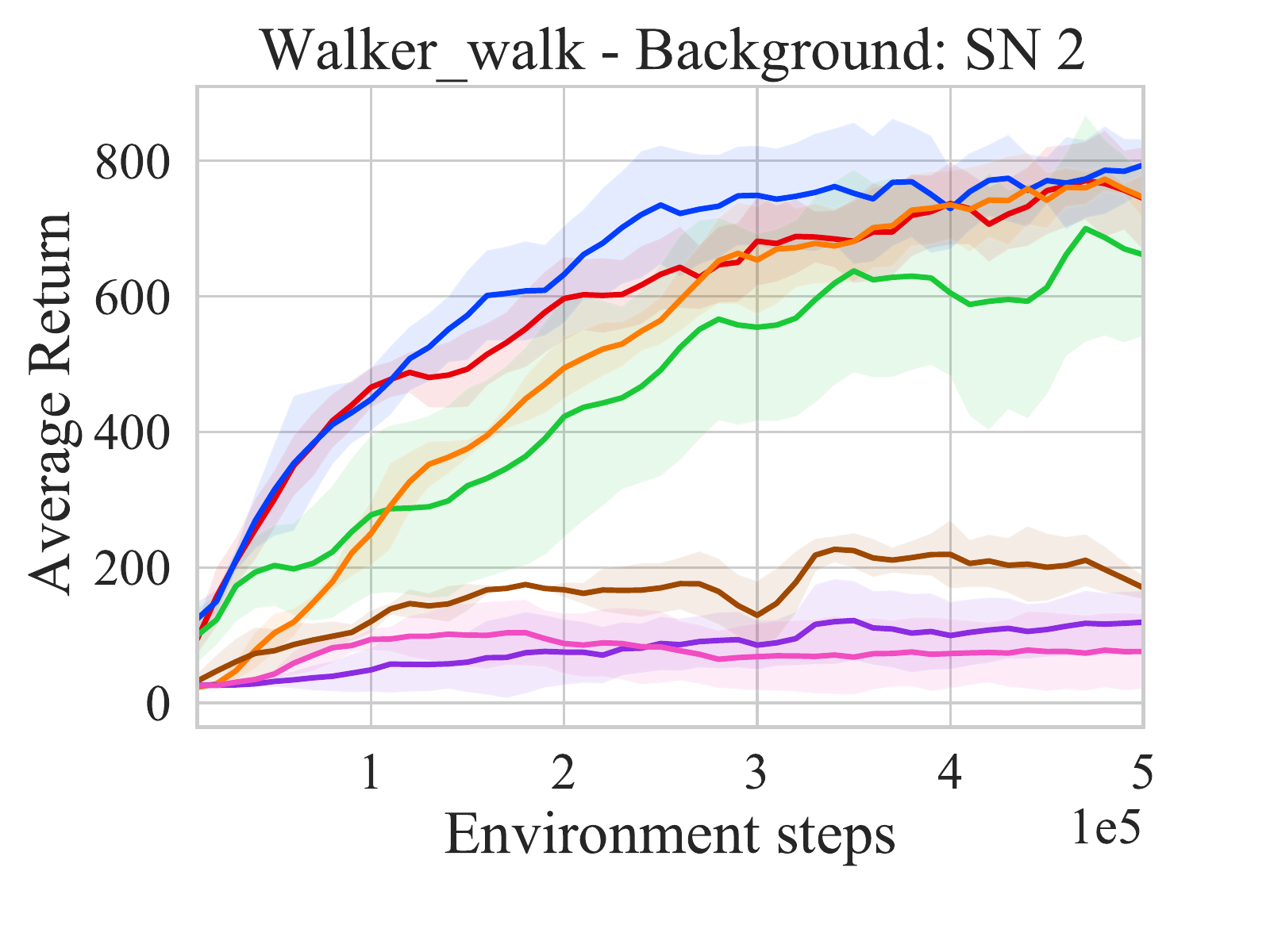}
  
  \includegraphics[width=18cm]{legend-2.pdf}
  \caption{Learning curves on six tasks under the \textbf{two} training environment setting with dynamic \textbf{background} distractions for 500K environment steps. SN denotes the number of source domain, which is the number of training environment.}
\label{fig-result-db-2}
\end{figure*}

\begin{figure*}
  \centering
  \includegraphics[width=9cm]{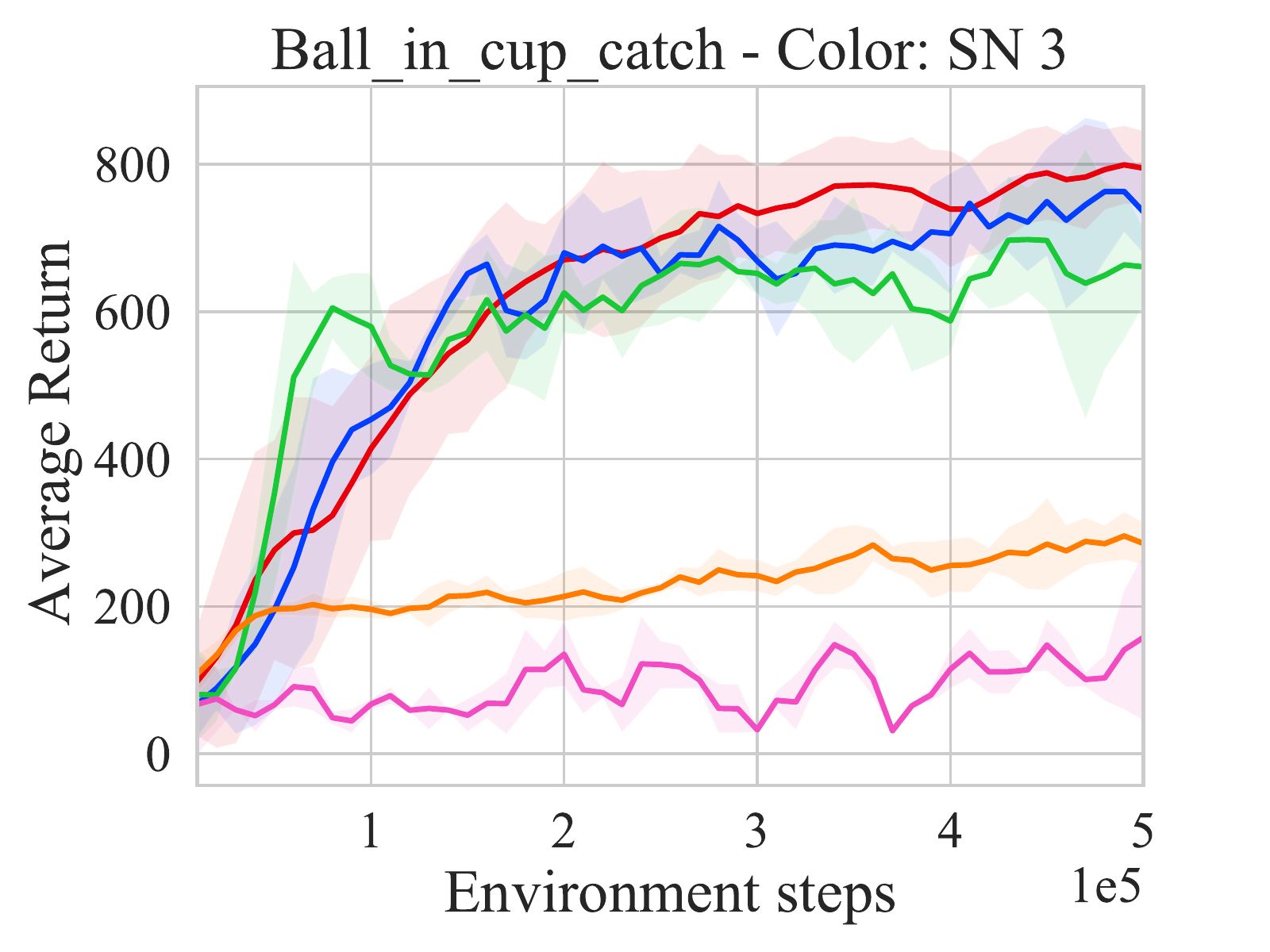}
  \includegraphics[width=9cm]{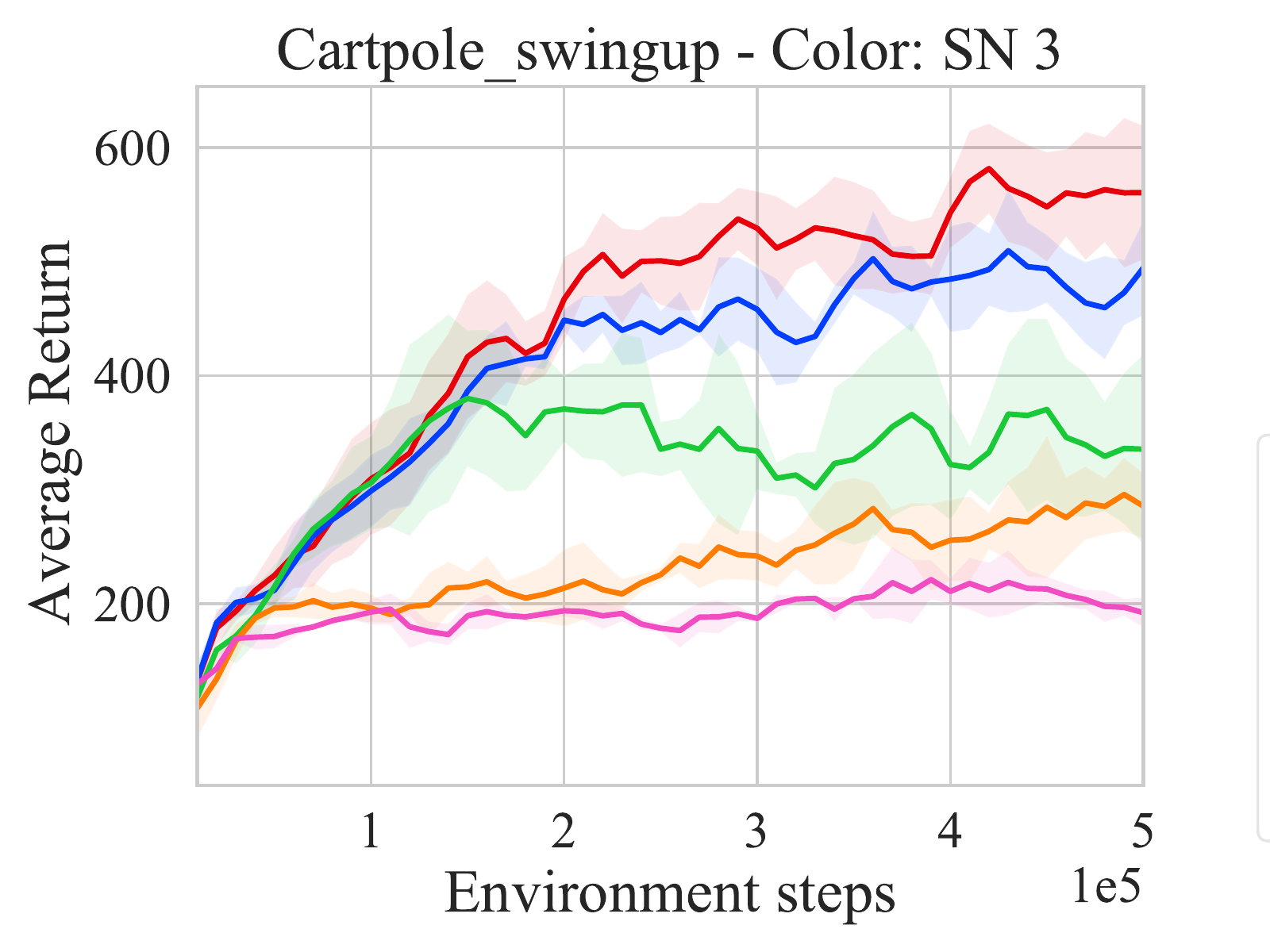}
  
  \includegraphics[width=9cm]{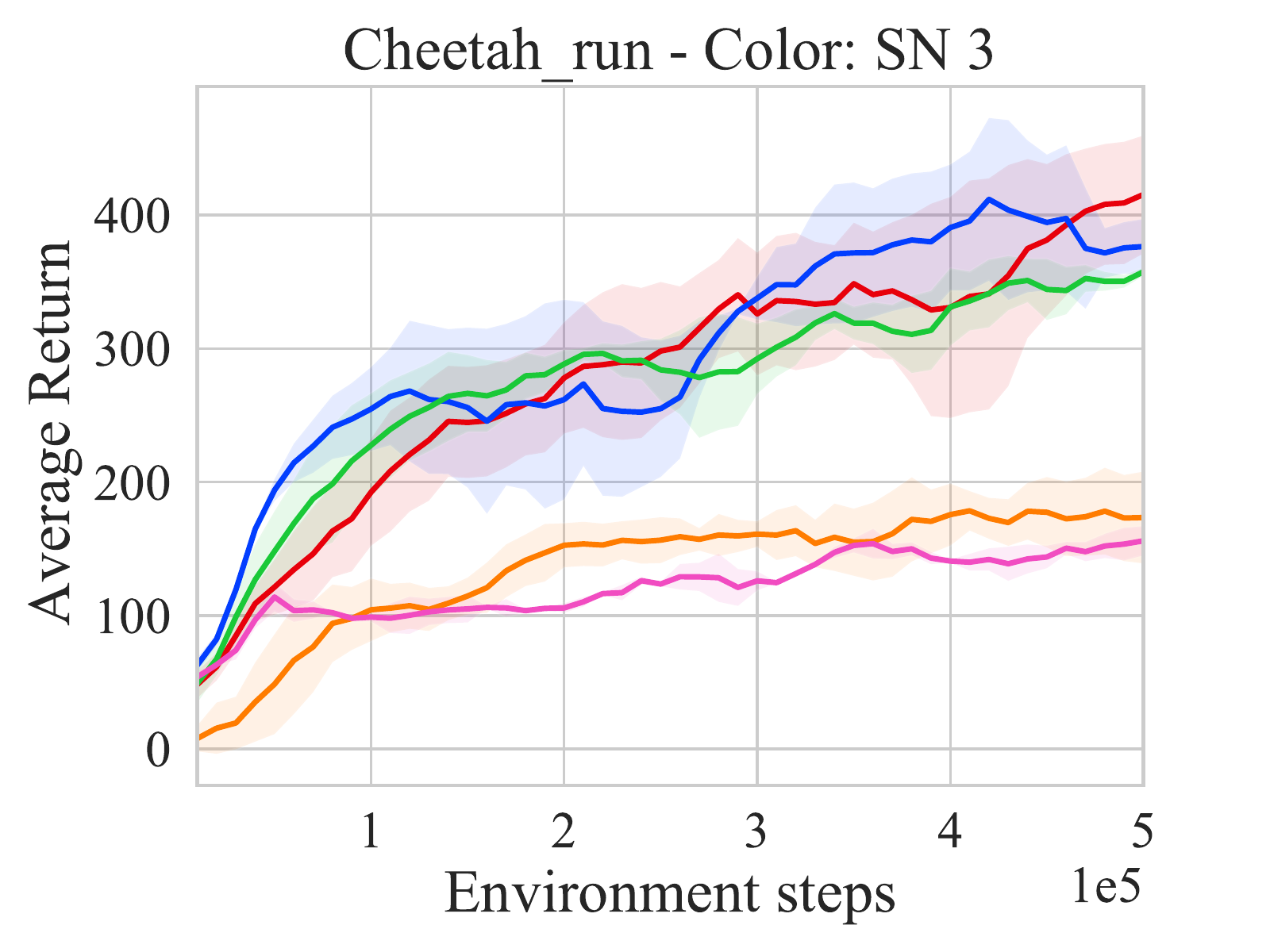}
  \includegraphics[width=9cm]{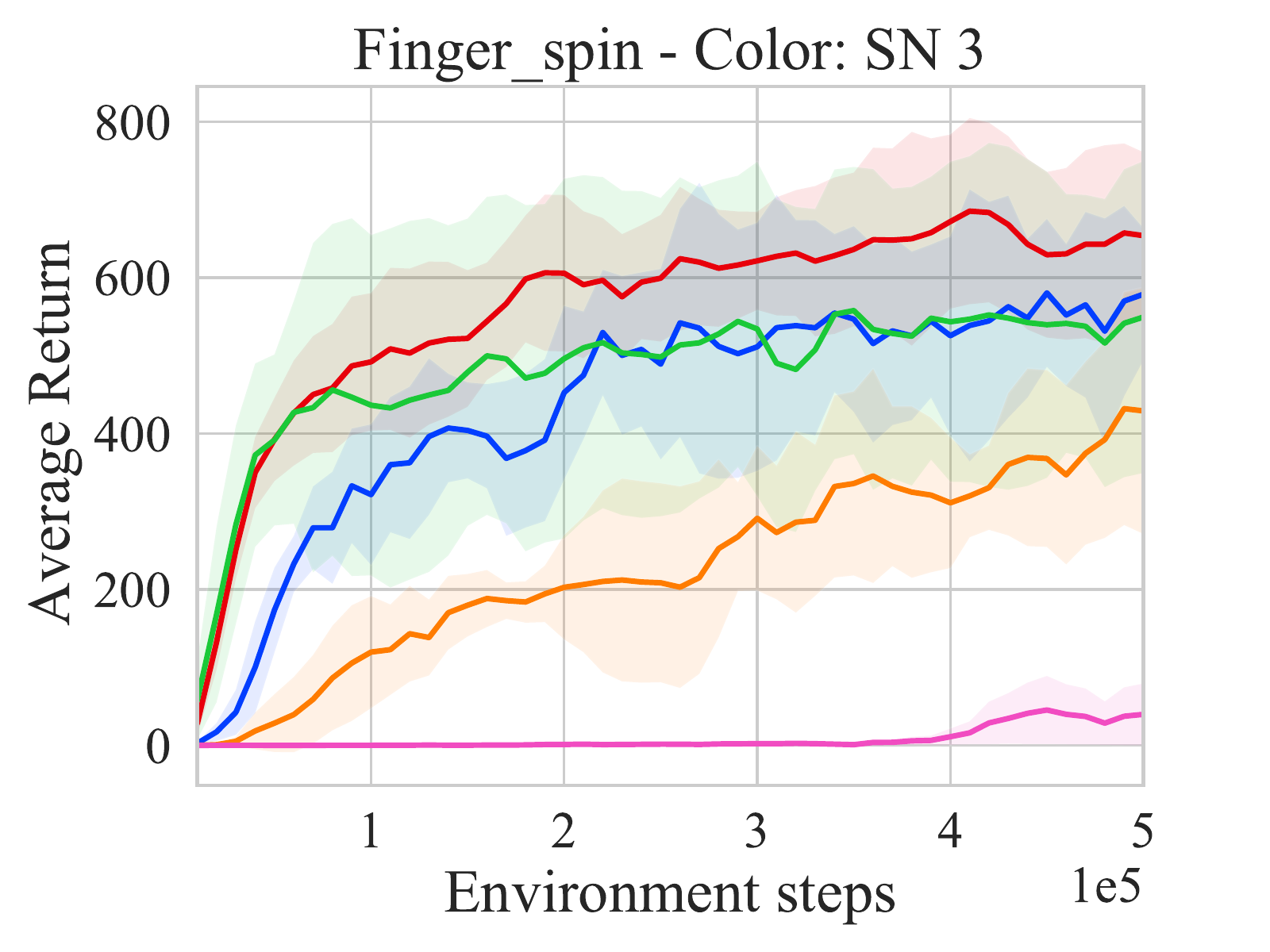}
  
  \includegraphics[width=9cm]{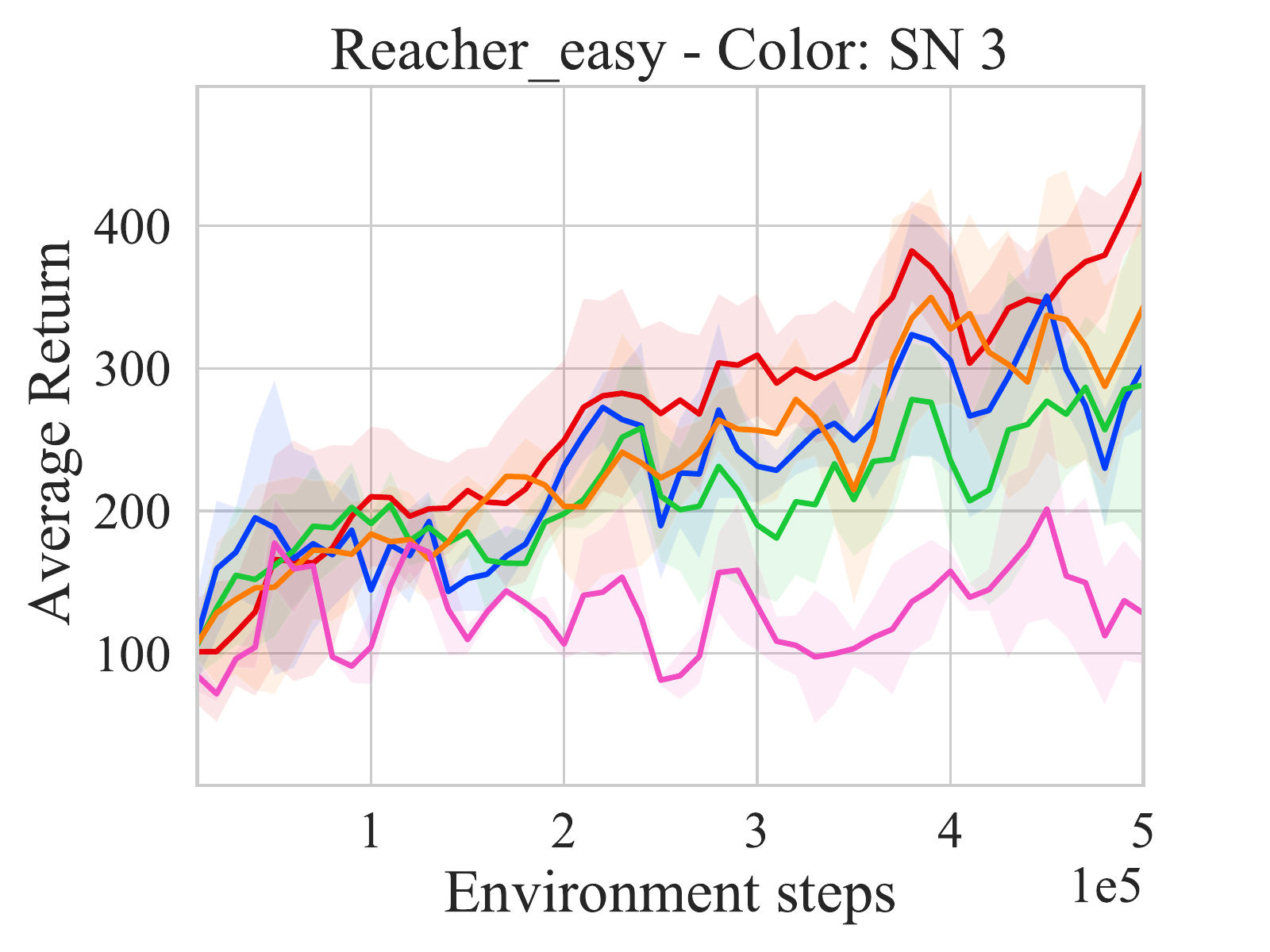}
  \includegraphics[width=9cm]{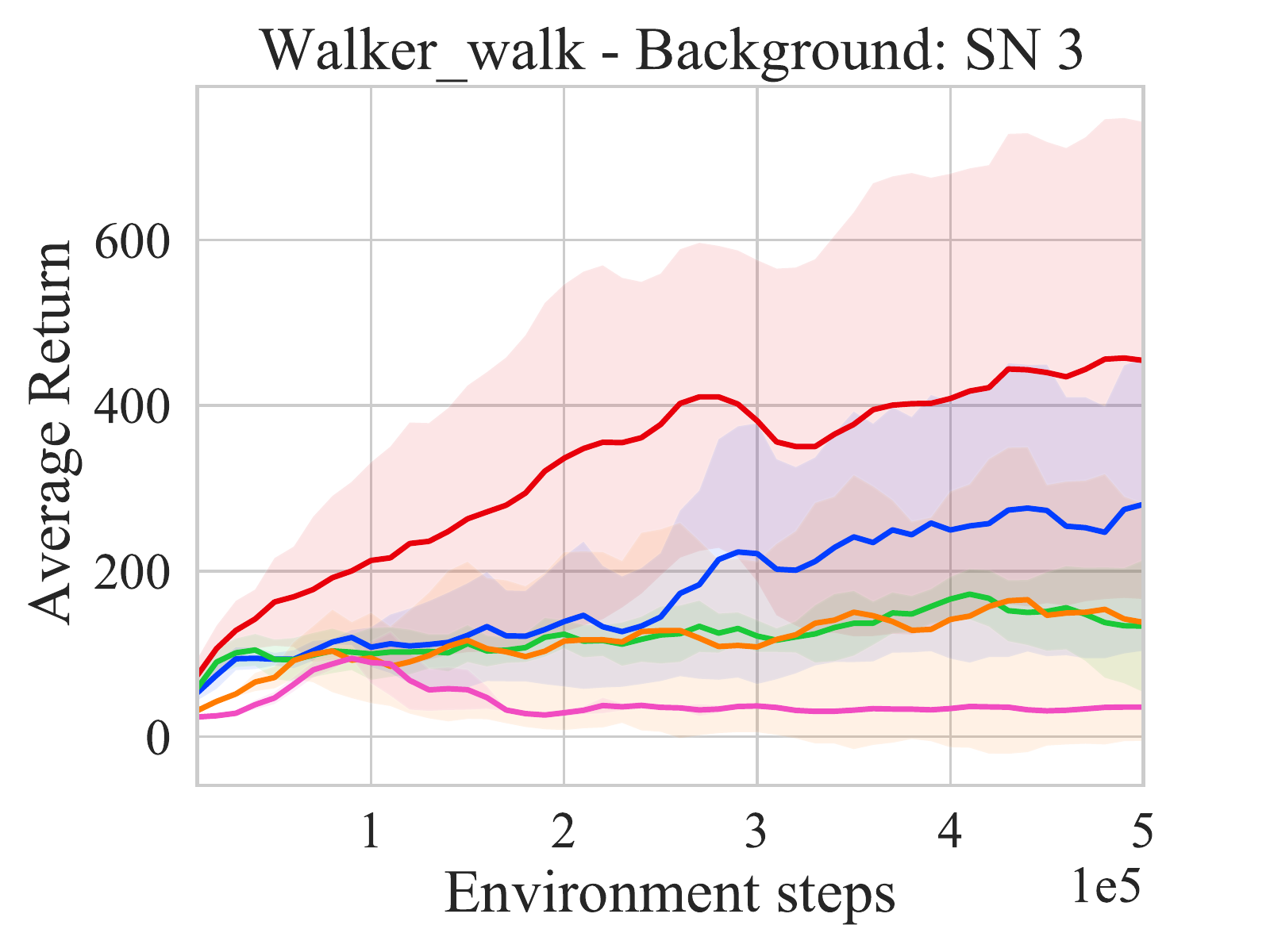}
  
  \includegraphics[width=15cm]{legend-13.pdf}
  \caption{Learning curves on six tasks under the \textbf{three} training environment setting with dynamic \textbf{color} distractions for 500K environment steps. SN denotes the number of source domain, which is the number of training environment.}
\label{fig-result-dc-3}
\end{figure*}

\begin{figure*}
  \centering
  \includegraphics[width=9cm]{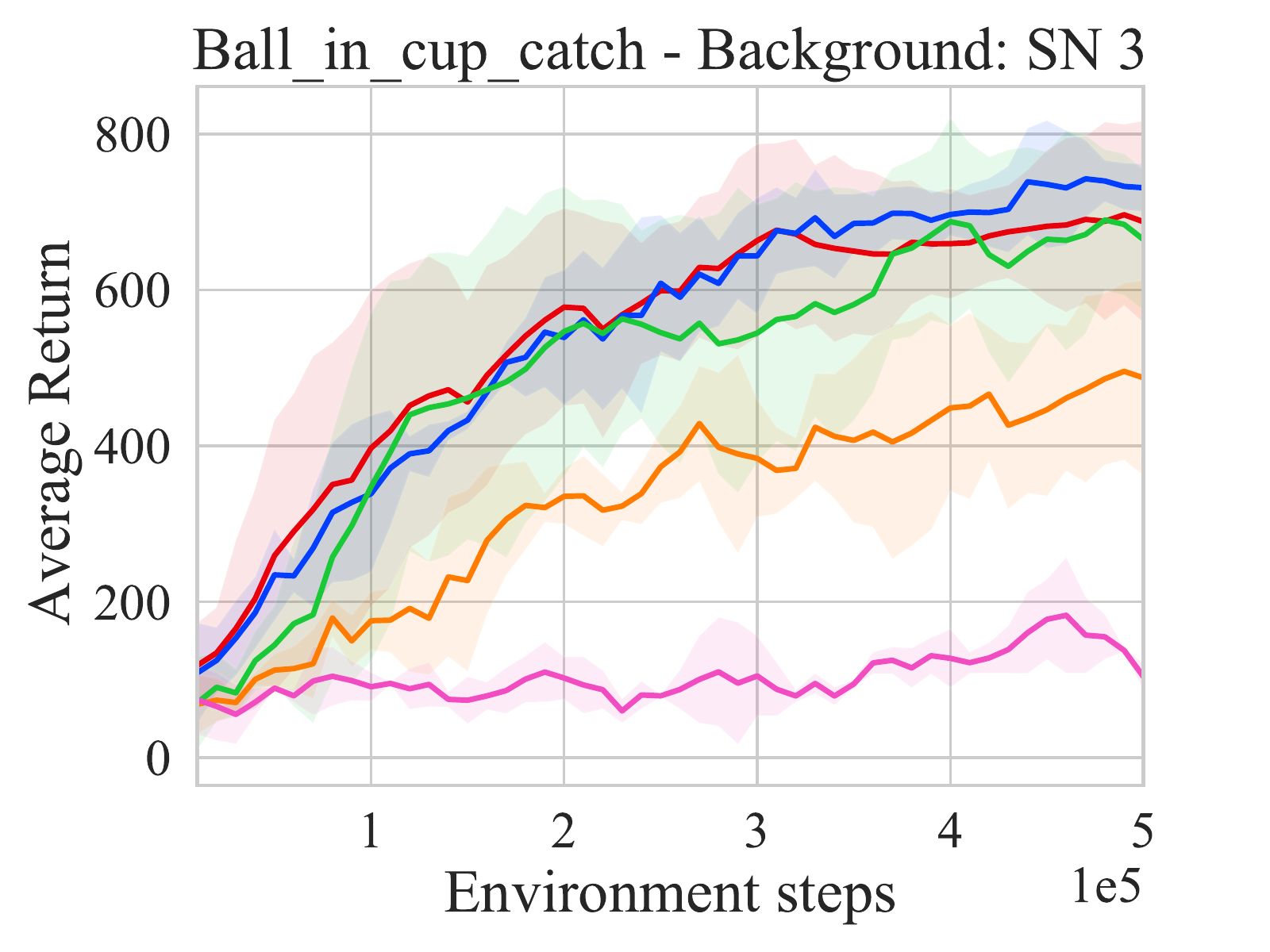}
  \includegraphics[width=9cm]{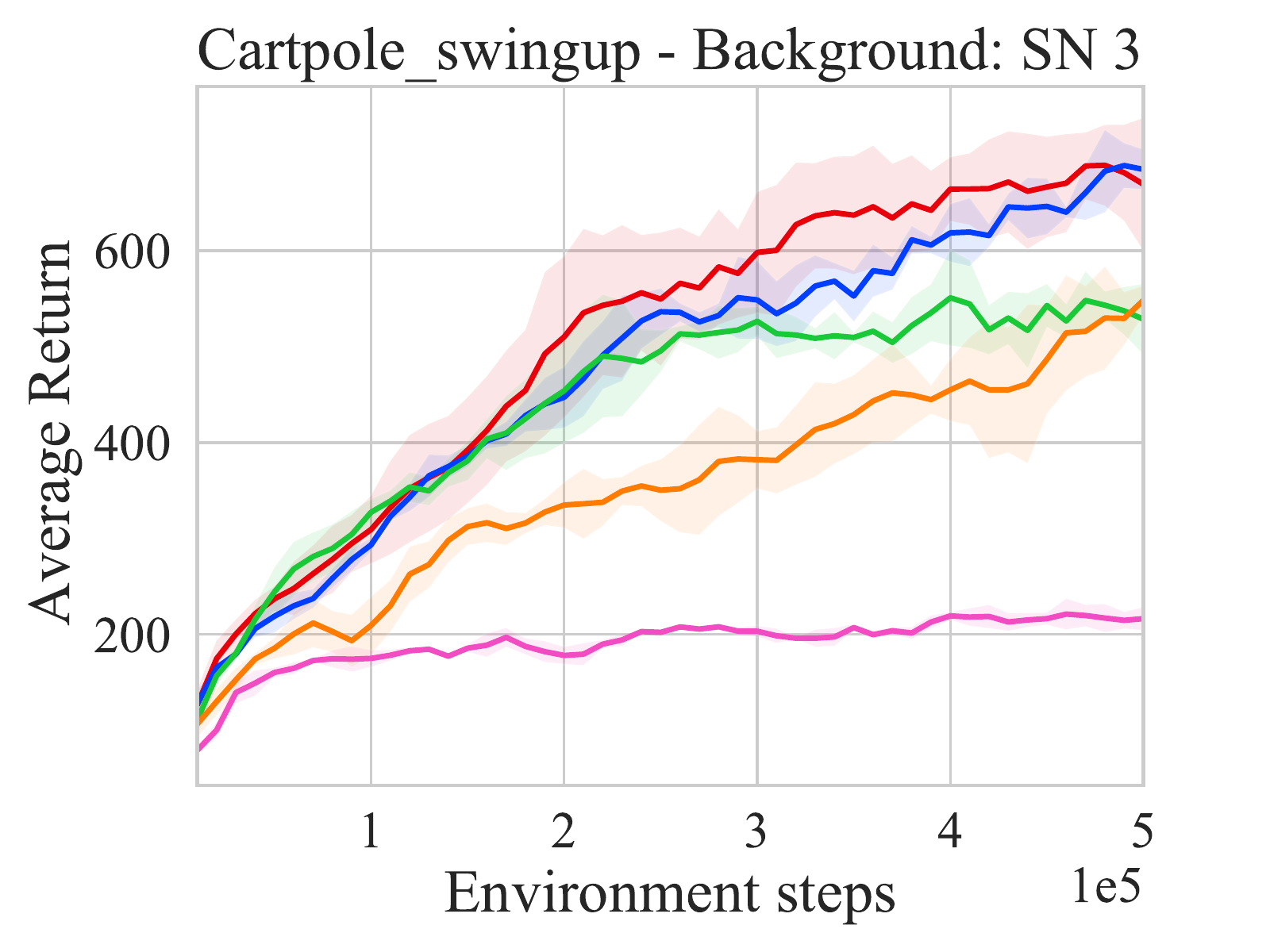}

  \includegraphics[width=9cm]{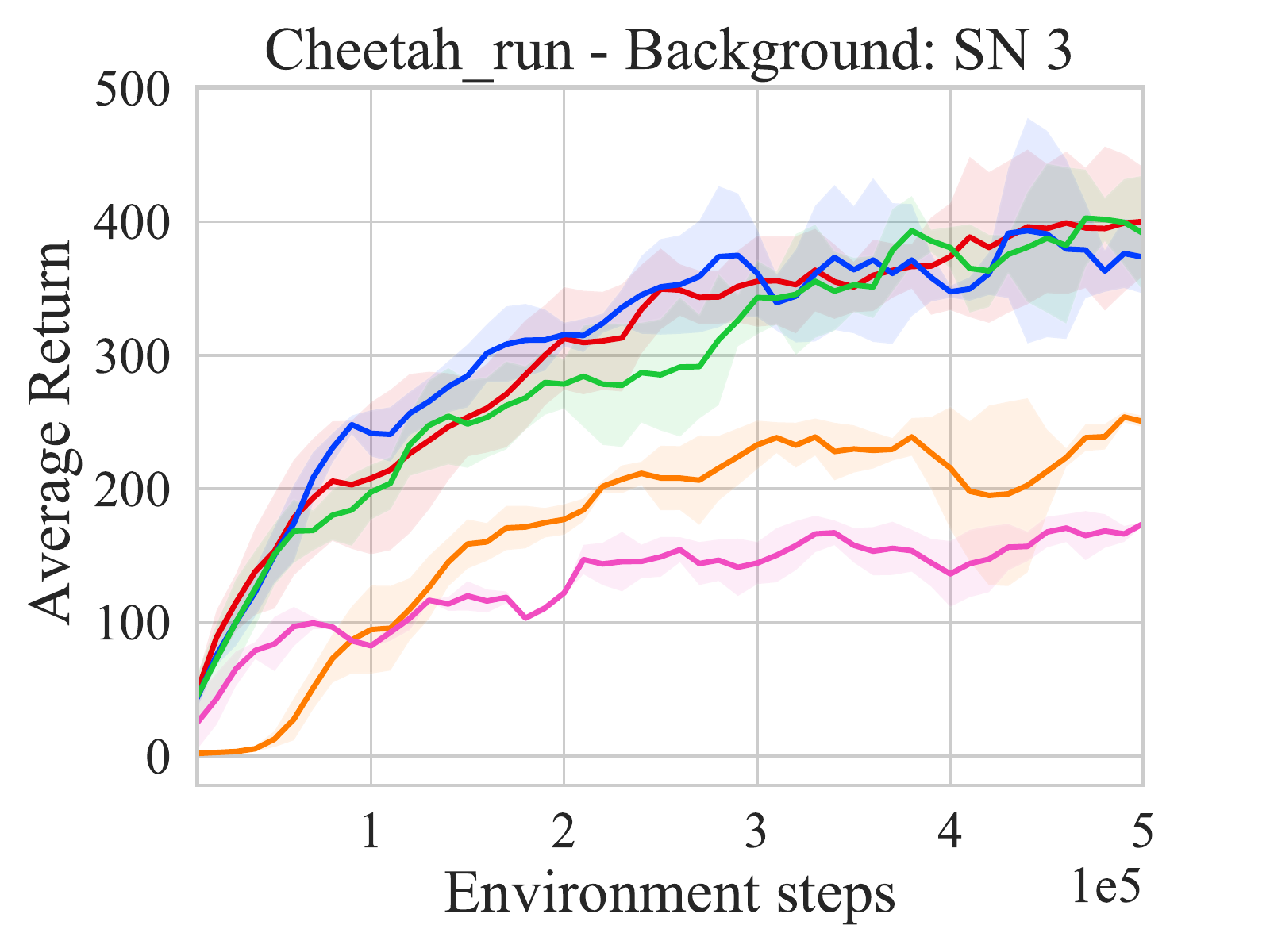}
  \includegraphics[width=9cm]{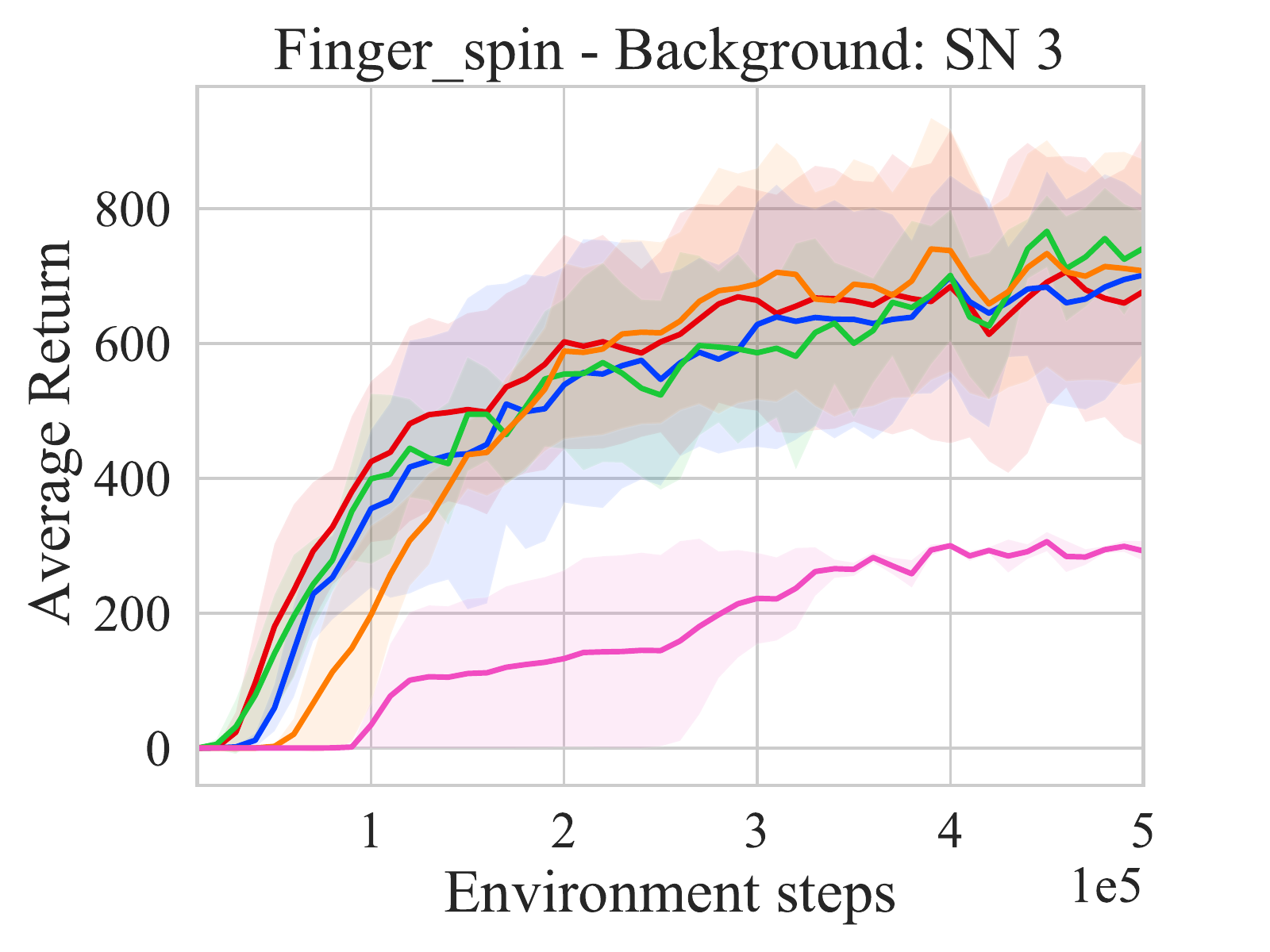}
  
  \includegraphics[width=9cm]{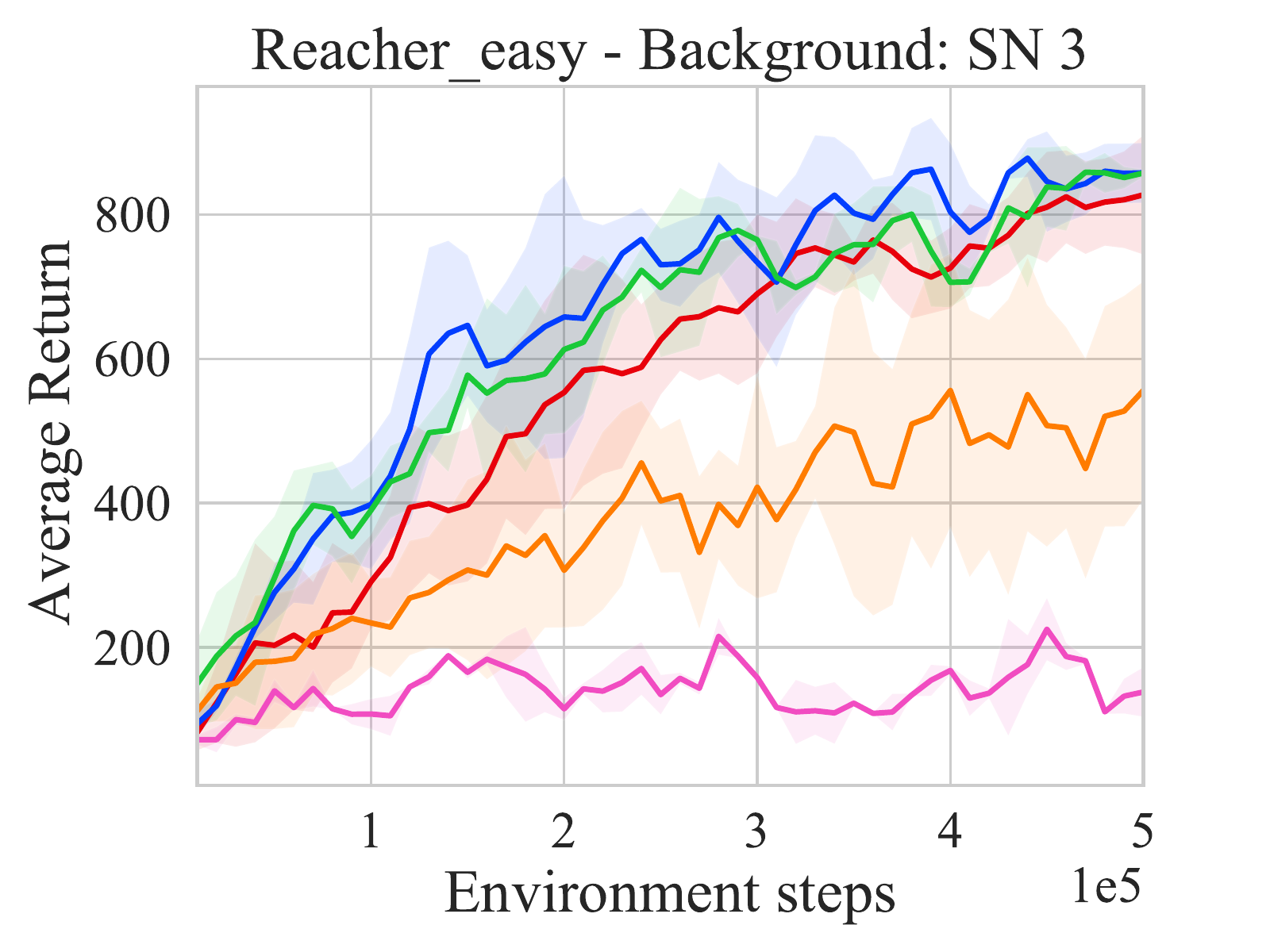}
  \includegraphics[width=9cm]{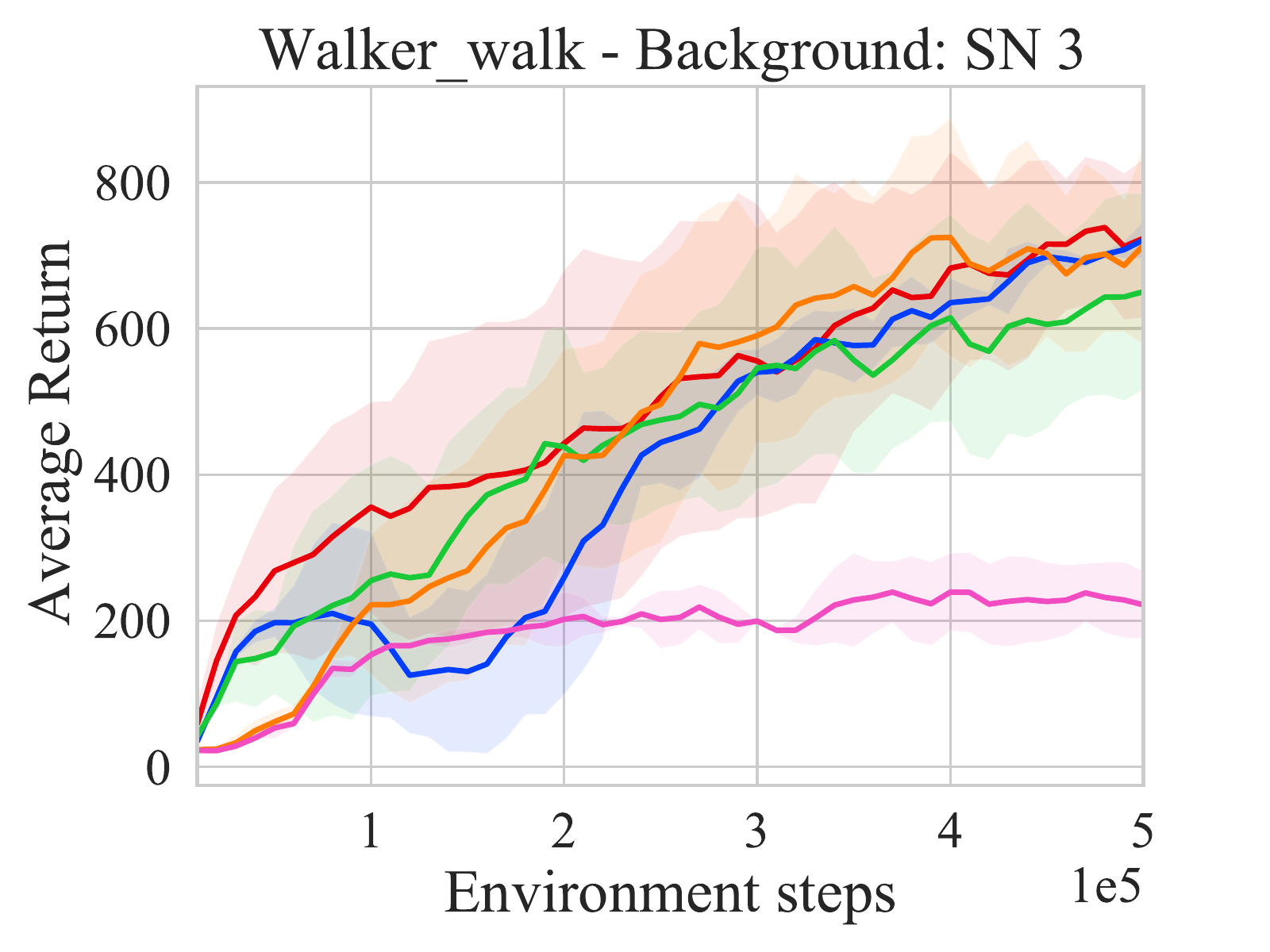}
  
  \includegraphics[width=15cm]{legend-13.pdf}
  \caption{Learning curves on six tasks under the \textbf{three} training environment setting with dynamic \textbf{background} distractions for 500K environment steps. SN denotes the number of source domain, which is the number of training environment.}
\label{fig-result-db-3}
\end{figure*}

\end{document}